\newtheorem{thm}{Theorem}
\newtheorem{lemma}[thm]{Lemma}
\newtheorem{proposition}{Proposition}
\DeclareMathOperator{\atantwo}{atan2}
\DeclareMathOperator{\acos}{acos}
\DeclareMathOperator*{\argmin}{arg\,min}
\newcommand{\vmax}{v_{\max}}
\newcommand{\thetamax}{\frac{\vmax}{r}}
\newcommand{\twonorm}[1]{\left\lVert#1\right\rVert_2}
\newcolumntype{E}{>{\columncolor{lightgray}}c}
\newcolumntype{F}{c}
\begin{document}

\title{Accelerated Reeds-Shepp and Under-Specified Reeds-Shepp Algorithms for Mobile Robot Path Planning}

\author{Ibrahim Ibrahim\,\orcidlink{0000--0001--6840--558X} \emph{Student Member, IEEE},
	\and
	Wilm Decré\,\orcidlink{0000--0002--9724--8103} \emph{Member, IEEE},
	\and
	Jan Swevers\,\orcidlink{0000--0003--2034--5519}

	\thanks{
		This work has been carried out within the framework of Flanders Make's SBO project ARENA (Agile \& REliable NAvigation).}
	\thanks{The authors are with the MECO Research Team, Department of Mechanical Engineering, KU Leuven, Belgium and Flanders Make@KU Leuven, 3000, Belgium. {\tt\footnotesize ibrahim.ibrahim@kuleuven.be}, {\tt\footnotesize wilm.decre@kuleuven.be},
		{\tt\footnotesize jan.swevers@kuleuven.be}}
}

%



\maketitle

\begin{abstract}
	In this study, we present a simple and intuitive method for accelerating optimal Reeds-Shepp path computation. Our approach uses geometrical reasoning to analyze the behavior of optimal paths, resulting in a new partitioning of the state space and a further reduction in the minimal set of viable paths. We revisit and reimplement classic methodologies from the literature, which lack contemporary open-source implementations, to serve as benchmarks for evaluating our method. Additionally, we address the under-specified Reeds-Shepp planning problem where the final orientation is unspecified.
	We perform exhaustive experiments to validate our solutions. Compared to the modern C++ implementation of the original Reeds-Shepp solution in the Open Motion Planning Library, our method demonstrates a $15\times$ speedup, while classic methods achieve a $5.79\times$ speedup. Both approaches exhibit machine-precision differences in path lengths compared to the original solution. We release our proposed C++ implementations for both the accelerated and under-specified Reeds-Shepp problems as open-source code.
\end{abstract}
\begin{IEEEkeywords}
	Nonholonomic Motion Planning, Constrained Motion Planning, Motion and Path Planning, Computational Geometry
\end{IEEEkeywords}
\section{Introduction}
\IEEEPARstart{P}{ath} planning for autonomous ground vehicles and mobile robots is a critical challenge that has seen continuous development over the decades. The Reeds-Shepp path planning problem, formulated in the early 1990s, focuses on finding the shortest paths between two poses for car-like vehicles under kinematic constraints, allowing both forward and backward motion. Despite the algorithm's robustness, its computational demand — evaluating 46 path types — remains a significant burden for real-time applications, particularly as modern environments grow increasingly dynamic and constrained.

Efficient and rapid path planning is vital in today's autonomous systems, such as urban autonomous driving, warehouse robotics, and automated valet parking, where real-time decision-making and quick adaptations to obstacles are crucial. To address these demands, we propose a significant improvement to the classical Reeds-Shepp algorithm by reducing the number of path types to consider from 46 to 20 and by introducing a partitioning scheme such that only one path type is evaluated per query instead of 20. This significantly simplifies the computational process, making the path planning algorithm more than ten times faster than current implementations. This speedup is crucial when the solver is used within wider planning schemes that call it as a subroutine thousands or millions of times, significantly enhancing real-time performance and safety. Moreover, our work introduces the underspecified Reeds-Shepp planning problem, which is of particular interest to autonomous vehicles. This capability finds critical application scenarios in warehouse robotics and agricultural autonomous vehicles, where the final orientation is not necessarily specified, but path optimality is required. It is also essential for situations where the robot must pass through specific waypoints starting from an initial pose without specified orientation through those waypoints. Additionally, it is highly relevant to grid-based applications, as it allows the robot to determine the shortest path to any grid cell efficiently. By solving this problem, we enhance the robot's navigational capabilities and operational efficiency. Our proposed method not only offers theoretical advancements but also practical improvements for real-time applications, directly translating to safer, more efficient, and higher-performing autonomous systems.


\subsection{Problem Statement}
\IEEEpubidadjcol{}
The Reeds-Shepp path planning problem focuses on finding the shortest path that a car-like vehicle can take from an initial configuration $p_{0} = \left(x_{0}, y_{0}, \theta_{0}\right)$ to a final configuration $p_{f} = \left(x_{f}, y_{f}, \theta_{f}\right)$ in the configuration space $\mathrm{SE}(2)$. The problem accounts for the vehicle's motion constraints, including a minimum turning radius and the ability to move both forward and backward. These constraints define the following simple car-like kinematic model~\cite{lavalle2006}:
\begin{equation}
	\begin{aligned}
		\dot{x}      & = v\cos{\theta}, \quad \dot{y} = v\sin{\theta},            \\
		v            & \in \{ -\vmax, \vmax \},                                   \\
		\dot{\theta} & = u, \quad u \in \bigg\{ -\thetamax, 0, \thetamax \bigg\}, \\
	\end{aligned}
	\label{eq:kinematic_model}
\end{equation}
where the vehicle's minimum turning radius is denoted by $r$, its maximum forward and backward velocities are denoted by $\pm v_{\mathrm{\max}}$, and its angular velocity is denoted by $u$. Here, $(x, y) \in \mathbb{R}^{2}$ represents the 2D position of the center of the vehicle's rear axle, and $\theta \in \rinterval{0}{2\pi}$ --- or $\theta \in \rinterval{-\pi}{\pi}$ --- represents the vehicle's orientation.

Model~\eqref{eq:kinematic_model} gives rise to the following set of admissible motions:
\begin{IEEEitemize}
	\item{$s^{+}$~: Forward straight line with $v = \vmax, u = 0$}
	\item{$s^{-}$~: Backward straight line with $v = -\vmax, u = 0$}
	\item{$l^{+}$~\hspace{1.5pt}: Forward left turn with $v = \vmax, u = \thetamax$}
	\item{$l^{-}$~\hspace{1.5pt}: Backward left turn with $v = -\vmax, u = -\thetamax$}
	\item{$r^{+}$~: Forward right turn with $v = \vmax, u = -\thetamax$}
	\item{$r^{-}$~: Backward right turn with $v = -\vmax, u = \thetamax$}
\end{IEEEitemize}

where $s^{+}$ and $s^{-}$ are also commonly referred to, in~\cite{Reeds1990}, as $S^{-}$ and $S^{+}$, respectively, $l^{+}$ and $r^{+}$ as $C^{+}$, and $l^{-}$ and $r^{-}$ as $C^{-}$.

As such, the challenge is to find the sequence of motions --- geodesics --- that lead from $p_{0}$ to $p_{f}$ at the minimum overall path length. Such a path can be defined as $P(s) = \left(x(s), y(s), \theta(s)\right)$, where $s \in \interval{0}{L}$, $L$ is the path length, $P(0) = p_{0}$, and $P(L) = p_{f}$. The path length $L$ is the sum of the lengths of the individual segments of the path --- straight line segments and arc segments. Such a path is continuous except at the points where the vehicle changes its motion direction, which are called cusps.

We define the circles that the vehicle's rear axle center traces when turning left and right as Left-Hand Circle (LHC) and Right-Hand Circle (RHC), respectively. Their centers are denoted by $c_{L}$ and $c_{R}$, respectively. The ``0'' and ``f'' subscripts stand for starting and final configurations respectively. We illustrate an example in Fig.~\ref{fig:global} with a turning radius of 30.

\begin{figure}[!ht]
	\centering
	\includegraphics[width=8.5cm,keepaspectratio]{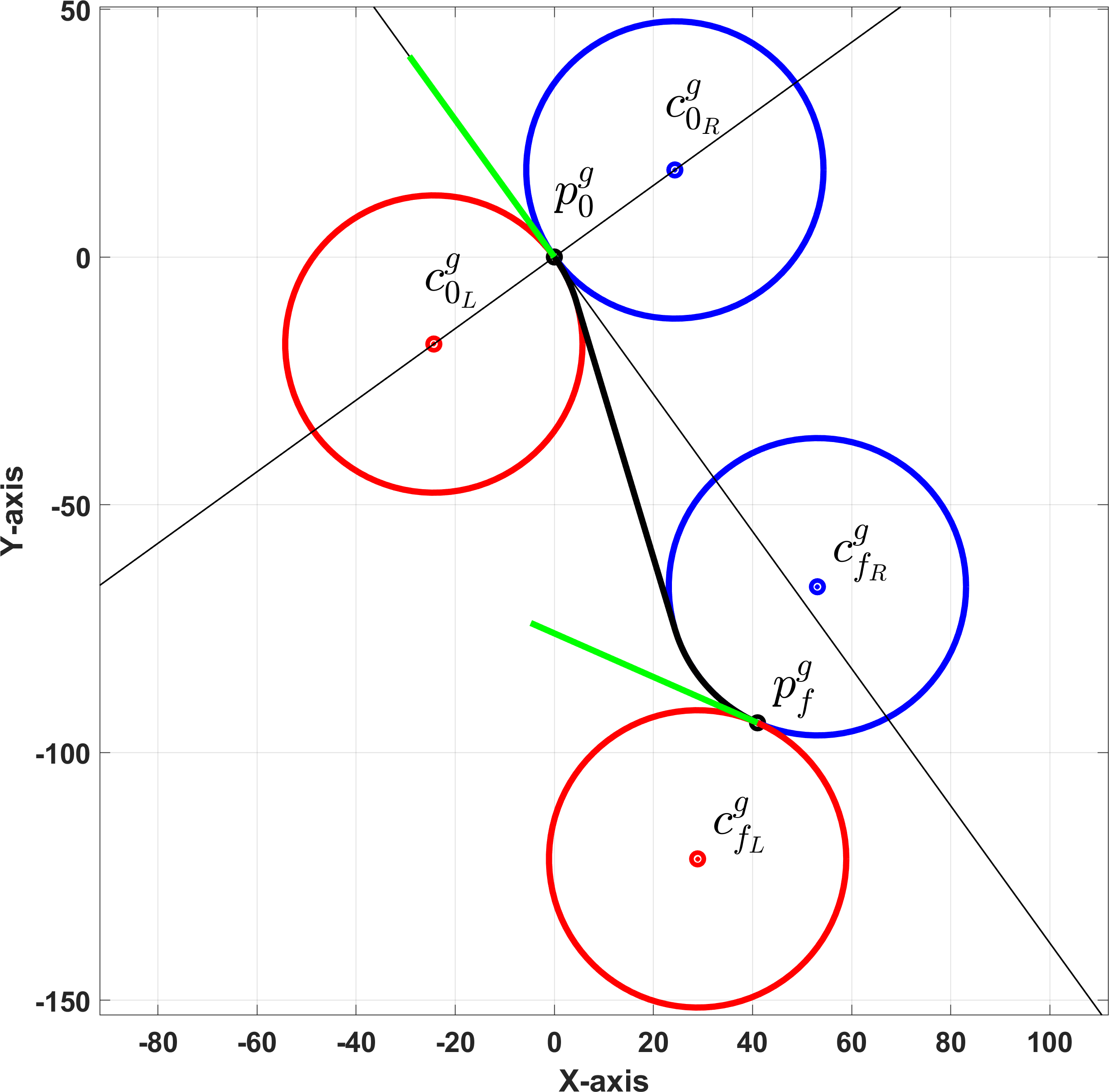}
	\caption{Shortest Reeds-Shepp path from $p_{0}^{g}$ to $p_{f}^{g}$, where $g$ stands for global coordinates. The local coordinate system centered at $p_{0}$ is illustrated in black lines. $c_{L}^{g}$ and $c_{R}^{g}$ correspond to the Left-Hand Circle (LHC) and Right-Hand Circle (RHC) centers defined by the maximum steering angle at the start and final configurations. LHC and RHC are illustrated as red and blue circles, respectively. Optimal path is $l^{-}s^{-}r^{-}$, or $C^{-}S^{-}C^{-}$. Green line = heading.}\label{fig:global}
\end{figure}

We introduce the under-specified Reeds-Shepp planning problem, which, to the best of our knowledge, has not been previously introduced or addressed in literature. This problem arises when the final configuration $p_{f}$ is not fully specified, more specifically, when the final orientation $\theta_{f}$ is not specified.

Given an initial configuration $p_{0} = (x_{0}, y_{0}, \theta_{0})$ and a final position $(x_{f}, y_{f})$ with an unspecified final orientation $\theta_{f}$, the goal is to find the final orientation $\Omega$ such that the resulting Reeds-Shepp path from $p_{0}$ to the final configuration $p_{f} = (x_{f}, y_{f}, \Omega)$ is the shortest possible. Formally, $\Omega$ is defined as:
\begin{equation}\label{eq:underspecified}
	\begin{aligned}
		\Omega = \argmin_{\theta_{f} \in \rinterval{-\pi}{\pi}} L(p_{0}, (x_{f}, y_{f}, \theta_{f})),
	\end{aligned}
\end{equation}
where $L(p_{0}, (x_{f}, y_{f}, \theta_{f}))$ represents the length of the Reeds-Shepp path from the initial configuration $p_{0}$ to the final configuration $(x_{f}, y_{f}, \theta_{f})$. The objective is to determine $\theta_{f}$ such that the path length is minimized.

\subsection{Related Work}\label{subsec:related_work}
The original cusp-free problem, i.e., the case where $v \in \big\{\vmax\big\}$ and $u \in \big\{ -\thetamax, 0, \thetamax \big\}$, was originally solved by Dubins~\cite{Dubins1957}. Dubins proved that optimal paths in such a scenario can be found among a minimal set of six path types, each having at most three motion segments that are either line segments $S$ or arc segments $C$. In 1972, Pecsvaradi~\cite{Pecsvaradi1972} provided an optimal horizontal guidance law for aircraft in the terminal area. By formulating the problem of guiding an aircraft in minimum time from an arbitrary initial position to the outer marker as a nonlinear optimal control problem, Pecsvaradi derived the optimal control law for the aircraft using the maximum principle. The optimal control synthesis for the Dubins problem was also proposed by Bui et al.~\cite{Xuan1994}, who partitioned the motion plane into regions, each associated with one optimal Dubins path.

In 1990, Reeds and Shepp~\cite{Reeds1990} extended Dubins' work to include cusps by describing a minimal set of 48 path types that is guaranteed to contain the shortest path between any two configurations. They also provided formulae to compute the length of each path's segments. Their approach is based on inducing first order necessary conditions from a cost function minimization that is subject to equality constraints. The same results were independently reached by other researchers~\cite{Sussmann1991, Boissonnat1992} who further reduced the number of path types to 46.

Soon after, methods that rely on repeatedly solving the Reeds-Shepp problem as part of larger motion planning schemes that account for additional constraints, such as obstacles, arose~\cite{Laumond1990, Jacobs1991}. Therefore, the need for fast and efficient algorithms also arose. Up until the early 1990s, algorithms requiring optimal Reeds-Shepp paths had to compute all 46 paths at every iteration.

In 1993, Souères and Laumond~\cite{Soueres1996} combined the necessary conditions given by Pontryagin's Maximum Principle, geometric reasoning, and Lie algebra to provide a global optimal control synthesis for Reeds-Shepp vehicles. In this work, the authors use geometrical arguments and symmetries to partition the configuration space into elements for all configurations having the same final orientation. While the authors associate each element with one of the 46 path types based on geometrical arguments and symmetries, they do not detail the specific algorithm used for this association.

In contrast, in 1995, Desaulniers and Soumis~\cite{Desaulniers1995}, building on previous works, provided an algorithm that is based on a new partition of the configuration space using a geometrical approach. This approach works in the space of the final position's Left-Hand Circle (LHC) center, $c_{f_{L}}$ (as defined in Fig.~\ref{fig:global}), and is based on five main steps (for radius $r = 1$):
\begin{IEEEenumerate}
	\item Read a final configuration $p_{f}$ and compute $c_{f_{L}}$.
	\item Find $Q \in \big\{Q_{1}, Q_{2}, Q_{3}, Q_{4}\big\}$: subdivision of the $c_{f_{L}}$'s space into four quadrants. The main quadrant from and to which transformations are made is $Q_{1} = \big\{\small(c_{f_{L_{x}}}, c_{f_{L_{y}}}, \theta_{f}\small): c_{f_{L_{x}}} < -1, c_{f_{L_{y}}} \geq{} 0 \big\}$.
	\item Find $R \in \big\{R_{1}, R_{2}, \ldots, R_{19}\big\}$, which represents the region within $Q_{1}$ containing $c_{f_{L}}$, ensuring that the boundaries between regions are also taken into account.
	\item Find $S$, the subdivision of $R$ containing $\theta_{f}$. If a single type is associated with $S$, then the optimal path is found. Otherwise, if two types are associated with $S$, obtain the path with the shortest distance among the two.
	\item If in step 2, $c_{f_{L}}$ was not in $Q_{1}$, then perform an inverse transformation to the shortest path obtained in step 4.
\end{IEEEenumerate}

More specifically, the authors in~\cite{Desaulniers1995} use the first-order necessary conditions that were initially developed in~\cite{Reeds1990} to identify 26 path types and 19 possible regions that $c_{f_{L_{x}}}$ can belong to within $Q_{1}$. They proceed to subdivide each region into subregions based on 35 $\theta$ equations that define potential ranges for $\theta_{f}$. They end up with a 161-element partition, 150 of which have a unique path type, and 11 of which have two possible path types. For the sake of brevity, we only provide, in Fig.~\ref{fig:regions}, a visual illustration of the 19-regions-subdivision of quadrant $Q_{1}$, which we reproduced with the help of the authors of the original work.

\begin{figure}[!ht]
	\centering
	\includegraphics[width=8.5cm,keepaspectratio]{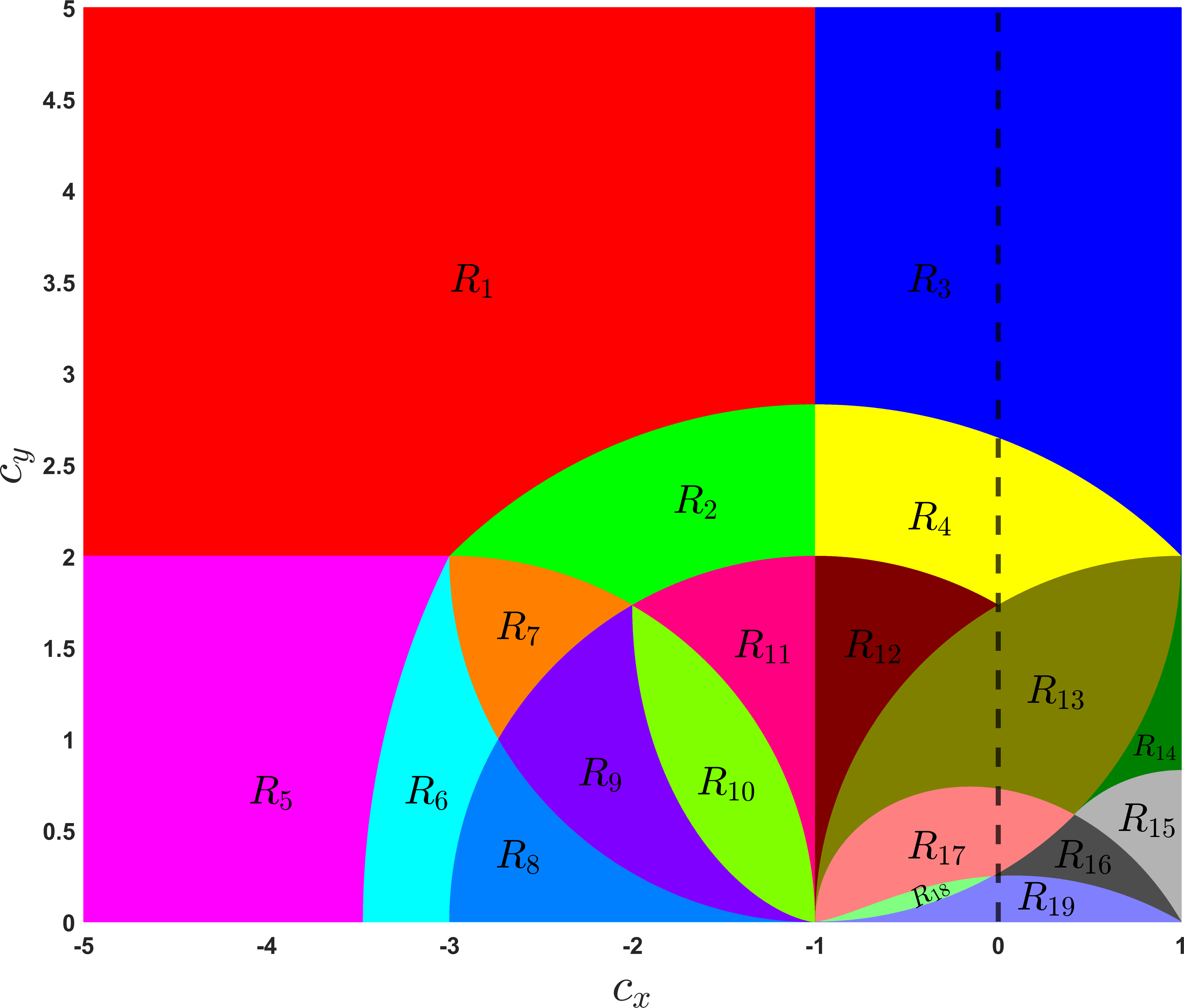}
	\caption{Subdivision of quadrant $Q_{1}$ of $c_{f_{L_{x}}}$'s space into 19 regions. Reproduced from~\cite{Desaulniers1995}. Each color refers to a specific region, enumerated as $R_{1}, R_{2}, \ldots, R_{19}$.}\label{fig:regions}
\end{figure}

Since then, and to the best of our knowledge, there has not been any significant work on improving the classical solutions to the Reeds-Shepp problem itself. There has been, however, major extensions for time optimal control, continuous-curvatures, and additional constraints, among others.

In 2004, Fraichard and Scheuer~\cite{Fraichard2004} extended Reeds-Shepp paths to continuous-curvature paths.
In 2009, Wang et al.~\cite{Wang2009} developed a new method to determine time optimal trajectories for nonholonomic bidirectional robots using switching vectors. More recently, in 2022, Ben-Asher and Rimon~\cite{Ben-Asher2022} extended previous results on time optimal synthesis by analyzing the problem as an optimal control problem by minimum principle and by singular control theory.

In 2010, Salaris et al.~\cite{Salaris2010} provided an optimal control synthesis for the case where constraints on the vehicle's field-of-view exist, such as in the case of visual servoing. Salaris et al.~\cite{Salaris2015} also provided an epsilon-optimal synthesis for vehicles with limited field-of-view sensors.

All of the aforementioned works are developed for free-space Reeds-Shepp path planning, since after works such as~\cite{Laumond1990 ,Jacobs1991}, traction died for methods that repeatedly solve the Reeds-Shepp problem to account for obstacles due to inefficiencies relative to other methods. For example, a recent work that was published in 2022 by Wen et al.~\cite{Wen2022} that solves multi-agent path finding for car-like robots with kinematic and spatiotemporal constraints uses a spatiotemporal hybrid-state $A^\ast$ instead, which proves to be more efficient in this scenario. Sampling based planners that obtain solutions for nonholonomic vehicles in occupied environments such as RRT* and PRM* are still popular as seen in~\cite{Karaman2013},~\cite{Yan2016},~\cite{Spano2022}, and~\cite{Li2023}. Such methods sample motion primitives repeatedly as subroutines. Based on the reasoning in this paragraph, having fast and efficient underlying algorithms for computing optimal Reeds-Shepp primitives is necessary in order to account for additional constraints such as obstacles and vehicle footprint.

Moreover, due to the lack of open-source implementations of most of the aforementioned methods and algorithms, the Open Motion Planning Library's implementation of the original Reeds-Shepp solution (OMPL)~\cite{Sucan2012} remains the de facto standard implementation for relevant motion planning in robotics. It still computes all 46 paths, but it asserts the necessary first order conditions for the optimality of each path. It is written in C++, making it a great benchmarking tool.

\subsection{Contributions \& Paper Structure}
Based on the discussion reported in~\ref{subsec:related_work}, we introduce a new solution that reduces the minimum set of paths to consider down to 20 essential types. We associate each of those types with a partition in the configuration space, allowing us to write an algorithm that outperforms the modern OMPL implementation by more than an order of magnitude. By proving our proposed algorithm's completeness and correctness, we show that those 20 types are sufficient to cover the entire configuration space. By performing exhaustive tests, including tests on edge cases, we demonstrate the reliability and effectiveness of our proposed solutions. We run billions of tests with random start and final configurations as well as random radii, spanning the entirety of a sampled space. We validate that each path reaches the final configuration with zero error.

Lastly, in this paper, we introduce and address the under-specified Reeds-Shepp planning problem, which, to the best of our knowledge, has not been previously addressed in literature. This problem arises when the final configuration is not fully specified, more specifically, when the final orientation $\theta_{f}$ is not specified. This is very relevant for grid-based planning applications. Consider the scenario where a car-like robot begins at a certain configuration $p_{0}$ in an $N\times M$ grid. By solving the under-specified Reeds-Shepp problem, we endow the robot with the knowledge of the final orientations $\theta_{f}^{N\times M}$ that produce the shortest paths to all grid cells. This allows the robot to move anywhere in the grid, always at a minimum-distance basis. Using the computed $\theta_{f}^{N\times M}$, we can compute a shortest-distance transform in the style of Euclidian distance transforms for the grid. The latter is symmetrical and rotationally invariant in its local frame for a fixed radius $r$, so it can be computed once and stored offline and it can be interpolated at any final position $(x_{f}, y_{f}) \in \mathbb{R}^{2}$.

Therefore, we introduce in this paper the following contributions:
\begin{itemize}
	\item A considerably simpler and more geometrically-intuitive partitioning of the configuration space that results in a further reduction in the minimal set of viable paths to 20.
	\item An implementation of the classical Reeds-Shepp solution as described in~\cite{Desaulniers1995} in C++ for benchmarking purposes.
	\item Introduction of the under-specified Reeds-Shepp planning problem and its solution.
	\item Exhaustive experiments to validate the effectiveness of our proposed solutions.
	\item Open-source C++ implementations of this paper's content\footnotemark[1]\textsuperscript{,}\footnotemark[2].
	      \footnotetext[1]{\url{https://github.com/IbrahimSquared/accelerated-RS-planner}}
	      \footnotetext[2]{\url{https://github.com/IbrahimSquared/underspecified-RS-planner}}
\end{itemize}

This paper is organized as follows:
\begin{itemize}

	\item In Section~\ref{sec:proposed}, we lay out the formal approach that we adopted to arrive to the accelerated Reeds-Shepp algorithm. We highlight the symmetries that our algorithm exploits as well as the partitioning of the configuration space that lead to our accelerated solution. We cover the geometric reasoning behind each of the 20 types, one at a time.

	\item In Section~\ref{sec:underspecified}, we discuss the under-specified Reeds-Shepp planning problem and its solution.

	\item In Section~\ref{sec:results}, we discuss the results of our exhaustive experiments and benchmark our proposed solution to~\cite{Desaulniers1995} and to OMPL's implementation. We also validate our proposed under-specified Reeds-Shepp problem solution.

	\item In Appendix~\ref{app:algs}, we provide the necessary pseudocode for the proposed accelerated Reeds-Shepp algorithm.
\end{itemize}

\section{Accelerated Reeds-Shepp Solution}
 {}\label{sec:proposed}
\subsection{Formal Proof Approach}
The proposed algorithm operates by partitioning the configuration space and producing a distinct solution within each subpartition. We argue for the correctness and the completness of the proposed algorithm by following the steps below:
\begin{IEEEenumerate}
	\item Define the two main partitions that distinguish the solution families.
	\item Within each main partition, establish subpartitions based on geometric invariants of each solution type, proven through geometric reasoning.
	\item For each subpartition, compute the solution using the analytical expressions from the original Reeds-Shepp solution specific to the path type.
	\item Ensure coverage of the configuration space using complementary predicates, leaving no region unaccounted for and avoiding overlap between subspaces.
\end{IEEEenumerate}
As such, we construct a solution that is both correct and complete --- the resulting algorithm handles every possible case correctly, and the partitions together address the entire problem space without omission, overlap, or redundancy. In Section~\ref{subsec:symmetries}, we highlight the symmetries exploited to simplify our proposed accelerated Reeds-Shepp solution. In Section~\ref{subsec:partitioning}, we define our main partitions and their corresponding subpartitions. More details on the non-uniqueness of the shortest path and the completeness and correctness of our solution are provided in Sections~\ref{subsec:nonuniqueness} and~\ref{subsec:completeness}. We summarize our findings in Algorithms~\ref{alg:ForwardProjection} to~\ref{alg:B_partitions} in Appendix~\ref{app:algs}.

\subsection{Symmetries \& Transformations}\label{subsec:symmetries}
We ground our reasoning in the local frame of reference of the starting configuration $p_{0}$. The solution is rotationally invariant with respect to $p_{0}$'s frame of reference and the start and final configurations themselves can we swapped without affecting the solution as was shown in the original work~\cite{Reeds1990}. This allows for simpler, consistent, and more intuitive reasoning/visualization and allows us to leverage the local symmetries that arise. In the original work~\cite{Reeds1990}, Section 8, the authors highlight those symmetries. They show that the underlying problem is invariant under certain transformations. We exploit those symmetries to transform the final configuration into the first quadrant of the local frame of reference of the starting configuration $p_{0}$. We then solve the Reeds-Shepp problem in the first quadrant, and project the solution back to the original quadrant using the inverse transformations.

\begin{figure}[!ht]
	\centering
	\includegraphics[width=8.5cm,keepaspectratio]{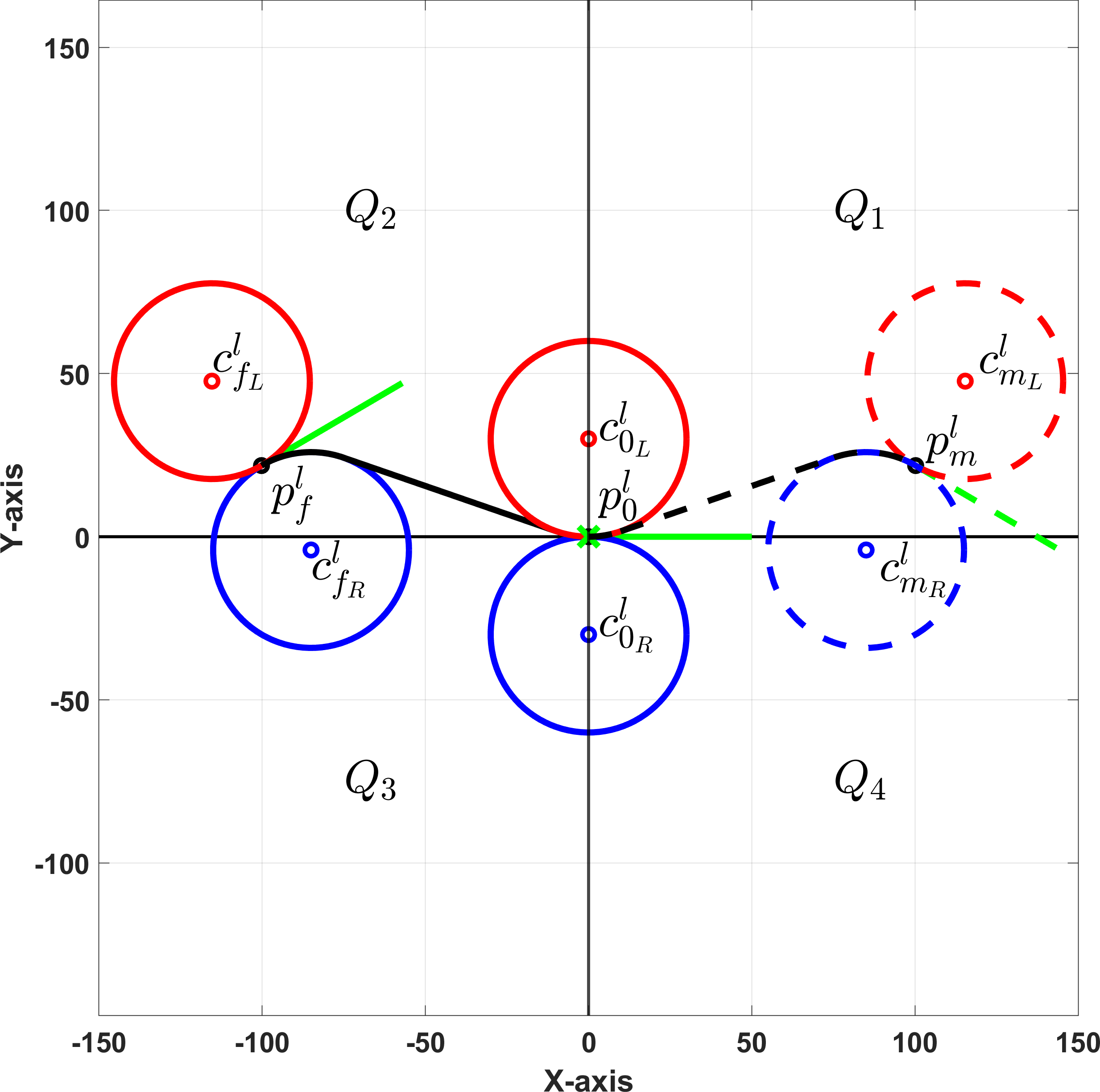}
	\caption{The local final configuration $p_{f}^{l}$ is mirrored from the second local quadrant $Q_{2}$ into $p_{m}^{l}$ in the first local quadrant $Q_{1}$. The mapped LHC and RHC are shown as dashed red and blue circles respectively, centered around the mapped centers $c_{m_{L}}^{l}$ and $c_{m_{R}}^{l}$ respectively. The mapped shortest Reeds-Shepp path is illustrated as the dashed black line. $l$ stand for local frame of reference, $m$ for mirrored, $R$ for right, and $L$ for left.}\label{fig:local}
\end{figure}

A similar approach was also adopted in~\cite{Desaulniers1995}. We illustrate an example $Q_{2}$-to-$Q_{1}$ projection in Fig.~\ref{fig:local}. In this case, where $p_{f}^{l} \in Q_{2}$, the solution from $p_{0}^{l}$ to $p_{f}^{l}$ --- $l^{-}s^{-}r^{-}$ --- is the same as the solution from $p_{0}^{l}$ and $p_{m}^{l}$ --- $l^{+}s^{+}r^{+}$ --- but with reverse directions of the segments. $p_{m}^{l}$ is the mirrored final configuration in $Q_{1}$, and is the result of the transformation of $p_{f}^{l}$ into $Q_{1}$.

All relevant transformations are summarized in Alg.~\ref{alg:ForwardProjection} and Alg.~\ref{alg:BackwardProjection} in Appendix~\ref{app:algs}. Alg.~\ref{alg:ForwardProjection} takes in the start position and the final configuration in $p_{0}$'s frame of reference, and returns the mapped configuration. Alg.~\ref{alg:BackwardProjection} takes in the Reeds-Shepp solution from $p_{0}^{l}$ to $p_{m}^{l}$ in $Q_{1}$ and projects it back to its original quadrant. The solution consists of up to five segments types $\mathcal{T}^{1\times5}$, each marked as `l', `r', `s', or `n', with `n' marking the end of the path. Each segment $i$ has a direction $\mathcal{D}^{1\times5}[i]$, and a length $\mathcal{L}^{1\times5}[i]$. Following the convention established in classical methods, we use T, U, V, and $\frac{\pi}{2}$ as potential lengths of segments. The $\frac{\pi}{2}$ marks the maximum central angle of an arc segment in the Reeds-Shepp problem as no arc segment within the minimal set of 48 paths exceeds $\frac{\pi}{2}r$. As such, an example path is $l_{T}^{+}s_{U}^{+}r_{V}^{+}$. Another example is $r_{T}^{-}l_{\frac{\pi}{2}}^{+}s_{U}^{+}r_{\frac{\pi}{2}}^{+}l_{V}^{-}$.

A \emph{path type}, however, is a classification of paths where each is represented as a triplet $\left(\mathcal{T}, \mathcal{D}, \mathcal{L}\right)$, and all paths of a given type are computed using the same analytic expressions and satisfy certain geometric invariant properties. For instance, all paths of the type $l_{T}^{+}s_{U}^{+}r_{V}^{+}$ share specific geometric constraints and can be derived using a consistent set of analytic formulas.

\subsection{Partitioning of the Configuration Space}\label{subsec:partitioning}
In the original Reeds-Shepp solution, each path in the minimal set of 48 paths is categorized into one of the following five: $CSC$, $CCC$, $CCCC$, $CCSC$, $CCSCC$\@. Each of those five constitutes a family of potential paths, with $CCSC$ including $CSCC$ paths. We first introduce geometrical reasoning in Lemmas 1 and 2 that lay the necessary foundation for Proposition 1 which allows us to classify the paths into two main sets: A and B. We then proceed to define the subpartitions of each set using Propositions~\ref{prop:p2} through~\ref{prop:p13_vs_p20}. Those subspace partitions allow us to classify the paths into 20 unique types with no overlap. Each proposition is proven through geometric reasoning.
\begin{lemma}\label{lem:first}
	Starting from an arbitrary configuration $p_{0}$, a left turn to $p_{f}$ rotates both the robot and its RHC about $c_{0_{L}}$ ($p_{0}$'s LHC center). Similarly, a right turn rotates the robot and its LHC about $c_{0_{R}}$ ($p_{0}$'s RHC center).
\end{lemma}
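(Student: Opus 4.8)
The plan is to argue purely from the rigid-body kinematics of the car-like model~\eqref{eq:kinematic_model}, exploiting the fact that a pure left turn (the motion $l^{+}$ or $l^{-}$) is, instantaneously and hence over the whole arc, a rigid rotation of the plane about a fixed point. First I would recall that when the vehicle executes a left turn, the rear-axle center $(x,y)$ moves on a circle of radius $r$ whose center is, by definition, $c_{0_{L}}$, the LHC center associated with $p_{0}$; this center does not move during the turn because the steering input $u$ is held at the constant value $\thetamax$. Since the instantaneous center of rotation of the entire rigid body coincides with that circle's center throughout the maneuver, the left turn acts on the plane as the one-parameter family of rotations $R_{c_{0_{L}},\phi}$ about $c_{0_{L}}$, where $\phi$ is the swept central angle (equal to the heading change $\theta - \theta_{0}$).

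The key step is then the observation that \emph{any} point rigidly attached to the vehicle is carried by this same rotation $R_{c_{0_{L}},\phi}$. In particular, the RHC center $c_{R}$ is a fixed affine function of the pose $(x,y,\theta)$ — it sits at distance $r$ from the rear-axle center, perpendicular to the heading, on the right-hand side — so it is a point of the moving rigid frame. Therefore, as the left turn proceeds, $c_{R}$ traces a circular arc about $c_{0_{L}}$ of radius $\lVert c_{R} - c_{0_{L}}\rVert$, and the robot itself is likewise rotated about $c_{0_{L}}$ by the angle $\phi$. This establishes the first assertion. The second assertion is obtained verbatim by the mirror-symmetric argument: a right turn holds $u = -\thetamax$, fixes $c_{0_{R}}$, and rotates the whole rigid frame — including the LHC center $c_{L}$ — about $c_{0_{R}}$ by the swept angle.

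I would present this as a short two-paragraph proof: one paragraph setting up the left-turn case via the instantaneous-center-of-rotation argument (with a one-line verification from~\eqref{eq:kinematic_model} that $\dot{\theta}$ constant and $v$ of constant magnitude forces the rear-axle trajectory onto a circle centered at $c_{0_{L}}$), and a second paragraph invoking rigidity to carry $c_{R}$ along, then noting the right-turn case follows by the left–right symmetry already discussed in Section~\ref{subsec:symmetries}.

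The main obstacle — really the only subtlety — is making precise that $c_{0_{L}}$ is genuinely \emph{stationary} during a left turn and coincides with the instantaneous center of rotation, rather than merely being the initial center of the osculating circle. This needs the remark that constant steering input yields constant curvature, so the osculating circle is the same circle at every instant of the turn; once that is said, "rotation about a fixed point" is immediate and everything rigidly attached to the car — notably the opposite-handed turning-circle center — moves by that rotation. I expect no nontrivial computation here; the content is entirely the rigid-motion interpretation of a constant-curvature arc, and the lemma is essentially a geometric restatement of that fact tailored to the two turning circles.
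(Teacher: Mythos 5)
Your argument is correct and amounts to the same reasoning the paper invokes: the published proof is a one-line appeal to the definitions of the LHC and RHC, and your rigid-rotation elaboration (constant curvature fixes $c_{0_{L}}$, and the opposite-handed circle center, being rigidly attached to the vehicle, is carried by the same rotation) is precisely the content that one-liner leaves implicit. No gap; you simply spell out what the paper takes as immediate.
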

\begin{proof}
	The proof follows directly from the definition of the LHC and RHC\@.
\end{proof}
It follows that as a robot undertakes a left turn, its RHC also rotates ($c_{0_{R}}\neq{}c_{f_{R}}$), whereas its LHC remains fixed ($c_{0_{L}}=c_{f_{L}}$). Similarly, as a robot undertakes a right turn, its LHC rotates ($c_{0_{L}}\neq{}c_{f_{L}}$), whereas its RHC remains fixed ($c_{0_{R}}=c_{f_{R}}$). During a $C$ motion primitive, $c_{0_{L}}$ and $c_{f_{R}}$, or conversely $c_{0_{R}}$ and $c_{f_{L}}$, remain externally tangent. Therefore, the distance between the two centers during a turn is fixed and equal to $2r$. Any translation along a straight line segment $S$ breaks such external tangency, since $c_{f_{L}}$ and $c_{f_{R}}$ translate away from $c_{0_{R}}$ and $c_{0_{L}}$ respectively.
\begin{lemma}\label{lem:second}
	During a $C$ motion primitive, and at the limit of $\theta = \pm\frac{\pi}{2}$, the distance between the centers $c_{0_{R}}$ and $c_{f_{R}}$, or the distance between $c_{0_{L}}$ and $c_{f_{L}}$, can be at most $\mathcal{K}=\sqrt{(2r){}^2 + (2r){}^2} = 2r\sqrt{2}$.
\end{lemma}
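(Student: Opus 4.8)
The plan is to track the motion of the "rotating" circle center during a single $C$ primitive and bound how far it can travel from its start position, using Lemma~\ref{lem:first} as the geometric engine. By Lemma~\ref{lem:first}, a left turn rotates the robot's RHC center $c_{R}$ about the fixed point $c_{0_{L}}$, while a right turn rotates the robot's LHC center $c_{L}$ about the fixed point $c_{0_{R}}$. The two cases are mirror images of each other, so I would argue one (say, the right turn, which moves $c_{L}$ about $c_{0_{R}}$) and invoke the left-right symmetry noted in Section~\ref{subsec:symmetries} for the other. During the $C$ primitive the moving center stays at a fixed radius from the pivot; since during a turn the robot's LHC and the robot's RHC are a fixed distance $2r$ apart (they are the left and right offset circles of the same rear-axle point, each at distance $r$ from it on opposite sides, so their centers are $2r$ apart), the orbit radius is exactly $2r$. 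Hence $c_{f_{L}}$ lies on a circle of radius $2r$ centered at $c_{0_{R}}$, and likewise $c_{0_{L}}$ lies on that same circle (it is the position of the moving center at the start of the turn, i.e. at relative heading $0$).

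Next I would quantify the swept angle. The turn changes the robot's heading by $\Delta\theta = \theta_f - \theta_0$, and because the moving center orbits the pivot rigidly with the robot, the central angle subtended at $c_{0_{R}}$ between the initial position $c_{0_{L}}$ and the final position $c_{f_{L}}$ equals $|\Delta\theta|$. The chord length between two points on a circle of radius $2r$ separated by central angle $\phi$ is $2\cdot 2r\sin(\phi/2) = 4r\sin(\phi/2)$. Within the Reeds-Shepp minimal set no arc segment exceeds central angle $\tfrac{\pi}{2}$ (this is exactly the $\tfrac{\pi}{2}$ bound recalled in Section~\ref{subsec:symmetries}), so $|\Delta\theta| \le \tfrac{\pi}{2}$ for any $C$ segment appearing in an optimal path, and the chord length is at most $4r\sin(\pi/4) = 4r\cdot\tfrac{\sqrt2}{2} = 2r\sqrt2 = \mathcal{K}$. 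I would also note that $2r\sqrt2 = \sqrt{(2r)^2+(2r)^2}$, matching the statement's Pythagorean form, which corresponds geometrically to the two orbit positions at relative headings $0$ and $\tfrac{\pi}{2}$ being the legs of a right isosceles triangle with the pivot.

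The main obstacle I anticipate is not the trigonometry but justifying that the relevant heading change is capped at $\tfrac{\pi}{2}$: this is a property of \emph{optimal} Reeds-Shepp paths (arcs in the 48-path minimal set never exceed quarter-circles), not of arbitrary $C$ motions, so the lemma statement's phrase "at the limit of $\theta = \pm\tfrac{\pi}{2}$" must be read as restricting attention to segments that can occur in the minimal set. I would make that scope explicit at the start of the proof, cite the $\tfrac{\pi}{2}$ fact from~\cite{Reeds1990} (as already invoked in Section~\ref{subsec:symmetries}), and then the bound $\mathcal{K}=2r\sqrt2$ follows immediately as the extremal chord. A secondary, minor point to get right is the identification of the orbit radius as exactly $2r$ rather than, say, $r$ or $r\sqrt2$; this I would pin down by the external-tangency remark following Lemma~\ref{lem:first}, namely that $c_{0_{R}}$ and $c_{f_{L}}$ stay externally tangent at distance $2r$ throughout the turn.
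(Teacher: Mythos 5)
Your proposal is correct and rests on the same geometric fact the paper uses: the moving circle center orbits the fixed pivot at radius $2r$, and at a swept angle of $\frac{\pi}{2}$ the separation is the hypotenuse of a right isosceles triangle with legs $2r$, i.e.\ $\mathcal{K}=2r\sqrt{2}$ (the paper's one-line Pythagorean argument, illustrated in Fig.~\ref{fig:sqrt_two}). Your chord-length formula $4r\sin(\phi/2)$ and the explicit remark that the $\frac{\pi}{2}$ cap comes from the minimal set of optimal paths merely spell out what the paper leaves implicit, so the two proofs are essentially the same.
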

\begin{proof}
	The proof follows directly from the Pythagorean theorem. The distance between the centers is the hypotenuse of a right triangle with sides of length $2r$.
\end{proof}

\begin{figure}[!ht]
	\centering
	\includegraphics[width=8.5cm,keepaspectratio]{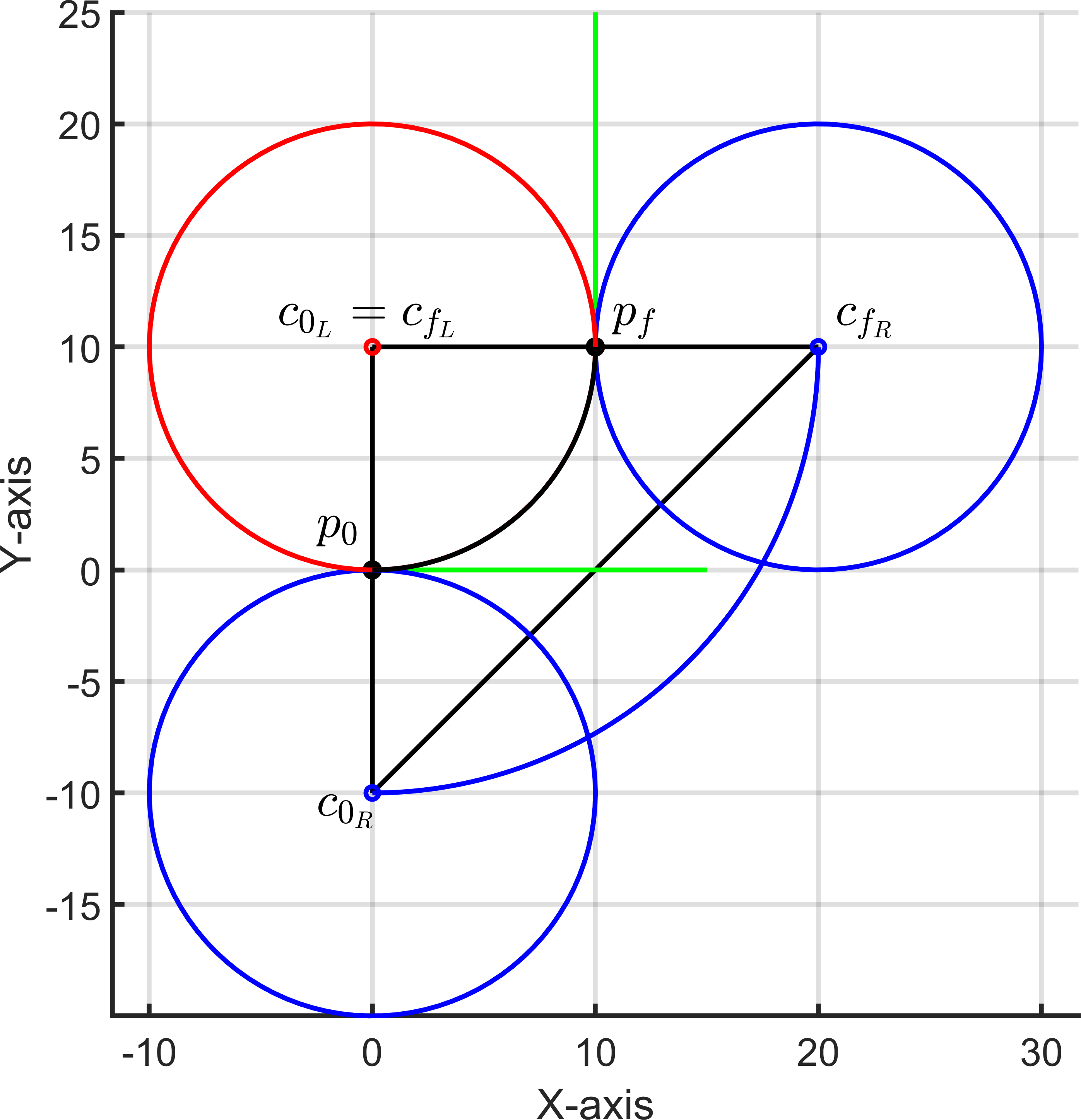}
	\caption{The right triangle with sides \(2r\) (where \(r = 10\)) is formed by a \(\frac{\pi}{2}\) left turn from \(p_0\) to \(p_f\). The hypotenuse, of length \(2r\sqrt{2}\), plays a crucial role in partitioning the solution space. The arc defined by the motion of RHC's center, as well as the RHC itself, is traced in blue.}\label{fig:sqrt_two}
\end{figure}

We illustrate a sample left turn of $\frac{\pi}{2}$ and the right triangle that is formed between $c_{0_{L}}$, $c_{0_{R}}$, and $c_{f_{R}}$ in Fig.~\ref{fig:sqrt_two}. The same reasoning that follows from Lemma~\ref{lem:first} and Lemma~\ref{lem:second} applies for the case when the optimal path consists of two, three, and even four $C$ motion primitives. Take for example the path $l_{\frac{\pi}{2}}^{+}r_{\frac{\pi}{2}}^{+}$. The distance between $c_{0_{L}}$ and $c_{f_{L}}$ is $\mathcal{K}$ and the distances between $c_{f_{R}}$ and $c_{f_{L}}$ and $c_{f_{R}}$ and $c_{0_{L}}$ is $2r$. When followed by an $S$ motion primitive, such distances are no longer maintained. Therefore, $CCC$ and $CCCC$ paths satisfy the condition defined in Algorithm~\ref{alg:booleanset} in Appendix~\ref{app:algs}, whereas $CSC$, $CCSC$ paths do not. We make the following proposition:
\begin{proposition}\label{prop:booleanset}
	Paths \(CCC\) and \(CCCC\) satisfy one or more of these conditions for the local start configuration \(p_{0}^{l} = (x_{0}^{l}, y_{0}^{l}, \theta_{0}^{l})\), the final mirrored configuration \(p_{m}^{l} = (x_{m}^{l}, y_{m}^{l}, \theta_{m}^{l})\), and the minimum turning radius \(r\):
	\begin{enumerate}
		\item The distances \(LL = \|c_{0_L} - c_{m_L}\|\), \(RR = \|c_{0_R} - c_{m_R}\|\), and \(LR = \|c_{0_L} - c_{m_R}\|\) satisfy:
		      \[
			      RR \leq \mathcal{K}, \quad LL \leq \mathcal{K}, \quad LR \leq 2r,
		      \]
		      where \(\mathcal{K} = 2r\sqrt{2}\).

		\item The distances \(LL = \|c_{0_L} - c_{m_L}\|\), \(RR = \|c_{0_R} - c_{m_R}\|\), and \(RL = \|c_{0_R} - c_{m_L}\|\) satisfy:
		      \[
			      RR \leq \mathcal{K}, \quad LL \leq \mathcal{K}, \quad RL \leq 2r.
		      \]

		\item The distances \(LL = \|c_{0_L} - c_{m_L}\|\), \(LR = \|c_{0_L} - c_{m_R}\|\), and \(RL = \|c_{0_R} - c_{m_L}\|\) satisfy:
		      \[
			      LR \leq 2r \quad LL \leq \mathcal{K}, \quad RL \leq 2r.
		      \]
	\end{enumerate}
	while paths \(CSC\) and \(CCSC\) do not. See Algorithm~\ref{alg:booleanset} in Appendix~\ref{app:algs} for computational details.
\end{proposition}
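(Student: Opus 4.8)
The plan is to establish the forward direction (every $CCC$ or $CCCC$ path satisfies at least one of the three distance conditions) and the reverse direction (no $CSC$ or $CCSC$ path can satisfy any of them) by purely geometric reasoning, building directly on Lemmas~\ref{lem:first} and~\ref{lem:second}. First I would formalize the observation already sketched in the text: along any $C$ primitive, one of the four circle centers stays fixed while another of opposite handedness rotates about it at a constant distance of $2r$ (external tangency), and after a $\frac{\pi}{2}$ arc the distance between the two \emph{same-handed} centers (e.g.\ $c_{0_L}$ and the moved $c_{L}$) is at most $\mathcal{K}=2r\sqrt{2}$ by the Pythagorean argument of Lemma~\ref{lem:second}. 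The key quantities to track are the six pairwise distances $LL, RR, LR, RL$ between the start centers and the \emph{mirrored final} centers $c_{m_L}, c_{m_R}$.

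For the forward direction I would enumerate the structure of each family. A $CCC$ path is one of $l^{\pm}r^{\mp}l^{\pm}$ or $r^{\pm}l^{\mp}r^{\pm}$ (up to the sign/direction bookkeeping handled by the projection into $Q_1$); along the whole motion each intermediate turn rotates a center about a neighbor, so the endpoints $c_{0}$ and $c_{m}$ of the chain of turning centers are connected by at most two consecutive $\frac{\pi}{2}$-bounded steps. Tracking, for an $lrl$-type path, the center that is held fixed at the first turn and then moved on the second, one gets $LL\le\mathcal{K}$, together with the external-tangency bounds $LR\le 2r$ and/or $RL\le 2r$ on the adjacent pairs — which is exactly one of the three cases. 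I would do this bookkeeping once for the $CCC$ sub-cases and then note that $CCCC$ ($lrlr$-type, with at most two interior $\frac{\pi}{2}$ arcs on each side) only tightens the same inequalities, because inserting a further turn keeps the relevant centers mutually tangent or within $\mathcal{K}$. The cleanest way to present this is a short table of the handedness pattern versus which of the three labelled conditions it triggers, with each entry justified by one application of Lemma~\ref{lem:first}/\ref{lem:second}.

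For the reverse direction the crucial point is the last sentence before Lemma~\ref{lem:second} in the excerpt: any straight segment $S$ \emph{breaks} the external tangency, because translating along $S$ moves $c_{f_L}$ and $c_{f_R}$ rigidly away from the start centers by the (nonzero, and in fact not-too-small) length of that segment. So for a $CSC$ or $CCSC$ path I would argue that the presence of an $S$ segment of length $\ell>0$ forces \emph{all three} of the relevant distances in each of the three conditions to exceed their thresholds simultaneously — the $S$ motion adds the same displacement vector to every final center, and one shows that $CSC$/$CCSC$ optimal paths have $S$-length bounded below in a way that pushes each of $LL,RR,LR,RL$ strictly past $\mathcal{K}$ resp.\ $2r$. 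Some care is needed because a $C$ segment in $CCSC$ could momentarily shrink one distance; the argument has to use that after the final $C$ the external-tangency/Pythagorean bound would cap the distance at $\mathcal{K}$ or $2r$ \emph{only if} no $S$ intervened, and the intervening $S$ strictly increases the distance from the start center to \emph{both} final centers because the displacement is a fixed translation and the start centers are on the same side.

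I expect the main obstacle to be the reverse direction, specifically ruling out the pathological possibility that an $S$ segment translates the final centers in a direction that happens to keep one distance small even while breaking tangency — i.e., showing the implication is between the \emph{conjunction} of all three inequalities in a condition and the path family, not just one inequality. Making that rigorous requires a lower bound on the $S$-segment length for admissible $CSC$/$CCSC$ paths (or, equivalently, an argument that if $\ell$ were small enough to preserve the bounds, the path would degenerate into a $CCC$/$CCCC$ path and hence not be a genuine $CSC$/$CCSC$ type), together with a direction-independent distance estimate such as the triangle inequality applied to the fixed translation vector. The forward direction, by contrast, should reduce to a finite case check over at most a dozen handedness patterns, each dispatched by a one-line invocation of the two lemmas.
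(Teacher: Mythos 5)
Your overall strategy --- tracking the four centre--centre distances via the external--tangency distance $2r$ and the chord bound $\mathcal{K}=2r\sqrt{2}$ --- is the same idea the paper uses; its proof of this proposition is literally a one-line appeal to Lemmas~\ref{lem:first} and~\ref{lem:second}, with the discussion preceding the proposition doing informally what you try to formalize. However, your reverse direction rests on a claim that is false as stated: an $S$ segment does \emph{not} push ``each of $LL,RR,LR,RL$ strictly past $\mathcal{K}$ resp.\ $2r$''. Take an optimal $l^{+}_{T}s^{+}_{\ell}l^{+}_{V}$ path with small $T,V$ and short straight length $\ell$: the LHC is fixed during both turns and only translates during $S$, so $LL=\ell\le\mathcal{K}$, and a direct computation gives $RR\le\mathcal{K}$ as well; only the cross-handed distances exceed $2r$ (e.g.\ $LR=\sqrt{4r^{2}+\ell^{2}+4r\ell\sin V}>2r$). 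What the exclusion actually needs --- and what suffices, because each of the three conditions contains at least one of $LR\le 2r$, $RL\le 2r$ --- is that a straight segment of \emph{any} positive length breaks the tangency strictly: at the start of $S$ the translation is perpendicular to the radius joining the fixed-side centre to the tangent centre, so the relevant cross distance becomes $\sqrt{(2r)^{2}+\ell^{2}}>2r$, and one must then check, type by type, that the remaining turns do not restore it. This also removes your worry about a lower bound on $\ell$: none is needed, since the thresholds are non-strict and any $\ell>0$ yields a strict violation, the case $\ell=0$ being a degenerate $CCC$/$CCCC$ path.

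Your forward direction as planned also has a gap: a finite case check over handedness patterns using only Lemmas~\ref{lem:first} and~\ref{lem:second} (i.e.\ ``arcs $\le\frac{\pi}{2}$'') cannot succeed, because concatenations of that shape exist which violate all three conditions. For example, $l^{+}_{\pi/2}r^{+}_{\pi/2}l^{+}_{\pi/2}$ from the origin gives $LL=\mathcal{K}$ but $RR=4r\sqrt{2}>\mathcal{K}$ and $LR=RL=\sqrt{20}\,r>2r$, so conditions 1--3 all fail; such a path is simply not optimal. Hence the bookkeeping must invoke the structure of the \emph{optimal} $CCC$/$CCCC$ types in the minimal set (equal middle arcs, cusp pattern, and which centre stays fixed in each type), i.e.\ exactly the information the paper leaves implicit behind its one-line appeal to the two lemmas. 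As written, then, the proposal is not a correct proof: the reverse-direction mechanism is wrong (though repairable by the perpendicularity/Pythagoras argument restricted to the cross-handed distances), and the forward-direction case check needs more than the two lemmas it claims to rely on.
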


\begin{proof}
	The proof follows directly from Lemma~\ref{lem:first} and Lemma~\ref{lem:second}.
\end{proof}
Essentially, Algorithm~\ref{alg:booleanset} in Appendix~\ref{app:algs} is a boolean function that takes in the starting and final configurations and returns true if the proposed condition is satisfied, and false otherwise. Proposition~\ref{prop:booleanset} is a direct consequence of Lemma~\ref{lem:first} and Lemma~\ref{lem:second} and allows us to classify the paths into two main sets as follows:
\begin{itemize}
	\item Set A\@: $CSC$, $CCSC$, and $CCSCC$
	\item Set B\@: $CCC$, $CCCC$, and $CCSCC$
\end{itemize}
Path types $CCSCC$ are included in both sets since we make no proposition about them.

We proceed with the reasoning for each of the 20 cases. As mentioned before, all of the reasoning for all of our 20 cases is carried out in the first quadrant of the local frame of reference. The local starting configuration need not be fixed to $p_{0} = (0, 0, 0)$, but we do that for convenience. The same goes for the turning radius of 20 that we chose. In all upcoming figures, the LHC and the motion of its center $c_{m_{L}}$ are visualized in red, and the RHC and the motion of its center $c_{m_{R}}$ in blue. The optimal path for each case is illustrated in black.
Cases $P_{1}-P_{8}$, $P_{10}$, $P_{11}$ ($CSC$, $CCSC$), are exclusive to set A, cases $P_{13}-P_{20}$ ($CCC$, $CCCC$) are exclusive to set B, and cases $P_{9}$ and $P_{12}$ ($CCSCC$) are common to both sets. We combine all propositions in each of the sets A and B to produce the algorithms Alg.~\ref{alg:A_partitions} and Alg.~\ref{alg:B_partitions} in Appendix~\ref{app:algs}. \\

\subsubsection{Set A}
We begin with an illustration in Fig.~\ref{fig:cond_0} that shows a path $l_{\Omega}^{+}s_{U}^{+}$, or $C^{+}S^{+}$. In this case, the final orientation $\theta_{f}$ is equal to $\Omega$ --- the angle which produces the shortest path from $p_{0}$ to $(x_{f}, y_{f})$ --- see Equation~\eqref{eq:underspecified}. During the initial $l_{\Omega}^{+}$ primitive, and based on Lemma~\ref{lem:first}, LHC remains fixed. During the second primitive, $s_{U}^{+}$, LHC translates along the straight red line segment. The angle defined by the line connecting $c_{0_{L}}$ and $c_{f_{L}}$ is equal to $\Omega$ and may be computed as:
\begin{equation}
	\begin{aligned}
		\angle L_{f}L_{0} = \atantwo{\left(c_{f_{L_{y}}}-c_{0_{L_{y}}}, c_{f_{L_{x}}}-c_{0_{L_{x}}} \right)}.
	\end{aligned}
\end{equation}

We define the path type $P_{2}$ with the following proposition after introducing an additional turn $l_{V}^{+}$ at the end of $l_{\Omega}^{+}s_{U}^{+}$:
\begin{proposition}\label{prop:p2}
	For path type $P_{2}$ of the form $l_{T}^{+}s_{U}^{+}l_{V}^{+}$ with $V \neq 0$, $\theta_{f} > \angle L_{f}L_{0}$.
\end{proposition}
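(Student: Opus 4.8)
The plan is to argue geometrically about the position of the LHC center $c_{f_L}$, which by Lemma~\ref{lem:first} is fixed during the opening $l_T^+$ segment and again during the closing $l_V^+$ segment, and only translates (along a straight line parallel to the heading) during the middle $s_U^+$ segment. The key observation is that for a path of the form $l_T^+ s_U^+ l_V^+$, the LHC center $c_{f_L}$ coincides with $c_{0_L}$ at the end of the first turn, then slides away along the straight segment, and then stays put during the final turn; hence the vector from $c_{0_L}$ to $c_{f_L}$ is exactly the displacement produced by $s_U^+$, whose direction equals the heading $\theta$ at the moment the straight segment is traversed. First I would set up coordinates in the local frame and write $\angle L_f L_0 = \atantwo(c_{f_{L_y}} - c_{0_{L_y}},\, c_{f_{L_x}} - c_{0_{L_x}})$, which is the direction of that displacement vector, and note that the heading during $s_U^+$ is some intermediate angle $\theta_s = \theta_0 + T/r = \theta_0 + T$ (with $r$ normalized), while the final heading is $\theta_f = \theta_s + V$.

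The main step is to show $\angle L_f L_0 = \theta_s$ and then conclude $\theta_f = \theta_s + V > \theta_s = \angle L_f L_0$ since $V > 0$ for a forward left turn with $V \neq 0$. To see $\angle L_f L_0 = \theta_s$: during $s_U^+$ the robot (and rigidly its LHC) translates by $U$ in the direction of the constant heading $\theta_s$; since $c_{f_L} = c_{0_L}$ before the straight segment and $c_{f_L}$ is unchanged afterward, the net displacement of the LHC center from $c_{0_L}$ to $c_{f_L}$ is precisely $U(\cos\theta_s, \sin\theta_s)$, so its argument is $\theta_s$ (using $U > 0$ for a forward straight segment; if $U = 0$ the statement degenerates and the claim is about the limiting direction, which I would handle by noting $U=0$ makes the path a $CC$ type not in this family, or by passing to the appropriate one-sided limit). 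Combining, $\theta_f = \theta_s + V$ and $V > 0$ give $\theta_f > \angle L_f L_0$.

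The subtle point I expect to need care is the sign/orientation bookkeeping: confirming that a forward left turn increases $\theta$ (so that both $T$ and $V$ contribute positive increments to the heading), and that the problem's angle conventions make $\angle L_f L_0$ land in the same branch as $\theta_s$ rather than differing by $2\pi$ — i.e., that $T + V < 2\pi$ on the minimal path set, which follows from the $\tfrac{\pi}{2}$ bound on arc central angles stated in Section~\ref{subsec:symmetries}. I would also remark that this is exactly the boundary-defining inequality: equality $\theta_f = \angle L_f L_0$ corresponds to $V = 0$, i.e. the degenerate $l_T^+ s_U^+$ path of Fig.~\ref{fig:cond_0}, so Proposition~\ref{prop:p2} cleanly separates the region where the extra closing turn is genuinely present. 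The rest is routine, so I would not belabor the coordinate algebra.
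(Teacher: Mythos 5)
Your proposal is correct and follows essentially the same route as the paper: it identifies $\angle L_{f}L_{0}$ with the heading during the straight segment (since, by Lemma~\ref{lem:first}, the LHC center is fixed during both left turns and only translates along the heading during $s_{U}^{+}$), and then notes that the closing $l_{V}^{+}$ turn increases $\theta_{f}$ without moving $c_{f_{L}}$, hence without changing $\angle L_{f}L_{0}$. Your explicit bookkeeping $\theta_{f}=\theta_{s}+V>\theta_{s}=\angle L_{f}L_{0}$ and the remarks on the $U=0$ and $V=0$ degenerate cases are just a more detailed rendering of the paper's perturbation-from-$l_{\Omega}^{+}s_{U}^{+}$ argument.
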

\begin{proof}
	Upon introducing an additional left turn primitive $l_{V}^{+}$ at the end of $l_{\Omega}^{+}s_{U}^{+}$, RHC rotates about LHC's center, $c_{f_{L}}$, without affecting the line connecting $c_{0_{L}}$ and $c_{f_{L}}$. Since the additional left turn increases $\theta_{f}$ without changing $\angle L_{f}L_{0}$, then, for an $l_{T}^{+}s_{U}^{+}l_{V}^{+}$ path, $\theta_{f} > \angle L_{f}L_{0}$.
\end{proof}
The condition proposed in Proposition~\ref{prop:p2} is invariant for the first case in set A\@, $P_{2}$. We provide an illustration of this case in the left side of Fig.~\ref{fig:cond_2_3}.

Upon saturation of the central angle of the second arc segment in $P_{2}$ to $\frac{\pi}{2}$, a new motion primitive is required to accomodate for a further increase in $\theta_{f}$. We define the path type $P_{3}$ with the following proposition that allows us to distinguish the partition of $P_{3}$ from $P_{2}$:
\begin{proposition}\label{prop:p3}
	For path type $P_{3}$ of the form $l_{T}^{+}s_{U}^{+}l_{\frac{\pi}{2}}^{+}r_{V}^{-}$ with $V \neq 0$, $\theta_{f} > \angle L_{f}L_{0} + \frac{\pi}{2}$.
\end{proposition}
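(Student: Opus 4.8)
The plan is to mirror the argument already used for Proposition~\ref{prop:p2}, tracking how the controlled angle quantity $\angle L_fL_0$ relates to $\theta_f$ as successive motion primitives are appended. Recall that in $P_2$ the key geometric invariant was that the final left turn $l_V^+$ rotates RHC about the fixed center $c_{f_L}$ without touching the line segment $c_{0_L}c_{f_L}$, so that $\theta_f = \angle L_fL_0 + V$ with $V \in (0, \tfrac{\pi}{2}]$; once $V$ saturates at $\tfrac{\pi}{2}$, the path $l_T^+s_U^+l_{\pi/2}^+$ has $\theta_f = \angle L_fL_0 + \tfrac{\pi}{2}$, which is exactly the boundary case between $P_2$ and $P_3$.

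First I would establish that a $P_3$ path $l_T^+s_U^+l_{\pi/2}^+r_V^-$ decomposes so that the prefix $l_T^+s_U^+l_{\pi/2}^+$ contributes precisely $\angle L_fL_0 + \tfrac{\pi}{2}$ to the final orientation. The $l_T^+$ primitive keeps LHC fixed (Lemma~\ref{lem:first}), the $s_U^+$ primitive translates LHC along the straight line whose direction is $\angle L_fL_0$, and the saturated $l_{\pi/2}^+$ primitive adds exactly $\tfrac{\pi}{2}$ to the heading while, again by Lemma~\ref{lem:first}, rotating RHC about $c_{f_L}$ without changing $\angle L_fL_0$. Then I would argue that appending the backward right turn $r_V^-$ with $V \neq 0$: by Lemma~\ref{lem:first} a right turn rotates the robot (and its LHC) about the fixed RHC center $c_{0_R}$ of that segment, which for a backward-right primitive \emph{increases} $\theta_f$ further (since $r^-$ has $u = +\tfrac{v_{\max}}{r}$), so the orientation strictly exceeds the prefix value. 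Hence $\theta_f > \angle L_fL_0 + \tfrac{\pi}{2}$.

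The subtle point — and what I expect to be the main obstacle — is verifying the sign/monotonicity of the contribution of the final $r_V^-$ segment, i.e.\ that this reverse right turn does indeed push $\theta_f$ past the $P_2$/$P_3$ boundary rather than back toward it, and that the earlier claim ``$s_U^+l_{\pi/2}^+$ leaves $\angle L_fL_0$ unchanged up to the end of the $l_{\pi/2}^+$'' is stated with the right reference frame for $c_{f_L}$ (the LHC center \emph{after} the straight segment, which then becomes the pivot of the saturated turn). I would make this precise by noting that $\angle L_fL_0$ is defined via the positions of $c_{0_L}$ and the terminal $c_{f_L}$; the $r_V^-$ primitive moves $c_{f_L}$ (Lemma~\ref{lem:first}: a right turn rotates LHC), so strictly speaking the appropriate invariant for distinguishing $P_3$ from $P_2$ is that the \emph{accumulated} heading relative to the line through the pre-$r_V^-$ LHC center already equals $\tfrac{\pi}{2} + \angle L_fL_0$ at the start of the last turn, and the last turn only adds to it; I would phrase the proposition's inequality as the net consequence. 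A short paragraph contrasting this with the $P_2$ regime (where $\theta_f \le \angle L_fL_0 + \tfrac{\pi}{2}$) completes the separation of the two subpartitions, analogously to how Proposition~\ref{prop:p2} separated $P_2$ from the pure $C^+S^+$ case.
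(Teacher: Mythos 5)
Your proposal works in the same geometric spirit as the paper's proof but runs the argument in the opposite direction. The paper reasons by exclusion at the boundary: once the second left arc of $P_{2}$ saturates at $\theta_{f}=\angle L_{f}L_{0}+\frac{\pi}{2}$, any further increase of $\theta_{f}$ requires appending a new $C$ primitive, and since two consecutive left turns are excluded from the minimal set while a forward right turn would \emph{decrease} $\theta_{f}$, that primitive must be $r_{V}^{-}$; the inequality is then read off as characterizing the $P_{3}$ regime. You instead prove the implication as literally stated (path of type $P_{3}$ with $V\neq 0$ $\Rightarrow$ inequality) by tracking $\theta_{f}$ against $\angle L_{f}L_{0}$: the prefix $l_{T}^{+}s_{U}^{+}l_{\frac{\pi}{2}}^{+}$ sits exactly on the boundary, and the appended $r_{V}^{-}$ (which has $u=+\thetamax$, hence $\dot{\theta}>0$) pushes $\theta_{f}$ past it. That direction is arguably the cleaner match to the statement, and it parallels the paper's own proof of Proposition~\ref{prop:p2}.

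The step you flag but do not close is, however, a genuine gap in your route, and it is exactly what the paper's exclusion argument sidesteps. During the final $r_{V}^{-}$ the LHC is no longer fixed: by Lemma~\ref{lem:first} a right turn rotates the LHC about the RHC center of that segment (which is $c_{f_{R}}$, not $c_{0_{R}}$), so $c_{f_{L}}$ moves and $\angle L_{f}L_{0}$ --- defined in the proposition through the \emph{final} $c_{f_{L}}$ --- changes during the last segment. Declaring that ``the last turn only adds to it'' relative to the pre-turn LHC center does not give the stated inequality; you must show that $\theta-\atantwo\left(c_{L_{y}}-c_{0_{L_{y}}},\,c_{L_{x}}-c_{0_{L_{x}}}\right)$ never falls back to $\frac{\pi}{2}$ along the turn. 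A fix in your style does exist: since $c_{L}$ rotates about $c_{f_{R}}$, its velocity is perpendicular to $c_{L}-c_{f_{R}}$ and points opposite the heading, so at the start of the turn $\angle L_{f}L_{0}$ actually \emph{decreases} while $\theta$ increases, and one then has to bound the angular rate of $c_{L}-c_{0_{L}}$ by $1$ over the at most $\frac{\pi}{2}$ of turning (using the set-A distance conditions). As written, though, this crucial monotonicity is asserted rather than proven, whereas the paper avoids needing it by arguing which primitive must follow the saturated arc rather than tracking $\angle L_{f}L_{0}$ through it.
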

\begin{proof}
	At $\theta_{f}$ equal to $\angle L_{f}L_{0} + \frac{\pi}{2}$, the last $C$ primitive in $P_{2}$ saturates to $\frac{\pi}{2}$. In order to accomodate for a further increase in $\theta_{f}$, a new motion $C$ motion primitive is required. Based on the original minimal set of 48 paths, the next motion primitive has to be a right turn of negative direction, since there cannot be two consecutive left turns, and since a positive direction right turn decreases $\theta_{f}$ instead. Therefore, the path must be $l_{T}^{+}s_{U}^{+}l_{\frac{\pi}{2}}^{+}r_{V}^{-}$.
\end{proof}
An example $P_{3}$ case is illustrated in the right side of Fig.~\ref{fig:cond_2_3}.

\begin{figure}[!ht]
	\centering
	\includegraphics[width=8.5cm,keepaspectratio]{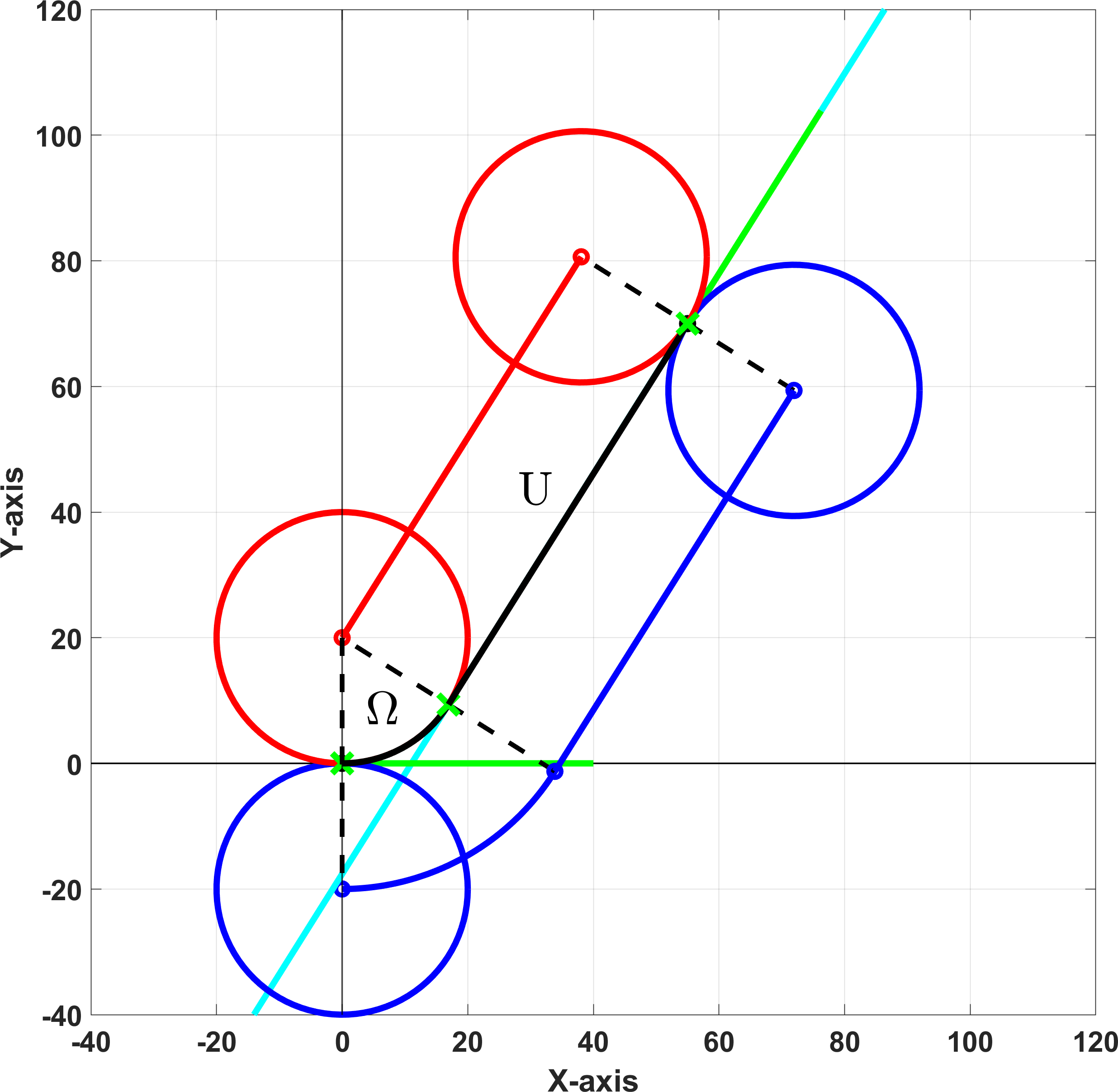}
	\caption{The optimal path between the start and final configurations is a degenerate/edge case in the CSC family, where the solution is $l_{\Omega}^{+}s_{U}^{+}$, or $C^{+}S^{+}$. The last arc segment has a length of zero. The cyan line marks $\theta_{f} = \Omega$. The angle $\Omega$ is measured as the central angle of the arclength undertaken during the first $l_{\Omega}^{+}$ motion pritimive starting from $p_{0}$.}\label{fig:cond_0}
\end{figure}

\begin{figure}[!ht]
	\centering
	\begin{subfigure}{}
		\includegraphics[width=4.25cm]{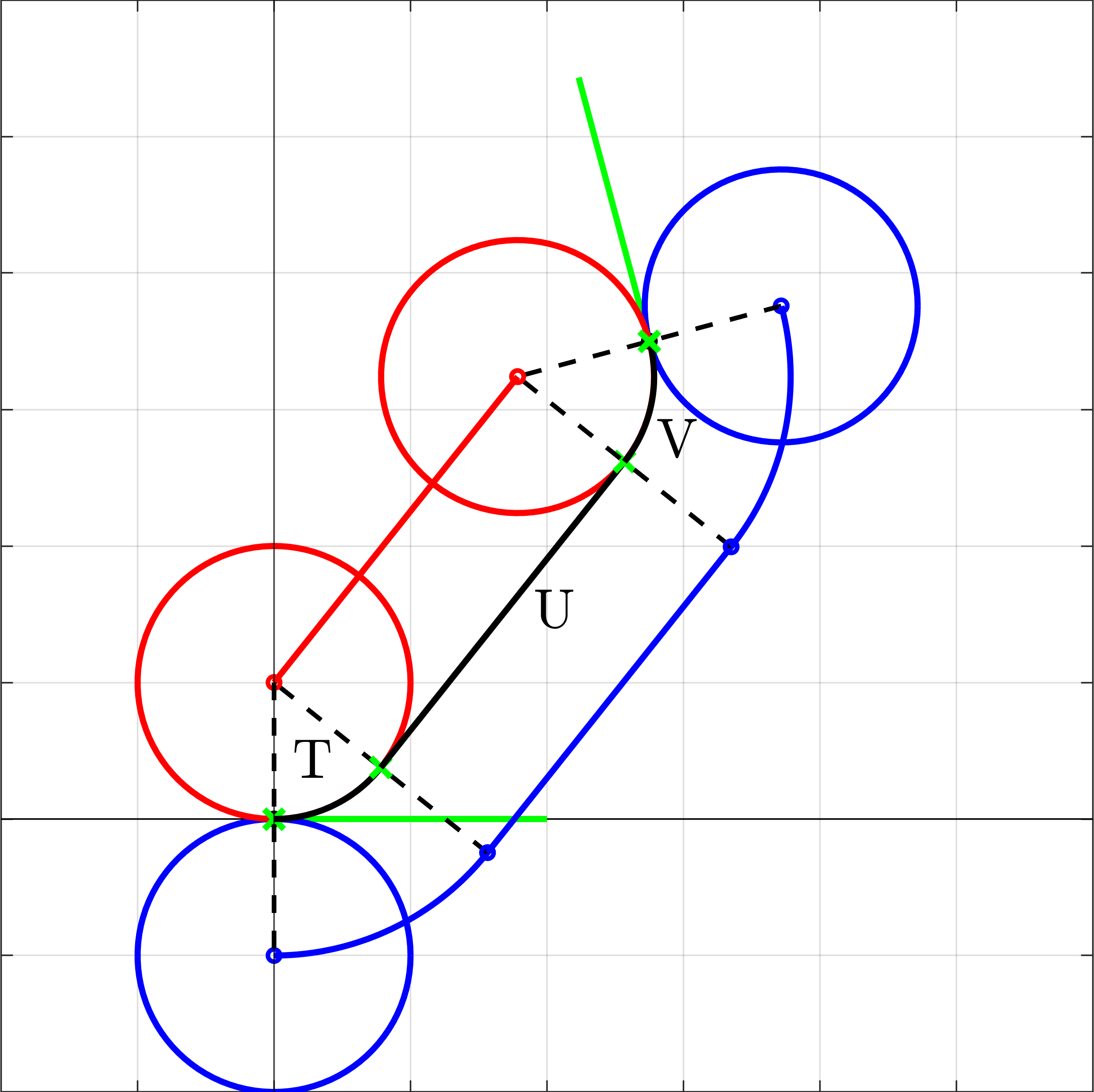}
	\end{subfigure}\hfil
	\begin{subfigure}{}
		\includegraphics[width=4.25cm]{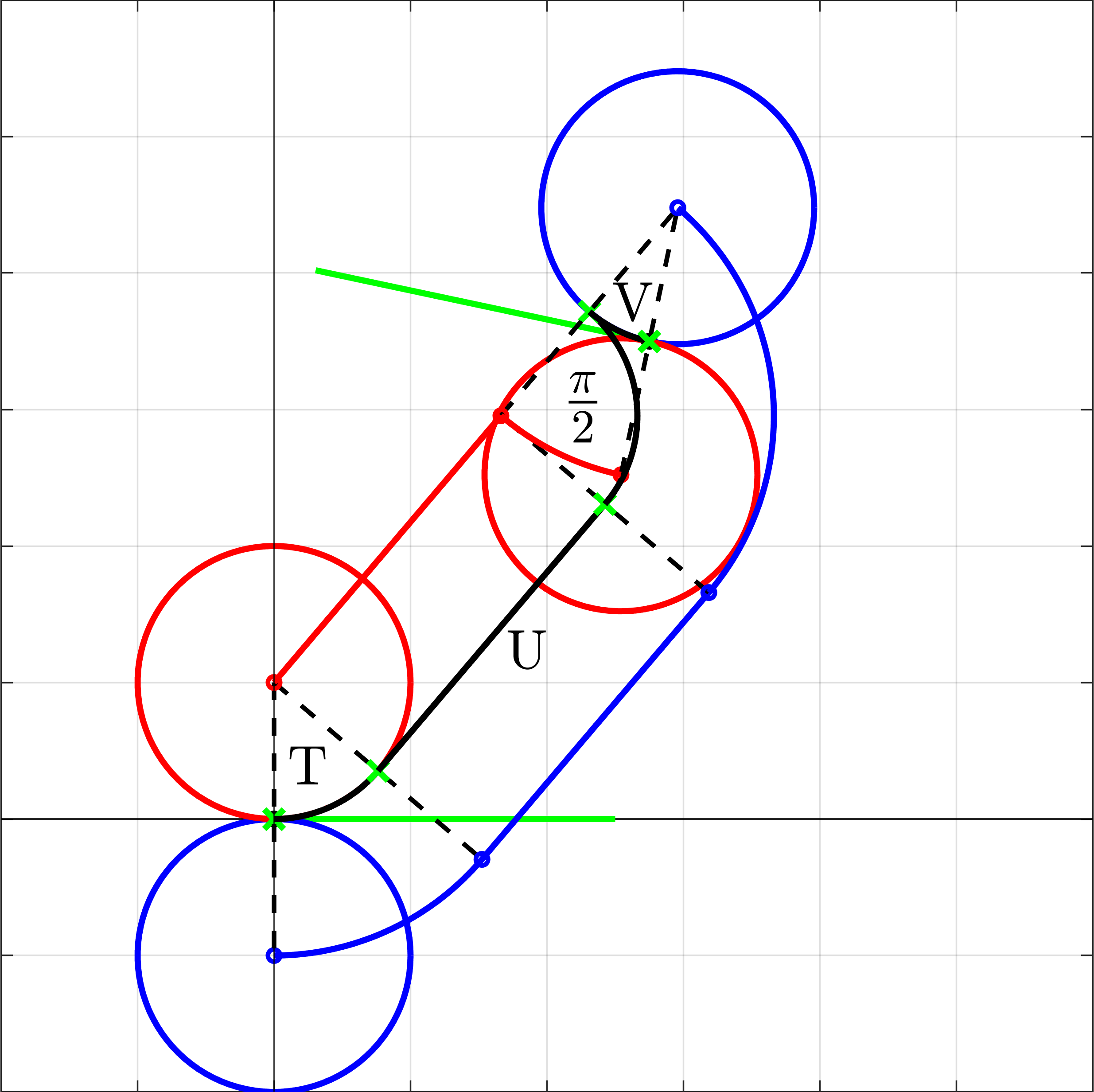}
	\end{subfigure}\hfil
	\caption{$P_{2}$: $l_{T}^{+}s_{U}^{+}l_{V}^{+}$ (left) where $\angle L_{f}L_{0} < \theta_{f} \leq{} \angle L_{f}L_{0} + \frac{\pi}{2}$, \& $P_{3}$: $l_{T}^{+}s_{U}^{+}l_{\frac{\pi}{2}}^{+}r_{V}^{-}$ (right) where $\theta_{f} > \angle L_{f}L_{0} + \frac{\pi}{2}$.}\label{fig:cond_2_3}
\end{figure}

Similar to $P_{2}$, $P_{1}$ is obtained by introducing an additional right turn primitive $r_{V}^{+}$ at the end of $l_{\Omega}^{+}s_{U}^{+}$.
\begin{proposition}\label{prop:p1}
	For path type $P_{1}$ of the form $l_{T}^{+}s_{U}^{+}r_{V}^{+}$ with $V \neq 0$, $\theta_{f} < |\angle L_{f}L_{0}|$.
\end{proposition}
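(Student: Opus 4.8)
The plan is to follow the same template as the proof of Proposition~\ref{prop:p2}: track the left-hand-circle centre along the path and compare the direction of the segment $c_{0_L}c_{f_L}$ with the final heading $\theta_{f}$. The one structural difference is that here the terminating primitive is a \emph{right} turn $r_{V}^{+}$, so by Lemma~\ref{lem:first} it does \emph{not} leave the LHC fixed --- it pivots $c_{L}$ about the (now fixed) RHC centre $c_{f_{R}}$ --- and the clean equality $\angle L_{f}L_{0}=\theta_{f}$ that held at the end of the $l^{+}s^{+}$ prefix no longer survives. The work therefore amounts to quantifying exactly how much the final turn drags $c_{L}$, and showing the resulting displacement cannot push $\angle L_{f}L_{0}$ as low as $\theta_{f}$.

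Concretely, I would place $p_{0}=(0,0,0)$ with $c_{0_{L}}=(0,r)$ (allowed by the rotational invariance of Section~\ref{subsec:symmetries}) and walk through the three primitives using Lemma~\ref{lem:first}: during $l_{T}^{+}$ the LHC stays at $c_{0_{L}}$; during $s_{U}^{+}$ it slides to $c_{L}'=c_{0_{L}}+U(\cos T,\sin T)$, so the chord $c_{0_{L}}c_{L}'$ points in direction exactly $T$ while the heading is still $T$; and during $r_{V}^{+}$ the point $c_{L}'$ --- which lies at distance $2r$ from $c_{f_{R}}$ in direction $T+\tfrac{\pi}{2}$ --- is rotated clockwise by $V$ about $c_{f_{R}}$, so the heading drops to $\theta_{f}=T-V$. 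A short computation with the half-angle / sum-to-product identities then yields
\[
	c_{f_{L}} - c_{0_{L}} \;=\; U\,(\cos T,\sin T) \;+\; 4r\sin\!\tfrac{V}{2}\,\bigl(\cos(T-\tfrac{V}{2}),\,\sin(T-\tfrac{V}{2})\bigr),
\]
that is, $c_{f_{L}}-c_{0_{L}}$ is the sum of a vector of length $U\ge 0$ in direction $T$ and a vector of length $4r\sin\tfrac{V}{2}>0$ in direction $T-\tfrac{V}{2}$.

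From here the conclusion is geometric: the two summands span an angle $\tfrac{V}{2}<\pi$, so their resultant points in a direction contained in the closed interval $[\,T-\tfrac{V}{2},\,T\,]$, with the lower endpoint attained only in the degenerate case $U=0$. Hence $\angle L_{f}L_{0}\ge T-\tfrac{V}{2}$, and since $V\neq 0$ forces $\tfrac{V}{2}<V$ we get $\angle L_{f}L_{0}\ge T-\tfrac{V}{2} > T-V = \theta_{f}$; checking the two sign cases for $\angle L_{f}L_{0}$ shows that $\theta_{f}<\angle L_{f}L_{0}$ already implies $\theta_{f}<|\angle L_{f}L_{0}|$, the claimed invariant. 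The only point needing care is the justification that the resultant really lies \emph{between} the two summand directions in the signed-angle sense (no $2\pi$ ambiguity in $\atantwo$), but this is immediate from the $P_{1}$ parameter ranges $T\in[0,\tfrac{\pi}{2}]$, $V\in(0,\tfrac{\pi}{2}]$, which confine every angle in play to $(-\pi,\pi]$.
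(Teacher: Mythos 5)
Your proof is correct, and its skeleton is the paper's own: ground everything in Lemma~\ref{lem:first}, follow the LHC centre through $l_{T}^{+}s_{U}^{+}$, and observe that the terminal $r_{V}^{+}$ pivots it about the now-fixed $c_{f_{R}}$. Where you genuinely go beyond the paper is in making that last step quantitative. The paper's proof is a two-line assertion --- ``LHC rotates about $c_{f_{R}}$, so $\theta_{f}$ decreases below $\angle L_{f}L_{0}$'' --- which silently skips the very point you flag: unlike the $P_{2}$ case, the final turn \emph{does} move $c_{f_{L}}$, so $\angle L_{f}L_{0}$ is not frozen and one must argue that the heading falls faster than the chord direction does. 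Your decomposition $c_{f_{L}}-c_{0_{L}} = U(\cos T,\sin T) + 4r\sin\tfrac{V}{2}\,\bigl(\cos(T-\tfrac{V}{2}),\sin(T-\tfrac{V}{2})\bigr)$ supplies exactly that missing estimate: the chord direction stays in $[\,T-\tfrac{V}{2},\,T\,]$ while the heading drops to $T-V$, i.e.\ the turn costs the heading twice what it costs the chord, giving $\theta_{f} < \angle L_{f}L_{0} \leq |\angle L_{f}L_{0}|$ strictly for $V\neq 0$, including the degenerate $U=0$ and $T=0$ cases and with the $\atantwo$ range issue correctly dismissed for $T\in[0,\tfrac{\pi}{2}]$, $V\in(0,\tfrac{\pi}{2}]$. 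So: same geometric route, but your version is a sharper, self-contained justification of the inequality the paper only states; the paper's version buys brevity, yours buys an explicit margin ($\angle L_{f}L_{0}-\theta_{f} \geq \tfrac{V}{2}$) that also explains why the bound degrades to equality only as $V\to 0$.
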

\begin{proof}
	LHC rotates about RHC's center, $c_{f_{R}}$. As such, $\theta_{f}$ decreases below $\angle L_{f}L_{0}$.
\end{proof}

The absolute value is to account for the sign change in $\angle L_{f}L_{0}$ when $c_{f_{L_{y}}} < c_{0_{L_{y}}}$. We illustrate an example of $P_{1}$ in Fig.~\ref{fig:cond_1_4} to the left.

Similar to $P_{2}$ and $P_{3}$, as $\theta_{f}$ decreases, the central angle of the third arc segment in $P_{1}$ saturates to $\frac{\pi}{2}$, and a new motion primitive is required to accomodate for a further decrease in $\theta_{f}$. Following similar reasoning, the next motion primitive has to be a left turn of negative direction, since a positive direction left turn increases $\theta_{f}$ instead. Therefore, the path must be $l_{T}^{+}s_{U}^{+}r_{\frac{\pi}{2}}^{+}l_{V}^{-}$. We illustrate such a path in Fig.\ref{fig:cond_1_4} to the right.
Moreover, we illustrate the case when the third arc segment in $P_{1}$ is $\frac{\pi}{2}$ in Fig.~\ref{fig:cond_1_4_t1}.
We note that the line $L$ defined by the point $(x_{f}, y_{f})$ and direction $\theta_{f}$ is perpendicular to the straight line primitive $S$ that is tangent to $c_{0_{L}}$ and $c_{f_{R}}$. This is trivially due to the difference of $\frac{\pi}{2}$ between the two primitives. Let $t_{1}$ be the signed distance along $L$ between the projection of $c_{0_{L}}$ onto $L$, $c_{0_{L}}^{'}$, and $(x_{f}, y_{f})$. We make the following proposition that allows us to distinguish partitions $P_{1}$ and $P_{4}$ in the negative range of $\theta_{f}$:
\begin{proposition}\label{prop:p1_vs_p4}
	For path type $P_{4}$ of the form $l_{T}^{+}s_{U}^{+}r_{\frac{\pi}{2}}^{+}l_{V}^{-}$ with $V \neq 0$, $\left(t_{1} > -2r\right) \land \left(\angle R_{f}L_{0} = \atantwo{}\left(c_{f_{R_{y}}}-c_{0_{L_{y}}}, c_{f_{R_{x}}}-c_{0_{L_{x}}}\right) > \theta_{f}\right)$.
\end{proposition}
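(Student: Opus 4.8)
The plan is to characterize path type $P_4$ geometrically by tracking what happens to the last two circle centers as the path is traversed, exactly in the spirit of the proofs of Propositions~\ref{prop:p1}, \ref{prop:p2}, and~\ref{prop:p3}. A path of type $P_4$ has the form $l_{T}^{+}s_{U}^{+}r_{\frac{\pi}{2}}^{+}l_{V}^{-}$: a forward left arc, a forward straight segment, a forward right quarter-turn, and finally a backward left arc. I would first fix the configuration at the end of the third primitive $r_{\frac{\pi}{2}}^{+}$, call it $p_{3}$, with orientation $\theta_{3}$. By the perpendicularity observation already established in the text (the line $L$ through $(x_f,y_f)$ with direction $\theta_f$ is perpendicular to the straight primitive $S$ tangent to $c_{0_L}$ and $c_{3_R}=c_{f_R}$, because $C$ and $S$ differ by $\tfrac{\pi}{2}$), the straight segment $S$ lies along a known direction, and the signed offset of $c_{0_L}$'s projection onto $L$ from $(x_f,y_f)$ is precisely $t_1$.

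The first key step is the inequality $\angle R_f L_0 > \theta_f$. During the final backward left arc $l_V^{-}$, by Lemma~\ref{lem:first} the robot and its RHC rotate about the LHC center $c_{f_L}$, which is fixed during that primitive, while $c_{f_R}$ moves along the circle of radius $2r$ about $c_{f_L}$. At the start of this last primitive (i.e.\ at $p_3$), the orientation equals $\angle R_3 L_0 = \angle R_f L_0$ up to the fixed geometric relation — more precisely, the perpendicularity forces the segment $c_{0_L}c_{f_R}$ to make angle $\angle R_f L_0$ with the $x$-axis, and the heading at $p_3$ is obtained from this by a fixed $\tfrac{\pi}{2}$ offset; since the final primitive is a \emph{backward left} turn ($u>0$ but $v<0$, so $\dot\theta>0$ yet the arc is traversed "backward"), I must check the sign carefully, but the net effect is that $\theta_f$ ends up strictly below $\angle R_f L_0$ whenever $V\neq 0$. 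This is the same mechanism as in Proposition~\ref{prop:p3}: an appended arc changes $\theta_f$ monotonically while leaving the relevant center-to-center line invariant. The boundary $V=0$ gives equality, which is exactly the $P_1$-with-saturated-third-arc configuration of Fig.~\ref{fig:cond_1_4_t1}.

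The second key step is the inequality $t_1 > -2r$. Here I would argue that $t_1 > -2r$ is precisely the condition that the third arc $r_{\frac{\pi}{2}}^{+}$ has not "overshot": if $t_1 \le -2r$, the tangent straight line and the geometry of the quarter-turn force the path to degenerate into a different family (a $CCSC$ or $CCSCC$ type, i.e.\ one of $P_5$–$P_{12}$), because the LHC center after the quarter turn would be positioned so that a pure backward left arc can no longer close the gap to $(x_f,y_f)$ without an intermediate straight segment. Concretely, $c_{0_L}$ projects onto $L$ at signed distance $t_1$ from the endpoint; the quarter-turn displaces the active turning center by $2r$ along $L$, so $t_1 > -2r$ keeps the post-turn center on the correct side of the endpoint for the final arc to be well-defined and of positive length. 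I would make this precise by writing the position of $c_{f_L}$ in terms of $c_{0_L}$, the length $U$ of the straight segment, and the quarter-turn, then imposing that the final arc closes the configuration.

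Finally, the conjunction is shown to be the defining invariant of the $P_4$ partition by the usual complementarity argument: within Set~A and in the negative-$\theta_f$ regime, a path is $P_1$ iff the third arc is unsaturated (handled by Proposition~\ref{prop:p1}) and $P_4$ iff it is saturated at $\tfrac{\pi}{2}$ with a nonzero fourth arc, and the two conditions $t_1>-2r$ and $\angle R_f L_0 > \theta_f$ exactly carve out the latter region. The main obstacle I anticipate is getting the \emph{signs} right in the $\angle R_f L_0 > \theta_f$ step: because the fourth primitive is a backward-direction left turn, one has to be careful about whether appending it raises or lowers $\theta_f$ relative to the reference angle, and about the $\tfrac{\pi}{2}$ offset between the heading and the center-to-center line $c_{0_L}c_{f_R}$. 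The cleanest route around this is probably to compute $\theta_f$ and $\angle R_f L_0$ explicitly in terms of $T$, $U$, and $V$ for a representative $P_4$ path and verify the inequality directly, then invoke monotonicity in $V$ to extend it to the whole partition.
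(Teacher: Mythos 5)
Your overall plan --- treating the saturated-$P_{1}$ configuration of Fig.~\ref{fig:cond_1_4_t1} as the boundary and arguing monotonically as the path enters the $P_{4}$ regime --- is the same as the paper's, but the two steps that actually carry the proof contain genuine errors. For the angular conjunct, your claim that ``the boundary $V=0$ gives equality'' is false whenever $U>0$: for $l_{T}^{+}s_{U}^{+}r_{\frac{\pi}{2}}^{+}$ one has $c_{f_{R}}-c_{0_{L}}=2r(\sin T,-\cos T)+U(\cos T,\sin T)$, whose direction lies strictly between $\theta_{f}=T-\tfrac{\pi}{2}$ and $T$, so $\angle R_{f}L_{0}>\theta_{f}$ already holds at $V=0$; equality occurs only for $l_{T}^{+}r_{\frac{\pi}{2}}^{+}$ (i.e.\ $U=0$), which is exactly how the paper phrases it (both the $S$ primitive and the final $l_{V}^{-}$ widen the gap). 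More importantly, the mechanism you import from Propositions~\ref{prop:p2} and~\ref{prop:p3} --- ``an appended arc changes $\theta_{f}$ monotonically while leaving the relevant center-to-center line invariant'' --- does not transfer here: during $l_{V}^{-}$ it is $c_{f_{L}}$ that stays fixed (Lemma~\ref{lem:first}) while $c_{f_{R}}$ rotates about it, so the line $c_{0_{L}}c_{f_{R}}$ is \emph{not} invariant, as you yourself observe two sentences earlier. What must be shown is that the gap $\angle R_{f}L_{0}-\theta_{f}$ grows with $V$ even though both quantities change; your sketch never establishes this, and note that under the paper's conventions $l^{-}$ has $\dot{\theta}=-\thetamax<0$, not $\dot{\theta}>0$ as you wrote, so the sign bookkeeping you flag as a worry is indeed off.

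The $t_{1}$ conjunct is also only gestured at. ``The quarter-turn displaces the active turning center by $2r$ along $L$'' is not the relevant fact (the chord traced by the center under a $\tfrac{\pi}{2}$ turn has length $2\sqrt{2}\,r$); what the paper actually uses is that at the saturation boundary $t_{1}=-2r$ exactly, independently of $T$ and $U$, that $c_{f_{R}}-(x_{f},y_{f})$ always has length $r$ and is perpendicular to $L$, so that $\|c_{f_{R}}-c_{0_{L}}'\|^{2}=r^{2}+t_{1}^{2}$, and that as $\theta_{f}$ decreases into the $P_{4}$ regime the distance $\|c_{f_{R}}-c_{0_{L}}'\|$ decreases, forcing $|t_{1}|<2r$. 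Your alternative ``closure'' framing is deferred to a computation you do not carry out, so as written the proposal establishes neither conjunct; filling in the missing steps would essentially reproduce the paper's argument (or the explicit computation in $T$, $U$, $V$ that you mention as a fallback).
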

\begin{proof}
	When the last arc segment in $P_{1}$ saturates to $\frac{\pi}{2}$, the distance between $c_{0_{L}}^{'}$ and $(x_{f}, y_{f})$ is equal to $2r$. The distance between $c_{f_{R}}$ and $(x_{f}, y_{f})$ is always $r$ and the line defined by the two points is always perpendicular to $L$. Moreover, when $\theta_{f}$ further decreases, $c_{f_{R}}$ rotates clockwise and the distance between $c_{f_{R}}$ and $c_{0_{L}}^{'}$ decreases. Therefore, following the Pythagorean theorem, $t_{1}$ must decrease in magnitude. The angle $\angle R_{f}L_{0}$ is the angle defined by the two points $c_{f_{R}}$ and $c_{0_{L}}$. For a path of the form $l_{T}^{+}r_{\frac{\pi}{2}}^{+}$, $\angle R_{f}L_{0}$ is always aligned with $\theta_{f}$. Both the straight line primitive $S$ in between the two arc segments and the arc segment $l_{V}^{-}$ at the end of $P_{4}$ contribute to increasing $\angle R_{f}L_{0}$ beyond $\theta_{f}$.
\end{proof}

\begin{figure}[!ht]
	\centering
	\begin{subfigure}{}
		\includegraphics[width=4.25cm]{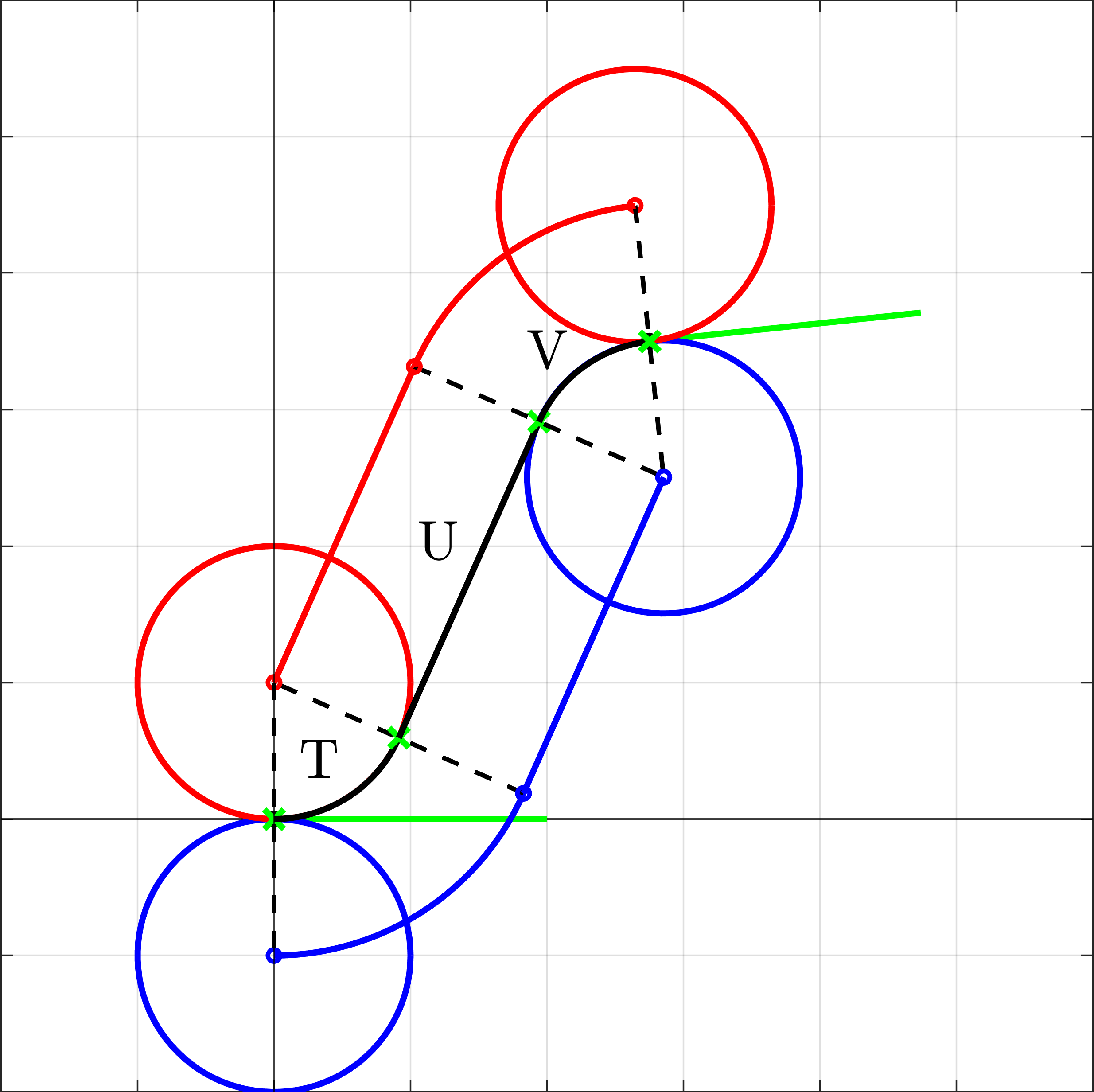}
	\end{subfigure}\hfil
	\begin{subfigure}{}
		\includegraphics[width=4.25cm]{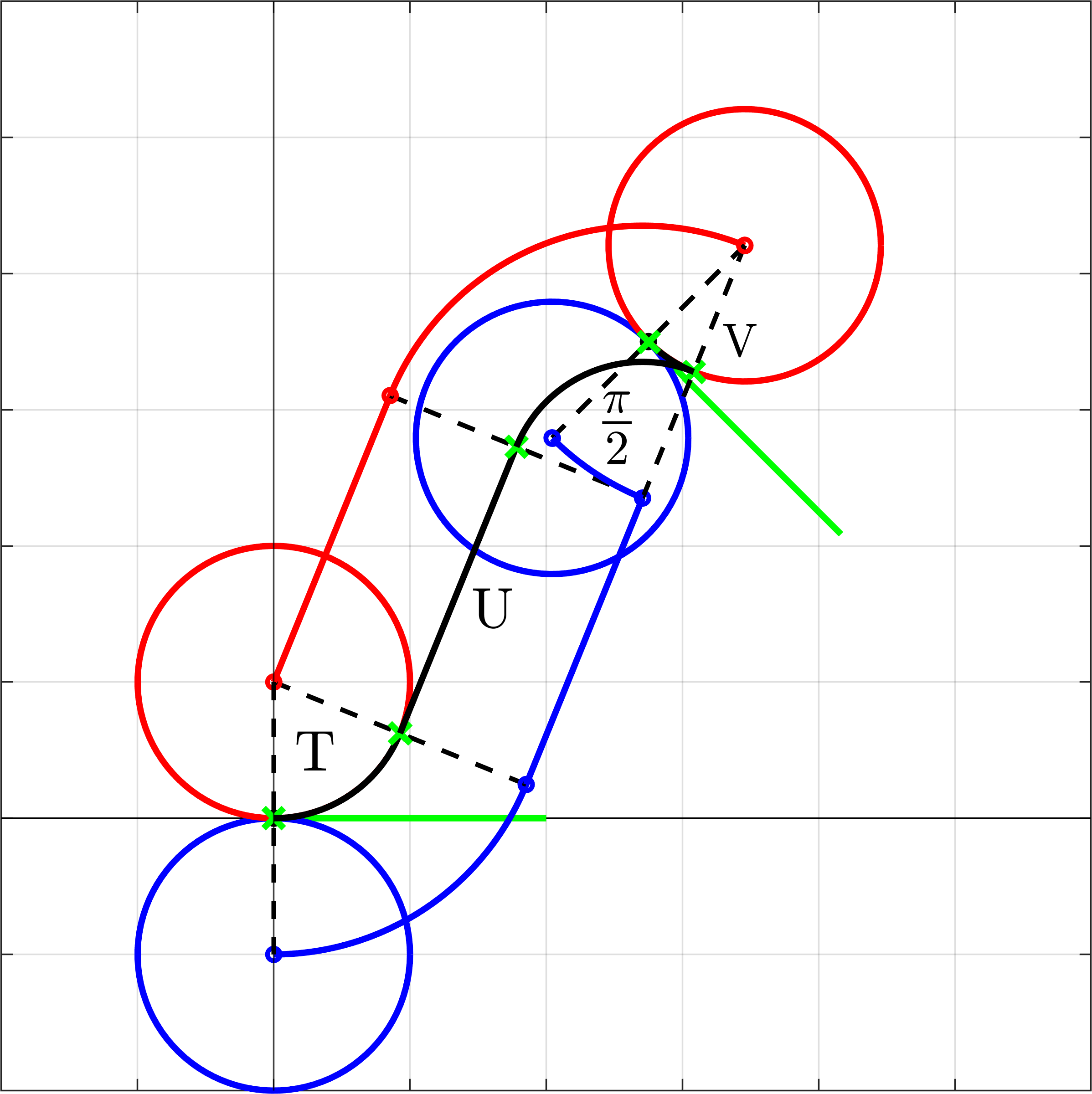}
	\end{subfigure}\hfil
	\caption{$P_{1}$: $l_{T}^{+}s_{U}^{+}r_{V}^{+}$ (left), \& $P_{4}$: $l_{T}^{+}s_{U}^{+}r_{\frac{\pi}{2}}^{+}l_{V}^{-}$ (right).}\label{fig:cond_1_4}
\end{figure}

\begin{figure}[!ht]
	\centering
	\includegraphics[width=8.5cm,keepaspectratio]{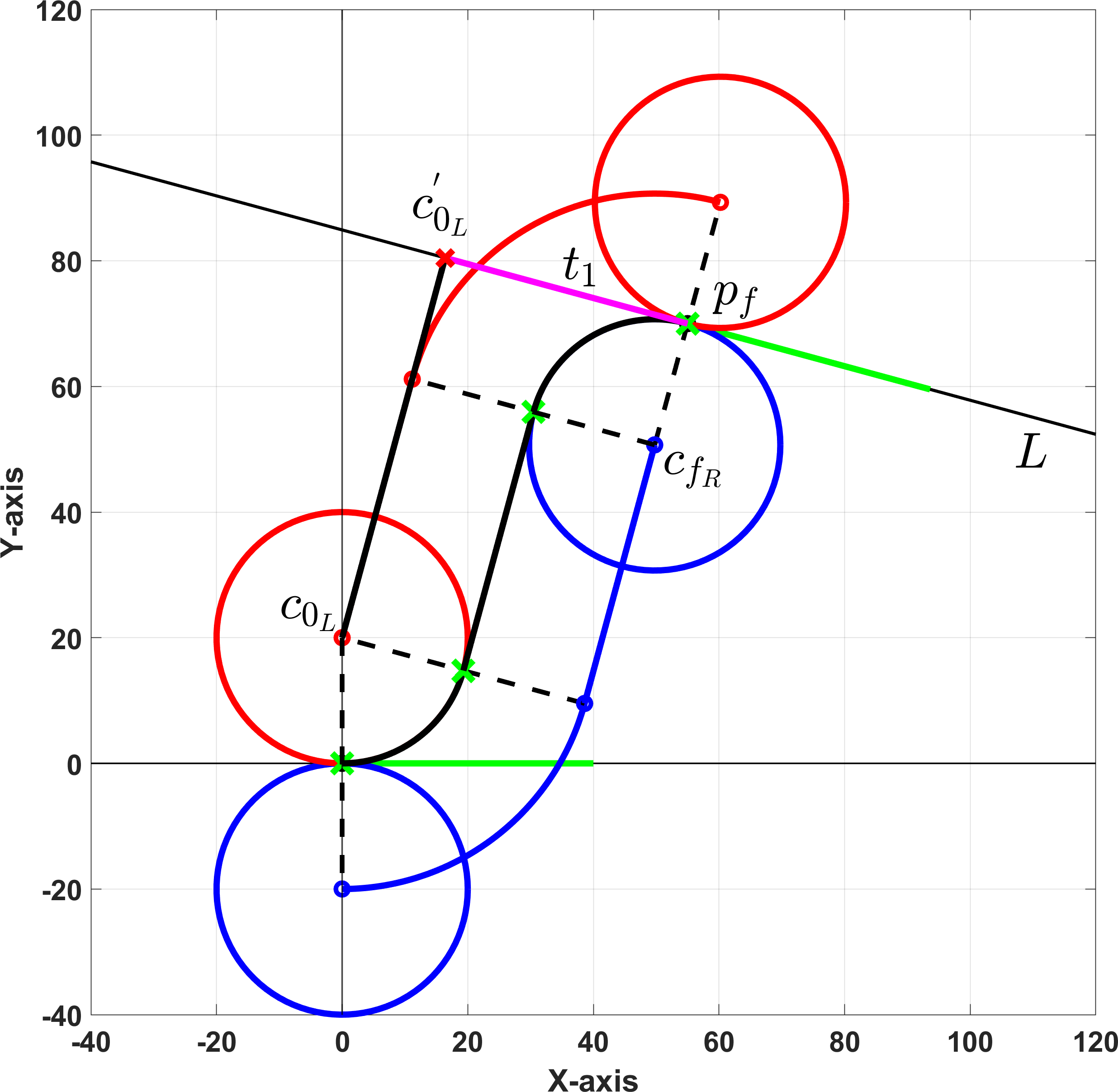}
	\caption{Plot showing the path $P_{1}$ with the third primitive saturated to $\frac{\pi}{2}$, $l_{T}^{+}s_{U}^{+}r_{\frac{\pi}{2}}^{+}$. $L$ is the line defined by the point $(x_{f}, y_{f})$ and angle $\theta_{f}$. $c_{0_{L}}^{'}$ is the projection of $c_{0_{L}}$ onto line $L$ and $t_{1}$ is the distance between $c_{0_{L}}^{'}$ and $(x_{f}, y_{f})$. $t_{1}$ is marked in magenta.}\label{fig:cond_1_4_t1}
\end{figure}

The conditions that define the subpartitions for path types $P_{7}$ and $P_{8}$ follow from the reasoning that we present in Lemma~\ref{lem:third}.
\begin{lemma}\label{lem:third}
	Given a path $l_{T}^{+}s_{U}^{+}$ that starts from $p_{0} = (0, 0, 0)$, $\forall{}~(0 < T \leq{} \frac{\pi}{2})$ and $\forall U$,  the final position $(x_{f}, y_{f}) \in Q_{1}$. Similarily, given a path $r_{T}^{+}s_{U}^{+}$ that starts from $p_{0}$, $\forall{}~(0 < T \leq{} \frac{\pi}{2})$ and $\forall U$, the final position $(x_{f}, y_{f}) \in Q_{4}$.
\end{lemma}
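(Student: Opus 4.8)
The plan is to derive the endpoint $(x_f, y_f)$ in closed form by integrating the kinematic model~\eqref{eq:kinematic_model} along the two segments, and then to inspect the signs of $x_f$ and $y_f$ over the stated range of $T$ and $U$.

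First I would fix $p_0 = (0,0,0)$, so that the robot points along the positive $x$-axis and $c_{0_L} = (0, r)$. Along the forward left turn $l_T^+$ the rear-axle centre traces the arc of the LHC of central angle $T$; integrating $\dot\theta = \vmax/r$ together with $(\dot x, \dot y) = \vmax(\cos\theta, \sin\theta)$ gives the intermediate pose $\bigl(r\sin T,\ r(1-\cos T),\ T\bigr)$. Along the subsequent forward straight segment $s_U^+$ of length $U \ge 0$ the heading stays equal to $T$ and the position advances by $U(\cos T, \sin T)$, so that
\[
 (x_f, y_f) = \bigl(r\sin T + U\cos T,\ \ r(1-\cos T) + U\sin T\bigr).
\]

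Second I would carry out the sign analysis on $0 < T \le \frac{\pi}{2}$ and $U \ge 0$: there $\sin T > 0$, $\cos T \ge 0$ and $1 - \cos T > 0$, so $x_f \ge r\sin T > 0$ and $y_f \ge r(1-\cos T) > 0$, whence $(x_f, y_f)$ lies in the interior of the first quadrant and in particular in $Q_1$. Observe that the upper bound $T \le \frac{\pi}{2}$ is exactly what guarantees $\cos T \ge 0$, and hence $x_f > 0$, for arbitrarily long straight segments; for $T > \frac{\pi}{2}$ the term $U\cos T$ would eventually drive $x_f$ negative. For the second, mirrored claim I would either repeat the computation with $c_{0_R} = (0, -r)$ --- which yields $(x_f, y_f) = \bigl(r\sin T + U\cos T,\ -r(1-\cos T) - U\sin T\bigr)$, so $x_f > 0$ and $y_f < 0$, i.e. $(x_f, y_f) \in Q_4$ --- or, more economically, invoke the reflection symmetry about the $x$-axis recalled in Section~\ref{subsec:symmetries}, which maps $l_T^+ s_U^+$ to $r_T^+ s_U^+$ and $Q_1$ to $Q_4$.

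There is essentially no obstacle here: the statement reduces to one short integration followed by a sign check. The only points that require care are (i) reading $U$ as a nonnegative segment length, consistent with $s^+$ denoting forward motion, and (ii) the convention adopted at the quadrant boundary --- the computation in fact places the endpoint strictly inside $Q_1$ (respectively $Q_4$), which is contained in any reasonable definition of the quadrant, so the conclusion holds under either the open or the closed convention.
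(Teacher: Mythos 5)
Your proposal is correct and follows essentially the same route as the paper: integrate the kinematic model over the two segments and check the signs of the endpoint coordinates. In fact your version is slightly more careful than the paper's own proof, which writes the endpoint as $(U\cos T,\ U\sin T)$ --- omitting the arc displacement $(r\sin T,\ r(1-\cos T))$ --- and asserts $\cos T > 0$ even at $T = \frac{\pi}{2}$; your inclusion of the arc contribution and the weak inequality $\cos T \geq 0$ (with $x_f \geq r\sin T > 0$) closes both of these small slips without changing the argument.
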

\begin{proof}
	Let the initial configuration be \( p_0 = (0, 0, 0) \). For \( l_T^+ s_U^+ \), a left turn \( l_T^+ \) changes the orientation to \( T \) with \( 0 < T \leq \frac{\pi}{2} \). Straight motion \( s_U^+ \) then results in \( (x_f, y_f) = (U \cos T, U \sin T) \), where \( \cos T > 0 \) and \( \sin T > 0 \), so \( (x_f, y_f) \in Q_1 \). For \( r_T^+ s_U^+ \), a right turn \( r_T^+ \) changes the orientation to \( -T \). Straight motion \( s_U^+ \) then results in \( (x_f, y_f) = (U \cos T, -U \sin T) \), where \( \cos T > 0 \) and \( -\sin T < 0 \), so \( (x_f, y_f) \in Q_4 \). Thus, the final position lies in the respective quadrant as stated in the lemma.
\end{proof}

As such, path types $P_{2}$ and $P_{3}$ beginning with $l_{T}^{+}s_{U}^{+}$ have the end point of the first two primitives in $Q_{1}$. The y-abscissa of such a point must be greater than or equal to $c_{f_{L_{y}}}-r$ and $c_{f_{R_{y}}}-r$ respectively so that it remains in $Q_{1}$. Otherwise, this point crosses into $Q_{4}$, and based on Lemma~\ref{lem:third}, the optimal paths must then begin with $r_{T}^{+}s_{U}^{+}$.

We introduce the two paths types $P_{7}$ and $P_{8}$ that begin with $r_{T}^{+}s_{U}^{+}$. $P_{7}$ and $P_{8}$ differ from $P_{2}$ and $P_{3}$, respectively, only in the first primitive, which is a positive right turn instead of a positive left turn. Similar to $P_{2}$ and $P_{3}$, there is a transition from $P_{7}$ to $P_{8}$ when the third primitive saturates to $\frac{\pi}{2}$ so as to accomodate a further increase in $\theta_{f}$. Sample $P_{7}$ and $P_{8}$ paths are illustrated in the left and right sides of Fig.~\ref{fig:cond_7_8} respectively.

\begin{figure}[!ht]
	\centering
	\begin{subfigure}{}
		\includegraphics[width=4.25cm]{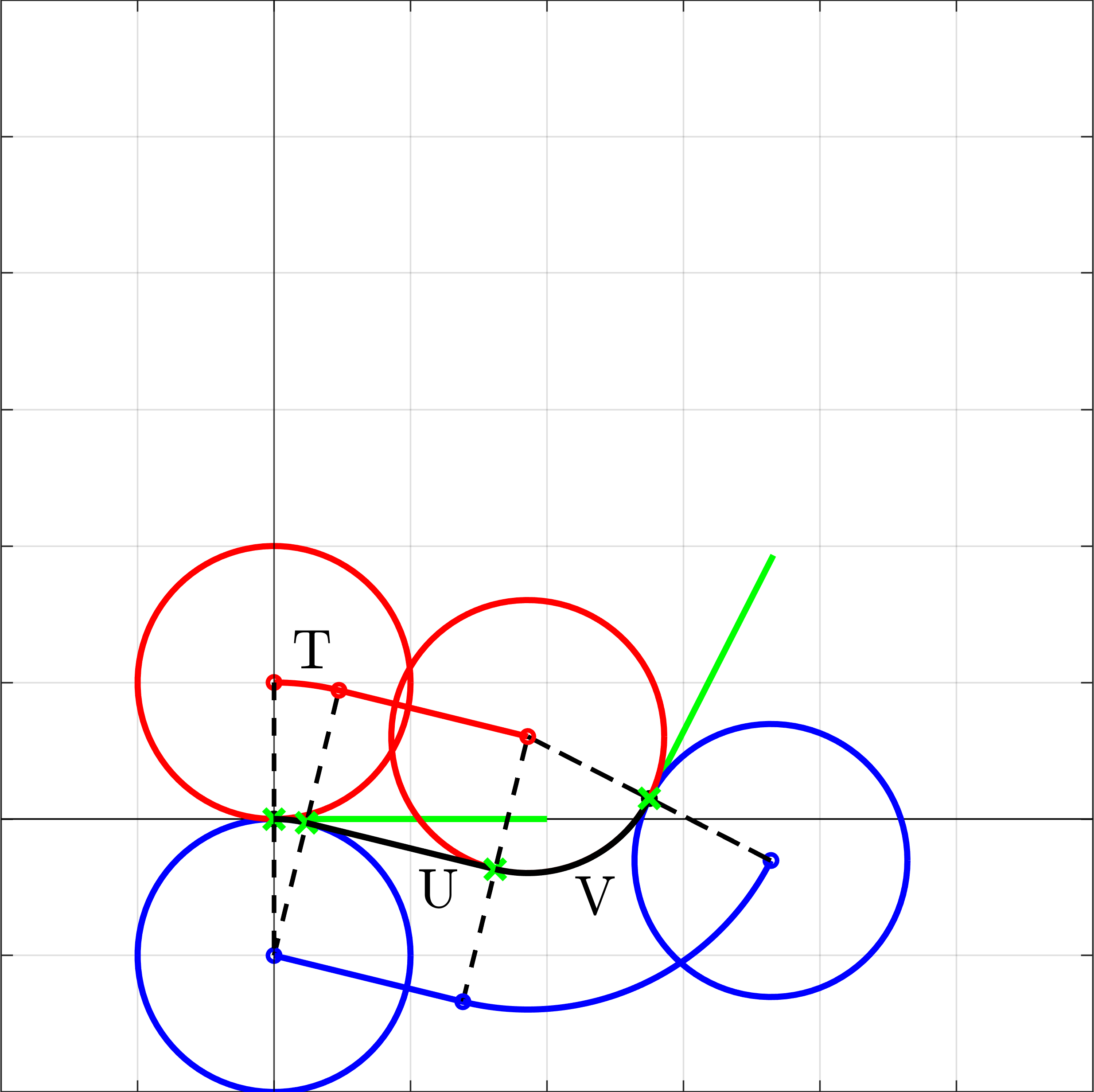}
	\end{subfigure}\hfil
	\begin{subfigure}{}
		\includegraphics[width=4.25cm]{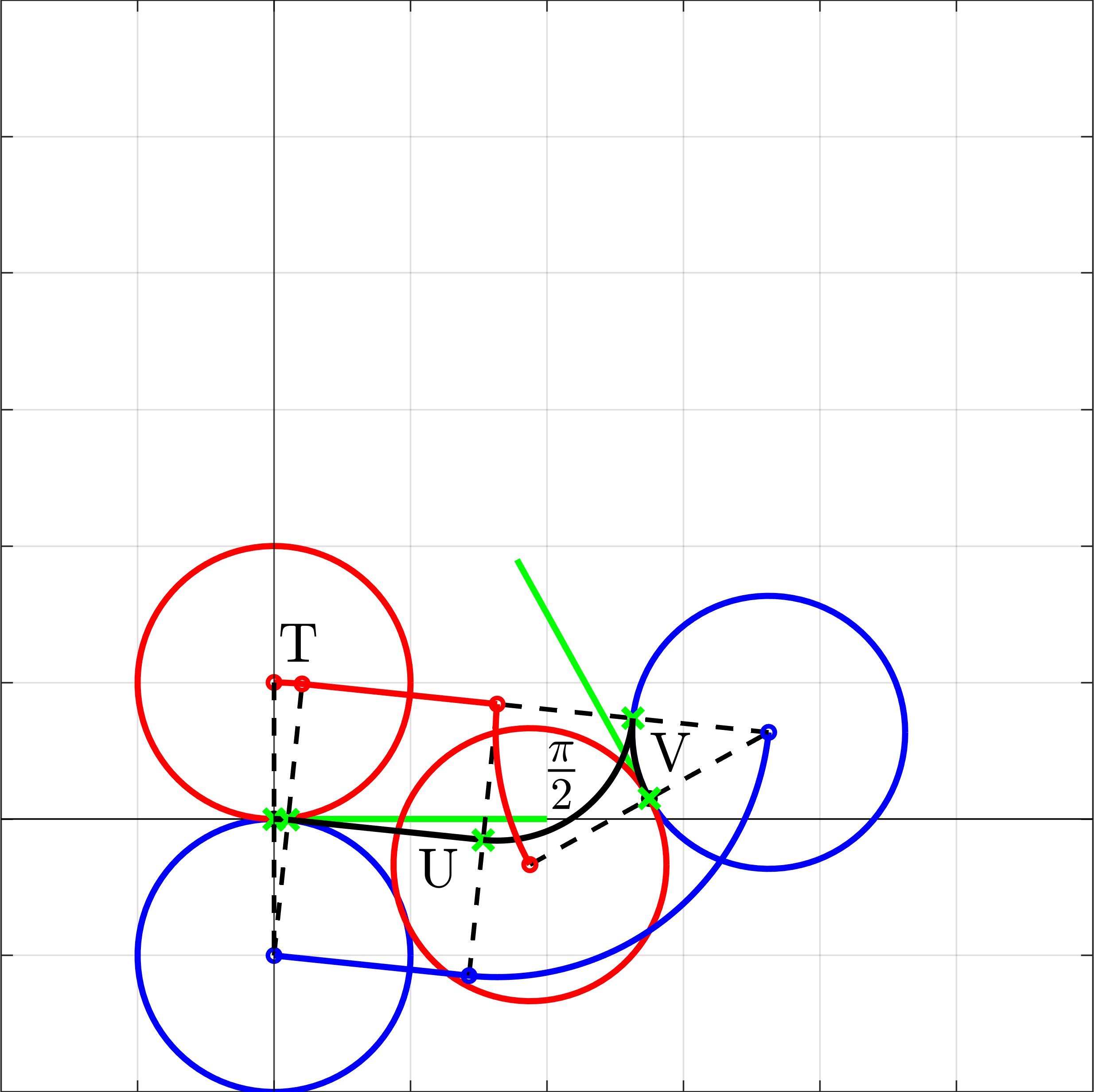}
	\end{subfigure}\hfil
	\caption{$P_{7}$: $r_{T}^{+}s_{U}^{+}l_{V}^{+}$ (left), \& $P_{8}$: $r_{T}^{+}s_{U}^{+}l_{\frac{\pi}{2}}^{+}r_{V}^{-}$ (right).}\label{fig:cond_7_8}
\end{figure}

A similar approach is adopted to subpartition between $P_{7}$ and $P_{8}$ as was done between $P_{1}$ and $P_{4}$. In particular, if the signed distance $t_{2}$ between the projection of $c_{0_{R}}$ onto $L$, $c_{0_{R}}^{'}$, and $(x_{f}, y_{f})$ is greater than $-2r$, then $P_{8}$ is the optimal path. We illustrate the case when $t_{2} = -2r$ in Fig.~\ref{fig:cond_7_8_t2}.
Let $d_{1}$ be the distance between $c_{0_{R}}^{'}$ and $c_{0_{R}}$. Following the reasoning introduced in Proposition~\ref{prop:p1_vs_p4}, we make the proposition:
\begin{proposition}\label{prop:p7_vs_p8}
	For path type $P_{8}$ of the form $r_{T}^{+}s_{U}^{+}l_{\frac{\pi}{2}}^{+}r_{V}^{-}$ with $V \neq 0$, $t_{2} > -2r$ and $d_{1} > r$.
\end{proposition}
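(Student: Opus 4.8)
The plan is to mirror the proof of Proposition~\ref{prop:p1_vs_p4} under the left$\leftrightarrow$right reflection that carries $P_{4} = l_{T}^{+}s_{U}^{+}r_{\frac{\pi}{2}}^{+}l_{V}^{-}$ to $P_{8} = r_{T}^{+}s_{U}^{+}l_{\frac{\pi}{2}}^{+}r_{V}^{-}$. Under this reflection $c_{0_{R}}$ --- which, by Lemma~\ref{lem:first}, is the centre of the opening $r_{T}^{+}$ arc and hence a point at distance $r$ from the line supporting the straight primitive $S$ --- takes over the role that $c_{0_{L}}$ played in Proposition~\ref{prop:p1_vs_p4}, and the final left-hand-circle centre $c_{f_{L}}$ takes over the role of $c_{f_{R}}$. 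I would parametrise a $P_{8}$ path by the central angle $V \in \left(0, \frac{\pi}{2}\right)$ of its closing arc $r_{V}^{-}$ and establish both inequalities by continuation from the $P_{7}$--$P_{8}$ boundary $V = 0$, at which the path is exactly $r_{T}^{+}s_{U}^{+}l_{\frac{\pi}{2}}^{+}$ (the preceding $l^{+}$ primitive just saturated).

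First I would record the boundary data at $V = 0$. There the heading has turned by exactly $\frac{\pi}{2}$ relative to the direction of $S$, so the line $L$ through $(x_{f}, y_{f})$ with direction $\theta_{f}$ is perpendicular to $S$ --- the same perpendicularity already used for $P_{1}$. Combining this with the facts that $S$ is tangent to the circle of radius $r$ about $c_{0_{R}}$, that the endpoint of a $\frac{\pi}{2}$ arc is displaced from that arc's centre parallel to the tangent line (hence equidistant from it), and that $c_{0_{R}}$ and $(x_{f}, y_{f})$ lie on opposite sides of $S$, elementary right-triangle bookkeeping on the feet of perpendiculars onto $S$ and $L$ gives $t_{2} = -2r$ and $d_{1} = U + r$ at $V = 0$, where $U$ is the length of $s_{U}^{+}$; in particular $d_{1} \ge r$ there, with equality iff $U = 0$.

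Next I would append the closing arc and track the two quantities. By the argument used in the proof of Proposition~\ref{prop:p3} --- no two consecutive left turns are allowed, and a forward right turn would \emph{decrease} $\theta_{f}$ rather than increase it past the saturated value --- the continuing primitive is forced to be the backward right turn $r_{V}^{-}$. Placing $S$ on a coordinate axis and composing the rigid motions of the two closing primitives $l_{\frac{\pi}{2}}^{+}$ and $r_{V}^{-}$, a short computation yields, for $V \in \left[0, \frac{\pi}{2}\right)$,
\[
t_{2} = 2r(\sin V - \cos V) + U\sin V, \qquad d_{1} = r\bigl(2(\cos V + \sin V) - 1\bigr) + U\cos V .
\]
Since $t_{2}$ is strictly increasing in $V$ on $\left[0, \frac{\pi}{2}\right)$ (its derivative is $2r(\cos V + \sin V) + U\cos V \ge 2r > 0$ there), $t_{2}$ exceeds its boundary value $-2r$ for every $P_{8}$ path, so $t_{2} > -2r$; and since $\cos V + \sin V = \sqrt{2}\,\sin\left(V + \frac{\pi}{4}\right) \ge 1$ on $\left[0, \frac{\pi}{2}\right]$ and $U\cos V \ge 0$, we get $d_{1} \ge r(2 \cdot 1 - 1) = r$, with equality only at the excluded boundary case $V = 0,\, U = 0$; hence $d_{1} > r$. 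The same conclusion can be reached more in the spirit of Proposition~\ref{prop:p1_vs_p4}: the right triangle with vertices $c_{0_{R}}$, $c_{0_{R}}'$, $(x_{f}, y_{f})$ (right angle at $c_{0_{R}}'$) has legs $d_{1}$ and $|t_{2}|$ and hypotenuse $\twonorm{c_{0_{R}} - (x_{f}, y_{f})}$, while the right triangle $c_{f_{L}}$, $(x_{f}, y_{f})$, $c_{0_{R}}'$ (right angle at $(x_{f}, y_{f})$, since $c_{f_{L}} - (x_{f}, y_{f}) \perp L$) gives $\twonorm{c_{f_{L}} - c_{0_{R}}'}^{2} = r^{2} + t_{2}^{2}$; as the closing arc opens from $V = 0$ this distance shrinks, forcing $|t_{2}| < 2r$.

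The step I expect to be delicate is not any single computation but pinning down exactly the range of $V$ for which the path stays of type $P_{8}$: the bound $d_{1} > r$ is strict only on the open interval and degenerates back to $d_{1} = r$ as $V \to \frac{\pi}{2}$ (where a further primitive is forced and the path leaves $P_{8}$ for a $CCSCC$-type), and the sub-case $U = 0$ needs separate care at the lower end. Making this watertight --- so that the complementary predicate cleanly recovers $P_{7}$ with neither overlap nor gap --- is settled together with the coverage argument of Section~\ref{subsec:completeness}; for the present proposition it suffices to have shown that every path $r_{T}^{+}s_{U}^{+}l_{\frac{\pi}{2}}^{+}r_{V}^{-}$ with $0 < V < \frac{\pi}{2}$ satisfies $t_{2} > -2r$ and $d_{1} > r$.
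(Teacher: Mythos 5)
Your proposal is correct --- the boundary values $t_{2} = -2r$, $d_{1} = U + r$ at $V = 0$ and the parametric formulas $t_{2} = 2r(\sin V - \cos V) + U\sin V$, $d_{1} = r\bigl(2(\cos V + \sin V) - 1\bigr) + U\cos V$ all check out --- but it reaches the second inequality by a genuinely different route than the paper. For $t_{2} > -2r$ the paper simply reuses the geometric argument of Proposition~\ref{prop:p1_vs_p4} (perpendicularity of $L$ to $S$, saturation value $2r$, Pythagorean shrinking as the closing arc opens), which is the same idea you make explicit and quantitative via the monotonicity of $t_{2}$ in $V$. For $d_{1} > r$, however, the paper does not compute anything along the path: it argues by contradiction with the set partition, namely that $d_{1} \leq r$ (together with the $|t_{2}|$ bound, via the right triangles on the four points $c_{0_{R}}$, $c_{0_{R}}^{'}$, $c_{f_{R}}$, $(x_{f}, y_{f})$) would force $RR = \twonorm{c_{0_{R}} - c_{f_{R}}} \leq \mathcal{K}$, placing the configuration in set B by Proposition~\ref{prop:booleanset}, contradicting that $P_{8}$ is a set-A type. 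Your direct computation buys self-containedness and pinpoints exactly where equality degenerates (the excluded $U = V = 0$ case and the $V \to \frac{\pi}{2}$ limit --- note your parenthetical ``equality only at $V=0,\,U=0$'' is slightly imprecise by your own formula, since $d_{1} = r$ also at $V = \frac{\pi}{2}$, a boundary you do flag afterwards); the paper's argument buys brevity and automatic consistency with the algorithm's structure, since the set-A branch is only ever entered after the set-B test of Algorithm~\ref{alg:booleanset} has failed.
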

\begin{proof}
	The proof follows the same reasoning as the proof of Proposition~\ref{prop:p1_vs_p4} with the addition of $d_{1} > r$. The latter comes from the fact that if $d_{1} \leq{} r$, then the distance between $c_{0_{R}}$ and $c_{f_{R}}$ is less than or equal to $\mathcal{K}$, making the optimal path fall in set B instead of set A\@ according to Proposition~\ref{prop:booleanset}. $d_{1}$ can be deduced from the similar triangles formed by the four points $c_{0_{R}}$, $c_{0_{R}}^{'}$, $c_{f_{R}}$, and $(x_{f}, y_{f})$.
\end{proof}

\begin{figure}[!ht]
	\centering
	\includegraphics[width=8.5cm,keepaspectratio]{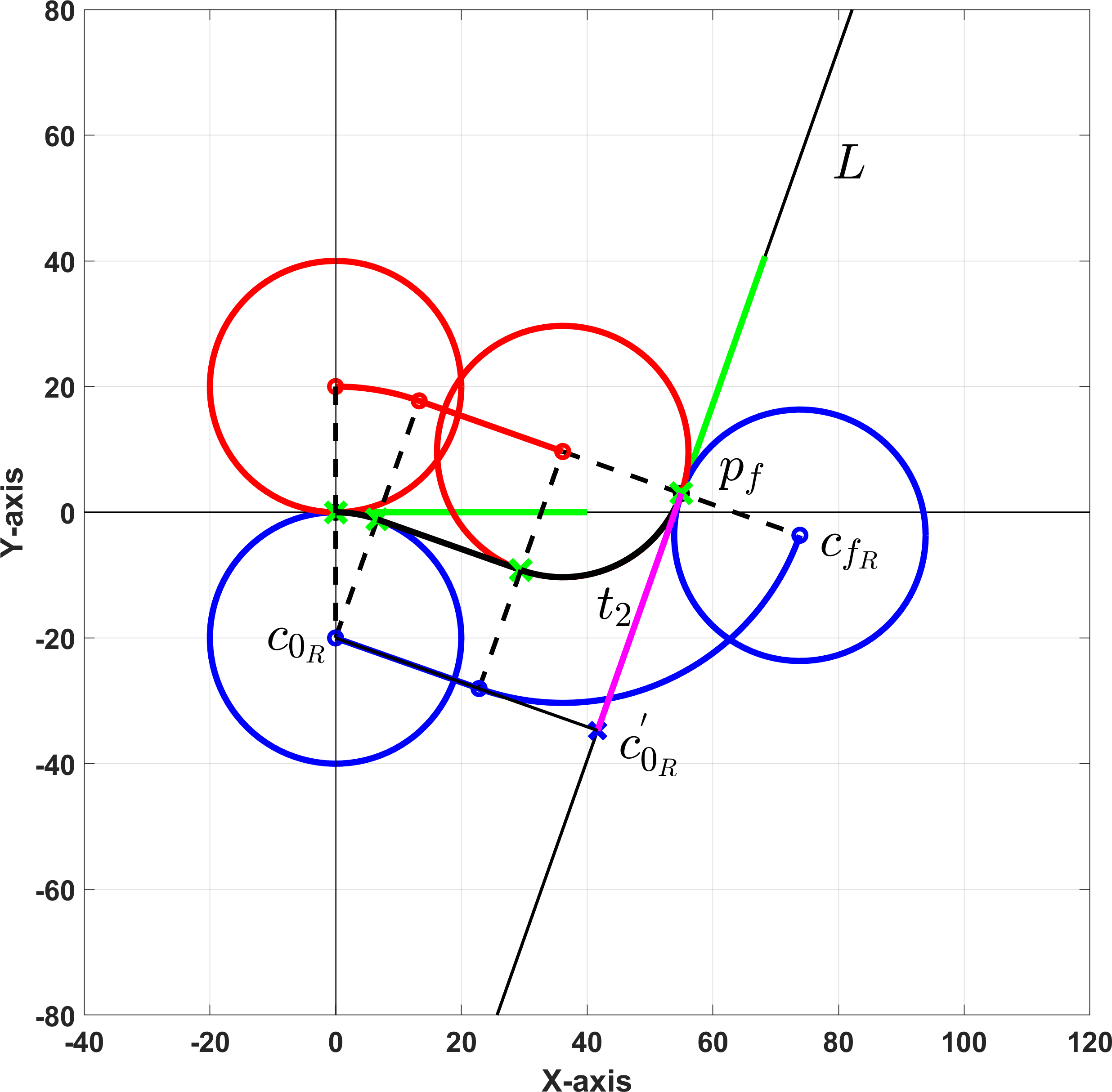}
	\caption{Plot showing the path $P_{7}$ with the third primitive saturated to $\frac{\pi}{2}$, $r_{T}^{+}s_{U}^{+}l_{\frac{\pi}{2}}^{+}$. $L$ is the line defined by the point $(x_{f}, y_{f})$ and angle $\theta_{f}$. $c_{0_{R}}^{'}$ is the projection of $c_{0_{R}}$ onto line $L$ and $t_{2}$ is the distance between $c_{0_{R}}^{'}$ and $(x_{f}, y_{f})$. $t_{2}$ is marked in magenta.}\label{fig:cond_7_8_t2}
\end{figure}

Path types $P_{5}$, $r_{T}^{+}l_{\frac{\pi}{2}}^{-}s_{U}^{-}r_{V}^{-}$, and $P_{6}$, $r_{T}^{+}l_{\frac{\pi}{2}}^{-}s_{U}^{-}l_{V}^{-}$, follow after $P_{4}$ as $\theta_{f}$ keeps decreasing in the negative range $\theta_{f} < 0$. We write the following proposition:
\begin{proposition}\label{prop:p5_vs_p6}
	For path types $P_{5}$ and $P_{6}$, $\theta_{f} < 2\beta_{0}-\pi$, where $\beta_{0} = \atantwo{\left(y_{f}-y_{0}, x_{f}-x_{0}\right)}$.
\end{proposition}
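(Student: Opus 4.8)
The plan is to show that $\theta_{f} = 2\beta_{0}-\pi$ is exactly the value of the final orientation at which the optimal path leaves the $P_{4}$ family and enters $P_{5}\cup P_{6}$, and then to conclude by a monotonicity argument. The first observation is that $\beta_{0} = \atantwo{\left(y_{f}-y_{0}, x_{f}-x_{0}\right)}$ depends only on the (fixed) positions $p_{0}$ and $(x_{f}, y_{f})$, so $2\beta_{0}-\pi$ is a constant threshold along the sweep in which $\theta_{f}$ decreases through the negative range; this is what makes a bound of this form possible. Geometrically, $P_{5}$ and $P_{6}$ differ from $P_{4}$ in that their cusp occurs immediately after the first primitive --- a forward right turn $r_{T}^{+}$ sending $p_{0}$ into $Q_{4}$, cf.\ Lemma~\ref{lem:third} --- whereas in $P_{4} = l_{T}^{+}s_{U}^{+}r_{\frac{\pi}{2}}^{+}l_{V}^{-}$ the cusp is at the end; the threshold records precisely this change of regime.

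Next I would identify the degenerate path sitting on the $P_{4}/P_{5}$ boundary. As $\theta_{f}$ decreases through the $P_{4}$ region the leading arc $l_{T}^{+}$ of $P_{4}$ shrinks, and, reading the same path as an element of the $P_{5}$ family $r_{T}^{+}l_{\frac{\pi}{2}}^{-}s_{U}^{-}r_{V}^{-}$, the trailing arc $r_{V}^{-}$ shrinks symmetrically. On the boundary the minimal path therefore admits \emph{both} descriptions with equal (optimal) length. Invoking the reverse-and-reflect symmetries of the Reeds--Shepp problem (Section~8 of~\cite{Reeds1990}) --- reversal flips each $\pm$ superscript and the segment order, while a reflection swaps $l\leftrightarrow r$, so that a $P_{4}$ path with segment lengths $(T,U,\tfrac{\pi}{2},V)$ is carried to a $P_{5}$ path with lengths $(V,U,\tfrac{\pi}{2},T)$ --- the pose pair $(p_{0},p_{f})$ must be a fixed point of this composed symmetry on the boundary. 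Carrying out the reflection explicitly, this fixed-point condition reduces to: the mirror image of $p_{0}$'s heading across the line through $p_{0}$ and $(x_{f},y_{f})$, namely the direction $2\beta_{0}$, is antiparallel to $\theta_{f}$, i.e.\ $2\beta_{0} = \theta_{f}+\pi$, which is exactly $\theta_{f} = 2\beta_{0}-\pi$.

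Finally, with $(x_{f},y_{f})$ --- hence $\beta_{0}$ --- held fixed, decreasing $\theta_{f}$ below $2\beta_{0}-\pi$ moves the optimal path off the $P_{4}$ partition, whose lower fence is precisely this value by the previous step, and onto $P_{5}$ or $P_{6}$; the $CCSCC$ alternatives $P_{9}$ and $P_{12}$ are excluded in this regime by Proposition~\ref{prop:booleanset}, and re-entry into $P_{4}$ is barred because the predicates of Proposition~\ref{prop:p1_vs_p4} no longer hold jointly. Hence $\theta_{f} < 2\beta_{0}-\pi$ holds with strict inequality throughout both the $P_{5}$ and $P_{6}$ partitions, and since both are entered by continuing past the \emph{same} fence, the bound is valid for their union, as claimed.

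The main obstacle is the second step: rigorously pinning down the boundary path and verifying the identity $2\beta_{0} = \theta_{f}+\pi$ there from the closed-form arc-and-line endpoint expressions, while checking that this fence is genuinely shared by $P_{5}$ and $P_{6}$ (so that Propositions~\ref{prop:p5_vs_p6} and the following one for $P_{6}$ are consistent). A cleaner but more computational alternative avoids the symmetry argument altogether: parametrize a generic $P_{5}$ path by its segment lengths $(T,U,V)$ with $T\in(0,\tfrac{\pi}{2}]$, $V\in[0,\tfrac{\pi}{2}]$, $U\ge 0$ (so that $\theta_{f} = -T-\tfrac{\pi}{2}+V$, with $\theta_{f} = -T-\tfrac{\pi}{2}-V$ for $P_{6}$), express $\beta_{0}$ as a function of $(T,U,V)$ via the path's endpoint, and show that $\theta_{f}-(2\beta_{0}-\pi) < 0$ on the whole parameter range with equality only on the degenerate stratum $V = 0$; the sign analysis on that stratum is the crux of this variant.
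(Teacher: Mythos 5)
Your symmetry idea is sound and is genuinely different from what the paper does: the paper locates the threshold by symbolically evaluating the closed-form length difference $d_{4}-d_{5}$ (finding it zero at $\theta_{f}=2\beta_{0}-\pi$ and positive below), whereas you obtain the same equality locus structurally, by noting that reversal composed with a reflection carries a $P_{4}$ word $l_{T}^{+}s_{U}^{+}r_{\frac{\pi}{2}}^{+}l_{V}^{-}$ to a $P_{5}$ word $r_{V}^{+}l_{\frac{\pi}{2}}^{-}s_{U}^{-}r_{T}^{-}$, and that the pose pair is invariant under this map exactly when $\theta_{f}=2\beta_{0}-\pi$. That nicely explains the paper's side remark that the threshold solves $\angle L_{f}R_{0}=\angle R_{f}L_{0}$. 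Two small corrections to that step: the spatial reflection must be taken across the perpendicular bisector of the segment joining $(x_{0},y_{0})$ and $(x_{f},y_{f})$ (an orientation-reversing isometry that swaps the two poses; reflecting across the line through the points fixes them and does not compose with reversal to a self-map), and what you get on the locus is two \emph{distinct} optimal paths of equal length (cf.\ the paper's non-uniqueness discussion), not a single path admitting both descriptions.

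The genuine gap is the step you yourself flag as ``the crux'': the symmetry argument only certifies that $d_{4}=d_{5}$ \emph{on} the locus $\theta_{f}=2\beta_{0}-\pi$; it does not show that this locus is the actual $P_{4}$/$P_{5}$ boundary, i.e.\ that $d_{4}-d_{5}$ changes sign there, that $P_{5}$ (and, past Proposition~\ref{prop:p6}, $P_{6}$) is strictly shorter for all $\theta_{f}<2\beta_{0}-\pi$, and that there is no other crossing inside the relevant range. Your closing ``monotonicity argument'' is asserted, not carried out, and appealing to Proposition~\ref{prop:p1_vs_p4} ceasing to hold does not substitute for it, since failure of a sufficient predicate for $P_{4}$ does not by itself imply $P_{5}\cup P_{6}$ optimality. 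The paper closes exactly this step by the symbolic evaluation of $d_{4}-d_{5}$ (supported by extensive numerics); to make your variant complete you would need either that computation or the sign analysis of $\theta_{f}-(2\beta_{0}-\pi)$ over the $(T,U,V)$ parametrization that you sketch at the end. As written, the proposal establishes the threshold elegantly but leaves the inequality direction --- the substance of the proposition --- unproven.
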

\begin{proof}
	It can be shown using symbolic solvers that at $\theta_{f} = 2\beta_{0}-\pi$, $d_{4}-d_{5}$ evaluates to zero, where $d_{4}$ and $d_{5}$ are the total path distances obtained with equations for $P_{4}$ and $P_{5}$ respectively. For $\theta_{f}$ less than $2\beta_{0}-\pi$, $d_{4}-d_{5}$ is positive, making $P_{5}$ the optimal path. Therefore, the point of transition from $P_{4}$ to $P_{5}$ is at $\theta_{f} = 2\beta_{0}-\pi$. Moreover, $2\beta_{0}-\pi$ is a solution to the equality $\angle L_{f}R_{0} = \angle R_{f}L_{0}$, where $\angle L_{f}R_{0} = \atantwo{\left(c_{f_{L_{y}}}-c_{0_{R_{y}}}, c_{f_{L_{x}}}-c_{0_{R_{x}}}\right)}$. Extensive numerical simulations further support Proposition~\ref{prop:p5_vs_p6}.
\end{proof}

We illustrate $P_{5}$ and $P_{6}$ in Fig.~\ref{fig:cond_5_6}. Reasoning to subpartition between the two is similar to the reasoning between $P_{1}$ and $P_{2}$:

\begin{figure}[]
	\centering
	\begin{subfigure}{}
		\includegraphics[width=4.25cm]{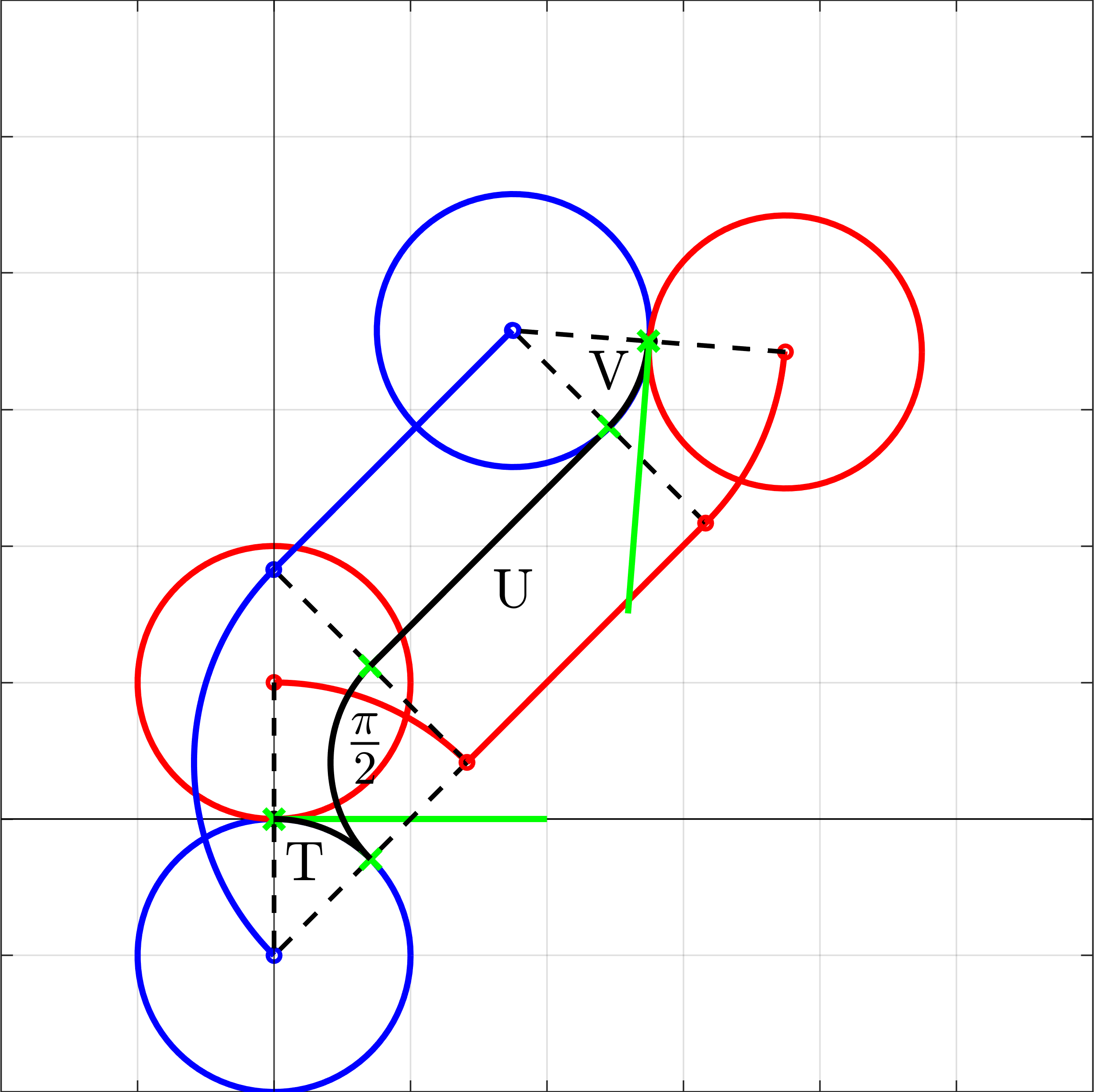}
	\end{subfigure}\hfil
	\begin{subfigure}{}
		\includegraphics[width=4.25cm]{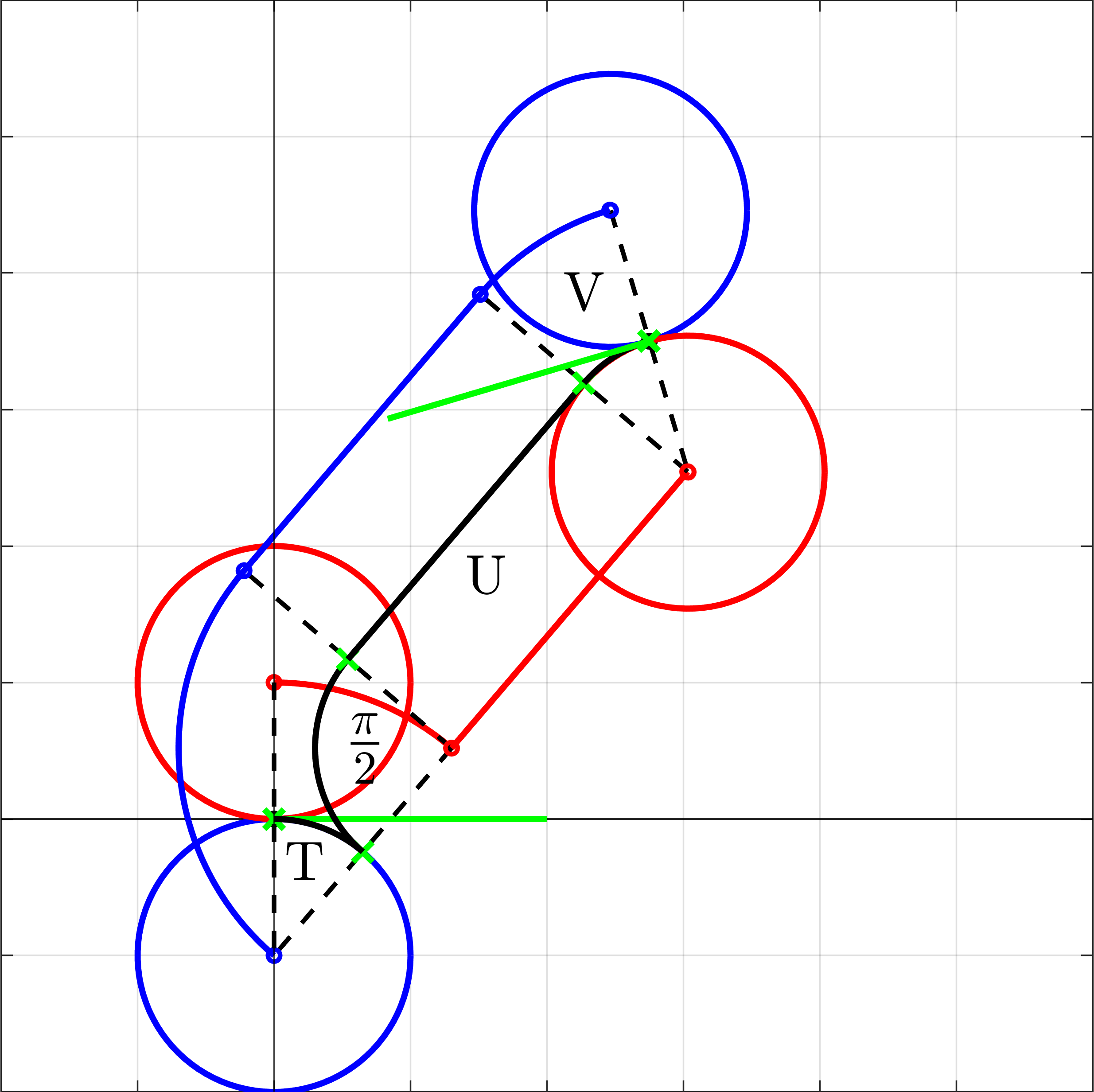}
	\end{subfigure}\hfil
	\caption{$P_{5}$: $r_{T}^{+}l_{\frac{\pi}{2}}^{-}s_{U}^{-}r_{V}^{-}$ (left), \& $P_{6}$: $r_{T}^{+}l_{\frac{\pi}{2}}^{-}s_{U}^{-}l_{V}^{-}$ (right).}\label{fig:cond_5_6}
\end{figure}

\begin{proposition}\label{prop:p6}
	For path type $P_{6}$, $\theta_{f} < \angle R_{0}L_{f} = \atantwo{\left(c_{0_{R_{y}}}-c_{f_{L_{y}}}, c_{0_{R_{x}}}-c_{f_{L_{x}}}\right)}$.
\end{proposition}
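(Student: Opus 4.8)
The plan is to follow the pattern of the proof of Proposition~\ref{prop:p2} (the paper already observes that the $P_5$/$P_6$ subpartition echoes the $P_1$/$P_2$ one), with the roles of $c_{0_L}$, $c_{f_L}$, $\angle L_f L_0$ and a forward left turn played instead by $c_{0_R}$, $c_{f_L}$, $\angle R_0 L_f$ and a \emph{backward} left turn. The argument has two parts: an edge case and an extension step.

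For the edge case I would set $V=0$, i.e.\ look at the three-segment path $r_T^{+}l_{\frac{\pi}{2}}^{-}s_U^{-}$, and show that its terminal heading $\theta_f$ equals $\angle R_0 L_f$ exactly. The crux is a collinearity: $c_{0_R}$, the LHC center $c_1$ held fixed during the middle $l_{\frac{\pi}{2}}^{-}$ arc, and the terminal LHC center $c_{f_L}$ all lie on one ray issued from $c_{f_L}$ in the direction $\theta_f$. To justify it: at the onset of the middle arc the line through the robot's two turning-circle centers passes through both $c_1$ and $c_{0_R}$ and is perpendicular to the heading there (immediate from the LHC/RHC definitions, cf.\ Lemma~\ref{lem:first}); the middle arc subtends exactly $\frac{\pi}{2}$, so it rotates the heading by $\frac{\pi}{2}$ and thereby aligns that line --- i.e.\ the directed segment from $c_1$ to $c_{0_R}$ --- with the heading $\theta_f$ that is then maintained through $s_U^{-}$; and $s_U^{-}$, a backward straight motion, rigidly translates $c_1$ to $c_{f_L}$ along the direction opposite to $\theta_f$, so $c_{f_L}$ stays on the same line. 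Checking orientations, $c_{0_R}-c_1$ and $c_1-c_{f_L}$ both point in the $+\theta_f$ direction, with lengths $2r$ and $U\ge 0$, whence $c_{0_R}-c_{f_L}=(2r+U)(\cos\theta_f,\sin\theta_f)$ and $\angle R_0 L_f=\theta_f$. This is exactly where the $\frac{\pi}{2}$-saturation of the second primitive is indispensable: the same transport with a middle arc of angle $\phi\neq\frac{\pi}{2}$ fails to align the two directions.

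For the extension step I would take any $P_6$ path $r_T^{+}l_{\frac{\pi}{2}}^{-}s_U^{-}l_V^{-}$ with $V>0$ and compare it with its truncation $r_T^{+}l_{\frac{\pi}{2}}^{-}s_U^{-}$ (same $T$, $U$), whose terminal heading $\theta_f'$ satisfies $\theta_f'=\angle R_0 L_f'$ by the edge case, where $L_f'$ is the truncation's terminal LHC center. Appending the backward left turn $l_V^{-}$ keeps the LHC center fixed (Lemma~\ref{lem:first}), so the full path's terminal LHC center is still $L_f'$, and $c_{0_R}$, the initial RHC center, never moves; hence $\angle R_0 L_f=\angle R_0 L_f'=\theta_f'$ for the full path. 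Since a backward left turn has $\dot\theta<0$, we get $\theta_f=\theta_f'-V<\theta_f'=\angle R_0 L_f$, which is the claim. As with the neighbouring propositions, the inequality is read modulo the $2\pi$ wrap of $\atantwo$, which is pinned down by restricting to the $P_5/P_6$ regime $\theta_f<2\beta_0-\pi$ of Proposition~\ref{prop:p5_vs_p6}.

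The main obstacle is the edge-case collinearity claim: one must confirm that the heading of $r_T^{+}l_{\frac{\pi}{2}}^{-}s_U^{-}$ is aligned with $c_{0_R}-c_{f_L}$ for every admissible $T\in(0,\frac{\pi}{2}]$ and $U\ge 0$, including the corners $U=0$ and $T=\frac{\pi}{2}$, and that this alignment is special to the saturated middle arc (which is why $P_6$ surfaces only beyond the $P_4\!\to\!P_5$ transition). It can be settled by the transport argument sketched above or, if preferred, by a symbolic check of the sort the authors already use for Proposition~\ref{prop:p5_vs_p6}; granting the edge case, the extension step is immediate from Lemma~\ref{lem:first}.
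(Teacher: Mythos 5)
Your proposal is correct and follows essentially the same route as the paper: the paper's proof also sets $V=0$, observes that for $r_{T}^{+}l_{\frac{\pi}{2}}^{-}s_{U}^{-}$ the line from $c_{0_{R}}$ to $c_{f_{L}}$ is parallel to the $S$ primitive and hence aligned with $\theta_{f}$ (so $\theta_{f}=\angle R_{0}L_{f}$), and then notes that appending $l_{V}^{-}$ decreases $\theta_{f}$ below $\angle R_{0}L_{f}$ while leaving $c_{f_{L}}$ and $c_{0_{R}}$ fixed. Your transport argument merely makes the paper's alignment claim explicit, so the two proofs coincide in substance.
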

\begin{proof}
	We illustrate in Fig.~\ref{fig:cond_5_6_split} the case where the last primitive differentiating $P_{5}$ and $P_{6}$ has a length of zero, i.e., $V = 0$, making the path $r_{T}^{+}l_{\frac{\pi}{2}}^{-}s_{U}^{-}$. In this case, the line connecting $c_{f_{L}}$ and $c_{0_{R}}$ is parallel to the $S$ primitive in the path, making it also aligned with $\theta_{f}$. $\theta_{f}$ is therefore equal to $\angle R_{0}L_{f}$. An additional left turn primitive $l_{V}^{-}$ at the end of this path results in $\theta_{f}$ decreasing below $\angle R_{0}L_{f}$.
\end{proof}

\begin{figure}[!ht]
	\centering
	\includegraphics[width=8.5cm,keepaspectratio]{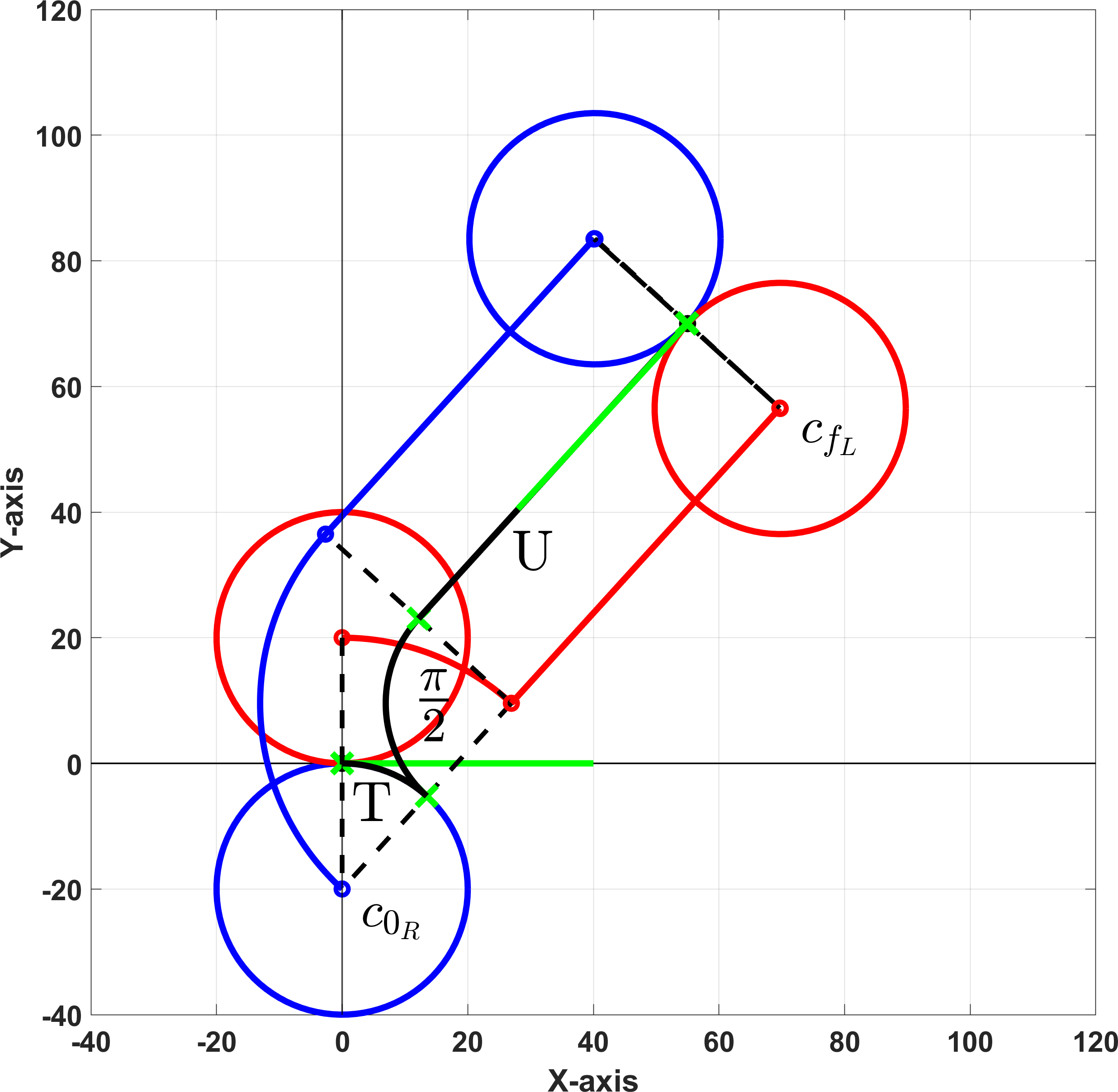}
	\caption{The path consisting of the first three primitives that are common to path types $P_{5}$ and $P_{6}$, with the last primitive having a length $V = 0$. $\theta_{f}$ is aligned with the line connecting $c_{f_{L}}$ and $c_{0_{R}}$.}\label{fig:cond_5_6_split}
\end{figure}

The last two path types that are exclusive to set A are $P_{10}$, $r_{T}^{-}l_{\frac{\pi}{2}}^{+}s_{U}^{+}r_{V}^{+}$, and $P_{11}$, $r_{T}^{-}l_{\frac{\pi}{2}}^{+}s_{U}^{+}l_{V}^{+}$. We illustrate them in Fig.~\ref{fig:cond_10_11} to the left and right respectively. Those two path types are the same as $P_{5}$ and $P_{6}$ but with the directions of primitives reversed. Therefore, similar reasoning that follows from Proposition~\ref{prop:p6} and its proof applies to $P_{10}$ and $P_{11}$. We make the following proposition:
\begin{proposition}\label{prop:p11}
	For path type $P_{11}$, $\theta_{f} > \angle L_{f}R_{0}$.
\end{proposition}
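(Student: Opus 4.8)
The plan is to mirror the proof of Proposition~\ref{prop:p6}, exploiting the fact that $P_{11}$, $r_{T}^{-}l_{\frac{\pi}{2}}^{+}s_{U}^{+}l_{V}^{+}$, is obtained from $P_{6}$, $r_{T}^{+}l_{\frac{\pi}{2}}^{-}s_{U}^{-}l_{V}^{-}$, by reversing the direction of every primitive; this antipodally flips the relevant circle-to-circle direction, so that the reference angle $\angle R_{0}L_{f}$ of Proposition~\ref{prop:p6} becomes $\angle L_{f}R_{0}$ here, and the strict inequality points the other way. Concretely, I would first treat the degenerate case $V = 0$, that is, the path $r_{T}^{-}l_{\frac{\pi}{2}}^{+}s_{U}^{+}$, and show that there $\theta_{f} = \angle L_{f}R_{0}$; then I would re-introduce the final turn $l_{V}^{+}$ with $V > 0$ and argue that it strictly increases $\theta_{f}$ while leaving $\angle L_{f}R_{0}$ fixed.

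For the degenerate case I would track the circle centers using Lemma~\ref{lem:first}. During $r_{T}^{-}$ the RHC center stays at $c_{0_{R}}$ while the LHC center rotates about it; denote the LHC center after this primitive by $c_{1_{L}}$. Since the robot always sits at the midpoint of its two turning-circle centers (directly from the definition of the LHC and RHC, as in the proof of Lemma~\ref{lem:first}), the segment $c_{0_{R}}c_{1_{L}}$ is perpendicular to the robot's heading, which equals $T$ at this instant. Next, the $l_{\frac{\pi}{2}}^{+}$ primitive keeps $c_{1_{L}}$ fixed and rotates the heading by exactly $\frac{\pi}{2}$, so the now-fixed direction of the vector $c_{0_{R}} \to c_{1_{L}}$ becomes parallel to the new heading $\theta_{f}$. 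Finally, during $s_{U}^{+}$ the LHC center translates along the heading direction to $c_{f_{L}}$, so $c_{0_{R}}$, $c_{1_{L}}$ and $c_{f_{L}}$ are collinear along a line parallel to $\theta_{f}$; hence the vector $c_{0_{R}} \to c_{f_{L}}$ has direction $\theta_{f}$, i.e.\ $\theta_{f} = \angle L_{f}R_{0}$.

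To finish, I would append the final primitive $l_{V}^{+}$ with $V > 0$. By Lemma~\ref{lem:first} this rotates the RHC about the LHC center $c_{f_{L}}$, so it changes neither $c_{f_{L}}$ nor $c_{0_{R}}$ and therefore leaves $\angle L_{f}R_{0}$ unchanged; being a forward left turn, it increases $\theta_{f}$ by $V > 0$. Combining this with the degenerate-case identity gives $\theta_{f} > \angle L_{f}R_{0}$, as claimed. As an alternative route, one can invoke the time-reversal / endpoint-swap symmetry of~\cite{Reeds1990}: a $P_{11}$ path from $p_{0}$ to $p_{f}$ is the direction-reversal of a $P_{6}$ path from $p_{f}$ to $p_{0}$ with the same multiset of segment lengths, so Proposition~\ref{prop:p6} applied to the latter transfers directly once the roles of the start/final circles are swapped.

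The step I expect to require the most care is the bookkeeping of which turning circle is fixed versus rotating at each stage. This is entirely dictated by Lemma~\ref{lem:first}, but the sign and angle conventions must be handled precisely: the crucial point is that the middle arc is \emph{exactly} $\frac{\pi}{2}$, which is exactly what rotates the perpendicular-to-initial-heading direction $c_{0_{R}} \to c_{1_{L}}$ into a direction parallel to the final heading, and that $\angle L_{f}R_{0}$ and $\angle R_{0}L_{f}$ are antipodal, which is what makes the reversed-direction construction land on the strict inequality in the stated orientation rather than the one in Proposition~\ref{prop:p6}. A minor additional check is that $V \neq 0$ (part of the hypothesis) together with non-negativity of segment lengths forces $V > 0$, so the increase in $\theta_{f}$ is strict.
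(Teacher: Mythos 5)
Your proof is correct and follows essentially the same route as the paper: the paper omits the proof of Proposition~\ref{prop:p11}, stating it mirrors Proposition~\ref{prop:p6}, whose argument is exactly your degenerate $V=0$ alignment of the line $c_{0_{R}}$--$c_{f_{L}}$ with the $S$ primitive (hence with $\theta_{f}$), followed by the observation that the appended left turn changes $\theta_{f}$ but not $\angle L_{f}R_{0}$. Your center-tracking via Lemma~\ref{lem:first} just spells out the details the paper leaves implicit.
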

\begin{proof}
	The proof follows similar reasoning as the proof of Proposition~\ref{prop:p6} and has been omitted for brevity.
\end{proof}

\begin{figure}[]
	\centering
	\begin{subfigure}{}
		\includegraphics[width=4.25cm]{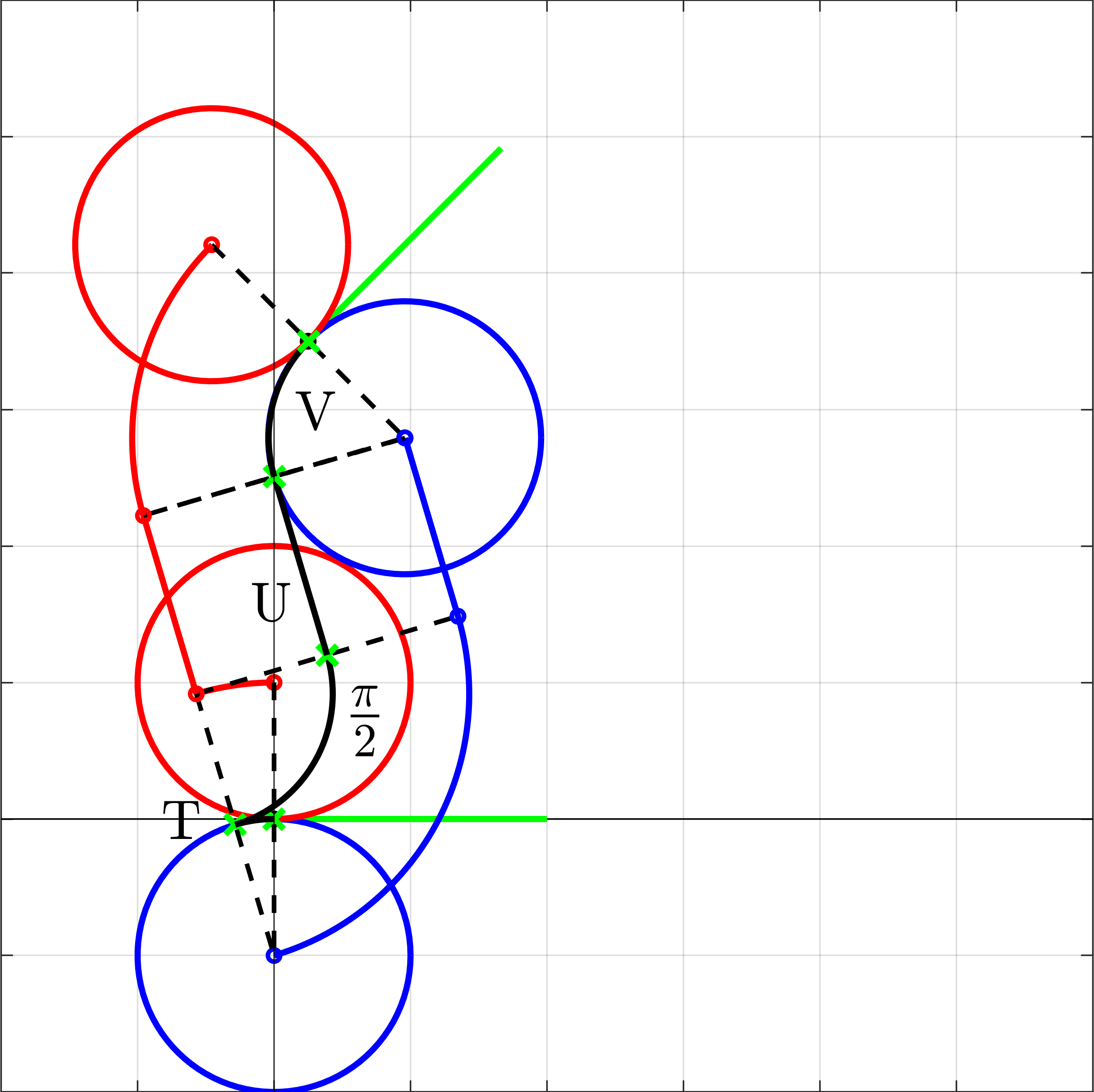}
	\end{subfigure}\hfil
	\begin{subfigure}{}
		\includegraphics[width=4.25cm]{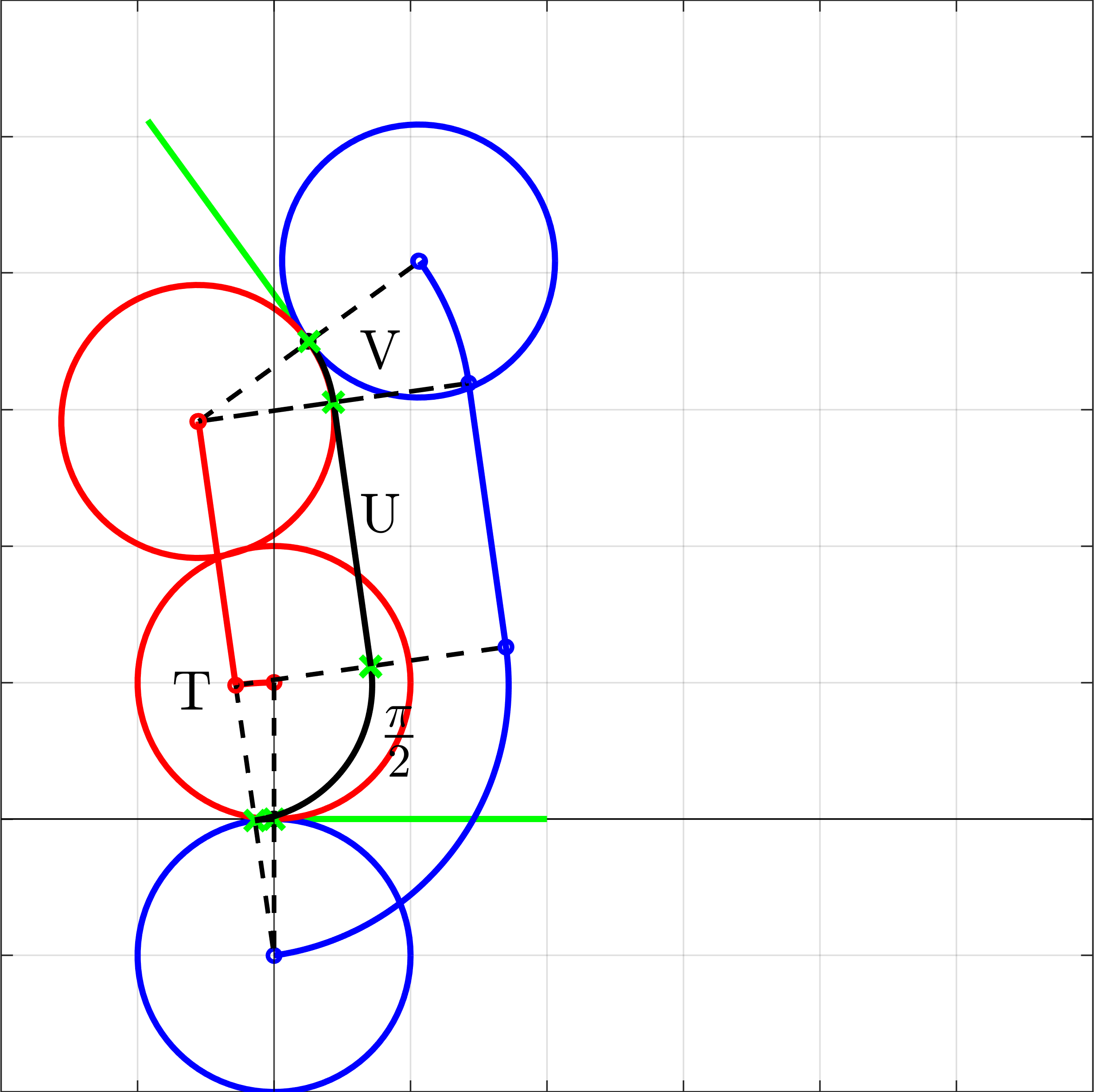}
	\end{subfigure}\hfil
	\caption{$P_{10}$: $r_{T}^{-}l_{\frac{\pi}{2}}^{+}s_{U}^{+}r_{V}^{+}$ (left), \& $P_{11}$: $r_{T}^{-}l_{\frac{\pi}{2}}^{+}s_{U}^{+}l_{V}^{+}$ (right).}\label{fig:cond_10_11}
\end{figure}

Path type $P_{10}$, $r_{T}^{-}l_{\frac{\pi}{2}}^{+}s_{U}^{+}r_{V}^{+}$, differs from $P_{1}$, $l_{T}^{+}s_{U}^{+}r_{V}^{+}$, only in the first primitive. In order to avoid any overlap between the two subpartitions, we make the following proposition:
\begin{proposition}\label{prop:p10_vs_p1}
	For path type $P_{10}$, $\left(c_{f_{R_{x}}} < 2r\right) \land{} \left(c_{f_{R_{y}}} > c_{0_{L_{y}}}\right)$.
\end{proposition}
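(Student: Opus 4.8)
The plan is to work in the local frame with $p_{0}=(0,0,0)$ (as in the rest of Section~\ref{subsec:partitioning}), so that $c_{0_{L}}=(0,r)$ and $c_{0_{R}}=(0,-r)$, and to propagate the four primitives of a $P_{10}$ path $r_{T}^{-}l_{\frac{\pi}{2}}^{+}s_{U}^{+}r_{V}^{+}$ while tracking the RHC centre $c_{R}$. By Lemma~\ref{lem:first}, $c_{R}$ is held fixed during the initial backward right turn $r_{T}^{-}$ and again during the terminal right turn $r_{V}^{+}$; during the forward left turn $l_{\frac{\pi}{2}}^{+}$ it rotates about the left-hand circle's centre, which is itself fixed throughout that turn; and during $s_{U}^{+}$ it translates by $U$ along the heading. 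The orientation is $T$ after $r_{T}^{-}$ and $T+\tfrac{\pi}{2}$ after $l_{\frac{\pi}{2}}^{+}$, so a direct propagation yields
\[
	c_{f_{R_x}} = 2r(\cos T-\sin T)-U\sin T, \qquad c_{f_{R_y}} = 2r(\cos T+\sin T)-r+U\cos T,
\]
independent of $V$ (the terminal turn pivots about $c_{f_{R}}$ itself).

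I would then read the two predicates straight off this closed form, using the admissible parameter ranges for a genuine $P_{10}$ path: $U\ge 0$; and $T\in(0,\tfrac{\pi}{2})$, with $T>0$ because $T=0$ collapses the initial $r_{T}^{-}$ and degenerates the path into a $CSC$/$P_{1}$-type path with a saturated first arc, and $T<\tfrac{\pi}{2}$ because $T=\tfrac{\pi}{2}$ saturates the first arc and places the configuration on the boundary with the adjacent partition --- the same arc-saturation phenomenon that triggers the transitions in Propositions~\ref{prop:p3} and~\ref{prop:p1_vs_p4}. For $c_{f_{R_x}}<2r$: rearranged this is $2r(\cos T-\sin T-1)<U\sin T$, and since $\cos T-\sin T-1\le 0$ on $[0,\tfrac{\pi}{2}]$ with equality only at $T=0$, the left side is strictly negative while the right side is non-negative. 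For $c_{f_{R_y}}>c_{0_{L_y}}=r$: rearranged this is $2r(\cos T+\sin T-1)+U\cos T>0$, and since $\cos T+\sin T=\sqrt{2}\,\sin(T+\tfrac{\pi}{4})>1$ on $(0,\tfrac{\pi}{2})$ and $U\cos T\ge 0$, the left side is strictly positive. Both strict inequalities follow.

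For consistency with the no-overlap requirement of the partition, I would also record the complementary fact that no $P_{1}$ path $l_{T}^{+}s_{U}^{+}r_{V}^{+}$ satisfies both predicates: its RHC centre is $c_{f_{R}}=(2r\sin T+U\cos T,\ r-2r\cos T+U\sin T)$, and imposing $c_{f_{R_x}}<2r$ together with $c_{f_{R_y}}>r$ forces, after eliminating $U$ between the two inequalities, the contradiction $\sin T>1$. This is what lets Proposition~\ref{prop:p10_vs_p1} serve as the clean discriminator between the $P_{10}$ and $P_{1}$ subpartitions of set~A.

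The segment-by-segment propagation is routine; the only delicate point is the strictness of the two inequalities at the endpoints $T\in\{0,\tfrac{\pi}{2}\}$, where one of them degenerates to an equality. I expect the main obstacle to be arguing rigorously that these endpoints lie outside the interior of the $P_{10}$ partition --- that a genuine $P_{10}$ path can have neither $T=0$ (it would then be $P_{1}$/$P_{4}$-type) nor $T=\tfrac{\pi}{2}$ (the first arc would be saturated, handing the configuration to the neighbouring partition) --- after which the closed form for $c_{f_{R}}$ makes both predicates immediate.
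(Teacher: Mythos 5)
Your proof is correct and rests on the same idea as the paper's: by Lemma~\ref{lem:first}, the RHC centre is frozen during both right turns, rotates by $\tfrac{\pi}{2}$ about the fixed LHC centre during $l_{\frac{\pi}{2}}^{+}$, and translates during $s_{U}^{+}$, which pins down $c_{f_{R}}$ independently of $V$. The difference is one of explicitness: the paper's proof stops at this qualitative invocation of Lemma~\ref{lem:first}, whereas you push the propagation to the closed form $c_{f_{R}}=\bigl(2r(\cos T-\sin T)-U\sin T,\; 2r(\cos T+\sin T)-r+U\cos T\bigr)$ and read the two predicates off it. This buys two things the paper does not state. First, it exposes the endpoint behaviour: at $T=\tfrac{\pi}{2}$ one gets $c_{f_{R_{y}}}=r=c_{0_{L_{y}}}$ exactly, so the strict inequality $c_{f_{R_{y}}}>c_{0_{L_{y}}}$ holds only for $T\in(0,\tfrac{\pi}{2})$, while the paper asserts it for all $0<T\leq\tfrac{\pi}{2}$; your restriction is the more accurate statement, and the discrepancy only concerns a measure-zero boundary where the optimum is non-unique and the algorithm's non-strict branch in Alg.~\ref{alg:A_partitions} resolves the tie (your justification that $T=\tfrac{\pi}{2}$ lies on the partition boundary is informal but adequate for that purpose). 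Second, your cross-check that no $P_{1}$ path $l_{T}^{+}s_{U}^{+}r_{V}^{+}$ can satisfy both predicates (elimination of $U$ forcing $\sin T>1$) is a genuine addition: the paper's proof only establishes the invariant for $P_{10}$ and leaves the non-overlap with $P_{1}$ implicit, so your computation strengthens the mutual-exclusivity claim used later in the completeness argument.
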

\begin{proof}
	Since the second primitive in $P_{10}$ is $l_{\frac{\pi}{2}}^{+}$, based on Lemma~\ref{lem:first}, and based on the fact that the last primitive in $P_{10}$ is a positive right turn, then $\forall$ $(0 < T \leq{} \frac{\pi}{2})$ in $P_{10}$, $c_{f_{R_{x}}} < 2r$ and $c_{f_{R_{y}}} > c_{0_{L_{y}}}$.
\end{proof}

Similarily, path type $P_{11}$, $r_{T}^{-}l_{\frac{\pi}{2}}^{+}s_{U}^{+}l_{V}^{+}$, differs from $P_{2}$, $l_{T}^{+}s_{U}^{+}l_{V}^{+}$, only in the first primitive. In order to avoid any overlap between the two subpartitions, we make the following proposition:
\begin{proposition}\label{prop:p11_vs_p2}
	For path type $P_{11}$, $c_{f_{L_{x}}} < 0$.
\end{proposition}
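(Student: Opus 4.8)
The plan is to mirror the structure of the proof of Proposition~\ref{prop:p10_vs_p1}, but now argue in terms of the \emph{left}-hand circle center $c_{f_{L}}$ rather than $c_{f_{R}}$, since $P_{11}$ and $P_{2}$ both terminate with a positive left turn $l_{V}^{+}$. The key observation is that in $P_{11}$, $r_{T}^{-}l_{\frac{\pi}{2}}^{+}s_{U}^{+}l_{V}^{+}$, the \emph{second} primitive is a saturated left turn $l_{\frac{\pi}{2}}^{+}$ of exactly $\frac{\pi}{2}$. By Lemma~\ref{lem:first}, during this $l_{\frac{\pi}{2}}^{+}$ the robot (and its RHC) rotates about a fixed LHC center, while during the subsequent $s_{U}^{+}$ the LHC translates forward along the heading, and during the final $l_{V}^{+}$ the LHC is again fixed. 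So the question reduces to locating $c_{f_{L}}$ after the prefix $r_{T}^{-}l_{\frac{\pi}{2}}^{+}$, for all $0 < T \leq \frac{\pi}{2}$, and showing its $x$-coordinate is strictly negative — because the two later primitives ($s_{U}^{+}$ moving forward in a direction that, after a net rotation of $-T+\frac{\pi}{2} \in [0,\frac{\pi}{2})$ from $p_0$'s heading, has nonnegative $x$-component, and $l_{V}^{+}$ which only rotates about that center) cannot move $c_{f_{L}}$ back to a nonnegative $x$-coordinate.

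First I would fix $p_0 = (0,0,0)$ as in the rest of the section, so that $c_{0_{L}} = (0, r)$ and $c_{0_{R}} = (0, -r)$. The first primitive $r_{T}^{-}$ is a backward right turn; by Lemma~\ref{lem:first} it rotates the robot and its LHC about $c_{0_{R}} = (0,-r)$. A backward right turn ($v = -\vmax$, $u = \thetamax > 0$) increases $\theta$, so after $r_{T}^{-}$ the heading is $+T$ and the LHC center has swung from $(0,r)$ (which is at distance $2r$ above $c_{0_{R}}$) through an angle $T$ about $(0,-r)$; a short computation gives $c_{L} = (-2r\sin T, -r + 2r\cos T)$, which has $x$-coordinate $-2r\sin T \leq 0$, with equality only at $T=0$. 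Then the $l_{\frac{\pi}{2}}^{+}$ primitive rotates the robot about this LHC center (which stays fixed, by Lemma~\ref{lem:first}), advancing the heading from $T$ to $T + \frac{\pi}{2}$ — but critically it does \emph{not} move $c_{L}$, so after the prefix $r_{T}^{-}l_{\frac{\pi}{2}}^{+}$ we still have $c_{f_{L}}^{\text{(prefix)}} = (-2r\sin T, \, r(2\cos T - 1))$ with $x$-coordinate $\leq 0$.

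Next I would handle the two remaining primitives. The heading after the prefix is $T + \frac{\pi}{2}$; during $s_{U}^{+}$ (forward straight) the LHC translates by $U(\cos(T+\tfrac{\pi}{2}), \sin(T+\tfrac{\pi}{2})) = U(-\sin T, \cos T)$ with $U > 0$, so its $x$-coordinate changes by $-U\sin T \leq 0$ — it only gets \emph{more} negative. Finally $l_{V}^{+}$ rotates the robot about the (now translated) LHC center without moving it (Lemma~\ref{lem:first} again), so $c_{f_{L,x}}$ is unchanged by the last primitive. Hence $c_{f_{L_{x}}} = -2r\sin T - U\sin T \leq 0$ throughout, and one must only rule out the degenerate equality: $c_{f_{L_{x}}} = 0$ would force $\sin T = 0$, i.e. $T = 0$, collapsing $P_{11}$ to $P_{2}$ (no initial $r^{-}$ turn), which is exactly the boundary case excluded from the open subpartition. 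Therefore $c_{f_{L_{x}}} < 0$ strictly on the interior of the $P_{11}$ region, whereas for $P_{2}$, which begins with $l_{T}^{+}$ from $p_0$, Lemma~\ref{lem:third}-style reasoning places the endpoint of $l_{T}^{+}s_{U}^{+}$ in $Q_1$ and gives $c_{f_{L_{x}}} \geq 0$; the predicate $c_{f_{L_{x}}} < 0$ thus cleanly separates the two subpartitions.

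The main obstacle I anticipate is not the computation — the arc and line kinematics are elementary — but rather pinning down the correct sign conventions for the backward right turn $r^{-}$ and ensuring that the claimed monotonicity ($c_{f_{L_{x}}}$ never increasing through the sequence) is genuinely valid for the \emph{entire} admissible range of $T$, $U$, and $V$ that the $P_{11}$ subpartition allows, including the interaction with the other $P_{11}$-defining predicate from Proposition~\ref{prop:p11} ($\theta_f > \angle L_f R_0$). In particular, I should check that there is no regime where the path-type bookkeeping would let the net prefix rotation exceed $\frac{\pi}{2}$ in a way that flips the sign of $\cos(T+\frac{\pi}{2})$'s companion term; since $T \leq \frac{\pi}{2}$ is enforced by the "no arc exceeds $\frac{\pi}{2}r$" convention stated in Section~\ref{subsec:symmetries}, the heading during $s_U^+$ lies in $[\frac{\pi}{2}, \pi]$, so $\sin$ of it is $\geq 0$ and the $x$-displacement is $\leq 0$, closing the argument. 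As with the neighboring propositions, the paper's style permits citing Lemma~\ref{lem:first} and the saturation convention and then omitting the routine trigonometry for brevity.
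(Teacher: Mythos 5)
Your proof is correct and takes essentially the same route as the paper's terse argument: track $c_{f_{L}}$ through the primitives via Lemma~\ref{lem:first}, using the saturated second turn $l_{\frac{\pi}{2}}^{+}$ so that after the prefix $c_{f_{L_{x}}}=-2r\sin T<0$ for $0<T\leq\frac{\pi}{2}$, the straight segment's heading lies in $\left[\frac{\pi}{2},\pi\right]$ and can only decrease that coordinate, and the final positive left turn leaves it fixed. One minor slip: your opening paragraph states the prefix rotation as $-T+\frac{\pi}{2}$ with a ``nonnegative $x$-component'' for $s_{U}^{+}$, which contradicts (and is superseded by) your subsequent correct computation giving heading $T+\frac{\pi}{2}$ and $x$-displacement $-U\sin T\leq 0$.
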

\begin{proof}
	Since the second primitive in $P_{11}$ is $l_{\frac{\pi}{2}}^{+}$, based on Lemma~\ref{lem:first}, and based on the fact that the last primitive in $P_{11}$ is a positive left turn, then $\forall$ $(0 < T \leq{} \frac{\pi}{2})$ in $P_{11}$, $c_{f_{L_{x}}} < 0$.
\end{proof}

Path types $P_{9}$ and $P_{12}$ are common to both sets A and B\@. We illustrate them in Fig.~\ref{fig:cond_9_12} to the left and right respectively.
$P_{9}$, $r_{T}^{-}l_{\frac{\pi}{2}}^{+}s_{U}^{+}r_{\frac{\pi}{2}}^{+}l_{V}^{-}$, differs from $P_{10}$, $r_{T}^{-}l_{\frac{\pi}{2}}^{+}s_{U}^{+}r_{V}^{+}$, only in the last primitive. We make the following proposition to subpartition between the two path types:
\begin{proposition}\label{prop:p9_vs_p10}
	For path type $P_{9}$, $|t_{2}| < 2r$.
\end{proposition}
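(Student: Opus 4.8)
The plan is to follow the same reasoning as in Propositions~\ref{prop:p1_vs_p4} and~\ref{prop:p7_vs_p8}, now applied to \emph{both} boundaries of the $P_9$ subpartition. The key observation is that $P_9$ is the natural continuation of $P_{10}$: once $P_{10}$'s final right turn $r_V^+$ saturates to central angle $\frac{\pi}{2}$, a further change in $\theta_f$ is absorbed by appending a backward left turn $l_V^-$, producing $r_T^- l_{\frac{\pi}{2}}^+ s_U^+ r_{\frac{\pi}{2}}^+ l_V^-$. Hence the interface with $P_{10}$ is exactly the saturation instant ($V=0$), and I would show that $t_2$ sweeps monotonically from $-2r$ at this interface to $2r$ at the opposite boundary of $P_9$, which forces $|t_2|<2r$ in the interior. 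Throughout, because of the $\frac{\pi}{2}$ right turn, the heading line $L$ through $(x_f,y_f)$ is perpendicular to the straight segment $S$, so $t_2$ is precisely the signed offset along $L$ between $(x_f,y_f)$ and the foot $c_{0_R}'$ of the perpendicular from $c_{0_R}$ onto $L$.

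First I would evaluate $t_2$ at the $P_{10}$ interface. Tracking $c_{0_R}$ through the prefix $r_T^- l_{\frac{\pi}{2}}^+$ with Lemma~\ref{lem:first} — $r_T^-$ keeps $c_{0_R}$ fixed while rotating the LHC about it, then $l_{\frac{\pi}{2}}^+$ holds the new LHC center fixed while rotating the RHC by $\frac{\pi}{2}$ about it, a center lying at distance $2r$ from $c_{0_R}$ (the right-triangle setting of Lemma~\ref{lem:second}) — places $c_{0_R}$ at perpendicular distance $r$ from $S$, offset by $2r$ along $S$ from the LHC center tangent to $S$. A computation identical in form to the one in Proposition~\ref{prop:p1_vs_p4}, with $c_{0_L}$ replaced by $c_{0_R}$, then yields $t_2=-2r$ at this interface. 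The right-triangle argument of Proposition~\ref{prop:p1_vs_p4} (legs $r=\|c_{f_R}-(x_f,y_f)\|$ perpendicular to $L$, and $|t_2|$ along $L$) — or, equivalently, a one-line derivative of $t_2$ with respect to the appended-turn angle $V$ — shows $t_2$ is strictly increasing in $V$, so $t_2>-2r$ throughout $P_9$.

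For the matching upper bound, I would locate the opposite boundary of $P_9$ (where it hands over to the neighboring $CCSCC$/$CCC$ type — the value of $V$ at which $t_2$ reaches $2r$, which a short computation puts at $V=2\arctan\!\frac{2r}{U+4r}<\frac{\pi}{2}$) and establish $t_2=2r$ there by the same Pythagorean argument from that side; together with monotonicity in $V$ this gives $t_2\in(-2r,2r)$ on the whole subpartition, i.e. $|t_2|<2r$. The hard part will be this upper bound: the neighboring type and the precise transition locus on that side are not made explicit in the exposition, so — as with Proposition~\ref{prop:p5_vs_p6} — making $t_2=2r$ there rigorous may reduce to a symbolic-solver identity corroborated by exhaustive numerical simulation rather than a purely synthetic argument. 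The lower bound, by contrast, is essentially a verbatim transcription of Proposition~\ref{prop:p1_vs_p4}.
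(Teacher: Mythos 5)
The paper itself gives no argument here beyond ``follows similar reasoning as Proposition~\ref{prop:p1_vs_p4}, omitted for brevity,'' and your lower-bound half is exactly that reasoning, correctly executed: tracking $c_{0_R}$ through the prefix $r_T^- l_{\frac{\pi}{2}}^+ s_U^+$ does place it at perpendicular distance $r$ from $S$ and offset $2r$ along $S$, so with the paper's sign convention $t_2=-2r$ at the $P_{10}$ saturation interface ($V=0$), and $t_2$ is strictly increasing in the appended backward-turn angle $V$. Indeed, a direct parametrization of the $P_9$ family gives $t_2=(4r+U)\sin V-2r\cos V$, which confirms both your monotonicity claim and your boundary value: $t_2=2r$ precisely at $V=2\arctan\frac{2r}{U+4r}<\frac{\pi}{2}$. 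Up to that point your proposal is a faithful (and more explicit) version of what the paper intends.

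The gap is the upper half of the two-sided bound, which is the only part of this proposition that does not follow from the Proposition~\ref{prop:p1_vs_p4} template, and your sketch does not close it. The same formula shows $t_2$ keeps growing to $4r+U$ as $V\to\frac{\pi}{2}$, so $t_2<2r$ is \emph{not} a geometric invariant of paths of the form $r_T^- l_{\frac{\pi}{2}}^+ s_U^+ r_{\frac{\pi}{2}}^+ l_V^-$; it can only hold because $P_9$ ceases to be \emph{optimal} before the locus $t_2=2r$ (in Algorithm~\ref{alg:A_partitions} the region $|t_2|>2r$ in that branch is handed back to $P_{10}$, not to a $CCC$/$CCSCC$ neighbor as you guessed). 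There is no saturation event at that boundary, so ``the same Pythagorean argument from that side'' does not exist; what is required is a length comparison (e.g., $d_9$ versus the competing type's length) showing the hand-off occurs exactly on $t_2=2r$, and you explicitly defer that to a symbolic-solver identity plus numerical corroboration. As written, then, the proposal establishes $t_2>-2r$ but not $t_2<2r$ --- though in fairness the paper's own omitted proof glosses over exactly the same point.
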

\begin{proof}
	The proof follows similar reasoning as the proof of Proposition~\ref{prop:p1_vs_p4} and has been omitted for brevity.
\end{proof}
$P_{12}$, $r_{T}^{+}l_{\frac{\pi}{2}}^{-}s_{U}^{-}r_{\frac{\pi}{2}}^{-}l_{V}^{+}$, differs from $P_{5}$, $r_{T}^{+}l_{\frac{\pi}{2}}^{-}s_{U}^{-}r_{V}^{-}$, only in the last primitive. We make the following proposition to subpartition between the two path types:
\begin{proposition}\label{prop:p12_vs_p5}
	For path type $P_{12}$, $|t_{2}| < 2r$.
\end{proposition}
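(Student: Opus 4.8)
The plan is to mirror the proofs of Propositions~\ref{prop:p1_vs_p4} and~\ref{prop:p9_vs_p10}, since $P_{12}$ stands to $P_{5}$ exactly as $P_{4}$ does to $P_{1}$ and as $P_{9}$ does to $P_{10}$: it is obtained by appending a fifth arc primitive once a preceding arc saturates to $\frac{\pi}{2}$. First I would pin down how $P_{12}$ emerges from $P_{5}$. Starting from $P_{5} = r_{T}^{+}l_{\frac{\pi}{2}}^{-}s_{U}^{-}r_{V}^{-}$ and letting the final configuration vary out of the region served by $P_{5}$, the fourth arc $r_{V}^{-}$ saturates to $r_{\frac{\pi}{2}}^{-}$; the only admissible continuation --- there can be no two consecutive right turns, and a right turn of the other direction would move $\theta_{f}$ the wrong way --- is a forward left turn $l_{V}^{+}$, which yields $P_{12} = r_{T}^{+}l_{\frac{\pi}{2}}^{-}s_{U}^{-}r_{\frac{\pi}{2}}^{-}l_{V}^{+}$. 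The $P_{5}$/$P_{12}$ boundary is therefore the configuration in which this fourth arc equals $\frac{\pi}{2}$, equivalently the $P_{12}$ path with $V = 0$.

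Next I would show that $|t_{2}| = 2r$ on that boundary. Because the fourth arc is exactly $\frac{\pi}{2}$, the heading $\theta_{f}$ and the direction of the straight primitive $S = s_{U}^{-}$ differ by $\frac{\pi}{2}$, so the line $L$ through $(x_{f}, y_{f})$ with direction $\theta_{f}$ is perpendicular to $S$; moreover $c_{f_{R}} - (x_{f}, y_{f})$ is perpendicular to $\theta_{f}$, so $c_{f_{R}}$ projects onto $L$ at $(x_{f}, y_{f})$ itself, at perpendicular distance $r$. Working in the local frame with $p_{0} = (0,0,0)$, the center $c_{0_{R}}$ is fixed at $(0,-r)$. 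Tracing $r_{T}^{+}l_{\frac{\pi}{2}}^{-}s_{U}^{-}r_{\frac{\pi}{2}}^{-}$ and using Lemma~\ref{lem:first} to follow the successive turning-circle centers --- in particular, after $r_{T}^{+}l_{\frac{\pi}{2}}^{-}$ the distance between $c_{0_{R}}$ and the current RHC center equals $\mathcal{K}$ by Lemma~\ref{lem:second} --- a short trigonometric computation gives $t_{2} = -2r$ independently of $T$ and $U$, i.e. $|t_{2}| = 2r$ on the boundary. This is the direct analogue of the first half of the proof of Proposition~\ref{prop:p1_vs_p4}, with $c_{0_{L}}$ replaced by $c_{0_{R}}$.

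Finally I would establish the strict inequality on the $P_{12}$ side. Appending $l_{V}^{+}$ rotates $(x_{f}, y_{f})$ about the now-fixed center $c_{f_{L}}$ while simultaneously rotating $L$ by $V$; carrying the same computation through with $V > 0$ gives $t_{2} = (4r + U)\sin V - 2r\cos V$, so that $t_{2} > -2r\cos V > -2r$ for $V \in \left(0, \frac{\pi}{2}\right)$, while $t_{2} < 2r$ holds precisely on the sub-range of $V$ over which $P_{12}$ remains the optimal type (beyond it a competing $CCSCC$/$CCC$-family path becomes shorter). Together these bounds give $|t_{2}| < 2r$. I expect the main obstacle to be exactly this last point: showing that the upper bound $t_{2} < 2r$ is forced by the optimality of $P_{12}$ against its neighbours $P_{5}$, $P_{9}$, $P_{10}$, rather than only locally near the boundary, and keeping the sign and orientation conventions coherent throughout. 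Since the argument duplicates that of Proposition~\ref{prop:p1_vs_p4} up to the substitution $c_{0_{L}} \leftrightarrow c_{0_{R}}$ (and reuses the structure already invoked for Proposition~\ref{prop:p9_vs_p10}), it is reasonable to present it in abbreviated form by reference.
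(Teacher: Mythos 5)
Your proposal takes essentially the same route as the paper: the paper's own proof of Proposition~\ref{prop:p12_vs_p5} is a one-line deferral to the reasoning of Proposition~\ref{prop:p1_vs_p4} (the saturation-boundary argument on the projected distance along $L$), which is exactly what you reconstruct, and your explicit expression $t_{2} = (4r+U)\sin V - 2r\cos V$ with $t_{2} = -2r$ at $V=0$ checks out under the paper's sign convention. The one point you flag as unresolved --- that the upper bound $t_{2} < 2r$ must be enforced by the optimality of $P_{12}$ against neighbouring types rather than by the formula alone --- is likewise not made rigorous in the paper, whose qualitative ``the projected distance decreases in magnitude from $2r$'' argument in Proposition~\ref{prop:p1_vs_p4} is the intended (equally informal) justification, so your treatment is at least as detailed as the original.
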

\begin{proof}
	The proof follows similar reasoning as the proof of Proposition~\ref{prop:p1_vs_p4} and has been omitted for brevity.
\end{proof}

The last overlap in set A that we need to address is between $P_{4}$, $l_{T}^{+}s_{U}^{+}r_{\frac{\pi}{2}}^{+}l_{V}^{-}$, and $P_{9}$, $r_{T}^{-}l_{\frac{\pi}{2}}^{+}s_{U}^{+}r_{\frac{\pi}{2}}^{+}l_{V}^{-}$, since the two paths differ only in the first primitive. We make the following proposition:
\begin{proposition}\label{prop:p4_vs_p9}
	For path type $P_{4}$, $c_{f_{L_{x}}} \geq{} 2r$.
\end{proposition}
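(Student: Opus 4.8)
The plan is to pin down the $x$-coordinate of the final left-hand-circle center $c_{f_L}$ as an explicit function of the segment parameters of a $P_{4}$ path, and then read the inequality off directly. Exploiting rotation invariance, I would reason in the local frame of $p_{0} = (0,0,0)$, so that the start heading points along the positive $x$-axis and $c_{0_L} = (0,r)$. Write the path as $P_{4} = l_{T}^{+}s_{U}^{+}r_{\frac{\pi}{2}}^{+}l_{V}^{-}$ with $T \in [0,\frac{\pi}{2}]$ (the standard bound on an arc's central angle in the Reeds--Shepp minimal set), $U \ge 0$, and $V \ge 0$.

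Next I would propagate $c_L$ through the four primitives using Lemma~\ref{lem:first}. During $l_{T}^{+}$ the turn is about $c_{0_L}$, so $c_L$ is unchanged and the heading becomes $T$. During $s_{U}^{+}$ the straight translation carries $c_L$ to $\big(U\cos T,\; r + U\sin T\big)$, while the concurrent right-hand-circle center sits at distance $2r$ along the right normal, $c_R = c_L + 2r(\sin T, -\cos T)$. During $r_{\frac{\pi}{2}}^{+}$ this $c_R$ becomes the fixed turning center; after a quarter turn the heading is $T - \frac{\pi}{2}$, and the new left-hand-circle center again lies at distance $2r$ from $c_R$, along the left normal at the new heading, so $c_{f_L} = c_R + 2r(\cos T, \sin T)$. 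Finally $l_{V}^{-}$ turns about $c_{f_L}$, which therefore does not move. Taking the $x$-component gives
\[
	c_{f_{L_x}} = U\cos T + 2r\sin T + 2r\cos T = U\cos T + 2r(\sin T + \cos T).
\]
Since $\cos T \ge 0$ on $[0,\frac{\pi}{2}]$ we have $U\cos T \ge 0$, and $\sin T + \cos T = \sqrt{2}\,\sin\!\big(T + \tfrac{\pi}{4}\big) \ge 1$ there (with minimum $1$ at the endpoints), hence $c_{f_{L_x}} \ge 2r$, as claimed; equality occurs only in the degenerate cases $T = \frac{\pi}{2}$ or $T = U = 0$.

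To make clear that this predicate actually discriminates $P_{4}$ from $P_{9}$ (their only difference being the leading primitive), I would run the identical propagation for $P_{9} = r_{T}^{-}l_{\frac{\pi}{2}}^{+}s_{U}^{+}r_{\frac{\pi}{2}}^{+}l_{V}^{-}$, where the opening $r_{T}^{-}$ is a backward right turn about $c_{0_R} = (0,-r)$, and verify that it yields $c_{f_{L_x}} = 2r\cos T - (4r + U)\sin T$, which is strictly less than $2r$ for every $T \in (0,\frac{\pi}{2}]$ and $U \ge 0$; this mirrors the style of argument already used in Proposition~\ref{prop:p1_vs_p4}. I do not anticipate a real difficulty: the bookkeeping in the quarter-turn step --- keeping the left/right normals and the sign of the heading change consistent --- is the only delicate point, together with justifying the admissible ranges of $T$, $U$, and $V$ from the Reeds--Shepp arc-length conventions and from membership in set~A.
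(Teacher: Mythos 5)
Your proposal is correct, and it reaches the claim by a genuinely more explicit route than the paper. The paper's own proof is a two-line geometric appeal: since $P_{4}\in A$ the condition $LL=\twonorm{c_{0_{L}}-c_{f_{L}}}>2r$ holds, and because the first primitive is $l_{T}^{+}$ and the third is $r_{\frac{\pi}{2}}^{+}$ it asserts $c_{f_{L_{x}}}\geq 2r$ for all $0<T\leq\frac{\pi}{2}$, $U$, $0<V\leq\frac{\pi}{2}$ --- no coordinates are computed and set-A membership is invoked. You instead propagate $c_{L}$ through the primitives in the local frame and land on the closed form $c_{f_{L_{x}}}=U\cos T+2r\left(\sin T+\cos T\right)$, whose correctness I can confirm (your normal-vector bookkeeping through the quarter turn is right), and the bound then follows from $\sin T+\cos T\geq 1$ on $\left[0,\frac{\pi}{2}\right]$ together with $U\cos T\geq 0$; your equality cases are also correct. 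What your route buys: it is self-contained (the $LL>2r$ fact is never needed), it yields a quantitative expression rather than a qualitative assertion, and your companion computation $c_{f_{L_{x}}}=2r\cos T-\left(4r+U\right)\sin T<2r$ for $P_{9}$ with $T>0$ makes explicit that the predicate actually separates the two subpartitions, which the paper leaves implicit. What the paper's route buys is brevity and consistency with how the neighboring propositions are argued. The one hypothesis to keep visible is the arc-angle restriction $0<T\leq\frac{\pi}{2}$ (the paper's convention, which you cite): outside that range $\sin T+\cos T\geq 1$ fails, so the restriction is doing real work in your inequality.
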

\begin{proof}
	Since $P_{4} \in A$, the condition $LL = \twonorm{c_{0_{L}}-c_{f_{L}}} > 2r$ holds. Moreover, since the first primitive in $P_{4}$ is $l_{T}^{+}$, and the third primitive is $r_{\frac{\pi}{2}}^{+}$, then $c_{f_{L_{x}}} \geq{} 2r$ must hold $\forall$ $(0 < T \leq{} \frac{\pi}{2}),~U,~(0 < V \leq{} \frac{\pi}{2})$ in $l_{T}^{+}s_{U}^{+}r_{\frac{\pi}{2}}^{+}l_{V}^{-}$.
\end{proof}

\begin{figure}[]
	\centering
	\begin{subfigure}{}
		\includegraphics[width=4.25cm]{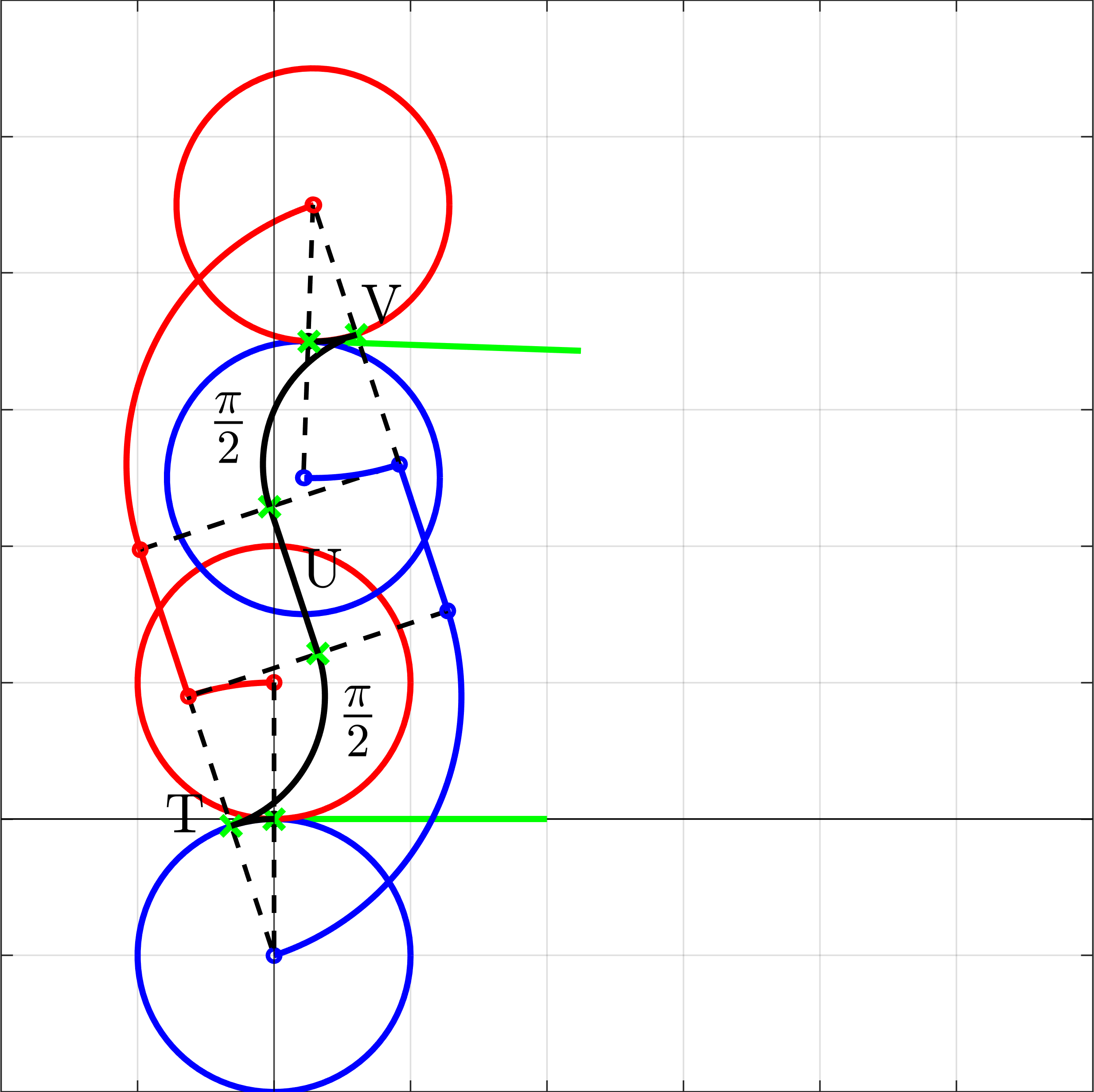}
	\end{subfigure}\hfil
	\begin{subfigure}{}
		\includegraphics[width=4.25cm]{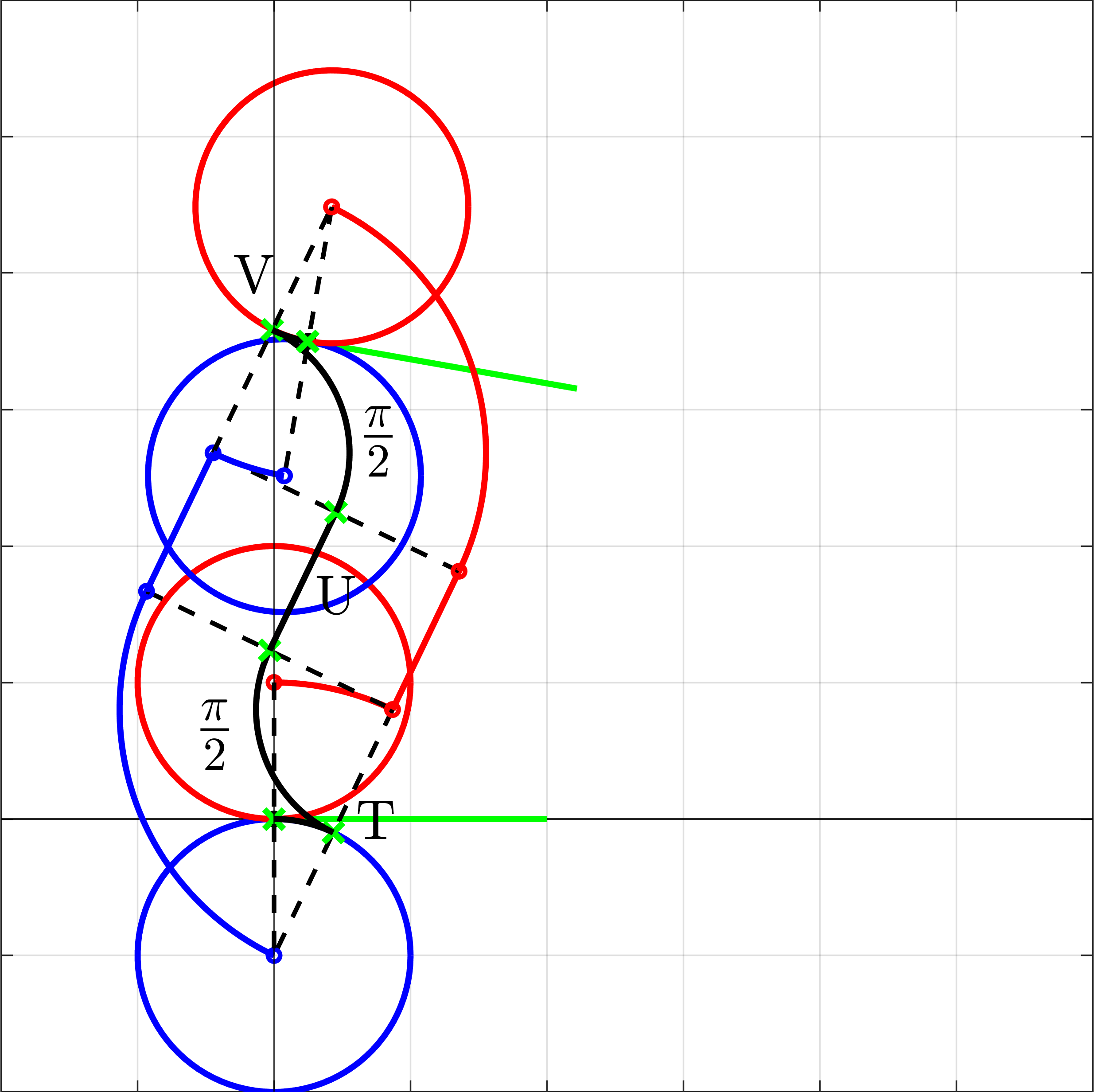}
	\end{subfigure}\hfil
	\caption{$P_{9}$: $r_{T}^{-}l_{\frac{\pi}{2}}^{+}s_{U}^{+}r_{\frac{\pi}{2}}^{+}l_{V}^{-}$ (left), \& $P_{12}$: $r_{T}^{+}l_{\frac{\pi}{2}}^{-}s_{U}^{-}r_{\frac{\pi}{2}}^{-}l_{V}^{+}$ (right).}\label{fig:cond_9_12}
\end{figure}

\subsubsection{Set B}
\begin{figure}[!ht]
	\centering
	\includegraphics[width=8.5cm,keepaspectratio]{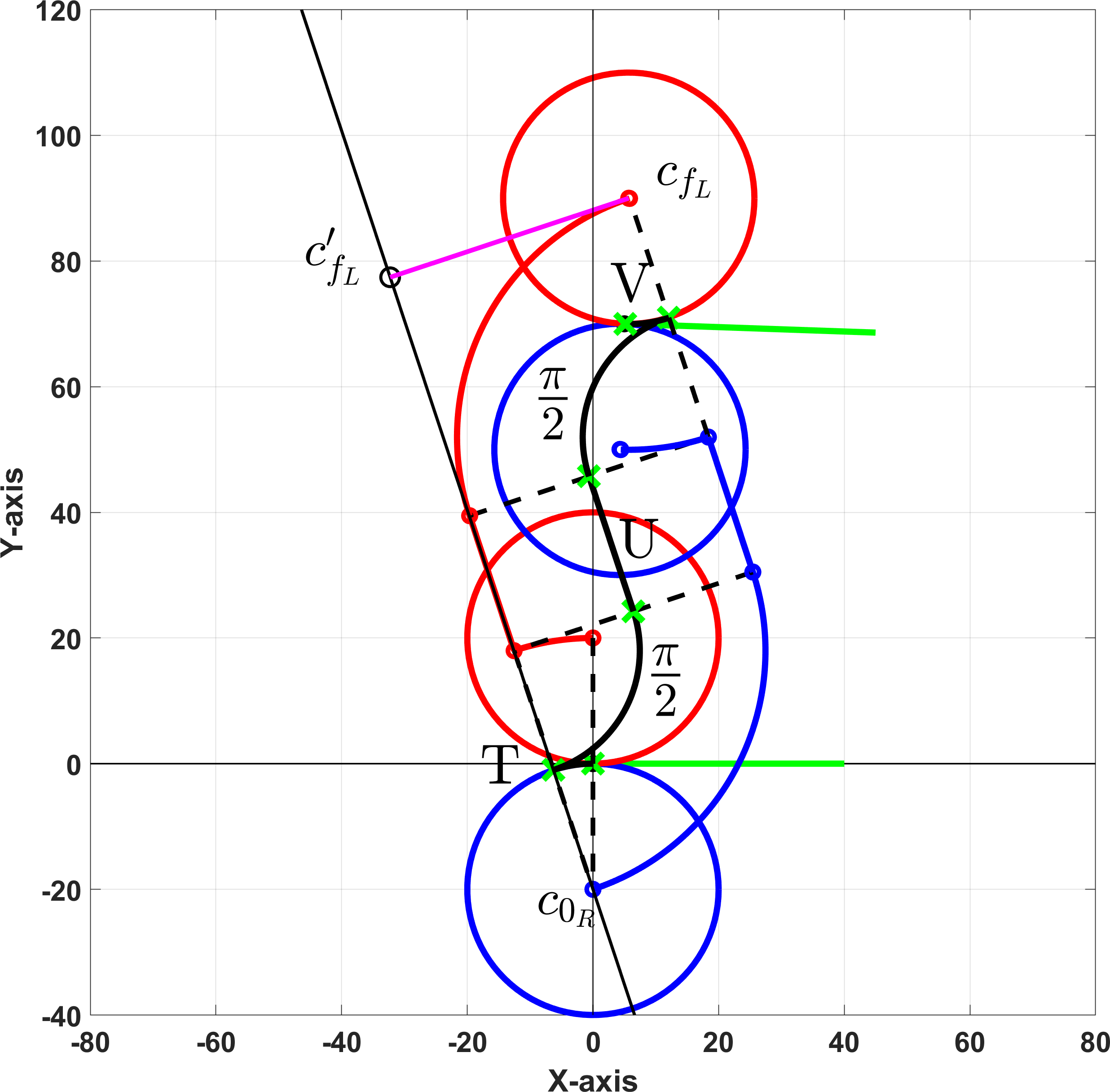}
	\caption{$P_{9}$ consisting of $r_{T}^{-}l_{\frac{\pi}{2}}^{+}s_{U}^{+}r_{\frac{\pi}{2}}^{+}l_{V}^{-}$. $c_{f_{L}}'$ is the projection of $c_{f_{L}}$ onto the line formed by $c_{0_{R}}$ and the angle $\frac{\pi}{2} + T$.}\label{fig:new_9_12_proof}
\end{figure}
Set B contains the families $CCC$, $CCCC$, and $CCSCC$. Only the last one, $CCSCC$ involves an $S$ primitive. The S curve formed by the middle three primtives in $P_{9}$ and $P_{12}$, $l_{\frac{\pi}{2}}^{+}s_{U}^{+}r_{\frac{\pi}{2}}^{+}$ and $r_{\frac{\pi}{2}}^{-}s_{U}^{-}l_{\frac{\pi}{2}}^{-}$ respectively, makes it such that the distance $RL$ between $c_{0_{R}}$ and $c_{f_{L}}$ is at least $\sqrt{20}r$. We make the following proposition to isolate $P_{9}$ and $P_{12}$ from the rest of the families in set B\@:
\begin{proposition}\label{prop:p9_p12_setB}
	For path types $P_{9}$ and $P_{12}$, $RL \geq{} \sqrt{20}r$.
\end{proposition}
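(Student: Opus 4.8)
The plan is to reduce the claimed bound to an elementary computation on the three middle primitives, using the observation that the two outer primitives fix exactly the circle centers that appear in $RL$. Concretely, for $P_{9}=r_{T}^{-}l_{\frac{\pi}{2}}^{+}s_{U}^{+}r_{\frac{\pi}{2}}^{+}l_{V}^{-}$, the first primitive $r_{T}^{-}$ is a right turn, so by Lemma~\ref{lem:first} the center $c_{0_{R}}$ is stationary throughout it; likewise the last primitive $l_{V}^{-}$ is a left turn, so $c_{f_{L}}$ is stationary throughout it. Hence $RL=\twonorm{c_{0_{R}}-c_{f_{L}}}$ is unchanged if we delete the outer arcs: $c_{0_{R}}$ is the RHC center at the configuration $p_{1}$ that ends $r_{T}^{-}$, and $c_{f_{L}}$ is the LHC center at the configuration $p_{4}$ that ends $r_{\frac{\pi}{2}}^{+}$, and $p_{1}$ is joined to $p_{4}$ precisely by $l_{\frac{\pi}{2}}^{+}s_{U}^{+}r_{\frac{\pi}{2}}^{+}$. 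In particular the lengths $T$ and $V$ play no role, which is the key simplification.

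The second step is the actual propagation, carried out in the local frame of $p_{1}$, i.e. taking $p_{1}=(0,0,0)$ without loss of generality. Then $c_{0_{R}}=(0,-r)$. Applying $l_{\frac{\pi}{2}}^{+}$ (LHC center $(0,r)$ fixed; orientation becomes $\frac{\pi}{2}$; rear axle moves to $(r,r)$), then $s_{U}^{+}$ (rear axle moves to $(r,r+U)$), then $r_{\frac{\pi}{2}}^{+}$ (RHC center $(2r,r+U)$ fixed; orientation returns to $0$; rear axle moves to $(2r,2r+U)$) yields $p_{4}=(2r,2r+U,0)$ and hence $c_{f_{L}}=(2r,3r+U)$. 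Therefore $RL=\sqrt{(2r)^{2}+(4r+U)^{2}}$, and since the forward straight segment has $U\ge 0$ this gives $RL\ge\sqrt{4r^{2}+16r^{2}}=\sqrt{20}\,r$, with equality exactly when $U=0$ (and $V$ arbitrary). This is the picture in Fig.~\ref{fig:new_9_12_proof}: projecting $c_{f_{L}}$ onto the line through $c_{0_{R}}$ with direction $\frac{\pi}{2}+T$ — which in the frame of $p_{1}$ is the vertical axis, and which contains $p_{1}$ itself — produces a foot $c_{f_{L}}'$ at distance $4r+U\ge 4r$ from $c_{0_{R}}$, while $c_{f_{L}}$ sits at perpendicular distance $2r$ from that line, so Pythagoras delivers the bound.

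For $P_{12}=r_{T}^{+}l_{\frac{\pi}{2}}^{-}s_{U}^{-}r_{\frac{\pi}{2}}^{-}l_{V}^{+}$ the argument is identical in structure: the first primitive is again a right turn (so $c_{0_{R}}$ is fixed) and the last is a left turn (so $c_{f_{L}}$ is fixed), and the middle sub-path is the direction-reversed mirror of the one above, so the same propagation in $p_{1}$'s frame gives $c_{f_{L}}=(-2r,3r+U)$ and thus $RL=\sqrt{(2r)^{2}+(4r+U)^{2}}\ge\sqrt{20}\,r$; alternatively one may simply invoke the reversal/reflection symmetries of Section~\ref{subsec:symmetries} to transport the $P_{9}$ result. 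I do not expect a genuine obstacle here; the only points requiring care are (i) checking that the straight segment can only separate $c_{0_{R}}$ and $c_{f_{L}}$ further — i.e. the monotonicity in $U\ge 0$ that makes $U=0$ the minimizer — and (ii) the bookkeeping of the frame rotation by $T$ so that the line "at angle $\frac{\pi}{2}+T$" in $p_{0}$'s frame is recognized as the vertical axis through $c_{0_{R}}$ in $p_{1}$'s frame. Neither is difficult, so the proof should be short.
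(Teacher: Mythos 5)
Your proposal is correct and follows essentially the same argument as the paper: both exploit that the outer turns leave $c_{0_{R}}$ and $c_{f_{L}}$ fixed, reduce to the middle $C_{\frac{\pi}{2}}SC_{\frac{\pi}{2}}$ sub-path, and apply the Pythagorean theorem to the right triangle with legs $2r$ and $4r+U\geq 4r$. Your explicit coordinate propagation in $p_{1}$'s frame merely fills in the paper's terse claim that the distance from $c_{0_{R}}$ to the projection $c_{f_{L}}'$ is at least $4r$ for all $U\geq 0$.
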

\begin{proof}
	We refer to Fig.~\ref{fig:new_9_12_proof} for this proof. The proof follows from the fact that in both $P_{9}$ and $P_{12}$ the last primitive is a left turn, during which $c_{f_{L}}$ does not change its position. $\forall U \geq 0$, the distance between $c_{0_{R}}$ and $c_{f_{L}}'$ is $\geq 4r$. Since the line formed by $c_{f_{L}}$ and $c_{f_{L}}'$ is of length $2r$ and is perpendicular to the line formed by $c_{0_{R}}$ and $c_{f_{R}}$, we use the Pythagorean theorem to deduce that $RL \geq{} \sqrt{16r^{2}+4r^{2}} = \sqrt{20}r$.
\end{proof}
$P_{9}$, however, can be subpartitioned from $P_{12}$ by the following proposition:
\begin{proposition}\label{prop:p9_vs_p12_setB}
	For path type $P_{9}$, $\theta_{f} > 2\beta_{0} - \pi$.
\end{proposition}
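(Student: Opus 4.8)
The plan is to mirror the argument used for Proposition~\ref{prop:p5_vs_p6} and to exploit the fact that $P_{9}$ and $P_{12}$ are images of one another under a composition of the Reeds--Shepp invariances catalogued in~\cite{Reeds1990}, Section~8. Concretely, reversing the segment order of $P_{9}$, $r_{T}^{-}l_{\frac{\pi}{2}}^{+}s_{U}^{+}r_{\frac{\pi}{2}}^{+}l_{V}^{-}$, and flipping every direction superscript (the time-reversal that swaps $p_{0}$ with $p_{f}$) yields $l_{V}^{+}r_{\frac{\pi}{2}}^{-}s_{U}^{-}l_{\frac{\pi}{2}}^{-}r_{T}^{+}$; swapping every $l$ with $r$ (the left/right reflection) then yields $r_{V}^{+}l_{\frac{\pi}{2}}^{-}s_{U}^{-}r_{\frac{\pi}{2}}^{-}l_{T}^{+}$, which -- after the harmless relabeling of the free lengths $T$ and $V$ -- is exactly $P_{12}$. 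Hence $d_{9}$ and $d_{12}$ are the lengths of the same $CCSCC$ path family, evaluated respectively on the original query $\big((0,0,0),(x_{f},y_{f},\theta_{f})\big)$ and on its image under this composed map.

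Next I would compute that image explicitly in the canonical frame $p_{0}=(0,0,0)$. Time-reversal sends the query to $\big((x_{f},y_{f},\theta_{f}),(0,0,0)\big)$; re-canonicalizing by the rigid motion that returns the start to $(0,0,0)$ places the new goal at polar angle $(\beta_{0}+\pi)-\theta_{f}$ with orientation $-\theta_{f}$, where $\beta_{0}=\atantwo\!\left(y_{f}-y_{0},x_{f}-x_{0}\right)$; and the $x$-axis reflection negates both, giving polar angle $\theta_{f}-\beta_{0}-\pi$ and orientation $\theta_{f}$. The image query therefore coincides with the original one iff $\theta_{f}-\beta_{0}-\pi=\beta_{0}\pmod{2\pi}$, i.e. iff $\theta_{f}=2\beta_{0}-\pi$, and at that configuration the symmetry forces $d_{9}=d_{12}$. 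This is the same locus that Proposition~\ref{prop:p5_vs_p6} flags as the $P_{4}\!\to\!P_{5}$ transition and as a root of $\angle L_{f}R_{0}=\angle R_{f}L_{0}$, which is reassuring since $P_{12}$ is the saturated continuation of $P_{5}$, just as $P_{9}$ is the saturated continuation of $P_{10}$.

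It then remains to fix the side. Because the composed map is an involution that sends $\{\theta_{f}>2\beta_{0}-\pi\}$ to $\{\theta_{f}<2\beta_{0}-\pi\}$ while swapping $d_{9}\leftrightarrow d_{12}$, the difference $d_{9}-d_{12}$ vanishes on the locus and has opposite signs on its two sides; evaluating the closed-form segment-length expressions with a symbolic solver -- exactly as in the proof of Proposition~\ref{prop:p5_vs_p6} -- shows that $d_{9}-d_{12}<0$ when $\theta_{f}>2\beta_{0}-\pi$, so $P_{9}$ is the shorter path precisely there. Combined with Propositions~\ref{prop:p9_vs_p10},~\ref{prop:p4_vs_p9}, and~\ref{prop:p9_p12_setB}, which separate $P_{9}$ from $P_{10}$, $P_{4}$, and the remaining members of set~B, this pins down the $P_{9}$ subpartition. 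The step I expect to be the main obstacle is upgrading the local sign statement near the transition to a global one: the symmetry guarantees a sign flip at $\theta_{f}=2\beta_{0}-\pi$ but not, by itself, that $d_{9}-d_{12}$ has no further zero inside the region where both paths are geometrically valid, nor that some other branch of the $CCSCC$ length formula does not take over; closing that gap cleanly requires either a monotonicity argument on the arc-length parametrization or, as the paper does elsewhere, an appeal to exhaustive symbolic/numeric verification over the relevant parameter range.
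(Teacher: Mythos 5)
Your proposal is correct, and it arrives at the same transition locus and the same final appeal to symbolic/numeric sign verification as the paper, but the route to the locus is genuinely different and more principled. The paper's own proof is a one-line reference to the argument of Proposition~\ref{prop:p5_vs_p6}: there, the equality $d_{4}-d_{5}=0$ at $\theta_{f}=2\beta_{0}-\pi$ is simply asserted as an output of a symbolic solver, the side is fixed by the sign of the difference, and the claim is backed by extensive numerical simulation; for $P_{9}$ versus $P_{12}$ the same recipe is invoked with $d_{9}-d_{12}$ in place of $d_{4}-d_{5}$. You instead \emph{derive} the locus: you correctly identify $P_{12}$ as the image of $P_{9}$ under time reversal composed with the left/right reflection (your bookkeeping of the segment reversal, direction flips, and $l/r$ swap checks out against the paper's definitions of $P_{9}$ and $P_{12}$), compute the induced action on the canonical query, and obtain the fixed locus $\theta_{f}=2\beta_{0}-\pi$ as the self-symmetric configurations, which forces $d_{9}=d_{12}$ there without any symbolic computation. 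This explains \emph{why} the same boundary appears for the $P_{4}/P_{5}$ and $P_{9}/P_{12}$ splits, something the paper leaves implicit. The remaining step --- showing that $d_{9}-d_{12}$ does not change sign away from the locus within the region carved out by Propositions~\ref{prop:p9_p12_setB}, \ref{prop:p9_vs_p10}, and~\ref{prop:p4_vs_p9} --- is exactly the gap you flag, and the paper closes it no more rigorously than you do: it relies on the symbolic-solver sign evaluation plus exhaustive numerical validation, so your honest caveat matches the actual state of the paper's argument rather than falling short of it.
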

\begin{proof}
	The proof follows similar reasoning as the proof of Proposition~\ref{prop:p5_vs_p6} and has been omitted for brevity.
\end{proof}

We proceed with the families $CCC$ and $CCCC$. We start with the case that logically follows from $P_{9}$ when $RL < \sqrt{20}r$, which is $P_{13}$, $r_{T}^{-}l_{U}^{+}r_{U}^{+}l_{V}^{-}$. We illustrate $P_{13}$ in Fig.~\ref{fig:cond_13_14} to the left. In order to differentiate between $P_{13}$ and $P_{14}$, $r_{T}^{-}l_{U}^{+}r_{V}^{+}$, we make the following proposition:
\begin{proposition}\label{prop:p13_vs_p14}
	For path type $P_{13}$, $\alpha \geq{} \beta$, where $\alpha = \acos{\left(\frac{3r^{2}+\frac{RL^{2}}{4}}{2rRL}\right)}$ and $\beta = \theta_{f}-\frac{\pi}{2}-\angle R_{0}L_{f}$.
\end{proposition}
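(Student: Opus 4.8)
The plan is to describe a $P_{13}$ path by the chain of turning-circle centers it traces, to recognise $\alpha$ as a fixed angle of that chain, and to show that $\beta$ equals that same angle minus the central angle $V$ of the final arc; since $V\ge 0$ this gives $\alpha\ge\beta$, with equality exactly at the degenerate case $V=0$ (the boundary on which $P_{13}$ collapses to the $CCC$ path $r_{T}^{-}l_{U}^{+}r_{U}^{+}$ shared with $P_{14}$). Writing $P_{13}=r_{T}^{-}l_{U}^{+}r_{U}^{+}l_{V}^{-}$, let $A_{1}=c_{0_{R}}$, $A_{2}$, $A_{3}$, $A_{4}=c_{f_{L}}$ be the centers of the four arcs; $A_{1}$ is fixed during the first (right) arc and $A_{4}$ during the last (left) arc by Lemma~\ref{lem:first}, and since consecutive turning circles are externally tangent at each direction/steering change we have $\|A_{1}-A_{2}\|=\|A_{2}-A_{3}\|=\|A_{3}-A_{4}\|=2r$. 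The defining property of this $CCCC$ type enters next: along the arc $l_{U}^{+}$ the robot --- and with it the rigidly attached RHC center --- rotates by $U$ about $A_{2}$, carrying the RHC center from $A_{1}$ (its location at the end of the first arc) to $A_{3}$ (its location at the start of the third arc), so $\angle A_{1}A_{2}A_{3}=U$; symmetrically $\angle A_{2}A_{3}A_{4}=U$ from the arc $r_{U}^{+}$. A short comparison of directions then shows $A_{2}-A_{1}=A_{4}-A_{3}$, so $A_{1}A_{2}A_{4}A_{3}$ is a parallelogram; its diagonals $A_{1}A_{4}$ and $A_{2}A_{3}$ share their midpoint $O$, and applying the law of cosines in the triangle $OA_{2}A_{1}$ (sides $r$ and $2r$, included angle $U$) and doubling yields $RL=\|A_{1}-A_{4}\|=2r\sqrt{5-4\cos U}$, i.e. $RL^{2}/4=5r^{2}-4r^{2}\cos U$.

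Next I would identify $\alpha$ with an angle of this chain. From the triangle $A_{1}A_{2}A_{3}$ (or directly from the parallelogram) $\|A_{1}-A_{3}\|^{2}=8r^{2}(1-\cos U)$, so the law of cosines in the triangle $A_{1}A_{4}A_{3}$ gives $\cos\angle A_{1}A_{4}A_{3}=\frac{RL^{2}+4r^{2}-\|A_{1}-A_{3}\|^{2}}{4r\,RL}$; substituting the two identities above collapses the numerator to $8r^{2}(2-\cos U)=2\left(3r^{2}+RL^{2}/4\right)$, so $\cos\angle A_{1}A_{4}A_{3}=\frac{3r^{2}+RL^{2}/4}{2r\,RL}$. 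One checks from the orientation of the $P_{13}$ chain that $\angle A_{1}A_{4}A_{3}$, read as the signed rotation from the ray $A_{4}A_{1}$ to the ray $A_{4}A_{3}$, is a nonnegative angle, hence equal to $\alpha$.

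Finally I would relate $\beta$ to $V$. The robot's final position $R_{f}$ lies on the circle of radius $r$ about $A_{4}$, and because the last primitive is a left turn the vector $R_{f}-A_{4}$ points in the direction $\theta_{f}-\frac{\pi}{2}$; since $\angle R_{0}L_{f}$ is, by definition, the direction of $c_{0_{R}}-c_{f_{L}}=A_{1}-A_{4}$, we obtain $\beta=\left(\theta_{f}-\frac{\pi}{2}\right)-\angle R_{0}L_{f}$, the signed angle at $A_{4}$ from the ray $A_{4}A_{1}$ to the ray $A_{4}R_{f}$. At the start of the last arc ($V=0$) the robot sits at the external tangency point of the third and fourth turning circles, that is, at the midpoint of $A_{3}A_{4}$, so $R_{f}-A_{4}$ is then aligned with $A_{3}-A_{4}$ and $\beta=\angle A_{1}A_{4}A_{3}=\alpha$. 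As $V$ grows the robot sweeps the last arc and, since the primitive $l^{-}$ decreases the heading, it rotates clockwise about $A_{4}$, so the direction of $R_{f}-A_{4}$ decreases by exactly $V$; hence $\beta=\alpha-V$, and $V\ge 0$ gives $\alpha\ge\beta$, with equality iff $V=0$.

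I expect the first step to be the genuine obstacle: pinning the chain-of-centers picture down rigorously, in particular justifying that the two middle arcs share the central angle $U$ (the geometric invariant that defines this $CCCC$ type) and that at each cusp/steering change the robot lies at the midpoint of the two adjacent centers. Once that geometry is secured, identifying $\alpha$ with $\angle A_{1}A_{4}A_{3}$ is a routine simplification of the law of cosines, and $\beta=\alpha-V$ is elementary angle bookkeeping.
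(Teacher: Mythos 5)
Your proposal is correct and follows essentially the same route as the paper: it exploits the symmetry of $P_{13}$ (equal middle angles $U$) to get the parallelogram/similar-triangle structure on the chain of circle centers, recovers $\alpha$ from the law of cosines on the triangle with sides $r$, $\frac{RL}{2}$, $2r$ (your triangle $A_{1}A_{4}A_{3}$ is just the doubled version of the paper's $\triangle EDC$), and identifies the $V=0$ boundary where $\alpha=\beta$. Your explicit bookkeeping $\beta=\alpha-V$ is a slightly sharper statement of the paper's ``if $V>0$ then $\alpha>\beta$'' and is a welcome addition, not a deviation.
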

\begin{proof}
	We refer to Fig.~\ref{fig:cond_13_14_split} for this proof. Angle $\alpha = \angle EDC$ can be computed using the cosine rule in the $\triangle EDC$, where $EC = r$, $DE = \frac{RL}{2}$, and $DC = 2r$. Angle $\beta = \angle EDF$ can be readily computed as the angle between the lines $AD$ and $DF$. If the last primitive in $P_{13}$ has a lenght of $V = 0$, then the two angles are equal. If $V > 0$, then $\alpha > \beta$, and $P_{13}$ is the optimal path. Otherwise, $P_{14}$ is the optimal path.
\end{proof}

\begin{figure}[]
	\centering
	\begin{subfigure}{}
		\includegraphics[width=4.25cm]{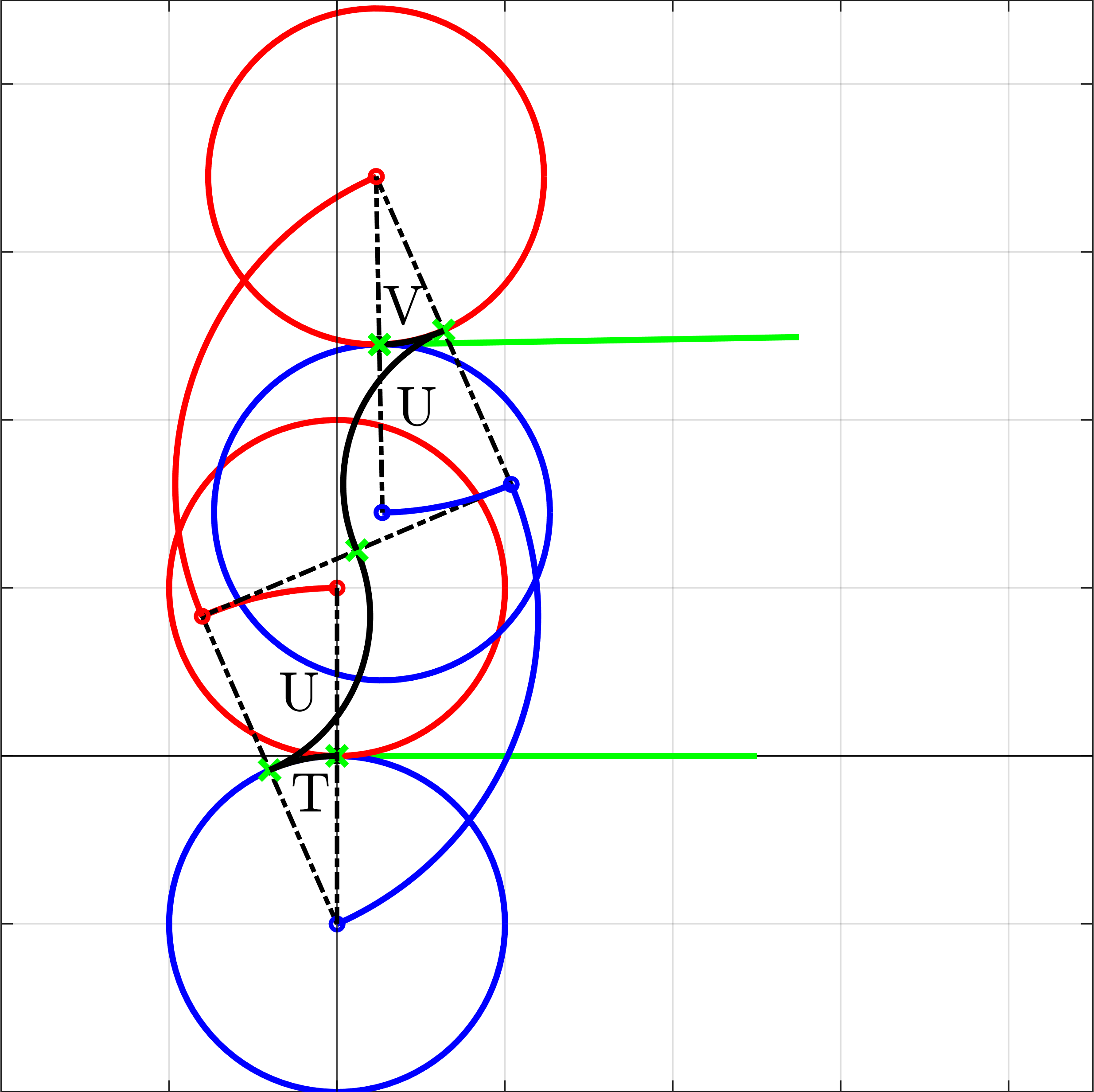}
	\end{subfigure}\hfil
	\begin{subfigure}{}
		\includegraphics[width=4.25cm]{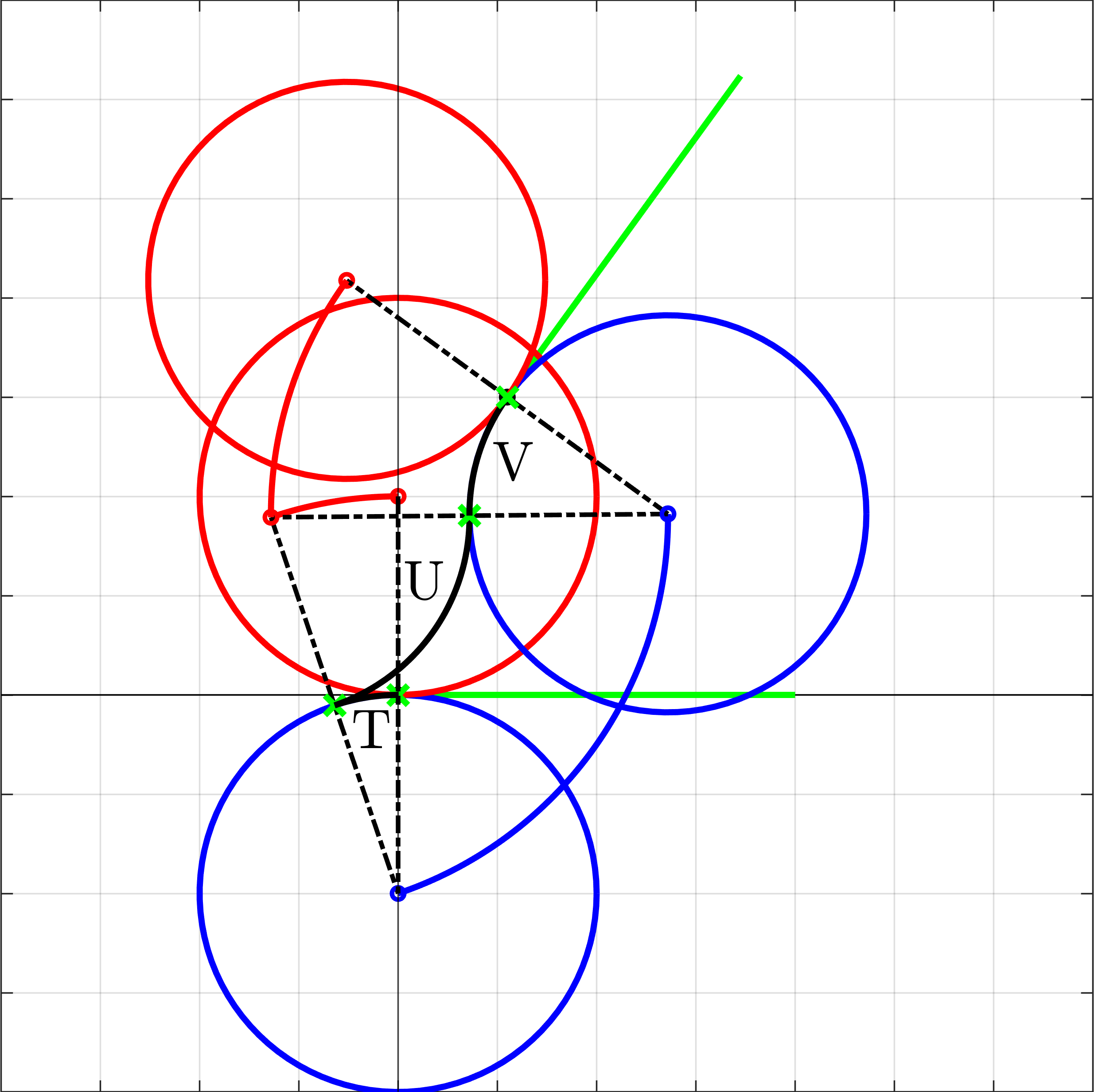}
	\end{subfigure}\hfil
	\caption{$P_{13}$: $r_{T}^{-}l_{U}^{+}r_{U}^{+}l_{V}^{-}$ (left), \& $P_{14}$: $r_{T}^{-}l_{U}^{+}r_{V}^{+}$ (right).}\label{fig:cond_13_14}
\end{figure}

\begin{figure}[!ht]
	\centering
	\includegraphics[width=8.5cm,keepaspectratio]{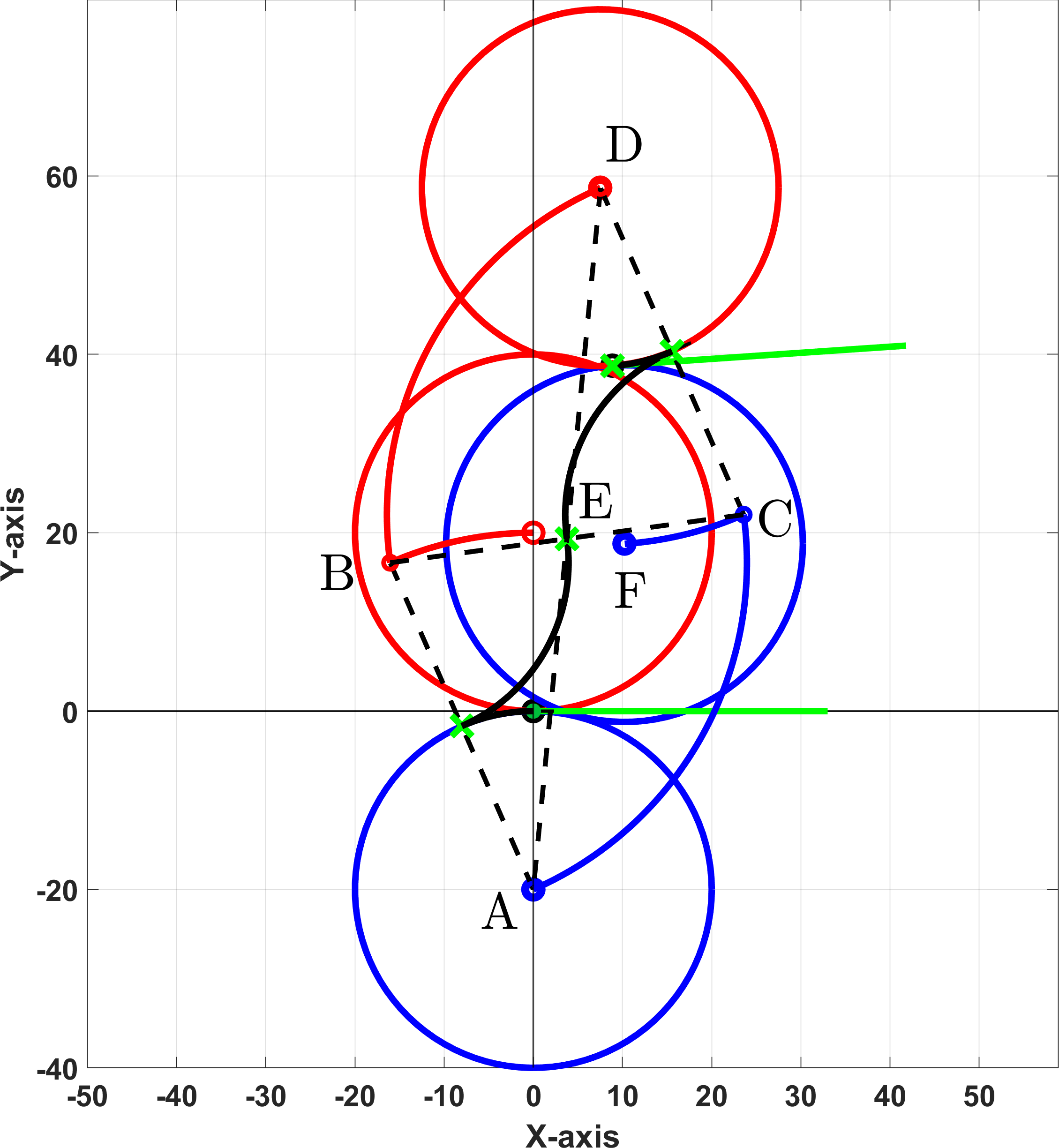}
	\caption{The triangles $\triangle EAB$ and $\triangle EDC$ are similar due to the symmetry in the path $P_{13}$, $r_{T}^{-}l_{U}^{+}r_{U}^{+}l_{V}^{-}$.}\label{fig:cond_13_14_split}
\end{figure}

\begin{figure}[]
	\centering
	\begin{subfigure}{}
		\includegraphics[width=4.25cm]{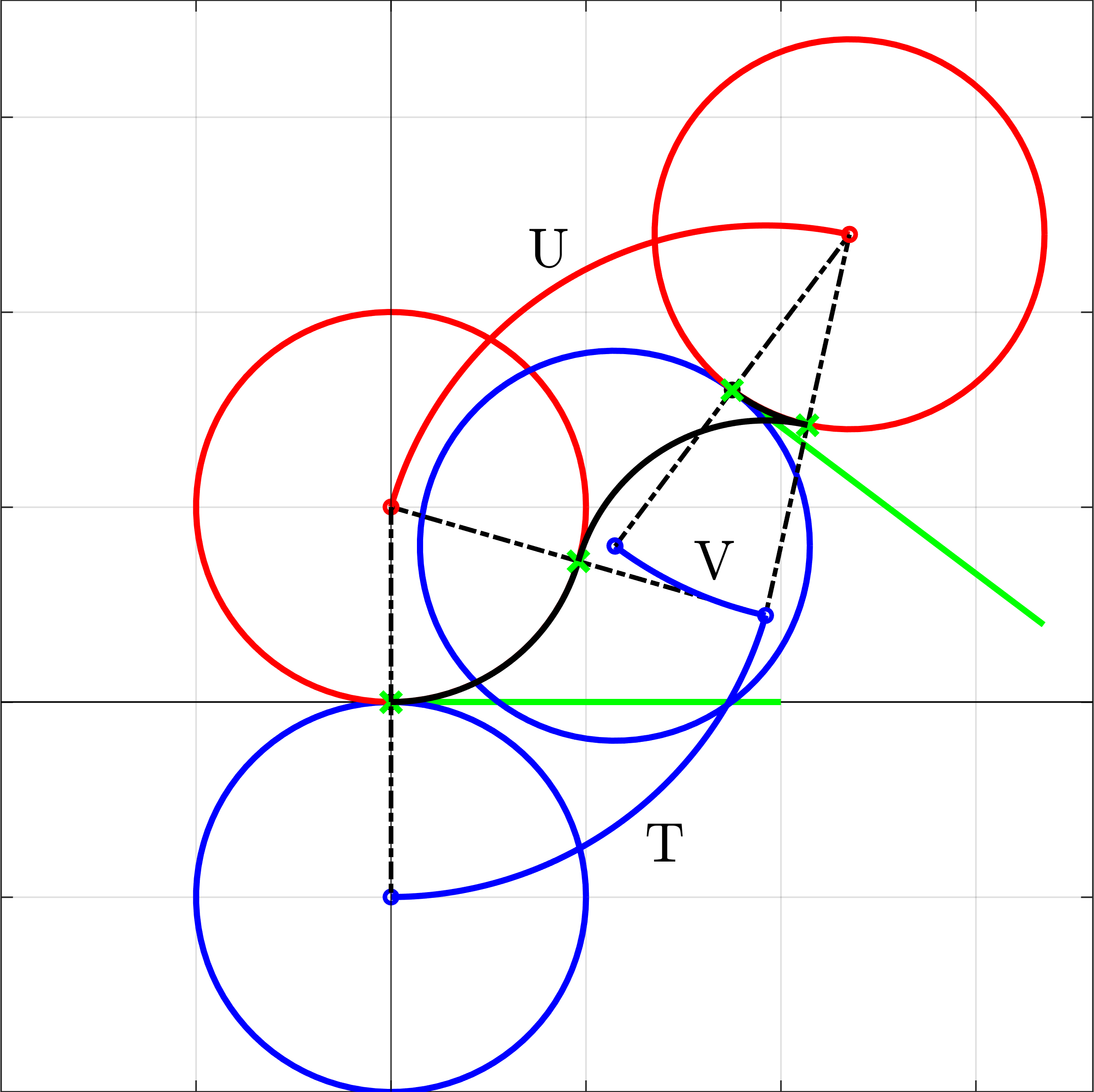}
	\end{subfigure}\hfil
	\begin{subfigure}{}
		\includegraphics[width=4.25cm]{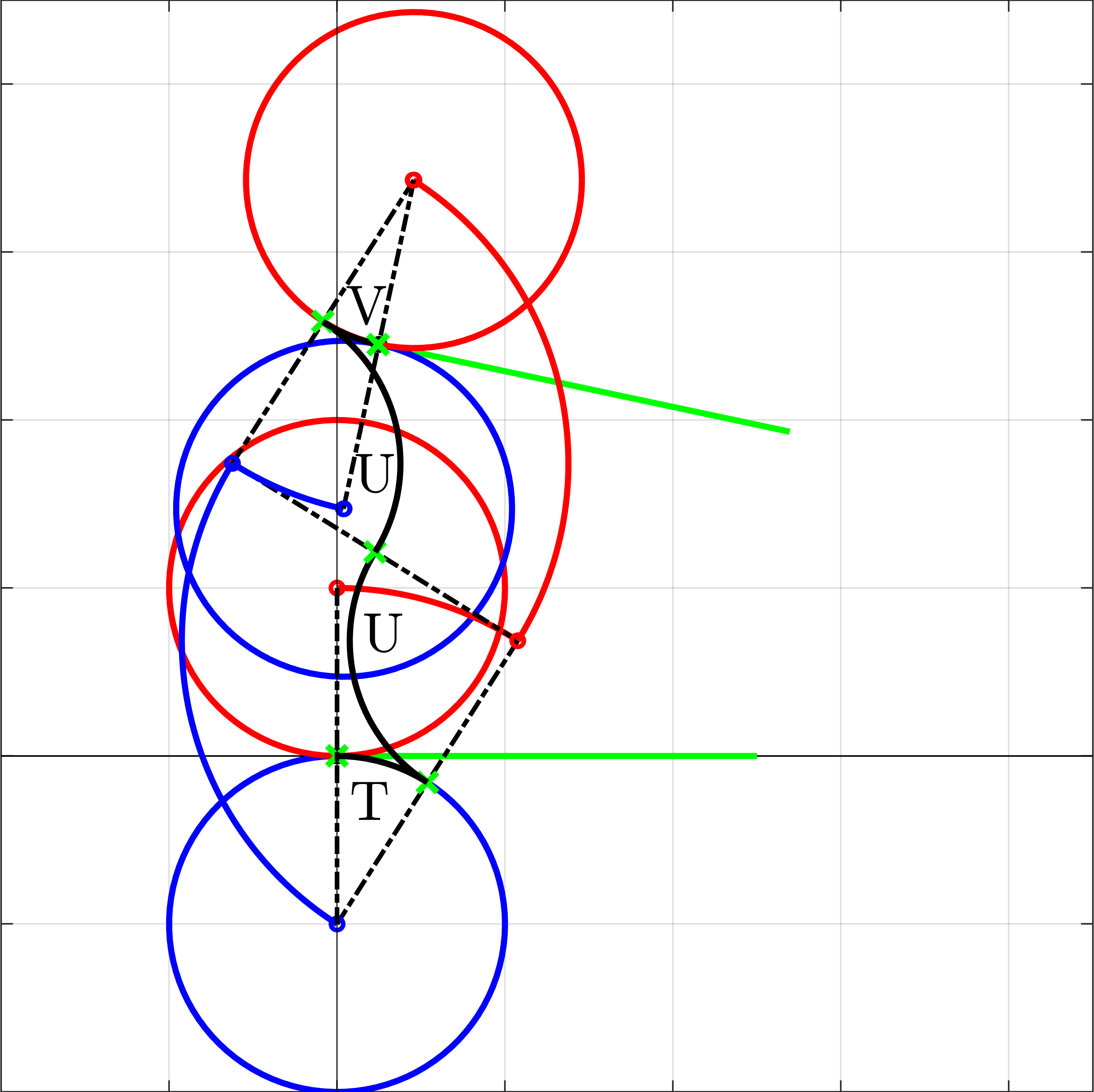}
	\end{subfigure}\hfil
	\caption{$P_{16}$: $l_{T}^{+}r_{U}^{+}l_{V}^{-}$ (left), \& $P_{17}$: $r_{T}^{+}l_{U}^{-}r_{U}^{-}l_{V}^{+}$ (right).}\label{fig:cond_17_16}
\end{figure}

The proof for the proposition to subpartition $P_{16}$, $l_{T}^{+}r_{U}^{+}l_{V}^{-}$, from $P_{13}$ is closely related to the proof of Proposition~\ref{prop:p13_vs_p14}. We illustrate an example $P_{16}$ path to the left in Fig.~\ref{fig:cond_17_16} and we make the following proposition:
\begin{proposition}\label{prop:p16_vs_p13}
	For path type $P_{16}$, $\alpha \leq \beta_{1} = \frac{\pi}{2} - \angle L_{f}R_{0}$.
\end{proposition}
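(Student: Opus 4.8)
The plan is to follow the template of the proof of Proposition~\ref{prop:p13_vs_p14}. The path types $P_{16}$ ($l_T^+ r_U^+ l_V^-$, Fig.~\ref{fig:cond_17_16}) and $P_{13}$ ($r_T^- l_U^+ r_U^+ l_V^-$) share the tail $l_U^+ r_U^+ l_V^-$ and differ only through the leading backward-right arc of $P_{13}$, so the boundary between the two subpartitions is the locus on which that leading arc degenerates ($T\to 0$ in $P_{13}$), where $P_{13}$ collapses to the three-arc path $l_U^+ r_U^+ l_V^-$ --- which is at once the limiting $P_{16}$ path with coinciding first and middle arc lengths. On this boundary I will establish $\alpha = \beta_1$, and then show that moving the final configuration into the $P_{16}$ region makes $\alpha < \beta_1$; the complement $\alpha \ge \beta_1$ is then left for $P_{13}$, compatibly with the bound of Proposition~\ref{prop:p13_vs_p14}.

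First I reconstruct the triangle of Fig.~\ref{fig:cond_13_14_split} in the present setting: take $D = c_{0_R}$, $E$ the midpoint of $c_{0_R}c_{f_L}$ so that $DE = RL/2$, and $C$ the center of the $r_U^+$ arc, which by the equal-middle-arc ($C_uC_u$) symmetry of the shared tail satisfies $DC = 2r$ and $EC = r$. The cosine rule in $\triangle EDC$ then yields $\alpha = \angle EDC = \acos\!\big((3r^2 + RL^2/4)/(2rRL)\big)$, identical to Proposition~\ref{prop:p13_vs_p14}; in particular $\alpha$ is a function of $RL = \twonorm{c_{0_R}-c_{f_L}}$ alone.

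Second, I read off $\beta_1$ geometrically: $\angle L_f R_0$ is the polar angle of the vector $c_{0_R}\to c_{f_L}$, so $\beta_1 = \tfrac{\pi}{2}-\angle L_f R_0$ is the angle this vector makes with the local $+y$ axis. Propagating the orientation through the three arcs of the boundary path $l_U^+ r_U^+ l_V^-$, I express the endpoint orientation and the location of $c_{f_L}$ relative to $c_{0_R}$ and verify that the reference ray $DF$ of Fig.~\ref{fig:cond_13_14_split} aligns with that $+y$ axis along this path, so that the angle $\beta$ of Proposition~\ref{prop:p13_vs_p14} reduces exactly to $\beta_1$ here; the resulting identity $\alpha = \beta_1$ on the boundary is of the same kind the authors confirm with symbolic solvers (as for Proposition~\ref{prop:p5_vs_p6}), and I would corroborate it numerically with the configuration sweep used throughout the paper. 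For the strict inequality off the boundary I argue exactly as in Proposition~\ref{prop:p13_vs_p14}: displacing the final configuration so that a genuine three-arc $P_{16}$ path becomes the shorter one is precisely the move that would drive the leading $r^-$ arc of $P_{13}$ to negative length, and it does so by changing $RL$ and/or the direction $\angle L_f R_0$ in the sense that opens the gap $\beta_1 - \alpha$, while restoring a nonzero $r_T^-$ arc is what closes it again.

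The step I expect to be the main obstacle is the second one: showing that the orientation-dependent angle $\beta$ of Proposition~\ref{prop:p13_vs_p14} collapses to the purely position-dependent $\beta_1 = \tfrac{\pi}{2}-\angle L_f R_0$ on the boundary path requires carefully carrying $\theta$ through $l_U^+ r_U^+ l_V^-$ and relating the endpoint pose to the geometry of $c_{0_R}$ and $c_{f_L}$. This is the kind of transition identity the paper repeatedly verifies by symbolic computation backed by exhaustive numerical simulation, and I expect to do the same here.
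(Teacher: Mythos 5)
Your overall skeleton matches the paper's: the $P_{13}$/$P_{16}$ boundary is the degeneration of the leading $r_{T}^{-}$ arc, $\alpha$ is the cosine-rule angle of Proposition~\ref{prop:p13_vs_p14} (a function of $RL$ only), equality holds on the boundary, and $T>0$ re-opens the gap. However, the mechanism you propose for the crucial equality step --- showing that the ray $DF$ aligns with the local $+y$ axis so that the orientation-dependent angle $\beta$ of Proposition~\ref{prop:p13_vs_p14} ``reduces to'' $\beta_{1}$ --- does not work, and this is exactly the step you flag as the main obstacle and defer to symbolic/numerical checking. On the boundary path $l_{U}^{+}r_{U}^{+}l_{V}^{-}$ (start at the local origin with $\theta_{0}=0$) one has $\theta_{f}=-V$, so $\beta=\theta_{f}-\frac{\pi}{2}-\angle R_{0}L_{f}$ varies with $V$ while $\beta_{1}=\frac{\pi}{2}-\angle L_{f}R_{0}$ depends only on the positions of $c_{0_{R}}$ and $c_{f_{L}}$; a direct computation (e.g.\ $r=1$, $U=\frac{\pi}{3}$, $V=\frac{\pi}{4}$ gives $\beta_{1}=30\degree$ but $\beta=-15\degree$) shows $\beta\neq\beta_{1}$ in general. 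Moreover, even if $\beta$ did equal $\beta_{1}$ there, you would still need $\alpha=\beta$ to conclude $\alpha=\beta_{1}$, and $\alpha=\beta$ characterizes the \emph{other} boundary (the $P_{13}$/$P_{14}$ one, $V=0$), not the $T=0$ one.

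The identity you need lives at the \emph{start} end of the figure, not the final end, and requires no orientation propagation at all. In the symmetric trapezoid of Fig.~\ref{fig:cond_13_14_split} with $A=c_{0_{R}}$, $B$ the center of the $l_{U}^{+}$ arc, $C$ the center of the $r_{U}^{+}$ arc, and $D=c_{f_{L}}$, the congruence of $\triangle EAB$ and $\triangle EDC$ gives $\alpha=\angle BAD$. When $T=0$, $B=c_{0_{L}}$, so $AB$ is the segment from $(0,-r)$ to $(0,r)$ and is aligned with the y-axis; hence $\angle BAD$ is precisely the angle between $AD$ and the y-axis, i.e.\ $\alpha=\beta_{1}$, for every $U$ and $V$ (consistent with $\beta_{1}$ being position-only). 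For $T>0$ the backward right arc rotates $B$ about $A$ by $T$ away from the side of $AD$, so $\alpha=\beta_{1}+T>\beta_{1}$, which is the paper's monotonicity statement and replaces your heuristic displacement argument; $P_{16}$ is then left with the complement $\alpha\leq\beta_{1}$. As written, your proposal would substitute numerical corroboration for a step whose claimed intermediate identity is false, so the argument as planned would not go through.
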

\begin{proof}
	We also refer to Fig.~\ref{fig:cond_13_14_split} for this proof. When the first primitive $r_{T}^{-}$ in $P_{13}$ has a length of zero, the line $AB$ must be aligned with the y-axis. The angle $\beta_{1}$ is the angle between the line $AD$ and the y-axis, whereas $\alpha$ is the angle between the lines $AB$ and $AD$. If the first primitive in $P_{13}$ has a non-zero length, then $\alpha > \beta_{1}$, and $P_{13}$ is the optimal path. Otherwise, $P_{16}$ is the optimal path.
\end{proof}

Path type $P_{17}$, $r_{T}^{+}l_{U}^{-}r_{U}^{-}l_{V}^{+}$, is the mirror of $P_{13}$ with respect to the line connecting $c_{0_{R}}$ and $c_{f_{L}}$. We illustrate $P_{17}$ in Fig.~\ref{fig:cond_17_16} to the right. We make the following proposition to subpartition $P_{17}$ from $P_{13}$:
\begin{proposition}\label{prop:p17_vs_p13}
	For path type $P_{17}$, $\theta_{f} < 2 L_{f}R_{0} - \pi$.
\end{proposition}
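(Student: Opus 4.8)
The plan is to reuse the strategy of Proposition~\ref{prop:p5_vs_p6}: locate the orientation $\theta_f^{\star}$ at which the path length produced by the $P_{13}$ formulae coincides with the one produced by the $P_{17}$ formulae, show that this crossover is $\theta_f^{\star} = 2\angle L_f R_0 - \pi$, and then argue that the $P_{17}$ branch is the strictly shorter one precisely on $\theta_f < \theta_f^{\star}$, which yields the stated predicate as an invariant of type $P_{17}$.

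First I would make the shared geometry explicit. In both $P_{13} = r_T^{-} l_U^{+} r_U^{+} l_V^{-}$ and $P_{17} = r_T^{+} l_U^{-} r_U^{-} l_V^{+}$ the first primitive is a right turn and the last is a left turn, so by Lemma~\ref{lem:first} the centers $c_{0_R}$ and $c_{f_L}$ stay fixed along the entire motion; they are the two anchors of both families, and the segment joining them has direction $\gamma := \angle L_f R_0$. Using the ``mirror'' relation already noted in the text — $P_{17}$ is the reflection of $P_{13}$ across the line $c_{0_R}c_{f_L}$, which exchanges left and right turns and is orientation-reversing — I expect the crossover configuration to be the one whose terminal heading is self-conjugate under this reflection. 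Reflection about a line of direction $\gamma$ sends a heading $\phi$ to $2\gamma - \phi$; after accounting for the heading flip that accompanies the mirrored traversal, the self-conjugate value works out to $\theta_f = 2\gamma - \pi$, so at $\theta_f = 2\angle L_f R_0 - \pi$ the $P_{13}$ and $P_{17}$ candidates become reflected copies of one another and hence have equal length. Exactly as in Proposition~\ref{prop:p5_vs_p6}, the same crossover can be confirmed directly with a symbolic solver applied to the closed-form length expressions $d_{13}$ and $d_{17}$, and $2\gamma - \pi$ can be recognised as the root of the corresponding symmetric angle identity.

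It then remains to fix the sign: for $\theta_f < 2\angle L_f R_0 - \pi$ one must show $d_{13} - d_{17} > 0$, so that $P_{17}$ is optimal; I would obtain this from monotonicity of $d_{13} - d_{17}$ in $\theta_f$ across the crossover, supplemented — as in Proposition~\ref{prop:p5_vs_p6} — by extensive numerical simulation. The main obstacle is precisely this last point: establishing the global sign of $d_{13} - d_{17}$ (not merely its vanishing at $\theta_f^{\star}$) from the unwieldy trigonometric closed forms, and showing that $2\gamma - \pi$ is the \emph{unique} crossover inside the range of $\theta_f$ for which both $P_{13}$ and $P_{17}$ are geometrically realisable. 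A secondary but necessary care is the $\atantwo$ branch: the right-hand side $2\angle L_f R_0 - \pi$ must be wrapped into $\rinterval{-\pi}{\pi}$ before the comparison, and one must check that, together with Propositions~\ref{prop:p13_vs_p14} and~\ref{prop:p16_vs_p13}, this predicate partitions the $CCCC$ region among $P_{13}$, $P_{14}$, $P_{16}$, and $P_{17}$ with no overlap and no gap, as required for the overall completeness argument.
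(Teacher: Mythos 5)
Your top-level strategy coincides with the paper's: treat the predicate as the sign condition on $d_{13}-d_{17}$, locate the crossover at $\theta_{f}=2\angle L_{f}R_{0}-\pi$, and keep $P_{17}$ on the side $\theta_{f}<2\angle L_{f}R_{0}-\pi$. However, the argument you put at the centre --- that at this heading the $P_{13}$ and $P_{17}$ candidates ``become reflected copies of one another and hence have equal length'' --- does not hold for a generic query. The map that turns an $r^{-}l^{+}r^{+}l^{-}$ path into an $r^{+}l^{-}r^{-}l^{+}$ path is reflection about the line $c_{0_{R}}c_{f_{L}}$ composed with traversal reversal; reversal leaves the headings along the path unchanged (only the gears flip), so the extra ``heading flip'' you invoke to produce the $-\pi$ is unjustified, and, more importantly, for the image of the $P_{13}$ candidate to be the $P_{17}$ candidate of the \emph{same} endpoints you need full self-conjugacy $\sigma(p_{f})=p_{0}$, not just a heading identity. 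Since reflection swaps handedness, $\sigma(p_{f})=p_{0}$ would force $\sigma(c_{f_{L}})=c_{0_{R}}$, and because $\sigma$ fixes both centres (they lie on the mirror line) this degenerates to $c_{f_{L}}=c_{0_{R}}$. So matching $\theta_{f}$ alone does not make the two candidates mirror images, and equality $d_{13}=d_{17}$ at the claimed threshold is not established by this route; the paper's remark that $P_{17}$ is the ``mirror'' of $P_{13}$ describes the two families' shapes relative to the anchors, not an exact conjugacy for fixed endpoints.

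The second, and decisive, difference is that the step you defer to ``monotonicity supplemented by extensive numerical simulation'' is precisely where the paper has its argument. The paper compares the closed-form segment lengths: $U_{13}=U_{17}$, and the net-heading relations for the two sequences tie $V_{13}$ and $V_{17}$ to $T_{13}$, $T_{17}$ and $\theta_{f}$, so the length comparison collapses to the equivalent scalar inequality $T_{13}-T_{17}\geq\theta_{f}$ (i.e.\ $d_{13}-d_{17}\geq 0$ exactly when $T_{13}-T_{17}\geq\theta_{f}$), whose equality case is then located at $\theta_{f}=2\angle L_{f}R_{0}-\pi$. This reduction is what settles the global sign and the uniqueness of the crossover that you flag as your ``main obstacle''; without it (or some substitute for it), your proposal identifies the threshold only heuristically and leaves the sign determination to numerics, so the proof as proposed is incomplete.
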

\begin{proof}
	The proof is obtained by comparing the analytical expressions for the length of each primitive in $P_{13}$ and $P_{17}$. The two have the same exact expression for $U$. From the expressions $T$ and $V$, one may construct the following inequality: $T_{13} - T_{17} \geq \theta_{f}$ that holds for $d_{13} - d_{17} \geq 0$, where $d_{13}$ and $d_{17}$ are the total path distances for $P_{13}$ and $P_{17}$ respectively and $T_{13}$ and $T_{17}$ are the lengths of the first primitives in $P_{13}$ and $P_{17}$ respectively. A solution for $T_{13} - T_{17} = \theta_{f}$ is found when $\theta_{f} = 2 L_{f}R_{0} - \pi$.
\end{proof}

Path type $P_{15}$, $l_{T}^{+}r_{U}^{-}l_{V}^{+}$ as shown in Fig.~\ref{fig:cond_15_18} to the left, can be subpartitioned from $P_{14}$ as $\theta_{f}$ keeps increasing in the range $\theta_{f} \geq \frac{\pi}{2}$ by the following proposition:
\begin{proposition}\label{prop:p15_vs_p14}
	For path type $P_{15}$, $(LR \leq 2r) \land (RL \leq 2r)$.
\end{proposition}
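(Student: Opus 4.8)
The plan is to characterize path type $P_{15}$, $l_{T}^{+}r_{U}^{-}l_{V}^{+}$, purely through the geometric invariants that its three $C$ primitives impose on the start and mirrored-final turning circles, mirroring the style used for the earlier $CCC$ cases. First I would observe that since $P_{15}$ begins with a positive left turn $l_{T}^{+}$ and ends with a positive left turn $l_{V}^{+}$, Lemma~\ref{lem:first} forces $c_{0_{L}}$ to be fixed during the first primitive and $c_{f_{L}}$ to be fixed during the last primitive, while the middle $r_{U}^{-}$ primitive is a right turn connecting them. During a single $C$ primitive the two relevant circle centers remain externally tangent at distance $2r$, and in a $CCC$ chain of alternating turns the chain of tangencies forces both $LR = \|c_{0_{L}} - c_{m_{R}}\|$ and $RL = \|c_{0_{R}} - c_{m_{L}}\|$ to be bounded by $2r$ — the first bound coming from the $l_{T}^{+}r_{U}^{-}$ portion (where $c_{0_{L}}$ and the intermediate RHC center stay at distance $2r$, and the intermediate RHC coincides with $c_{m_{R}}$ because the last primitive is a left turn and does not move the RHC), and the second from the $r_{U}^{-}l_{V}^{+}$ portion (where the intermediate RHC center — equal to $c_{0_{R}}$, since the first primitive is a left turn and does not move the RHC — stays at distance $2r$ from $c_{m_{L}}$).

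The key steps, in order, would be: (i) use Lemma~\ref{lem:first} to identify which centers are frozen during which primitive in the $l_{T}^{+}r_{U}^{-}l_{V}^{+}$ sequence; (ii) track the external-tangency distance of $2r$ through each of the two consecutive-turn junctions to conclude $LR \le 2r$ from the first two primitives and $RL \le 2r$ from the last two; (iii) note that the reasoning holds for all admissible segment lengths $0 < T \le \frac{\pi}{2}$, $0 < U \le \frac{\pi}{2}$, $0 < V \le \frac{\pi}{2}$, so the two inequalities are genuine invariants of the $P_{15}$ partition, not just special cases; and (iv) contrast with $P_{14}$, $r_{T}^{-}l_{U}^{+}r_{V}^{+}$ — which the proposition is being used to separate $P_{15}$ from — where the insertion of an $S$ primitive or the different direction pattern breaks at least one of these bounds, exactly as argued when Proposition~\ref{prop:booleanset} was used to split set A from set B. The conjunction $(LR \le 2r) \land (RL \le 2r)$ then cleanly cuts out the $P_{15}$ subpartition.

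I expect the main obstacle to be making precise the claim that, in a $CCC$ path, the \emph{middle} circle's center can simultaneously be identified with $c_{m_{R}}$ (as seen from the first primitive's endpoint) and sit at distance $2r$ from $c_{0_{L}}$ — one has to be careful that the turning circle ``seen'' at the end of primitive two is the same circle whose center is frozen by primitive three, which relies on primitive three being a turn of the \emph{opposite} handedness (here a left turn, which leaves the RHC fixed). A secondary subtlety is ruling out the degenerate boundary $V=0$ or $T=0$: when $V=0$ the path collapses toward $P_{14}$-like behavior, and one must check that the non-strict inequalities $\le 2r$ are the right closure so that the boundary is assigned consistently and the partition remains overlap-free and exhaustive, consistent with the completeness argument promised in Section~\ref{subsec:completeness}. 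Aside from these bookkeeping points, the argument is a direct consequence of Lemma~\ref{lem:first} and the fixed-tangency-distance observation already established in the text, so no heavy computation should be required.
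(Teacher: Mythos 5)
There is a genuine error in your argument: you have inverted Lemma~\ref{lem:first}. The lemma states that a left turn rotates the robot \emph{and its RHC} about the LHC center, i.e.\ during a left turn the LHC stays fixed and the RHC moves (and symmetrically for a right turn). Your proof rests on the opposite claim — that a left turn ``does not move the RHC'' — and uses it twice: to identify the intermediate RHC center of $l_{T}^{+}r_{U}^{-}l_{V}^{+}$ with $c_{m_{R}}$ (because the third primitive is a left turn) and with $c_{0_{R}}$ (because the first primitive is a left turn). Neither identification holds except in the degenerate cases $V=0$ and $T=0$, respectively. The correct frozen-center chain for $P_{15}$ is: segment~1 freezes $c_{0_{L}}$, segment~2 freezes the intermediate RHC center $c_{1}$, segment~3 freezes $c_{m_{L}}$, giving the tangencies $\|c_{0_{L}}-c_{1}\| = \|c_{1}-c_{m_{L}}\| = \|c_{m_{L}}-c_{m_{R}}\| = 2r$; the quantities $LR=\|c_{0_{L}}-c_{m_{R}}\|$ and $RL=\|c_{0_{R}}-c_{m_{L}}\|$ join the two ends of this chain and are \emph{not} given by any single tangency.

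A quick sanity check exposes the problem: if your identifications were valid, the tangency would force the exact equalities $LR = 2r$ and $RL = 2r$, whereas for a generic $P_{15}$ path both distances are strictly smaller. Writing the centers as successive rotations one finds
\[
\left(\frac{RL}{2r}\right)^{2} = 3 - 2\cos T - 2\cos U + 2\cos(T+U), \qquad
\left(\frac{LR}{2r}\right)^{2} = 3 - 2\cos U - 2\cos V + 2\cos(U+V),
\]
and the bound $\leq 1$ is equivalent to $\cos T + \cos U - \cos(T+U) \geq 1$ (resp.\ with $U,V$), which holds because it factors as $4\sin\frac{T}{2}\sin\frac{U}{2}\cos\frac{T+U}{2} \geq 0$ under the arc caps $T,U,V \leq \frac{\pi}{2}$; e.g.\ $T=U=V=\frac{\pi}{4}$ gives $LR = RL = 2r\sqrt{3-2\sqrt{2}} \approx 0.83\,r < 2r$, with equality only on the boundary segments (a zero-length arc or arcs summing to $\pi$). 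So the proposition is true, but the proof must invoke the arc-angle limits in the spirit of Lemma~\ref{lem:second} and Proposition~\ref{prop:booleanset} (as the paper does), not a direct tangency identification; as written, your chain-of-tangencies step would fail.
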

\begin{proof}
	The proof follows from Lemma.~\ref{lem:first} and the same line of reasoning that was used to obtain Proposition~\ref{prop:booleanset}.
\end{proof}

A similar proposition to Proposition~\ref{prop:p13_vs_p14} can be made to subpartition $P_{18}$ from $P_{17}$:
\begin{proposition}\label{prop:p18_vs_p17}
	For path type $P_{18}$, $\alpha \leq \beta_{2} = -\theta_{f} - \beta_{1}$.
\end{proposition}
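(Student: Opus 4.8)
The plan is to prove Proposition~\ref{prop:p18_vs_p17} by reproducing, almost verbatim, the argument used for Proposition~\ref{prop:p13_vs_p14}, applied to the mirror family. Recall (as noted just before Proposition~\ref{prop:p17_vs_p13}) that $P_{17} = r_T^{+}l_U^{-}r_U^{-}l_V^{+}$ is the mirror of $P_{13} = r_T^{-}l_U^{+}r_U^{+}l_V^{-}$ across the line through $c_{0_R}$ and $c_{f_L}$. First I would identify $P_{18}$ as the three-segment path $r_T^{+}l_U^{-}r_V^{-}$, i.e.\ the degenerate limit of $P_{17}$ obtained when the trailing left turn $l_V^{+}$ shrinks to zero length (the $CCC$ counterpart of $P_{17}$, exactly as $P_{14}$ is the $CCC$ counterpart of $P_{13}$). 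The claim to prove is then that, within the region already isolated for $\{P_{17},P_{18}\}$, the sub-split between the genuine $CCCC$ path $P_{17}$ and the collapsed $CCC$ path $P_{18}$ is decided by comparing $\alpha$ with $\beta_2$.

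Next I would recall the construction behind $\alpha$ from the proof of Proposition~\ref{prop:p13_vs_p14} and Fig.~\ref{fig:cond_13_14_split}: $\alpha=\acos\big(\tfrac{3r^2+RL^2/4}{2r\,RL}\big)$ is the angle delivered by the cosine rule in the triangle with sides $DC=2r$, $DE=RL/2$, $EC=r$, where $RL=\twonorm{c_{0_R}-c_{f_L}}$. The decisive observation is that $\alpha$ is a function of $RL$ alone, and $RL$ depends only on $p_0$ and on $c_{f_L}$ (hence on $(x_f,y_f,\theta_f)$); it is therefore invariant under the reflection that carries $P_{13}$ to $P_{17}$. Consequently the same expression for $\alpha$ is the geometric invariant governing the collapse of the $CCCC$ shape into a $CCC$ shape for $P_{17}$, just as it did for $P_{13}$.

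Then I would pin down $\beta_2$. In the $P_{13}$ argument $\beta$ is the angle between the line $c_{0_R}c_{f_L}$ (the ray $DE$) and the ray $DF$ fixed by the final heading, equal to $\theta_f-\tfrac{\pi}{2}-\angle R_0L_f$; reflecting the configuration across $c_{0_R}c_{f_L}$ negates this oriented angle, and a short $\atantwo$ computation yields exactly $\beta_2=-\theta_f-\beta_1$ with $\beta_1=\tfrac{\pi}{2}-\angle L_fR_0$ (indeed one checks $\beta_2\equiv-\beta\pmod{2\pi}$, the quantitative shadow of the mirror relation). The key geometric step is then identical in spirit to Proposition~\ref{prop:p13_vs_p14}: the trailing turn of $P_{17}$ has length $V=0$ precisely when $\alpha=\beta_2$; for $V>0$ the $CCCC$ path $P_{17}$ is feasible and optimal and $\alpha>\beta_2$; when the construction would force $V<0$ --- impossible --- the optimal path degenerates to the $CCC$ path $P_{18}$, for which $\alpha<\beta_2$. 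Attaching the boundary $\alpha=\beta_2$ to $P_{18}$ (consistently with how Proposition~\ref{prop:p13_vs_p14} treats the $V=0$ case and with the paper's convention of assigning region boundaries to one side) gives $\alpha\le\beta_2$ for $P_{18}$.

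I expect the only genuine difficulty to be the oriented-angle bookkeeping that yields $\beta_2=-\theta_f-\beta_1$ with the correct sign, and the corresponding direction of the final inequality ($\le$ for the collapsed type $P_{18}$, mirroring the fact that $P_{14}$ inherits the reverse of the inequality stated for $P_{13}$). Care is needed on three fronts: $\theta_f$ lies in the strongly negative range where $\{P_{17},P_{18}\}$ is active ($\theta_f<2\angle L_fR_0-\pi$ by Proposition~\ref{prop:p17_vs_p13}); the $\pm\pi$ ambiguity relating $\angle L_fR_0$ and $\angle R_0L_f$ together with the $\atantwo$ branch cuts must be resolved consistently, so that $\alpha$ and $\beta_2$ are both read as principal angles; and the reflection linking $P_{13}$ and $P_{17}$ also reverses the traversal direction, so one must verify which endpoint's turn is the one that saturates. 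As with the companion propositions, these identities can be confirmed with a symbolic solver and are corroborated by the exhaustive experiments reported in Section~\ref{sec:results}.
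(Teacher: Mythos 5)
Your angle bookkeeping is sound: indeed $\beta_{2} = -\theta_{f} - \beta_{1} \equiv -\beta \pmod{2\pi}$ because $\angle L_{f}R_{0} = \angle R_{0}L_{f} + \pi$, and $\alpha$ depends only on $RL$, hence is invariant under the reflection relating $P_{13}$ and $P_{17}$. Since the paper itself omits this proof (it only states that the reasoning of Proposition~\ref{prop:p13_vs_p14} carries over), reconstructing that analogy is the right instinct.

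There is, however, a genuine gap: you identify $P_{18}$ as $r_{T}^{+}l_{U}^{-}r_{V}^{-}$, i.e.\ the $V=0$ limit of $P_{17}$, but the paper defines $P_{18}$ as $l_{T}^{-}r_{U}^{+}l_{V}^{-}$ (Fig.~\ref{fig:cond_15_18}). The parallel you invoke (``exactly as $P_{14}$ is the CCC counterpart of $P_{13}$'') works for $P_{13}/P_{14}$ precisely because setting $V=0$ in $P_{13}=r_{T}^{-}l_{U}^{+}r_{U}^{+}l_{V}^{-}$ literally yields a path of $P_{14}$'s type $r^{-}l^{+}r^{+}$; but setting $V=0$ (or $T=0$) in $P_{17}=r_{T}^{+}l_{U}^{-}r_{U}^{-}l_{V}^{+}$ yields a path of type $r^{+}l^{-}r^{-}$ (respectively $l^{-}r^{-}l^{+}$), neither of which is $P_{18}$'s type. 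Consequently your central step --- ``the trailing turn of $P_{17}$ vanishes exactly when $\alpha=\beta_{2}$, at which point the path is a $P_{18}$'' --- characterizes a degeneration of $P_{17}$ but does not establish the stated claim about the paper's $P_{18}$, and the inequality direction $\alpha\le\beta_{2}$ for $P_{18}$ is not yet justified. To close the gap you would need to exhibit the actual coincidence between $P_{17}$ and $P_{18}$ on the boundary $\alpha=\beta_{2}$ (identify which segment of which path vanishes there and verify that the two constructions agree at that configuration), or else compare the analytic length expressions of the two types directly, in the spirit of Propositions~\ref{prop:p17_vs_p13} and~\ref{prop:p17_vs_p20}.
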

\begin{proof}
	The proof follows the same reasoning as the proof of Proposition~\ref{prop:p13_vs_p14} and has been omitted for brevity.
\end{proof}

\begin{figure}[]
	\centering
	\begin{subfigure}{}
		\includegraphics[width=4.25cm]{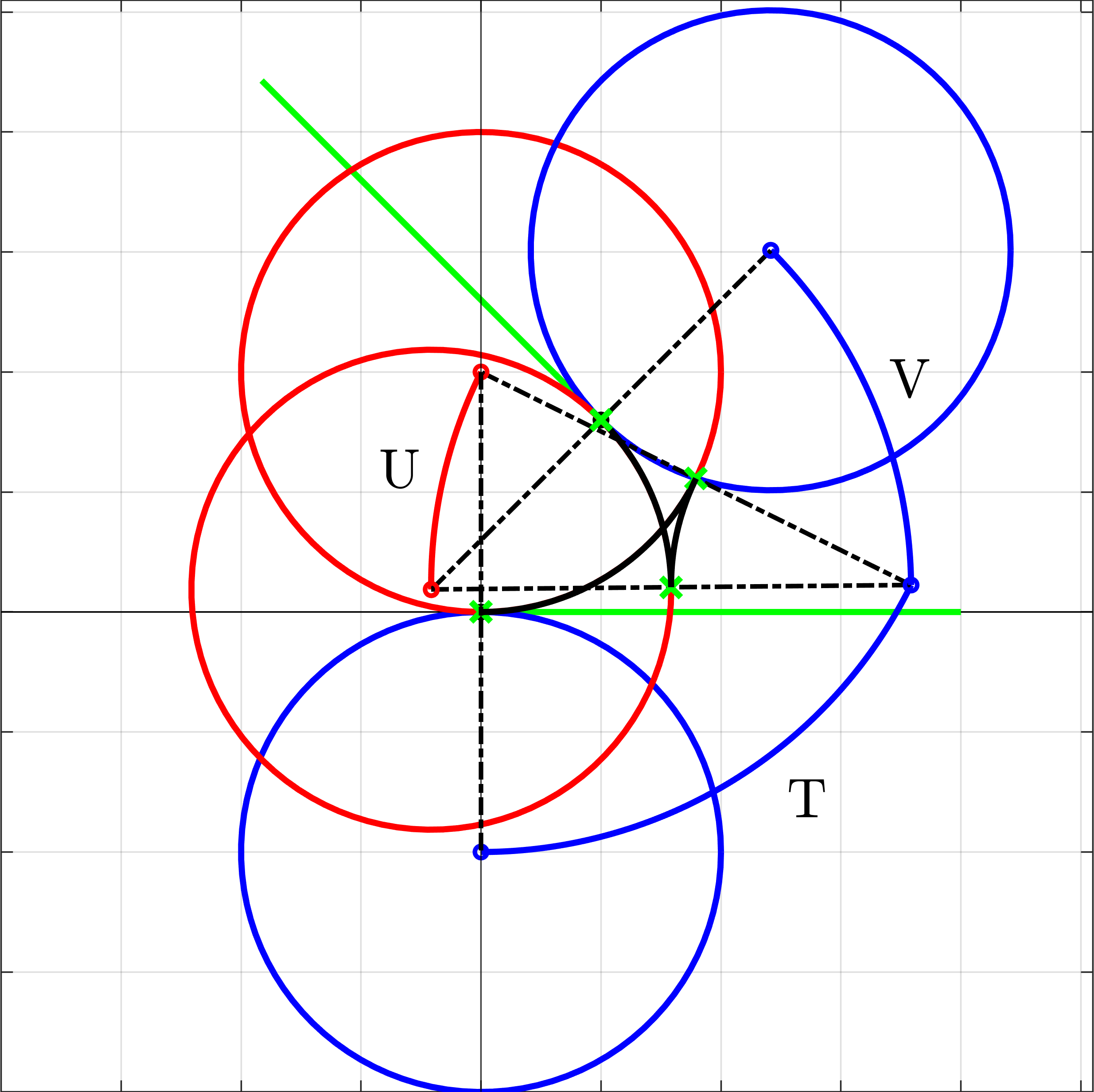}
	\end{subfigure}\hfil
	\begin{subfigure}{}
		\includegraphics[width=4.25cm]{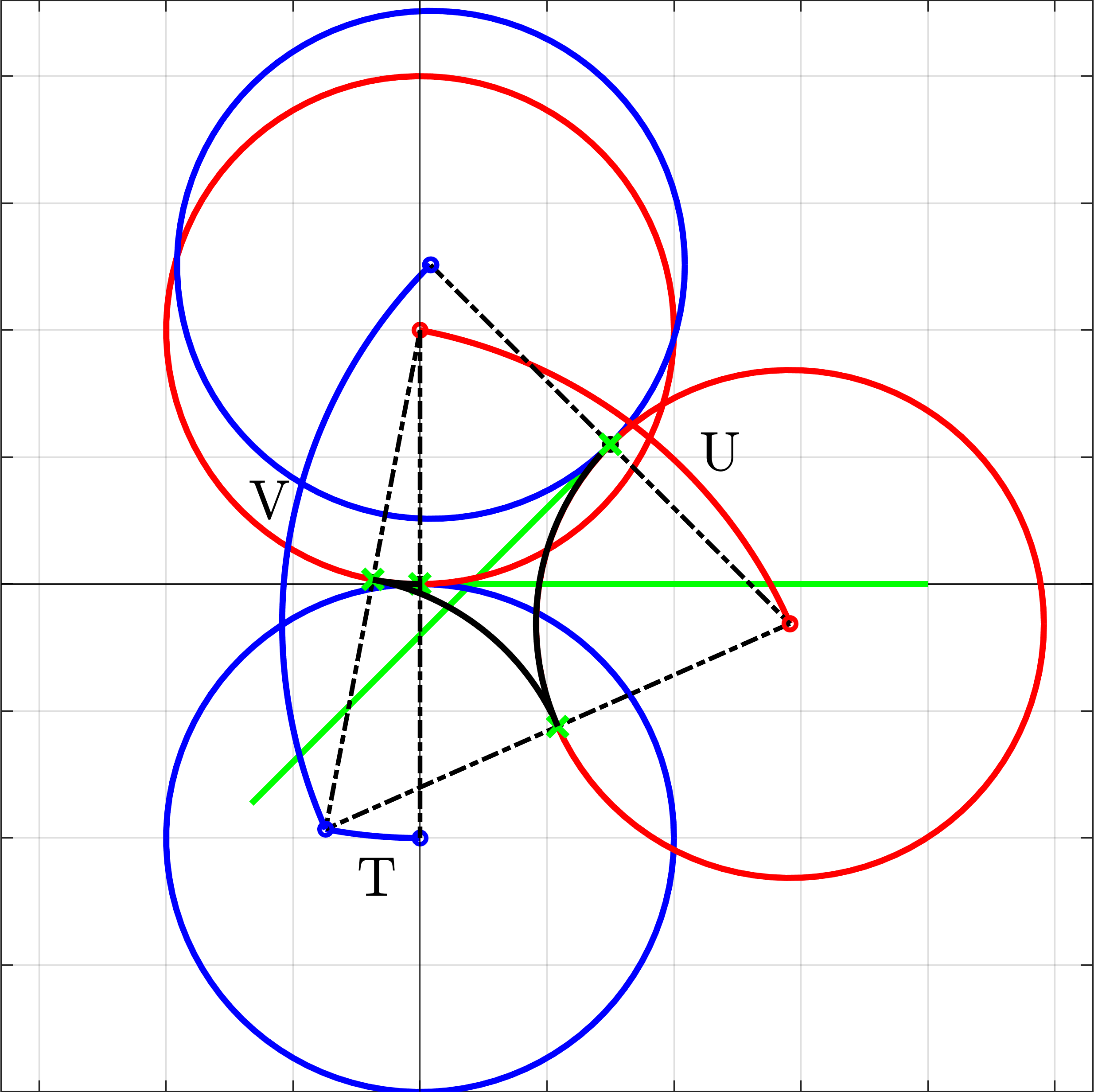}
	\end{subfigure}\hfil
	\caption{$P_{15}$: $l_{T}^{+}r_{U}^{-}l_{V}^{+}$ (left), \& $P_{18}$: $l_{T}^{-}r_{U}^{+}l_{V}^{-}$ (right).}\label{fig:cond_15_18}
\end{figure}

\begin{figure}[]
	\centering
	\begin{subfigure}{}
		\includegraphics[width=4.25cm]{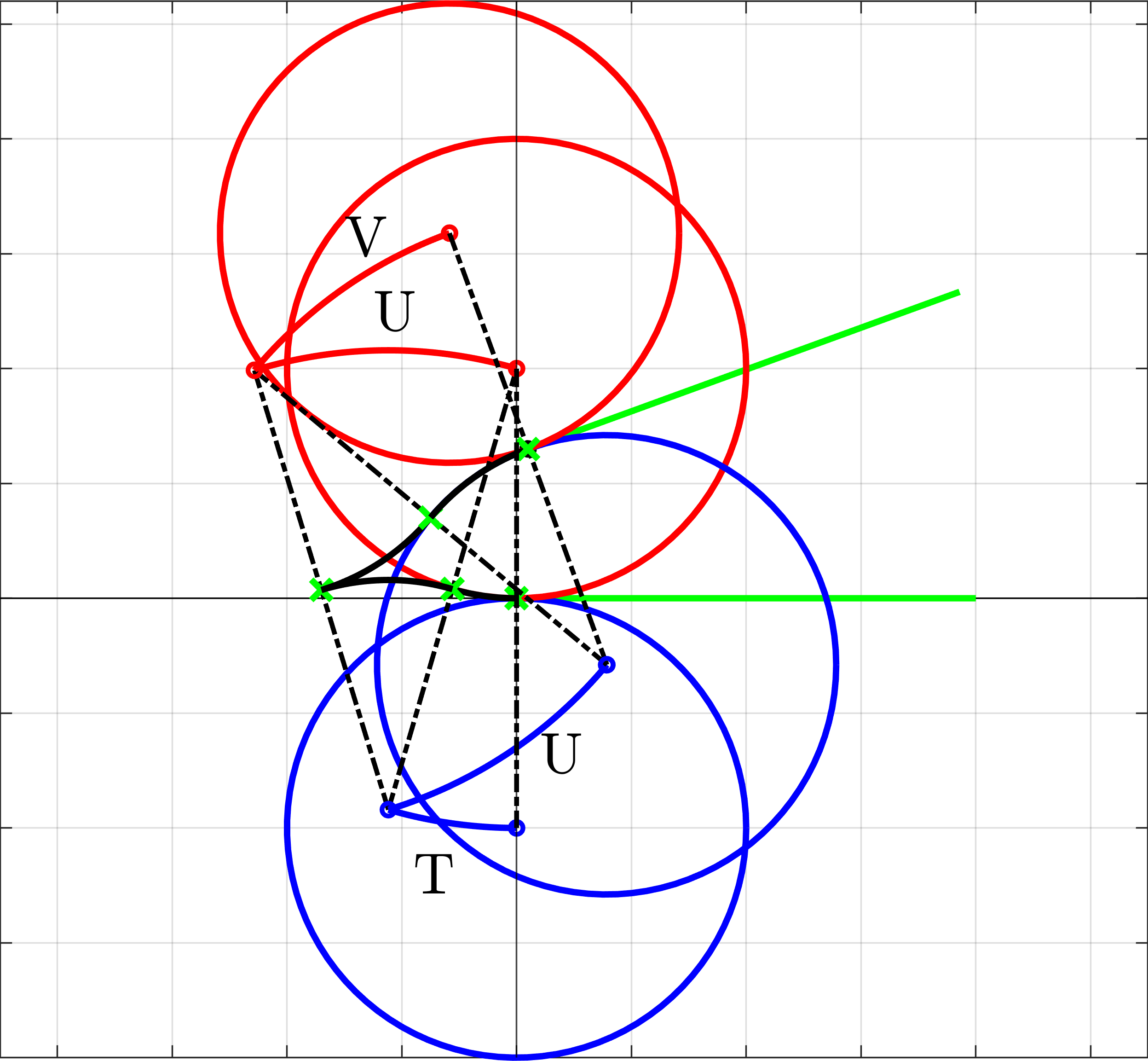}
	\end{subfigure}\hfil
	\begin{subfigure}{}
		\includegraphics[width=3.95cm]{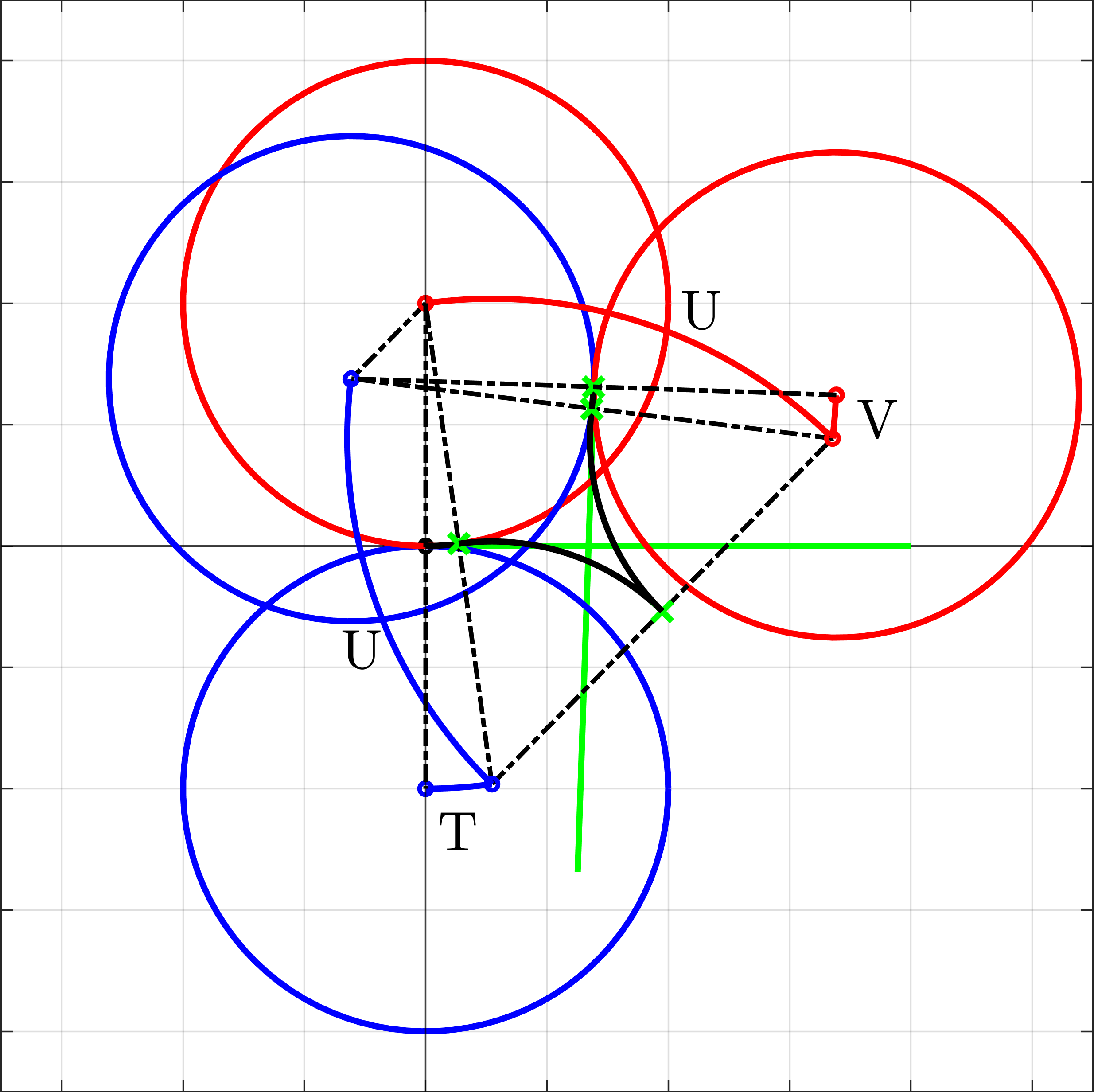}
	\end{subfigure}\hfil
	\caption{$P_{19}$: $l_{T}^{-}r_{U}^{-}l_{U}^{+}r_{V}^{+}$ (left), \& $P_{20}$: $l_{T}^{+}r_{U}^{+}l_{U}^{-}r_{V}^{-}$ (right).}\label{fig:cond_19_20}
\end{figure}

Path type $P_{19}$, $l_{T}^{-}r_{U}^{-}l_{U}^{+}r_{V}^{+}$ as shown in Fig.~\ref{fig:cond_19_20} to the left, differs from $P_{14}$ only in the first primitive. We make the following proposition to subpartition $P_{19}$ from $P_{14}$:
\begin{proposition}\label{prop:p19_vs_p14}
	For path type $P_{19}$, $(RL > 2r)~\lor~(\gamma = \acos{\left( \frac{ \frac{LR}{2} + r}{2r} \right)} > \beta_{3} = R_{f}L_{0} + \frac{\pi}{2})$.
\end{proposition}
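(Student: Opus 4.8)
The plan is to mirror the strategy used for the earlier Set-B subpartition propositions, namely Propositions~\ref{prop:p15_vs_p14} and~\ref{prop:p13_vs_p14}, and combine two complementary geometric conditions with a disjunction. The path type $P_{19}$, $l_{T}^{-}r_{U}^{-}l_{U}^{+}r_{V}^{+}$, is a $CCCC$ path that follows $P_{14}$, $r_{T}^{-}l_{U}^{+}r_{V}^{+}$, and differs from it in its leading primitive; so I need a predicate that holds for every $P_{19}$ configuration and fails for every $P_{14}$ configuration. The first disjunct, $RL > 2r$, captures the regime in which $P_{14}$ is simply geometrically infeasible: since $P_{14}$ is a three-arc $CCC$ path of the form $r_{T}^{-}l_{U}^{+}r_{V}^{+}$, the RHC at start and the LHC at the final configuration must remain within distance $2r$ throughout (by Lemma~\ref{lem:first}, during the middle $l_{U}^{+}$ primitive $c_{0_{R}}$ and the mirrored LHC center stay externally tangent, i.e.\ at distance exactly $2r$, and the outer $C$ primitives do not separate them further in the relevant direction). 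Hence whenever $RL = \|c_{0_{R}}-c_{m_{L}}\| > 2r$, $P_{14}$ cannot be the path and we must have stepped up to the four-arc $P_{19}$; this half follows directly from Lemma~\ref{lem:first} and the same reasoning that produced Proposition~\ref{prop:booleanset} and Proposition~\ref{prop:p15_vs_p14}.

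The second disjunct handles the complementary case $RL \le 2r$, where both $P_{14}$ and $P_{19}$ are feasible and we need the transition angle. I would set up the auxiliary triangle formed by $c_{0_{L}}$, $c_{m_{R}}$, and the center of the second arc of $P_{19}$ (the first $r_{U}^{-}$ arc's center), exactly as in Fig.~\ref{fig:cond_13_14_split} for $P_{13}$. Because $P_{19}$ has the internal symmetry $U$–$U$ in its two middle arcs, the relevant triangle is isosceles with two sides of length $2r$ and base $\tfrac{LR}{2}+r$ — giving $\gamma = \acos\!\big(\tfrac{LR/2+r}{2r}\big)$ as the half-angle at the apex. The companion angle $\beta_{3} = R_{f}L_{0} + \tfrac{\pi}{2}$ is read off as the angle between the line $c_{0_{L}}c_{m_{R}}$ (whose direction is $\angle R_{f}L_{0} = \atantwo(c_{m_{R_{y}}}-c_{0_{L_{y}}},\, c_{m_{R_{x}}}-c_{0_{L_{x}}})$) and the heading of the first primitive $l_{T}^{-}$, which in the degenerate boundary case $T=0$ coincides with the $y$-axis, contributing the $\tfrac{\pi}{2}$ offset. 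The claim is then: when the leading $l_{T}^{-}$ primitive of $P_{19}$ has zero length the two angles are equal, $\gamma = \beta_{3}$; and a strictly positive $T$ forces $\gamma > \beta_{3}$, so $P_{19}$ is optimal, whereas $\gamma \le \beta_{3}$ (with $RL \le 2r$) means $P_{14}$ is optimal. This is the same "first-primitive-length-is-zero at the boundary" argument as in the proof of Proposition~\ref{prop:p16_vs_p13}, transplanted to $P_{19}$ vs.\ $P_{14}$.

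The main obstacle I anticipate is pinning down the exact sign conventions and quadrant placements so that $\gamma$ and $\beta_{3}$ are genuinely comparable: the $\acos$ in $\gamma$ returns a value in $[0,\pi]$, and $\beta_{3}$ involves $R_{f}L_{0}$ which is an $\atantwo$ output that can be negative in $Q_{1}$ when $c_{m_{R_{y}}} < c_{0_{L_{y}}}$, so I would need to argue that under the standing assumption $RL \le 2r$ both quantities land in a common range where the inequality $\gamma > \beta_{3}$ correctly encodes "$T > 0$". I would also want to verify, as in Proposition~\ref{prop:p5_vs_p6}, that the disjunction is exhaustive within the $P_{14}$/$P_{19}$ cell — i.e.\ that no configuration satisfies $RL \le 2r$ and $\gamma \le \beta_{3}$ yet still requires the four-arc path — which I expect can be confirmed either by the cosine-rule monotonicity argument (as $T$ grows from $0$, the apex angle $\gamma$ strictly increases because the base $\tfrac{LR}{2}+r$ shrinks) or, failing a clean closed form, by the symbolic-solver-plus-numerical-simulation route already invoked for Propositions~\ref{prop:p5_vs_p6} and~\ref{prop:p17_vs_p13}. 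Consistency with Lemma~\ref{lem:third}-style quadrant constraints should then close any residual overlap with neighbouring Set-B cells.
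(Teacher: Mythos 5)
Your treatment of the angular condition follows the paper's route in spirit: both arguments exploit the $U$--$U$ symmetry of $P_{19}$, the similar-triangle construction of Fig.~\ref{fig:cond_14_19_split}, and the boundary argument that a zero-length first primitive gives $\gamma=\beta_{3}$ while $T>0$ forces $\gamma>\beta_{3}$. However, your derivation of $\gamma$ does not actually produce the stated formula: an isosceles triangle with equal sides $2r$ and base $\tfrac{LR}{2}+r$ has half-apex angle $\asin\!\left(\tfrac{LR/2+r}{4r}\right)$, not $\acos\!\left(\tfrac{LR/2+r}{2r}\right)$. The paper instead splits the segment of length $2r$ between adjacent arc centers at the symmetry point $E$ into two hypotenuses $h_{1}=\frac{LR/2}{\cos\gamma}$ and $h_{2}=\frac{r}{\cos\gamma}$ of right triangles sharing the angle $\gamma$, and the identity $h_{1}+h_{2}=2r$ yields $\gamma=\acos\!\left(\frac{LR/2+r}{2r}\right)$; as written, your triangle identification would have to be repaired before the boundary comparison with $\beta_{3}$ means anything.

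The more serious gap is the first disjunct. You argue that $RL>2r$ makes $P_{14}$ infeasible, justified by the claim that for $P_{14}=r_{T}^{-}l_{U}^{+}r_{V}^{+}$ the start RHC center and the \emph{final} LHC center stay externally tangent during the middle arc. That misapplies Lemma~\ref{lem:first}: the center held at distance $2r$ from $c_{0_{R}}$ during the middle left turn is the \emph{middle arc's} LHC center, whereas $c_{m_{L}}$ is only reached after the last $r_{V}^{+}$ primitive, which rotates the LHC about $c_{m_{R}}$ and can carry it beyond distance $2r$ from $c_{0_{R}}$. Indeed, Algorithm~\ref{alg:B_partitions} assigns $P_{14}$ whenever $(RL\le 2r)\lor(\beta_{3}\ge\gamma)$, i.e.\ it admits $P_{14}$ with $RL>2r$, which contradicts your premise that $RL>2r$ excludes $P_{14}$. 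The paper's proof runs in the opposite direction: it establishes $RL>2r$ as a property satisfied by every $P_{19}$ path, via the tangency reasoning following Lemma~\ref{lem:first} (the same route as Proposition~\ref{prop:booleanset}), so the proposition is a necessary condition on $P_{19}$ rather than an infeasibility criterion for $P_{14}$. You should restate and prove that disjunct accordingly; the remaining monotonicity claim ($\gamma$ strictly exceeding $\beta_{3}$ for $T>0$), which you defer to numerics, is asserted at a comparable level of detail in the paper itself.
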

\begin{proof}
	The proof follows similar reasoning as the proof of Proposition~\ref{prop:p13_vs_p14}. We refer to Fig.~\ref{fig:cond_14_19_split} for this proof. Due to the symmetry in path $P_{19}$, two similar triangles $\triangle ABE$ and $\triangle EDC$ are formed, where $AB$ and $CD$ are parallel. $AD$ intersects $BC$ at $E$. The length of $AD$ may be computed as the sum of two hypotenuses $h_{1} = AE$ and $h_{2} = ED$. Each hypotenuse is computed using trigonometry in the right triangles formed by the perpendicular bisector crossing $E$. $h_{1} = \frac{\frac{LR}{2}}{\cos{\gamma}}$ and $h_{2} = \frac{r}{\cos{\gamma}}$. Since $h_{1} + h_{2} = 2r$, then $\gamma = \acos{\left( \frac{ \frac{LR}{2} + r}{2r} \right)}$. $\beta_{3}$ constitutes the angle between the $DC$ and the y-axis. When the first primitive in $P_{19}$ has a length of zero, then $\gamma = \beta_{3}$. Otherwise, $\gamma > \beta_{3}$, and $P_{19}$ is the optimal path. Moreover, following the reasoning presented after Lemma~\ref{lem:first}, we obtain $RL > 2r$ for $P_{19}$.
\end{proof}

\begin{figure}[!ht]
	\centering
	\includegraphics[width=8.5cm,keepaspectratio]{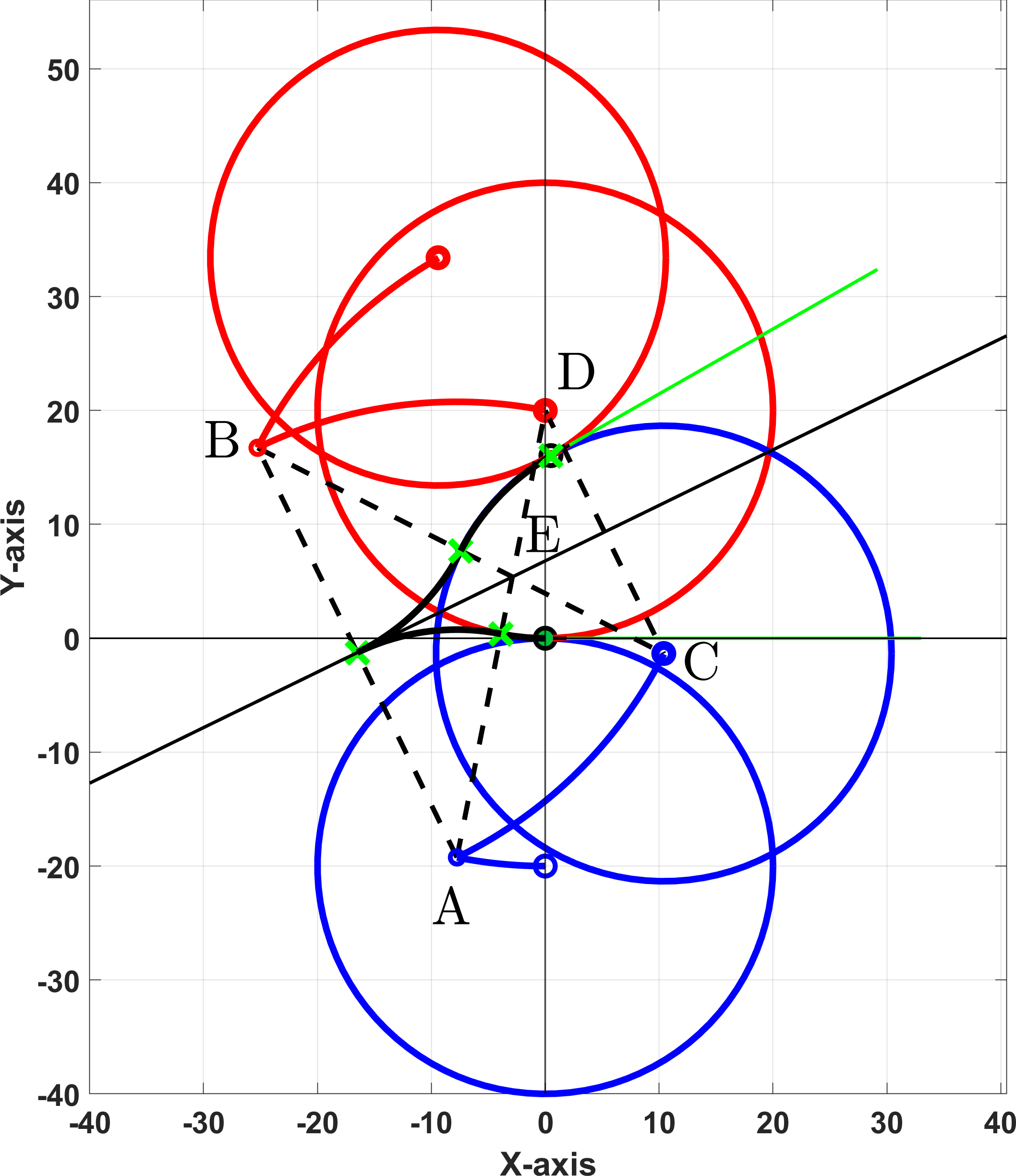}
	\caption{The triangles $\triangle EAB$ and $\triangle EDC$ are similar due to the symmetry in the path $P_{19}$, $l_{T}^{-}r_{U}^{-}l_{U}^{+}r_{V}^{+}$.}\label{fig:cond_14_19_split}
\end{figure}

Path $P_{20}$ differs from $P_{16}$ only in the last primitive. We make the following proposition to subpartition $P_{20}$ from $P_{16}$:
\begin{proposition}\label{prop:p20_vs_p16}
	For path type $P_{20}$, $(O > LL)~\lor~(O > RR)$ where $O = 4r \sin{\left(\frac{\gamma}{2}\right)}$.
\end{proposition}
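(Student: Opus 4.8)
The plan is to mirror the geometric argument used for Propositions~\ref{prop:p13_vs_p14} and~\ref{prop:p19_vs_p14}, now carried out on the four turning-circle centers that are successively held fixed along $P_{20} = l_T^{+} r_U^{+} l_U^{-} r_V^{-}$. First I would name these centers $A = c_{0_{L}}$ (fixed during the opening $l_T^{+}$), $B$ (the RHC held fixed during $r_U^{+}$), $C$ (the LHC held fixed during $l_U^{-}$), and $D = c_{f_{R}}$ (fixed during the closing $r_V^{-}$). By Lemma~\ref{lem:first} and the tangency reasoning that follows it, consecutive centers lie at distance $2r$, so $\|A-B\| = \|B-C\| = \|C-D\| = 2r$. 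Because the two middle arcs share the common length $U$, the rotation carrying $A$ onto $C$ about $B$ and the one carrying $B$ onto $D$ about $C$ both have magnitude $U$; hence $\angle ABC = \angle BCD = U$ and $ABCD$ is the symmetric (isosceles-trapezoid-like) configuration depicted in Fig.~\ref{fig:cond_14_19_split}. Applying the law of cosines in the isosceles triangles $ABC$ and $BCD$ (legs $2r$, apex angle $U$) gives the two equal diagonals $\|A-C\| = \|B-D\| = 4r\sin(U/2)$, and combining this with the relation between $U$ and $LR$ obtained in the proof of Proposition~\ref{prop:p19_vs_p14} (which reduces $\gamma$ to $U$ on the relevant range, so that $4r\sin(U/2) = 4r\sin(\gamma/2)$) identifies $O$ as the common diagonal length of $ABCD$.

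Next I would re-express $LL$ and $RR$ through this quadrilateral. Along the closing arc $r_V^{-}$ the left center moves on the circle of radius $2r$ about $D = c_{f_{R}}$ starting from $C$, so $c_{f_{L}}$ is the image of $C$ under a rotation of angle $V$ about $D$; at $V = 0$ this forces $c_{f_{L}} = C$ and therefore $LL = \|A - c_{f_{L}}\| = \|A-C\| = O$. Symmetrically, unwinding the opening arc $l_T^{+}$ puts $c_{0_{R}}$ on the circle of radius $2r$ about $A = c_{0_{L}}$ at angular offset $T$ from $B$, so $RR = \|c_{0_{R}} - D\| = \|B-D\| = O$ when $T = 0$. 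Writing $LL^{2}$ as an expression affine in $\cos(V-\psi)$ for a fixed phase $\psi$ (and $RR^{2}$ analogously in $T$), I would then evaluate the derivative at the boundary value to show that, as the outer arcs $V$ and $T$ of a genuine $CCCC$ path grow away from $0$, the distances $LL$ and $RR$ leave the common value $O$ in the direction that makes $O > LL$ (respectively $O > RR$) hold; since a genuine $P_{20}$ path has both $T > 0$ and $V > 0$, at least one of these inequalities is satisfied, which is exactly the claimed invariant. For the companion (completeness) direction one notes that collapsing the closing arc to $V=0$ is precisely the $P_{16}$/$P_{20}$ transition and that the opening arc of a $P_{16}$ path does not sit at the $RR = O$ boundary on the $P_{20}$ side, so a $P_{16}$ path satisfies $O \le LL$ and $O \le RR$ and is correctly excluded.

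The main obstacle is this last sign check: the derivative of $LL^{2}$ (and of $RR^{2}$) at the boundary is proportional to $-(2\cos U - 1)\sin U$, whose sign flips at $U = \pi/3$, so no single blanket monotonicity statement is true. I expect to dispatch this by a case split on $\cos U \gtrless \tfrac{1}{2}$: on the branch where $2\cos U - 1 < 0$ the diagonal value $O$ and the location of $c_{f_{L}}$ are arranged so that whenever $O > LL$ fails, the companion inequality $O > RR$ must hold — which is precisely why the proposition is stated with a disjunction — and the residual degenerate values of $U$ are ruled out by the $\mathrm{arc}\le\tfrac{\pi}{2}$ bound of the Reeds-Shepp minimal set. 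Making that case split airtight, rather than grinding through the half-angle identities, is where the genuine work lies.
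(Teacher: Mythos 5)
Your geometric setup reproduces the paper's construction faithfully: identifying $O=4r\sin(\gamma/2)$ with the common diagonal of the equilateral-sided chain of centers $A,B,C,D$ (with $\angle ABC=\angle BCD=U$), and verifying that $LL=O$ when $V=0$ and $RR=O$ when $T=0$, is exactly the content of the paper's argument, which then simply concludes the partition condition from these boundary equalities. The problem is the step you yourself flag as ``where the genuine work lies'': you never establish that a genuine $P_{20}$ path ($T>0$ and $V>0$) satisfies $O>LL$ or $O>RR$, and the rescue you sketch does not work. A direct computation (place $p_{0}=(0,0,0)$, write the centers via $c_{R}-c_{L}=2r(\sin\theta,-\cos\theta)$ along the path) gives
\begin{equation*}
\left.\frac{d\,(LL^{2})}{dV}\right|_{V=0}\;=\;-8r^{2}\sin U\,(2\cos U-1)\;=\;\left.\frac{d\,(RR^{2})}{dT}\right|_{T=0},
\end{equation*}
so the two boundary derivatives have the \emph{same} sign. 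Consequently, on the branch $U\in(\tfrac{\pi}{3},\tfrac{\pi}{2}]$ — which is not excluded by the $\mathrm{arc}\le\tfrac{\pi}{2}$ bound, contrary to your closing remark — both $LL$ and $RR$ move \emph{above} $O$ as $T$ and $V$ grow from zero, so for small positive $T,V$ the disjunction $(O>LL)\lor(O>RR)$ fails for the path itself. Your claimed case split (``whenever $O>LL$ fails, $O>RR$ must hold'') is therefore false at the level of pure path kinematics.

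What this means is that the proposition cannot be closed by the monotonicity-plus-case-split route you outline: for $U>\tfrac{\pi}{3}$ the statement can only survive because such $P_{20}$ paths are not optimal (the configuration is captured by another partition), which requires a path-length comparison against the competing types in the style of Propositions~\ref{prop:p17_vs_p20} and~\ref{prop:p13_vs_p20}, or an argument that optimal $P_{20}$ paths never occur in that parameter range. Neither appears in your proposal, so the decisive implication is missing, and the fix you anticipate would have to be replaced by a different kind of argument rather than merely ``made airtight.''
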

\begin{proof}
	The proof follows similar reasoning as the proof of Proposition~\ref{prop:p19_vs_p14}. $P_{20}$ forms two similar triangles due to its symmetry. The chord $O$ connecting the end points of the arc $r_{U}^{+}$ and $l_{U}^{-}$ is computed as $O = 4r \sin{\left(\frac{\gamma}{2}\right)}$. When either the first or last primitive in $P_{20}$ have a length of zero, then $O = RR$ and $O = LL$ respectively. If $(O \leq LL)~\lor~(O \leq RR)$, then $P_{16}$ is the optimal path. Otherwise, $P_{20}$ is the optimal path.
\end{proof}

Similar argument can be used to establish the following proposition that subpartitions $P_{18}$ from $P_{20}$:
\begin{proposition}\label{prop:p18_vs_p20}
	For path type $P_{18}$, $(O \leq RR) \lor (RL \leq 2r)$.
\end{proposition}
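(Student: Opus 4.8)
The plan is to mirror the structure of the proofs of Proposition~\ref{prop:p13_vs_p14} and Proposition~\ref{prop:p20_vs_p16}, since $P_{18}$, $l_{T}^{-}r_{U}^{+}l_{V}^{-}$, is the mirror image of $P_{16}$, $l_{T}^{+}r_{U}^{+}l_{V}^{-}$, with respect to the line connecting $c_{0_{R}}$ and $c_{f_{L}}$, exactly as $P_{20}$ is the ``four-arc extension'' of $P_{16}$ on one side. First I would set up the same similar-triangle picture used in Fig.~\ref{fig:cond_14_19_split}: because $P_{18}$ has the symmetric $CCC$ form $l_{T}^{-}r_{U}^{+}l_{V}^{-}$, the two outer arcs share a common central angle structure that forces a pair of similar triangles, and the chord $O = 4r\sin\!\left(\tfrac{\gamma}{2}\right)$ introduced in Proposition~\ref{prop:p20_vs_p16} again measures the separation of the endpoints of the middle arc. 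The boundary of the $P_{18}$ subpartition against $P_{20}$ is then the locus where the last primitive of $P_{20}$ (or, equivalently, the extra middle arc that distinguishes $P_{20}$ from $P_{16}$/$P_{18}$) degenerates to zero length; by the same argument as in Proposition~\ref{prop:p20_vs_p16}, that degeneracy is exactly $O = RR$, so $O \leq RR$ characterizes the side on which $P_{18}$ rather than $P_{20}$ is optimal.

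Next I would handle the second disjunct, $RL \leq 2r$. This one follows from Lemma~\ref{lem:first} and the reasoning immediately after it, in the same way $RL \leq 2r$ appeared in the conditions of Proposition~\ref{prop:booleanset} and Proposition~\ref{prop:p15_vs_p14}: a three-arc path of the form $l_{T}^{-}r_{U}^{+}l_{V}^{-}$ keeps $c_{0_{R}}$ and $c_{f_{L}}$ related through a single $C$ primitive on the relevant side, so their distance cannot exceed $2r$ when the configuration is deep enough into the $P_{18}$ region; conversely, once $RL > 2r$ that tangency is broken and the optimal path must pick up the extra arc and become $P_{20}$. Combining the two conditions with a logical ``or'' is what guarantees the $P_{18}$ subpartition is not left with a gap: whenever the configuration is admissible for a $CCC$ path of the $P_{18}$ form, at least one of the two degeneracy-side predicates holds, and in the overlap region both simply agree that $P_{18}$ is at least as short as $P_{20}$.

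The step I expect to be the main obstacle is pinning down precisely which chord/leg plays the role of $O$, $RR$, and $RL$ in the $P_{18}$ geometry, because $P_{18}$ is obtained from $P_{16}$ by a reflection, so the ``first'' and ``last'' primitives swap roles and one must be careful that the degeneracy $O = RR$ (rather than $O = LL$) is the one that borders $P_{20}$ on the correct side — getting the orientation of the reflection wrong would swap $RR$ and $LL$ and invalidate the predicate. I would resolve this by explicitly tracking, through the reflection that sends $P_{16}$ to $P_{18}$ and the construction that sends $P_{16}$ to $P_{20}$, which of $c_{0_{R}}$, $c_{0_{L}}$, $c_{f_{R}}$, $c_{f_{L}}$ are held fixed by the last primitive of each path, and then reading off from Fig.~\ref{fig:cond_19_20} (right) and Fig.~\ref{fig:cond_15_18} (right) that it is indeed $RR$ and $RL$ that appear. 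Everything else — the cosine-rule computation of $\gamma$, the chord formula $O = 4r\sin(\gamma/2)$, and the monotonicity that makes ``$O \le RR$'' equivalent to ``$P_{18}$ is no longer than $P_{20}$'' — is identical to the already-established arguments in Propositions~\ref{prop:p19_vs_p14} and~\ref{prop:p20_vs_p16}, and I would simply cite them rather than repeat the calculations.
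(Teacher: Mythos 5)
Your proposal follows essentially the same route as the paper, whose proof is a one-line appeal to Proposition~\ref{prop:p20_vs_p16} (the chord comparison $O = 4r\sin\left(\frac{\gamma}{2}\right)$ against $RR$ at the degeneracy boundary) together with Lemma~\ref{lem:first} (the tangency reasoning behind $RL \leq 2r$) --- exactly the two ingredients you assemble. One small correction to your reconstruction: in the paper's proof of Proposition~\ref{prop:p20_vs_p16}, the equality $O = RR$ corresponds to the \emph{first} primitive of $P_{20}$ vanishing (the last primitive vanishing gives $O = LL$), so the degenerate configuration bordering $P_{18}$ that you should invoke is that one, not the last primitive as written in your second paragraph of setup.
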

\begin{proof}
	The proof follows similar reasoning as the proof of Proposition~\ref{prop:p20_vs_p16} as well as Lemma~\ref{lem:first}.
\end{proof}

Lastly, we address the persisting problem in literature where $P_{20}$ could not be subpartitioned from $P_{17}$. In order to do so, we make the following proposition:
\begin{proposition}\label{prop:p17_vs_p20}
	For path type $P_{17}$, $(T_{17} \leq T_{20}) \lor (T_{17} + U_{17} \leq 2 U_{20})$, where $T_{17}$ and $U_{17}$ are the lengths of the first and second primitives in $P_{17}$ respectively, and $T_{20}$ and $U_{20}$ are the lengths of the first and second primitives in $P_{20}$ respectively.
\end{proposition}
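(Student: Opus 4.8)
The plan is to decide between $P_{17}$ and $P_{20}$ exactly as was done for the other overlapping Set B pairs (cf.\ Propositions~\ref{prop:p17_vs_p13} and~\ref{prop:p13_vs_p14}): compare the closed-form path lengths $d_{17}$ and $d_{20}$ produced by the original Reeds--Shepp formulas for the $CCCC$ family, and reduce the inequality $d_{17}\le d_{20}$ to the stated predicate. Both $P_{17}=r_T^{+}l_U^{-}r_U^{-}l_V^{+}$ and $P_{20}=l_T^{+}r_U^{+}l_U^{-}r_V^{-}$ are instances of the $C|C_uC_u|C$ family, so each total length has the form $r\,(T+2U+V)$ with the two middle arcs sharing the central angle $U$. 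First I would write the kinematic closure equations for each type --- position and orientation of the endpoint equal to the mirrored final configuration $p_m^{l}=(x_m^{l},y_m^{l},\theta_m^{l})$ --- which for the $CCCC$ family reduce, using the same similar-triangle/chord construction employed in Propositions~\ref{prop:p19_vs_p14} and~\ref{prop:p20_vs_p16} (the identity $h_1+h_2=2r$ and the chord length $O=4r\sin(\gamma/2)$), to explicit expressions for $T$, $U$, $V$ in terms of the inter-center distances $LL$, $RR$, $RL$, $LR$ and the relevant $\atantwo$ angles.

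Next I would exploit the fact that both path types reach the \emph{same} final orientation: the orientation-closure equation gives one linear relation among the signed arc angles of each type, which lets me eliminate $V_{17}$ and $V_{20}$ and write
\begin{equation}
	\Delta := d_{17}-d_{20} = r\Big( (T_{17}-T_{20}) + 2(U_{17}-U_{20}) + (V_{17}-V_{20}) \Big),
\end{equation}
with $V_{17}-V_{20}$ re-expressed through the $T$'s and $U$'s alone. The goal is to show that, over the configuration range where both $P_{17}$ and $P_{20}$ are geometrically feasible, $\Delta\le 0$ is equivalent to the disjunction $(T_{17}\le T_{20})\lor(T_{17}+U_{17}\le 2U_{20})$. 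I would treat the two clauses as complementary sub-cases: when the short leading arc of $P_{20}$ forces $T_{20}\ge T_{17}$, the first clause makes $P_{17}$ the shorter path directly; otherwise the comparison comes down to the ``middle-arc budget'' $T_{17}+U_{17}$ against $2U_{20}$. I would verify with a symbolic solver that $d_{17}=d_{20}$ holds exactly on the boundary surfaces $T_{17}=T_{20}$ and $T_{17}+U_{17}=2U_{20}$, with $\Delta<0$ strictly inside the region carved out by the disjunction, and corroborate with extensive numerical sampling over $Q_1$, as was done for Propositions~\ref{prop:p5_vs_p6} and~\ref{prop:p17_vs_p13}.

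I expect the main obstacle to be the algebra: the closure equations for $CCCC$ paths are transcendental and considerably messier than the $CSC$/$CCC$ cases handled earlier, so eliminating $V$ and collapsing $\Delta$ into the clean two-clause predicate will lean on symbolic computation together with the geometric reductions already established for $P_{19}$ and $P_{20}$. A secondary but essential point is the completeness bookkeeping: I would finally check that this predicate, taken together with Propositions~\ref{prop:p18_vs_p17}, \ref{prop:p18_vs_p20} and~\ref{prop:p20_vs_p16}, partitions the $P_{17}$/$P_{18}$/$P_{20}$ region with no gap and no overlap, closing the last remaining ambiguity in Set B and completing the argument of Section~\ref{subsec:completeness}.
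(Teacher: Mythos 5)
Your proposal matches the paper's own argument: the second clause $T_{17}+U_{17}\le 2U_{20}$ is extracted from the length comparison $d_{17}\le d_{20}$ using the analytic Reeds--Shepp expressions for the $CCCC$ family, backed by symbolic manipulation and numerical checking. The only cosmetic difference is that the paper dispatches the first clause $T_{17}\le T_{20}$ by direct observation, citing the closing argument of Section~IV of~\cite{Desaulniers1995}, rather than folding it into the same length-comparison analysis as you do; otherwise the route is essentially identical.
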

\begin{proof}
	The proof for $T_{17} + U_{17} \leq 2 U_{20}$ can be obtained from the inequality $d_{17} \leq d_{20}$ formed by the analytical expressions of the total path distances for $P_{17}$ and $P_{20}$ respectively. $T_{17} \leq T_{20}$ can directly be observed and such an argument has been previously made in~\cite{Desaulniers1995} in the last paragraph of Section IV\@.
\end{proof}

Similar arguments can be established to subpartition $P_{13}$ from $P_{19}$ and $P_{20}$ respectively. We omit the proofs for brevity, but the respective propositions are as follows:
\begin{proposition}\label{prop:p13_vs_p19}
	For path type $P_{13}$, $(T_{13} \leq V_{19}) \lor (T_{13} + U_{13} \leq 2 U_{19})$.
\end{proposition}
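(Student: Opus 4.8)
The plan is to mirror the structure already used for the companion Propositions~\ref{prop:p17_vs_p20} and~\ref{prop:p13_vs_p14}. Both $P_{13}$ ($r_{T}^{-}l_{U}^{+}r_{U}^{+}l_{V}^{-}$) and $P_{19}$ ($l_{T}^{-}r_{U}^{-}l_{U}^{+}r_{V}^{+}$) are symmetric $CCCC$ paths whose two interior arcs share the common length $U$, so each admits a closed-form length expression in terms of a single scalar invariant — $RL = \twonorm{c_{0_{R}}-c_{m_{L}}}$ for $P_{13}$ (as in the proof of Proposition~\ref{prop:p13_vs_p14}) and $LR = \twonorm{c_{0_{L}}-c_{m_{R}}}$ for $P_{19}$ (as in the proof of Proposition~\ref{prop:p19_vs_p14}) — together with the relevant connecting angle ($\angle R_{0}L_{f}$ and $\angle R_{f}L_{0}$ respectively) and $\theta_{f}$. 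First I would write these out explicitly to obtain $T_{13},U_{13},V_{13}$ and $T_{19},U_{19},V_{19}$, hence $d_{13} = T_{13}+2U_{13}+V_{13}$ and $d_{19} = T_{19}+2U_{19}+V_{19}$.

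Next I would form the optimality inequality $d_{13} \leq d_{19}$ and, exactly as in the proof of Proposition~\ref{prop:p17_vs_p20}, reduce it with a symbolic solver to the clean algebraic surrogate $T_{13}+U_{13} \leq 2U_{19}$, verifying that this is the correct branch of the solution set rather than merely a necessary condition. This handles every configuration in which the interior-arc lengths of the two candidates are comparable.

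The second disjunct, $T_{13} \leq V_{19}$, I would establish as a direct geometric observation, in the spirit of the ``$T_{17} \leq T_{20}$'' argument attributed to Section~IV of~\cite{Desaulniers1995}: by Lemma~\ref{lem:first} the first primitive $r_{T}^{-}$ of $P_{13}$ and the last primitive $r_{V}^{+}$ of $P_{19}$ rotate the terminal, respectively initial, RHC about the corresponding fixed circle centre, and the position of $c_{0_{R}}$ relative to $c_{m_{L}}$ bounds their central angles so that $P_{13}$ is certified whenever this observation applies, even where the length comparison is inconclusive. I would then check that the union of the two predicates leaves no point of the $CCCC$ region of set B unclassified and does not overlap the partitions of $P_{14}$, $P_{19}$, and $P_{20}$ already fixed by Propositions~\ref{prop:p13_vs_p14}, \ref{prop:p19_vs_p14}, and~\ref{prop:p20_vs_p16}, so that the disjunction is both exhaustive and non-redundant within set B.

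The main obstacle I anticipate is the symbolic reduction of $d_{13}-d_{19} \leq 0$: the length formulas carry nested $\atantwo$ and $\acos$ terms, so extracting the stated polynomial-in-lengths condition demands careful trigonometric simplification and, as elsewhere in this paper, reliance on a computer algebra system backed by extensive numerical corroboration to confirm that $T_{13}+U_{13} = 2U_{19}$ is exactly the transition surface between the $P_{13}$ and $P_{19}$ subpartitions.
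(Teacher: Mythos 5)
Your proposal is correct and follows essentially the same route as the paper, which omits the details of this proof and simply defers to the argument of Proposition~\ref{prop:p17_vs_p20}: the disjunct $T_{13} + U_{13} \leq 2U_{19}$ is extracted from the total-length inequality $d_{13} \leq d_{19}$ via symbolic manipulation, while $T_{13} \leq V_{19}$ is a direct geometric observation in the style of~\cite{Desaulniers1995}. Your added checks of exhaustiveness and non-overlap within set B go slightly beyond what the paper records but are consistent with its stated proof strategy.
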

\begin{proposition}\label{prop:p13_vs_p20}
	For path type $P_{13}$, $(V_{13} \leq T_{20}) \lor (T_{13} + U_{13} \leq \theta_{f} + 2 U_{20})$.
\end{proposition}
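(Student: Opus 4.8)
The plan is to mirror the strategy already used for Proposition~\ref{prop:p17_vs_p20} and Proposition~\ref{prop:p13_vs_p19}: compare the closed-form segment-length expressions of $P_{13}$ ($r_{T}^{-}l_{U}^{+}r_{U}^{+}l_{V}^{-}$) and $P_{20}$ ($l_{T}^{+}r_{U}^{+}l_{U}^{-}r_{V}^{-}$) --- both $CCCC$ paths in set B with the same internal two-arc symmetry --- and show that the inequality $d_{13}\le d_{20}$ between their total lengths is implied by the stated disjunction. First I would write $d_{13}=T_{13}+2U_{13}+V_{13}$ and $d_{20}=T_{20}+2U_{20}+V_{20}$ from the analytic Reeds-Shepp formulae~\cite{Reeds1990} (equivalently the forms reproduced in~\cite{Desaulniers1995}), expressed in the local first-quadrant frame in terms of $RL$, $LL$, $RR$, $\theta_{f}$ and $\beta_{0}$. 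The structural facts I expect to extract are: (i) by the symmetry visible in Fig.~\ref{fig:cond_13_14_split} and Fig.~\ref{fig:cond_14_19_split}, $U_{13}$ and $U_{20}$ are governed by the same chord/cosine relation up to the turning sense; and (ii) by Lemma~\ref{lem:first}, when the first primitive $r_{T}^{-}$ of $P_{13}$ degenerates its remaining arcs coincide with those of $P_{20}$ up to reversal, which pins down the boundary curve along which $d_{13}=d_{20}$.

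The core of the argument is a two-branch reduction. In the first branch, I would observe directly --- exactly as the analogous first-primitive comparison that is noted in the last paragraph of Section IV of~\cite{Desaulniers1995} and invoked in the proof of Proposition~\ref{prop:p17_vs_p20} --- that whenever the final-position geometry forces $V_{13}\le T_{20}$, the path $P_{13}$ cannot be beaten by $P_{20}$ on the shared portions of their traces, so the first disjunct already holds. In the complementary branch $V_{13}>T_{20}$, I would substitute the explicit expressions into $d_{13}\le d_{20}$; the contributions of the common chord cancel as they do in the proof of Proposition~\ref{prop:p13_vs_p19}, and the remaining terms rearrange --- using the relation between $U_{20}-U_{13}$, $\theta_{f}$, and the residual $V_{13}-T_{20}$ --- to an inequality equivalent to $T_{13}+U_{13}\le\theta_{f}+2U_{20}$. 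Hence $d_{13}\le d_{20}$ implies $(V_{13}\le T_{20})\lor(T_{13}+U_{13}\le\theta_{f}+2U_{20})$, which is the claimed predicate for $P_{13}$ relative to $P_{20}$.

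The main obstacle I anticipate is purely the symbolic bookkeeping: the $CCCC$ length formulae involve nested $\atantwo$ and $\acos$ terms, and confirming the cancellation that turns $d_{13}\le d_{20}$ into the stated affine inequality --- together with verifying that the boundary $d_{13}=d_{20}$ is exactly where one of the two disjuncts becomes an equality --- is best done with a symbolic solver, as was already relied upon for Propositions~\ref{prop:p5_vs_p6} and~\ref{prop:p17_vs_p20}. A secondary point requiring care is the disjunctive form itself: I would check that these two conditions are genuinely complementary to the defining predicates of $P_{20}$ (Propositions~\ref{prop:p20_vs_p16} and~\ref{prop:p17_vs_p20}) and of $P_{19}$ (Proposition~\ref{prop:p13_vs_p19}), so that the $P_{13}$, $P_{19}$, $P_{20}$ regions tile the relevant portion of set B without gap or overlap, consistent with the completeness argument of Section~\ref{subsec:completeness}.
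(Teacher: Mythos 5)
Your proposal is correct and follows essentially the paper's intended route: the paper omits this proof, stating only that ``similar arguments'' to Proposition~\ref{prop:p17_vs_p20} apply, and that argument is exactly what you spell out --- derive the second disjunct from the total-length inequality $d_{13} \leq d_{20}$ using the analytic segment expressions (with symbolic manipulation), and handle the first disjunct by the direct first/last-primitive comparison in the style of the last paragraph of Section IV of~\cite{Desaulniers1995}. Your added check that the disjuncts mesh with the neighboring predicates for $P_{19}$ and $P_{20}$ is consistent with, not a departure from, the paper's completeness argument.
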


We have now provided a complete set of propositions that subpartition each path type in each set, A and B\@. The algorithms that combine the propositions that subpartition the path types in each set, A and B, are provided as commented pseudocode in Alg.~\ref{alg:A_partitions} and Alg.~\ref{alg:B_partitions} in Appendix~\ref{app:algs}. The final algorithm that combines the two sets, Alg.~\ref{alg:main_partition} is also provided in Appendix~\ref{app:algs}.

\subsection{On the Non-Uniqueness of the Optimal Solution}\label{subsec:nonuniqueness}
Path solutions obtained with types $P_{1}, \ldots, P_{20}$ are not unique in terms of shortest distance optimality. It is sufficient to provide a simple example to illustrate this. Consider the starting configuration $p_{0} = \left(x_{0}, y_{0}, \theta_{0}\right) = \left(0, 0, 0\right)$ and the final configuration $p_{f} = \left(x_{f}, y_{f}, \theta_{f}\right) = \left(0.05, 0.12, -1.5\right)$ with a minimum turning radius of $r = 1$. The following three paths are all optimal and reach the final configuration $p_{f}$ with zero error with the shortest distance $d = 1.5$:
\begin{itemize}
	\small
	\item $l_{T}^{-}r_{U}^{+}l_{V}^{-}$, with $\{T, U, V\} = \{0.32051, 0.67456, 0.50493\}$.
	\item $l_{T}^{-}r_{U}^{+}l_{U}^{-}r_{V}^{+}$, with $\{T, U, V\} = \{0.2021, 0.5816, 0.1347\}$.
	\item $r_{T}^{+}l_{U}^{-}r_{V}^{+}$, with $\{T, U, V\} = \{0.4751, 0.7225, 0.3024\}$.
\end{itemize}

\subsection{On the Completeness and Correctness of the Solution}\label{subsec:completeness}
\setcounter{thm}{0}
\begin{thm}
	The accelerated Reeds-Shepp solution is both complete and correct.
\end{thm}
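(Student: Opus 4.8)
The plan is to derive the result from two facts already in hand: the original Reeds--Shepp completeness theorem~\cite{Reeds1990}, which guarantees that for every pair of configurations the shortest path belongs to one of the five families $CSC$, $CCC$, $CCCC$, $CCSC$ (subsuming $CSCC$), $CCSCC$; and the length-preserving symmetry transformations of Alg.~\ref{alg:ForwardProjection} and Alg.~\ref{alg:BackwardProjection}, established in~\cite{Reeds1990}, Section~8. Using the latter I would first reduce to the case $p_m^l \in Q_1$: \texttt{ForwardProjection} realizes $p_f$ as a configuration in the first local quadrant through a composition of the reflections and time-reversals under which optimal length is invariant, and \texttt{BackwardProjection} returns a path of equal length between the original configurations; hence a complete and correct solver on $Q_1$ yields one on all of $\mathrm{SE}(2)$, and it suffices to treat $Q_1$.

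For \emph{completeness} I would run a finite case analysis over the five families. Proposition~\ref{prop:booleanset} routes the query: when the predicate of Alg.~\ref{alg:booleanset} holds, the optimum cannot be $CSC$ or $CCSC$, hence lies in set B; when it fails, the optimum cannot be $CCC$ or $CCCC$, hence lies in set A; the family $CCSCC$ is admitted by both sets, so in every case the optimal path is covered by $A\cup B$ and the correct set is identified. It then remains to show that inside set A the decision list assembled from Propositions~\ref{prop:p2}--\ref{prop:p4_vs_p9} (Alg.~\ref{alg:A_partitions}) is exhaustive over every configuration whose optimal family is $CSC$, $CCSC$ or $CCSCC$, and likewise inside set B that Propositions~\ref{prop:p9_p12_setB}--\ref{prop:p13_vs_p20} (Alg.~\ref{alg:B_partitions}) are exhaustive over $CCC$, $CCCC$, $CCSCC$. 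The skeleton of this is an angular tiling: Lemma~\ref{lem:third} fixes the first primitive from the quadrant of the endpoint of the first two segments, and Propositions~\ref{prop:p2}, \ref{prop:p3}, \ref{prop:p1}, \ref{prop:p5_vs_p6}, \ref{prop:p6}, \ref{prop:p11}, \ref{prop:p9_vs_p12_setB} cut the $\theta_f$-axis at the explicit breakpoints $\angle L_fL_0$, $\angle L_fL_0+\tfrac{\pi}{2}$, $\angle R_0L_f$, $\angle L_fR_0$ and $2\beta_0-\pi$; since appending a $C$ primitive changes $\theta_f$ monotonically (Lemma~\ref{lem:first}), these half-open $\theta_f$-intervals tile $\rinterval{-\pi}{\pi}$ with no gap, the degenerate $V=0$ configurations landing on exactly one side of each breakpoint. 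The remaining predicates --- the signed distances $t_1,t_2,d_1$, the $\alpha$-versus-$\beta$ comparisons, the $\mathcal{K}$- and $2r$-distance tests of Proposition~\ref{prop:booleanset}, and the explicit length comparisons --- resolve the finitely many residual overlaps among types sharing a common $\theta_f$-range, so that the composed predicate covers set A (resp.\ set B) exactly.

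For \emph{correctness} I would argue subpartition by subpartition that the assigned type is length-optimal there. Each of Propositions~\ref{prop:p2}--\ref{prop:p13_vs_p20} states a \emph{necessary} geometric invariant of its type; combined with the exhaustiveness above and Reeds--Shepp completeness, this means that on any subpartition the true optimal type is one of the finitely many whose invariants are simultaneously satisfied there, while the pairwise length-comparison propositions (e.g.\ \ref{prop:p5_vs_p6}, \ref{prop:p17_vs_p13}, \ref{prop:p17_vs_p20}, \ref{prop:p13_vs_p19}, \ref{prop:p13_vs_p20}) certify that the one the decision list selects attains the minimum. The path itself is then produced by the original Reeds--Shepp analytic formulas for that type, so it reaches $p_m^l$ exactly; on the measure-zero boundaries where two types' invariants coincide their lengths are equal (Section~\ref{subsec:nonuniqueness}), so returning either is still correct. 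Applying \texttt{BackwardProjection} transports the $Q_1$ solution to a path of identical length between the original configurations, which completes the argument.

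The hard part will be the exhaustiveness-and-disjointness bookkeeping for the compound predicates of Alg.~\ref{alg:A_partitions} and Alg.~\ref{alg:B_partitions}: one must verify that, read top to bottom, their regions union to all of set A (resp.\ set B) with no configuration --- not even on a boundary curve --- left unclassified, and that the ordering resolves every overlap in favour of the globally minimal type. Doing this with full rigor requires closing the gaps in the propositions presently justified via symbolic solvers and numerical experiment rather than pure synthetic geometry --- notably Propositions~\ref{prop:p5_vs_p6}, \ref{prop:p9_vs_p12_setB}, \ref{prop:p17_vs_p13}, \ref{prop:p17_vs_p20}, \ref{prop:p13_vs_p19} and \ref{prop:p13_vs_p20}, each of which rests on an inequality between total path lengths. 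A complementary way to discharge exactly this obligation, which I would carry out in parallel, is the exhaustive sampling of Section~\ref{sec:results}: billions of random $(p_0,p_f,r)$ triples spanning the configuration space, with zero terminal error and machine-precision length agreement against OMPL, witnessing simultaneously that no query falls through the partition (coverage) and that no shorter path exists (optimality).
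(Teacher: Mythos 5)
Your proposal follows essentially the same route as the paper's own proof: completeness is argued by showing the predicates of Propositions~\ref{prop:booleanset}--\ref{prop:p13_vs_p20} make the set-A/set-B partition and their subpartitions mutually exclusive and collectively exhaustive (after the symmetry reduction to $Q_{1}$), correctness is delegated to the per-type analytic expressions of~\cite{Reeds1990} within each subpartition, and both are backed by the exhaustive numerical validation of Section~\ref{sec:results}. Your sketch is in fact somewhat more explicit than the paper's proof about where the load-bearing work lies (the $\theta_{f}$-breakpoint tiling, the overlap-resolving length comparisons, and the propositions justified via symbolic solvers and simulation), but the decomposition and the key ingredients are the same.
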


\begin{proof}
	We now prove the completeness, correctness, and termination of the proposed accelerated Reeds-Shepp solution. \\
	1. \textbf{Completeness:} The predicates in Algorithms~\ref{alg:A_partitions} and~\ref{alg:B_partitions} divide the problem space into non-overlapping subspaces, with no input belonging to more than one subspace (mutual exclusivity). Additionally, the union of all subspaces covers the entire problem domain, ensuring no gaps (collective exhaustiveness). Propositions~\ref{prop:booleanset} through~\ref{prop:p13_vs_p20} establish these properties for all possible inputs.

	2. \textbf{Correctness:} Within each subspace, the solution is derived using analytic expressions that satisfy the necessary conditions of the Reeds-Shepp problem, as established in~\cite{Reeds1990}. These expressions are proven to produce feasible and optimal paths for the given constraints, ensuring correctness in every subspace.

	3. \textbf{Termination:} Each branch of the algorithm terminates after a finite sequence of decisions, as ensured by the structure of Algorithms~\ref{alg:A_partitions} and~\ref{alg:B_partitions}. The decision tree in these algorithms has a bounded depth and a deterministic mapping, guaranteeing that every input converges to a single solution.

\end{proof}

This formal proof, combined with the extensive numerical validation, verification, and edge-case testing described in Section~\ref{sec:results}, confirms the algorithm's completeness and correctness.

\section{Under-Specified Reeds-Shepp Solution}\label{sec:underspecified}
In this section, we address finding a solution to the problem defined in Equation~\eqref{eq:underspecified}. Starting with the proposed accelerated Reeds-Shepp solution as well as given OMPL's implementation of the original Reeds-Shepp solution, one can find degenerate cases where the final orientation $\theta_{f}$ lies at the edge of two subpartitions. We have previously illustrated an example in Fig.~\ref{fig:cond_0}, where $\theta_{f}$ is exactly aligned with $\Omega$, and where the optimal Reeds-Shepp solution can be computed using both $P_{1}$ and $P_{2}$ since the last primitive differentiating $P_{1}$ from $P_{2}$ has a length of zero and the first two primitives have matching lengths and types. In that case, the angle $\Omega$ that minimizes the path length from the starting configuration $p_{0}$ to the final position $(x_{f}, y_{f})$ is trivially aligned with the tangent to the LHC at the starting configuration $p_{0}$.

\begin{figure}[!ht]
	\centering
	\includegraphics[width=8.5cm,keepaspectratio]{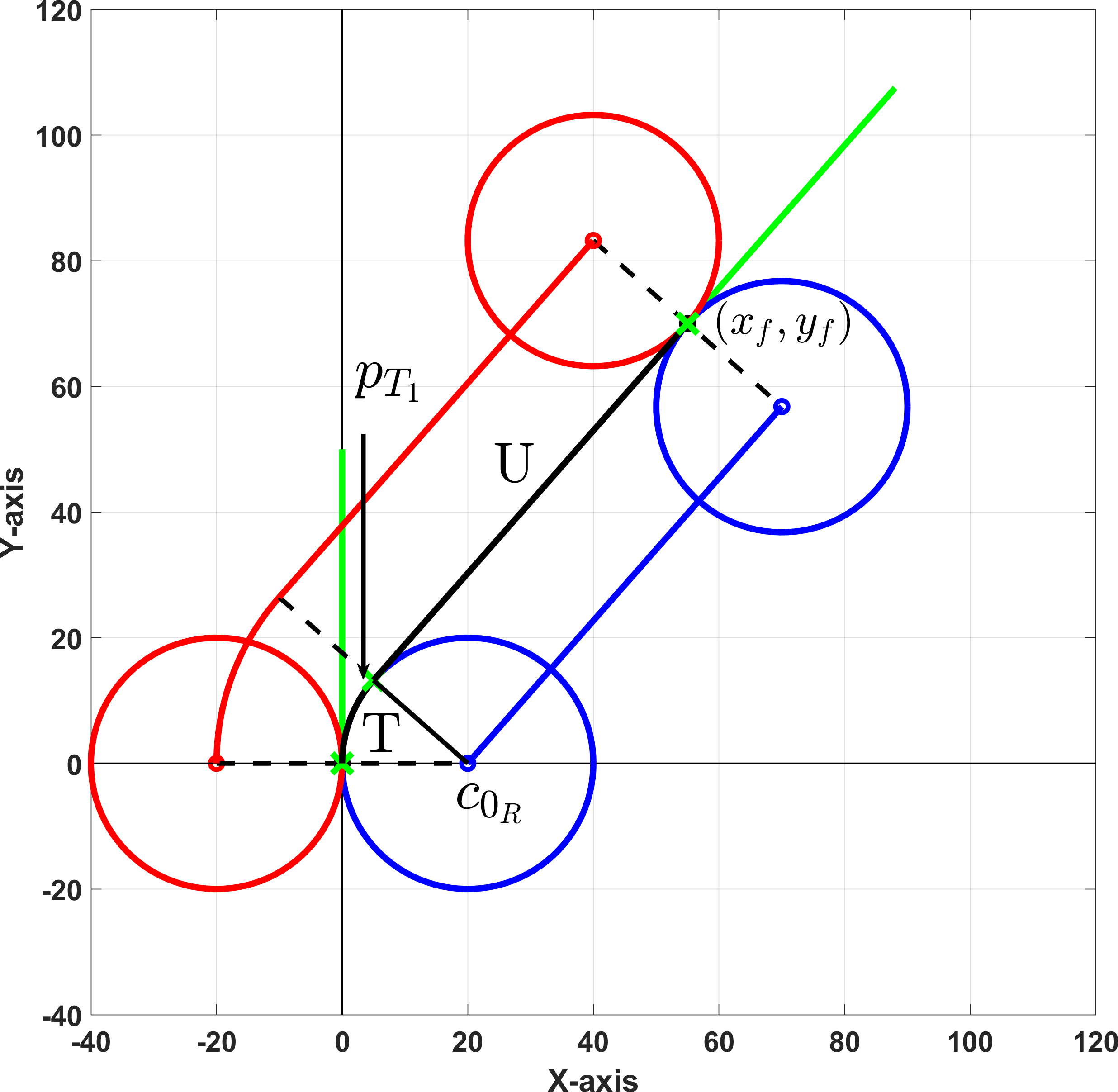}
	\caption{Illustration of $(x_{f},y_{f}) \in R_{1}$ and $p_{0} = (0,0,\frac{\pi}{2})$ with $r = 20$. The optimal path lies on the boundary between $P_{1}$ and $P_{2}$ and constitutes a degenerate/edge case where the last segment of the CSC path has zero length. $T = \Omega_{1}$.}\label{fig:omega_cond_1}
\end{figure}

Another example is illustrated in Fig.~\ref{fig:omega_cond_1} for $p_{0} = (0,0,\frac{\pi}{2})$ and $r = 20$. The final orientation $\Omega_{1}$ that minimizes the path length for the given $\left(x_{f}, y_{f}\right)$ can be obtained by solving the equality equation:
\begin{equation}
	\begin{aligned}
		\frac{ p_{T_{1_{y}}}-c_{0_{R_{y}}} } { p_{T_{1_{x}}}-c_{0_{R_{x}}} } \cdot \frac{ p_{T_{1_{y}}}-y_{f} } { p_{T_{1_{x}}}-x_{f} } = -1,
	\end{aligned}\label{eq:omega_11}
\end{equation}
where
\begin{equation}
	\begin{aligned}
		p_{T_{1}} = \begin{bmatrix} p_{T_{1_{x}}} \\ p_{T_{1_{y}}} \end{bmatrix} = \begin{bmatrix} c_{0_{R_{x}}} + r\cos{\left(\pi-\Omega_{1}\right)} \\ c_{0_{R_{y}}} + r\sin{\left(\pi-\Omega_{1}\right)} \end{bmatrix},
	\end{aligned}
\end{equation}
meaning that the final orientation $\theta_{f}$ is aligned with the tangent to the circle $c_{0_{R}}$ at $p_{T_{1}}$.

We identify three different regions in the $x_{goal}-y_{goal}$ plane where $\Omega$ admits a unique solution and illustrate them in Fig.~\ref{fig:inverse_regions}. The three regions are symmetric across all four quadrants with respect to the x and y axes. This three-region-partitioning of the $x_{goal}-y_{goal}$ plane is independent of the turning radius $r$. The three regions are defined as follows:
\begin{equation*}
	\begin{aligned}
		\begin{cases}
			R_{1} \text{ if } (x_{f},y_{f}) \notin c_{0_{R}} \land{} \big( (y_{f} \geq{} y_{0}+r) \lor{} (x_{f}-c_{0_{R_{x}}} < 0) \big) \\
			R_{2} \text{ if } (x_{f},y_{f}) \notin c_{0_{L}}^{*} \land{} \big( (y_{f} < y_{0}+r) \lor{} (x_{f}-c_{0_{R_{x}}} > 0) \big)  \\
			R_{3} \text{ if } (x_{f},y_{f}) \in c_{0_{L}}^{*} \land{} (x_{f}, y_{f}) \in c_{0_{R}},
		\end{cases}\label{eq:regions}
	\end{aligned}
\end{equation*}
where $c_{0_{R}}$ and $c_{0_{L}}^{*}$ are the circles centered at $(c_{0_{R_{x}}}, c_{0_{R_{y}}})$ and $(c_{0_{L_{x}}}, c_{0_{L_{y}}})$, respectively, with radii $r$ and $r\sqrt{5}$, respectively.
The reasoning behind this partitioning for $R_{1}$ is trivial. $R_{1}$ is the region to which a path $r_{T}^{+}s_{U}^{+}$ is the optimal solution $\forall~(0 \leq T \leq \frac{\pi}{2}), U \geq 0$.

\begin{figure}[!ht]
	\centering
	\includegraphics[width=8.5cm,keepaspectratio]{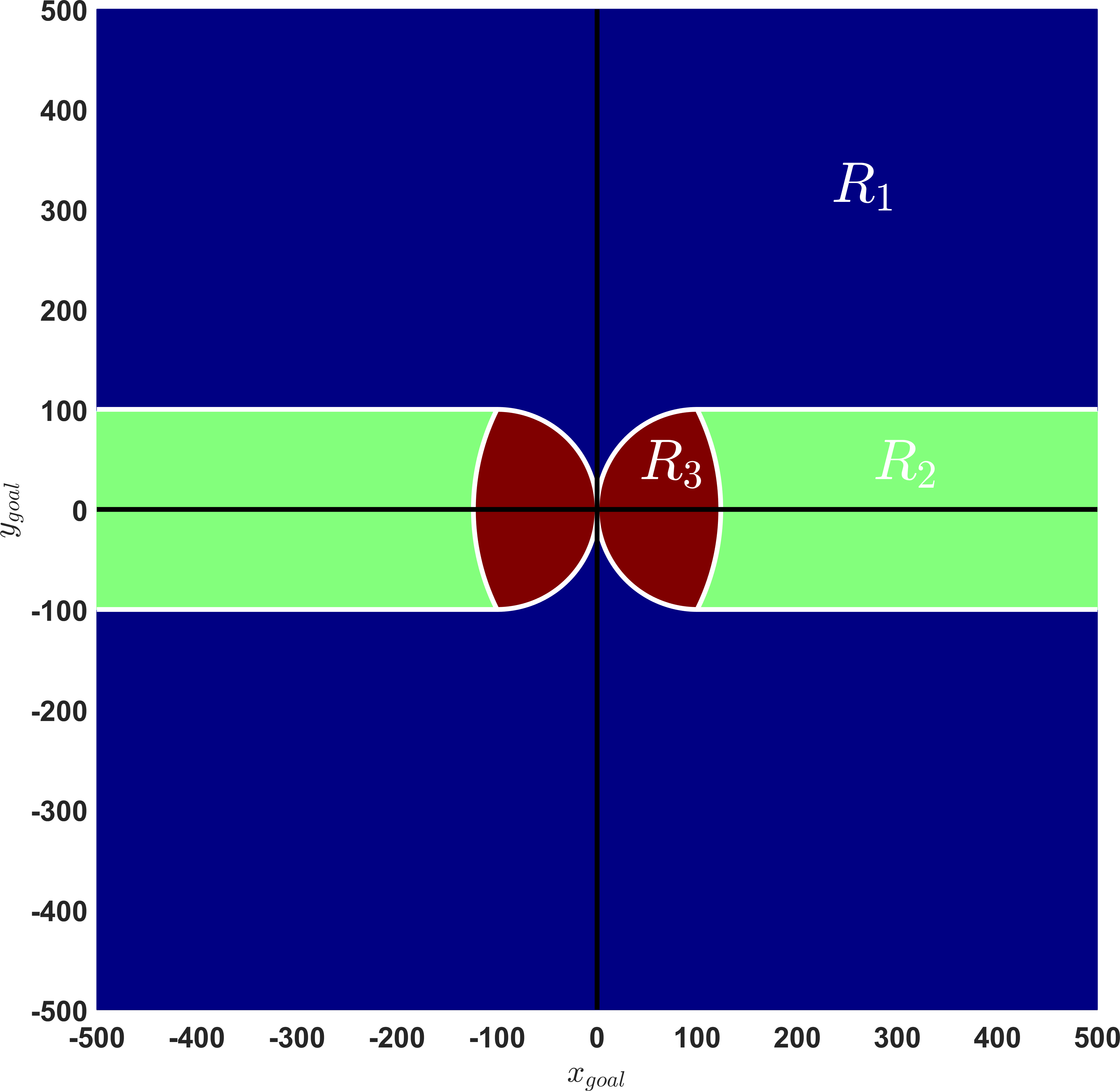}
	\caption{Partitioning of the $x_{goal}-y_{goal}$ plane for the underspecified Reeds-Shepp problem given $p_{0} = (0, 0, \frac{\pi}{2})$ and $r = 100$. Three regions are noted in the first quadrant which are symmetric across all four quadrants with respect to x and y axes. In each region, $\Omega$ admits a unique solution.}\label{fig:inverse_regions}
\end{figure}

We refer to Fig.~\ref{fig:omega_cond_2}. Based on the reasoning introduced in Proposition~\ref{prop:p3}, as $\left(x_{f}, y_{f}\right)$ crosses into region $R_{2}$, an additional $l^{-}$ primitive is introduced. $\theta_{f}$ thus maintains its alignment with the tangent to the circle $c_{0_{L}}$, since $r^{+}$ has an arc length of $\frac{\pi}{2}r$. We may obtain $\Omega_{2}$ by solving the equality equation:
\begin{equation}
	\begin{aligned}
		\frac{ p_{T_{2_{y}}}-c_{0_{L_{y}}} } { p_{T_{2_{x}}}-c_{0_{L_{x}}} } \cdot \frac{ p_{T_{2_{y}}}-y_{f} } { p_{T_{2_{x}}}-x_{f} } = -1,
	\end{aligned}\label{eq:omega_12}
\end{equation}
where
\begin{equation}
	\begin{aligned}
		p_{2_{T}} = \begin{bmatrix} p_{T_{2_{x}}} \\ p_{T_{2_{y}}} \end{bmatrix} = \begin{bmatrix} c_{0_{L_{x}}} + r\cos{\Omega_{2}} \\ c_{0_{L_{y}}} + r\sin{\Omega_{2}} \end{bmatrix}.
	\end{aligned}
\end{equation}

\begin{figure}[!ht]
	\centering
	\includegraphics[width=8.5cm,keepaspectratio]{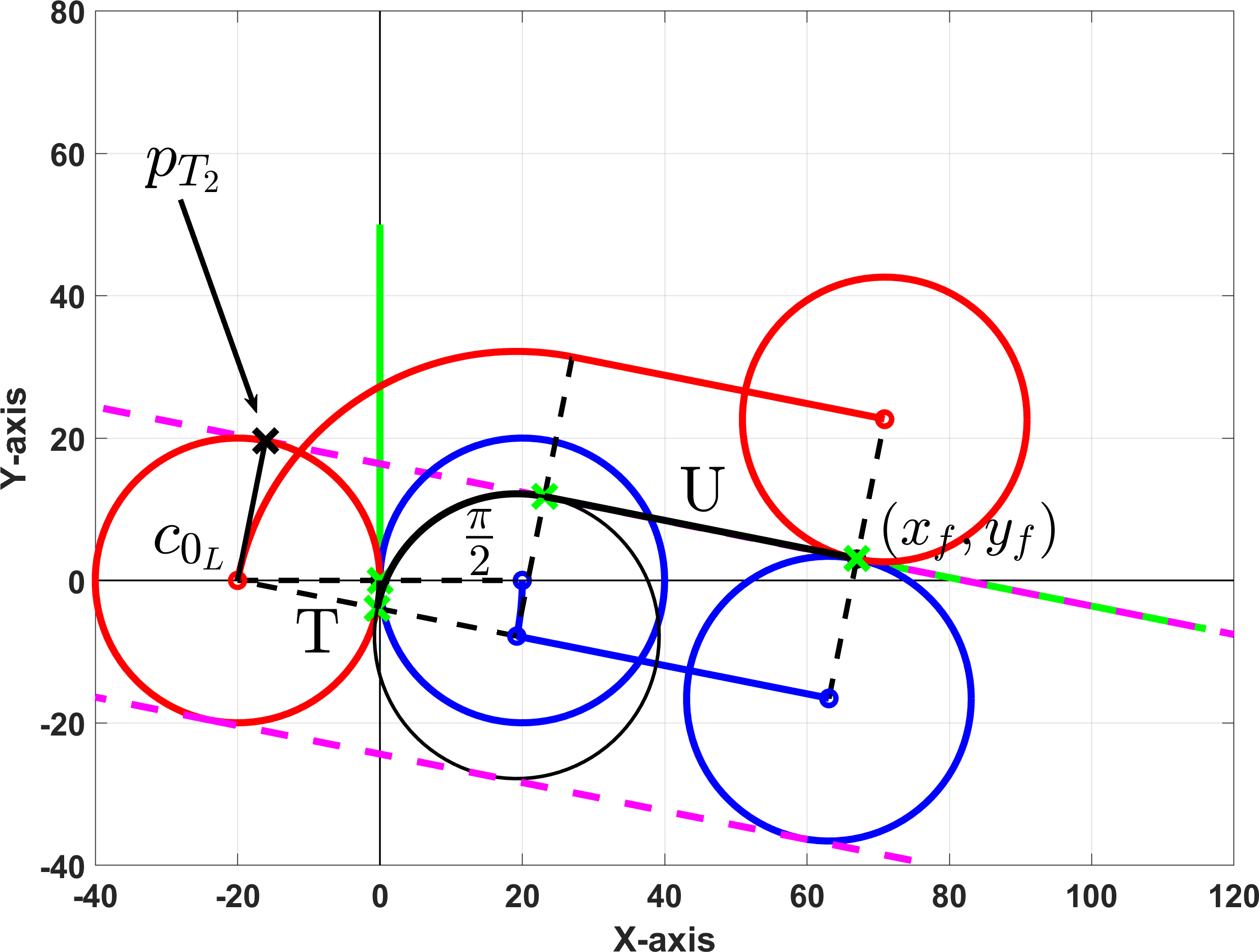}
	\caption{Illustration of $(x_{f},y_{f}) \in R_{2}$ for $p_{0} = (0,0,\frac{\pi}{2})$ with $r = 20$. The optimal path in this case constitutes a degenrate/edge case where the last segment of the CCSC path has zero length.}\label{fig:omega_cond_2}
\end{figure}

For the last case, we refer to Fig.~\ref{fig:omega_cond_3}. Based on the reasoning introduced in Lemma~\ref{lem:first}, $c_{f_{R}}$ is fixed after the first $l^{-}$ primitive. As such, $\Omega_{3}$ is a solution to the following equality equation:
\begin{equation}
	\begin{aligned}
		\sqrt{{\left( p_{T_{3_{x}}}-x_{f} \right)}^{2} + {\left( p_{T_{3_{y}}}-y_{f} \right)}^{2}} = r,
	\end{aligned}\label{eq:omega_13}
\end{equation}
where
\begin{equation}
	\begin{aligned}
		p_{T_{3}} = \begin{bmatrix} p_{T_{3_{x}}} \\ p_{T_{3_{y}}} \end{bmatrix} = \begin{bmatrix} c_{0_{L_{x}}} + 2r\cos{\Omega_{3}} \\ c_{0_{L_{y}}}-2r\sin{\Omega_{3}} \end{bmatrix} = \begin{bmatrix} c_{f_{R_{x}}} \\ c_{f_{R_{y}}} \end{bmatrix}.
	\end{aligned}
\end{equation}

\begin{figure}[!ht]
	\centering
	\includegraphics[width=8.5cm,keepaspectratio]{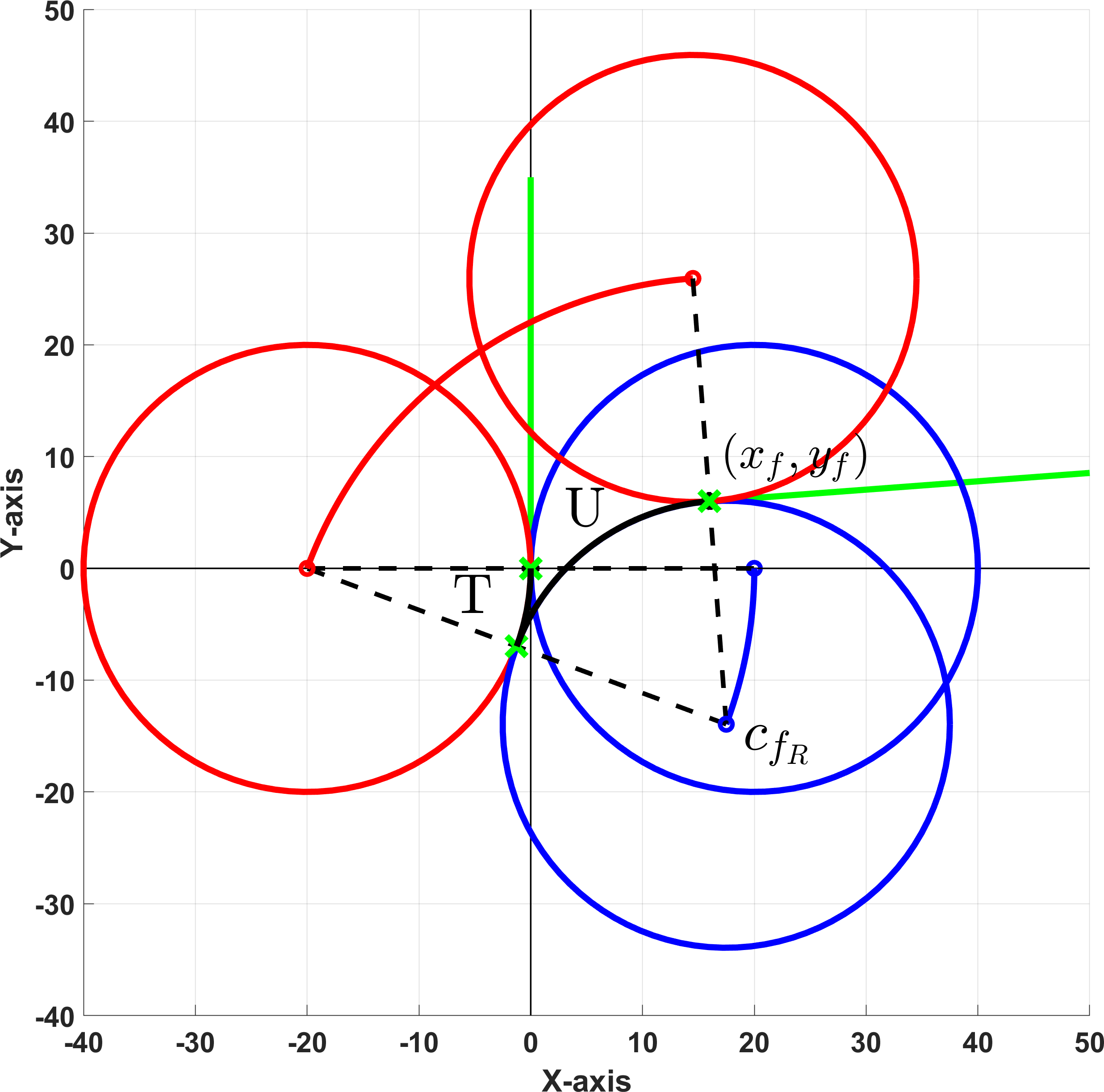}
	\caption{Illustration of $(x_{f},y_{f}) \in R_{3}$ for $p_{0} = (0,0,\frac{\pi}{2})$ with $r = 20$. The optimal path constitutes a degenerate/edge case made up of only two arc segments --- CC.}\label{fig:omega_cond_3}
\end{figure}

The three solutions are documented in Section~\ref{sec:results} for the three regions $R_{1}$, $R_{2}$, and $R_{3}$.

\section{Benchmarking, Validation, and Results}\label{sec:results}
\subsection{Benchmarking}
{}\label{sec:benchmarking}
We compare the performance of our proposed algorithm against OMPL's implementation of the original Reeds-Shepp algorithm and against our implementation of~\cite{Desaulniers1995}. We generate a uniformly distributed sample of \num{1e9} final configurations such that $c_{f_{L_{x}}} \in Q_{1}$, with $Q_{1} = \big\{\small(c_{f_{L_{x}}}, c_{f_{L_{y}}}, \theta_{f}\small): c_{f_{L_{x}}} > 1, c_{f_{L_{y}}} \geq{} 0 \big\}$ (see Fig.~\ref{fig:regions}). The starting configuration is fixed. All algorithms are written in C++. We present sample results obtained on an idle Linux operating system using an Intel\textregistered~Core\texttrademark~i9--13980HX in Table.~\ref{tab:benchmarks}. All evaluations are ran sequentially, with no parallelization or multi-threading. More specifically, we note the time taken by each algorithm to evaluate all configurations, then we divide that time by the number of states (\num{1e9}) to obtain the average compute time per state in \SI{}{\micro{}\second{}}. Moreover, we mark the error difference in path length with respect to the path length computed by OMPL\@. Our method outperforms OMPL by a factor of 15, whereas~\cite{Desaulniers1995} outperforms OMPL by a factor of 4.21. Both methods achieve a machine precision level of error in path length with the maximum path length error being in the order of \num{1e-15}.

Some types are evaluated slighly faster than others since there are more computations involved, which is why we report average compute times. Further details on the variability of the computation speedups are reported in the relevant open-source repository.

An important implementation note is that, according to its original authors, the algorithm reported in~\cite{Desaulniers1995} is based on a dichomatic approach, where in order to determine which region $c_{f_{L}}$ is in (see Fig.~\ref{fig:regions}), one has to solve a series of at most five inequality tests. Moreover, we do not perform the last step of the original algorithm proposed in~\cite{Desaulniers1995}. Instead, we stick to benchmarking paths against~\cite{Desaulniers1995} with final configurations randomly generated in $c_{f_{L_{x}}}$'s $Q_{1}$ directly.

\begin{table*}[htbp]
	\centering
	\caption{Sample benchmarking results comparing our proposed planner with OMPL's implementation of the original Reeds-Shepp algorithm and our implementation of~\cite{Desaulniers1995}. Results obtained on an idle Linux OS using an Intel\textregistered~Core\texttrademark~i9--13980HX with -O3 optimization.}
	\begin{tabular}{lcccccc}
		\toprule
		\textbf{Method} & \textbf{Average Time per State }[\SI{}{\micro{}\second{}}] & \textbf{Time Ratio to OMPL} & \textbf{Maximum Path Length Error} & \textbf{Average Path Length Error} \\
		\midrule
		Proposed        & 0.0827                                                     & 15.12                       & \num{9.82e-15}                     & \num{3.28e-16}                     \\
		Desaulniers'    & 0.216                                                      & 5.79                        & \num{5.82e-15}                     & \num{2.77e-16}                     \\
		OMPL            & 1.25                                                       & ~1                          & ~-                                 & ~-                                 \\
		\bottomrule
	\end{tabular}
	{}\label{tab:benchmarks}
\end{table*}

\subsection{Validation}
{}\label{sec:validation}
We ran numerous experiments, each involving \num{1e6} randomly generated start and final configurations, as well as randomly chosen radii. For example, we sample $x_{0}$, $y_{0}$, $x_{f}$, and $y_{f}$ from uniform distributions spanning $-1000$ to $1000$. We sample $\theta_{0}$ and $\theta_{f}$ from a uniform distribution spanning $\rinterval{-\pi}{\pi}$. After computing the path with our proposed method and comparing it to OMPL's computed path, we forward simulate our path and ensure that the final configuration is reached with zero error. We also wrote a demo that generates and plots optimal paths for random start/final configurations --- bundled with our provided source code.

Furthermore, we sample \num{1e6} final configurations from uniform distributions spanning the ranges $x_{f} \in [-100, 100], y_{f} \in [-100, 100], \theta_{f} \in \rinterval{-\pi}{\pi}$ for a fixed start configuration $p_{0} = (0, 0, 0)$ and we obtain the path type $\# \in [1-20]$ for each final configuration. We then assign a unique color to each type, and we plot the result in 3D in order to visualize and inspect the partition in 3D. We illustrate the result in Fig.~\ref{fig:3D_cases}.

\begin{figure}[!ht]
	\centering
	\includegraphics[width=8.5cm,keepaspectratio]{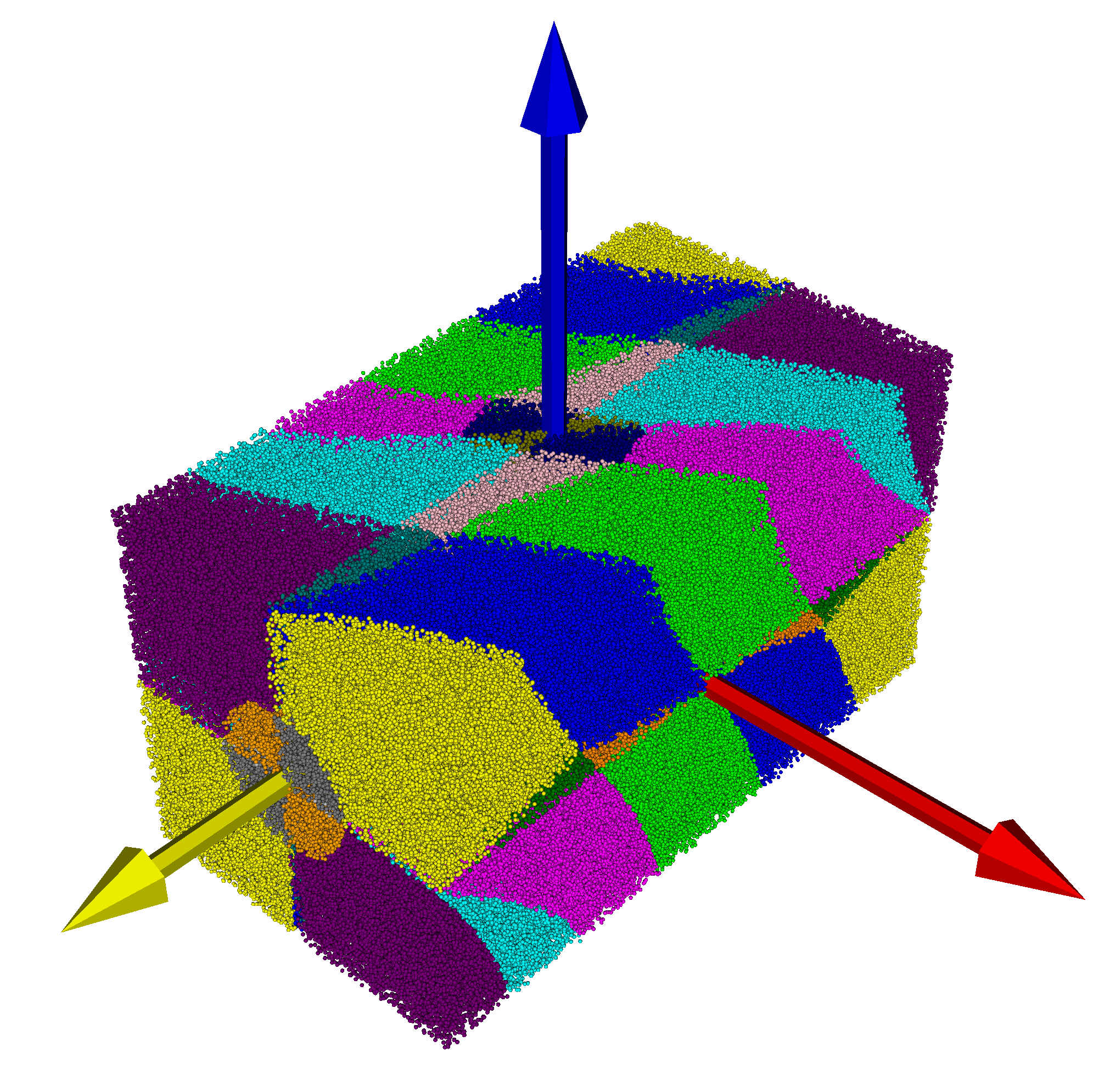}
	\caption{3D partition plot for \num{1e6} cases where $p_{0} = \left(0, 0, 0\right)$ and final configurations $p_{f}$ span $\left[\left[-100,100\right],\left[-100,100\right], \rinterval{-\pi}{\pi} \right]$. Radius $r = 20$. Each case has a unique color based on its optimal path type $\in \left[1-20\right]$. Red axis = $x$ axis, blue axis = $y$ axis, yellow axis = $\theta$ axis.}\label{fig:3D_cases}
\end{figure}

\begin{table*}[htbp]
	\centering
	\caption{Explicit $\Omega$ expressions obtained by solving equations~\eqref{eq:omega_11},\eqref{eq:omega_12}, and~\eqref{eq:omega_13}. Simpler forms arise by enforcing real-valued solutions or using equivalent geometry-based derivations.}\label{tab:omega_equations}
	\begin{tabular}{c}
		\toprule
		$\Omega_{1} = \Re{}\left[2 k \pi- i\ln\left(\frac{-re^{\frac{1}{2}i\theta_{0}} + i\sqrt{-ir\left(dx + idy\right) + e^{i\theta_{0}}\left(dx^{2}+dy^{2}\right)+ire^{2i\theta_{0}}\left(dx-idy\right)}}{ire^{\frac{3}{2}i\theta_{0}} + e^{\frac{1}{2}i\theta_{0}} \left(dx + i dy \right)}\right) \right]$ \\
		\midrule
		$\Omega_{2} = \Re{}\left[2 k \pi- i\ln\left(\frac{e^{\frac{1}{2}i\theta_{0}} \sqrt{ir\left(dx-dy\right) + e^{i\theta_{0}}\left(dx^{2}+dy^{2}\right)-ire^{2i\theta_{0}}\left(dx+dy\right)}-rie^{i\theta_{0}}}{r-e^{i\theta_{0}} \left(i dx + dy \right)}\right) \right]$                                 \\
		\midrule
		$a = -\frac{r}{4} \left(dx + idy \right)$                                                                                                                                                                                                                                                               \\
		$b = 2r^{2}-2re^{i\theta_{0}}\left(idx+dy\right)$                                                                                                                                                                                                                                                       \\
		$c = 2r^{2}e^{2i\theta_{0}} + 2re^{i\theta_{0}} \left(idx-dy\right)$                                                                                                                                                                                                                                    \\
		$d = -r\left(dx + idy\right) + 4ir^{2}e^{i\theta_{0}} + ie^{i\theta_{0}}\left(dx^{2}+dy^{2} \right) + re^{2i\theta_{0}}\left(dx-idy\right)$                                                                                                                                                             \\
		$e = ir^{2}e^{i\theta_{0}} + i \frac{1}{4}e^{i\theta_{0}}\left(dx^{2}+dy^{2} \right) + \frac{1}{4}re^{2i\theta_{0}}\left(dx-idy\right)$                                                                                                                                                                 \\
		$f = r^{2}e^{2i\theta_{0}} + re^{i\theta_{0}}\left(idx-dy\right)$                                                                                                                                                                                                                                       \\
		$\Omega_{3} = \Re{}\left[2 k \pi-i\ln\left(\frac{a + \frac{1}{4} \sqrt{4 b c + d^{2}} + e}{f} \right) \right]$                                                                                                                                                                                          \\
		\bottomrule
	\end{tabular}
\end{table*}

\subsection{Under-Specified Reeds-Shepp Solution}
{}\label{sec:underspecified_results}
Equations $\eqref{eq:omega_11}$, $\eqref{eq:omega_12}$, and $\eqref{eq:omega_13}$ were solved using symbolic solvers in order to obtain explicit expressions for $\Omega_{1}$, $\Omega_{2}$, and $\Omega_{3}$, respectively. We present the results in Table~\ref{tab:omega_equations}. In those expressions, $dx = x_{f}-x_{0}$ and $dy = y_{f}-y_{0}$ for the first quadrant, $Q_{1}$. For $Q_{2}$, $dx = x_{0}-x_{f}$. For $Q_{3}$, $dx = x_{0}-x_{f}$ and $dy = y_{0}-y_{f}$. Lastly, for quadrant $Q_{4}$, $dy = y_{0}-y_{f}$. Since the provided solutions for $\Omega$ consider by default $\theta_{0} = \frac{\pi}{2}$, one needs to perform operations such as $\Omega_{1} = \theta_{0}-\Omega_{1}$ for $R_{1}$ in $Q_{1}$. More implementation details are found in our provided open source repository.

We also visualize the solution of the under-specified Reeds-Shepp algorithm for a $1000\times{}1000$ grid map with the start configuration $p_{0} = (500,500,\frac{\pi}{2})$ and with a radius $r = 400$ in Fig.~\ref{fig:omega_distances}. We compute $\Omega$ for every $(x_{f},y_{f})$ in the grid and visualize the solution (in $\degree$) in the left side figure. We then compute the path lengths obtained for each final configuration $(x_{f}, y_{f}, \Omega)$ and visualize it in the right side figure.

An important note is that those computed values are rotationally invariant with respect to $p_{0}$, meaning the whole solution can be computed once and saved (memoization) for free online queries. Rotations can be accounted for by performing a simple transformation of the queried point of interest into the local frame of reference. For values between grid points, one can simply interpolate. On the other hand, the provided explicit formulas in Table~\ref{tab:omega_equations} can compute $\Omega$ for any point $\in \mathbb{R}^{2}$.

It is also noteworthy that the distances reported in Fig.~\ref{fig:omega_distances} were computed using our proposed algorithm. Computing distances for the under-specfied Reeds-Shepp problem serves as further validation of the correctness of our proposed accelerated Reeds-Shepp algorithms, since all of the computations in this scenario constitute edge cases, with at least one of the segments having a length of zero at the boundary of one or more path type subpartitions. We highlight the important fact that the maximum recorded length of all segments that are supposed to have a zero length is in the order of \num{1e-9}.

Lastly, and in order to validate that the computed path lengths are indeed the shortest, we compute for every $(x_{f}, y_{f})$ all the paths in the range $\theta_{f} \in \rinterval{-\pi}{\pi}$ at a discrete step of \SI{0.05}\degree{}, then compare the minimum distance with our computed one. The distances reported in Fig.~\ref{fig:omega_distances} are indeed the shorter ones.

\begin{figure}[!ht]
	\centering
	\begin{subfigure}{}
		\includegraphics[width=4.25cm]{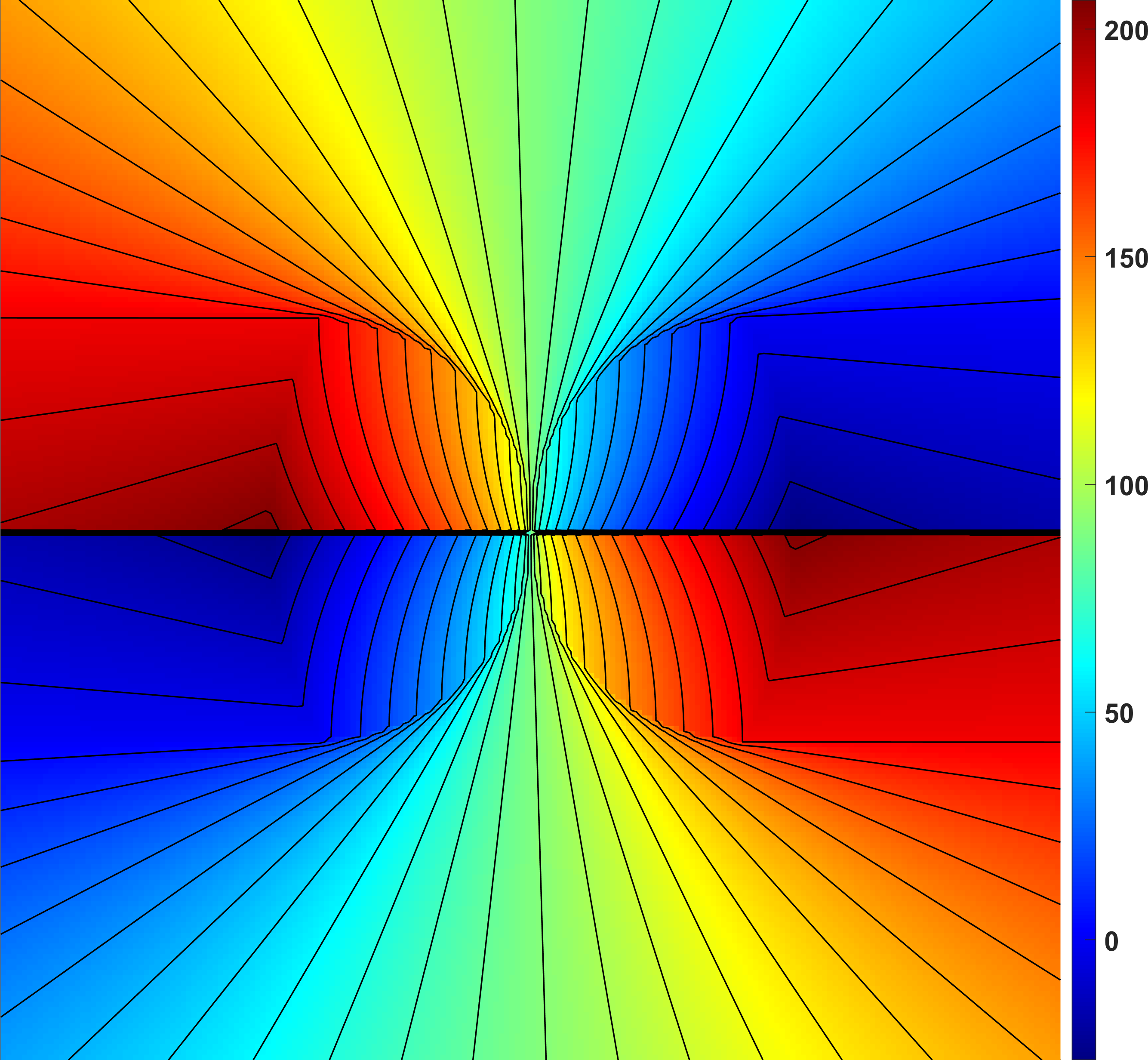}
	\end{subfigure}\hfil
	\begin{subfigure}{}
		\includegraphics[width=4.25cm]{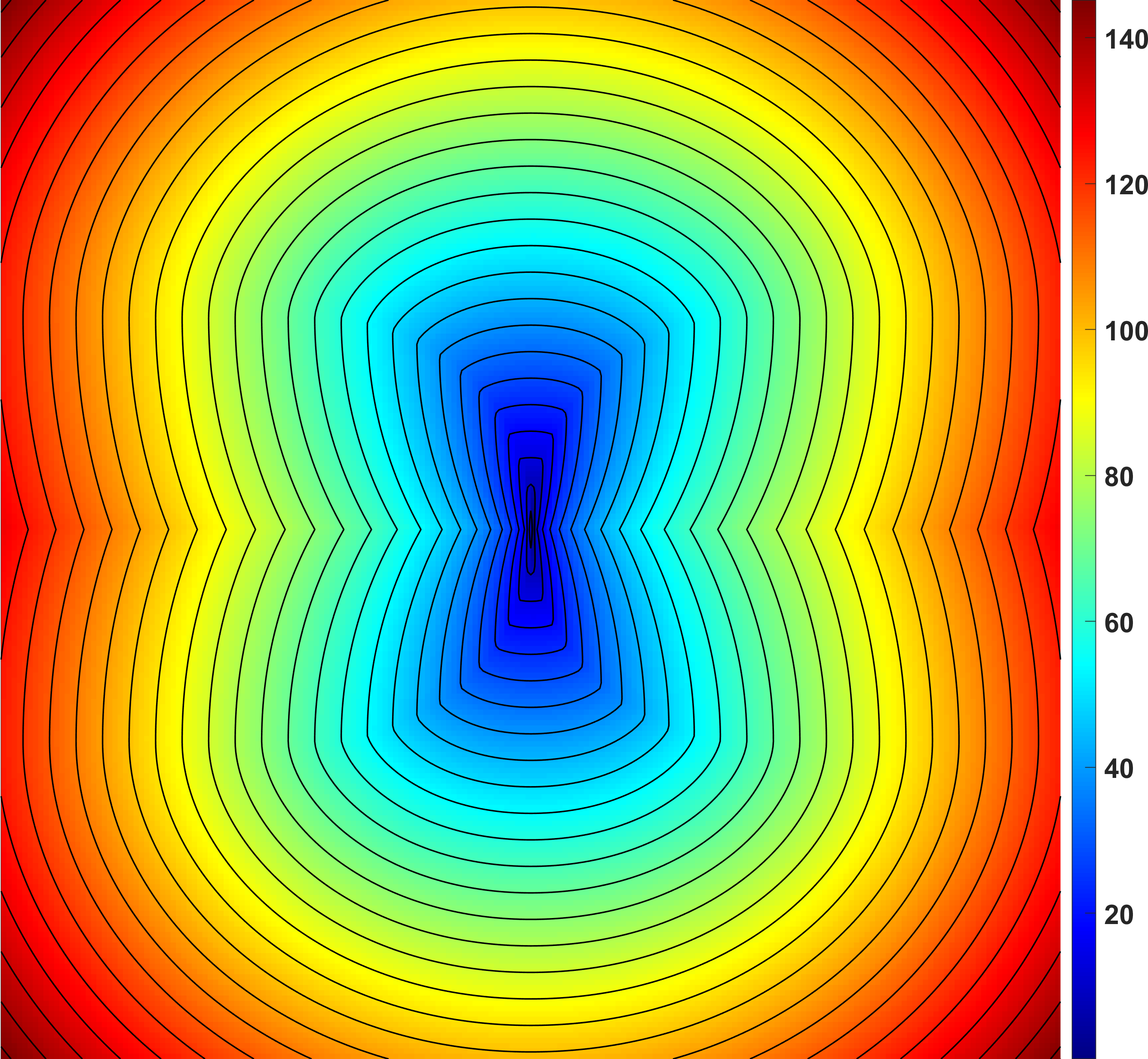}
	\end{subfigure}\hfil
	\caption{Solution of the under-specified Reeds-Shepp algorithm for a $1000\times{}1000$ grid map with the start configuration $p_{0} = (500,500,\frac{\pi}{2})$ and with a radius $r = 400$. We compute $\Omega$ for every $(x_{f},y_{f})$ in the grid and visualize the solution (in $\degree$) in the left side figure. We then compute the path lengths obtained for each final configuration $(x_{f}, y_{f}, \Omega)$ and visualize it in the right side figure. We also show the colorbars. Visualization scheme is interpolation.}\label{fig:omega_distances}
\end{figure}

\section{Conclusion \& Future Work}
In this work, we further simplify the problem of computing shortest-distance Reeds-Shepp paths between any start and final configuration by reducing the minimum sufficient set of path types to 20. We study each of these types individually and provide a new, intuitive, and efficient state-space partition based on geometrical reasoning.

Since fewer path types need to be considered and our partitioning allows us to find the shortest-distance solution by evaluating only one path type per query, we develop an algorithm that outperforms the state of the art by more than an order of magnitude. This improvement paves the way for reintroducing efficient recursive algorithms that account for obstacles and other constraints.

We conduct exhaustive experiments to validate the correctness and completeness of our proposed algorithm. Additionally, we implement a classical method in modern C++ as part of our benchmarking process.

Furthermore, we introduce the under-specified Reeds-Shepp problem and provide an explicit solution. This result enables the computation of a rotationally invariant and symmetric shortest-distance transform that can be precomputed and stored offline. All of our main contributions are documented through open-source code, with pseudocode provided in the appendix.

In future work, we aim to use the proposed accelerated and under-specified Reeds-Shepp algorithms as underlying solvers in a broader framework. This framework will focus on planning kinematically feasible paths composed of Reeds-Shepp primitives for kinematically constrained nonholonomic robots and autonomous ground vehicles operating in obstacle-dense environments.

\section*{Acknowledgments}
The authors would like to thank Professor Guy Desaulniers for the valuable input regarding the reproduction of the original work~\cite{Desaulniers1995}.

\appendices{}
\section{Algorithms}\label{app:algs}

\clearpage{}

\begin{algorithm}[H]
	\setcounter{algorithm}{0}
	\caption{Forward Projection to First Local Quadrant}
	{}\label{alg:ForwardProjection}
	\textbf{Inputs:}
	$\left(x_{0}, y_{0}\right) \gets{}$ start position \\
	\hspace*{11mm} $\left(x_{f}^{l}, y_{f}^{l}, \theta_{f}^{l}\right) \gets{}$ local final configuration \\
	\textbf{Output: } $p_{m}^{l} = (x_{m}^{l}, y_{m}^{l}, \theta_{m}^{l}) \gets{}$ local mirrored configuration
	\begin{algorithmic}[1]
		\Procedure{ForwardProjectToQ1}{Inputs}
		\State$dx = x_{f}^{l}-x_{0}$, $dy = y_{f}^{l}-y_{0}$
		\If{$dx > 0 \land{} dy > 0$} \Comment{Quadrant $Q_{1}$}
		\State$p_{m}^{l} \gets{} p_{f}^{l}$
		\ElsIf{$dx \leq{} 0 \land{} dy \geq{} 0$} \Comment{Quadrant $Q_{2}$}
		\State$x_{m}^{l} \gets{} 2 \cdot x_{0}-x_{f}^{l}$
		\State$\theta_{m}^{l} \gets{} 2 \pi-\theta_{f}^{l}$
		\ElsIf{$dx \leq{} 0 \land{} dy \leq{} 0 $} \Comment{Quadrant $Q_{3}$}
		\State$x_{m}^{l} \gets{} 2 \cdot x_{0}-x_{f}^{l}$
		\State$y_{m}^{l} \gets{} 2 \cdot y_{0}-y_{f}^{l}$
		\ElsIf{$dx > 0 \land{} dy < 0 $} \Comment{Quadrant $Q_{4}$}
		\State$y_{m}^{l} \gets{} 2 \cdot y_{0}-y_{f}^{l}$
		\State$\theta_{m}^{l} \gets{} 2 \pi-\theta_{f}^{l}$
		\EndIf{}
		\State\textbf{return} $p_{m}^{l}$
		\EndProcedure{}
	\end{algorithmic}
\end{algorithm}

\begin{algorithm}[H]
	\setcounter{algorithm}{2}
	\caption{Is In Set B}\label{alg:booleanset}
	\small{}
	\textbf{Inputs:}
	$p_{0}^{l} = (x_{0}^{l}, y_{0}^{l}, \theta_{0}^{l}) \gets{}$ local start configuration \\
	\hspace*{11mm} $p_{m}^{l} = (x_{m}^{l}, y_{m}^{l}, \theta_{m}^{l}) \gets{}$ final mirrored configuration \\
	\hspace*{11mm} $r \gets{}$ minimum turning radius \\
	\textbf{Output: } $\text{Boolean} \gets{}$ true if path is in set B, false otherwise
	\begin{algorithmic}[1]
		\Procedure{IsInSetB}{Inputs}
		\State$c_{0_{R_{x}}} \gets{} x_{0}^{l} + r\cos\left(\theta_{0}^{l}-\frac{\pi}{2}\right)$
		\State$c_{0_{R_{y}}} \gets{} y_{0}^{l} + r\sin\left(\theta_{0}^{l}-\frac{\pi}{2}\right)$
		\State$c_{0_{L_{x}}} \gets{} x_{0}^{l}-r\cos\left(\theta_{0}^{l}-\frac{\pi}{2}\right)$
		\State$c_{0_{L_{y}}} \gets{} y_{0}^{l}-r\sin\left(\theta_{0}^{l}-\frac{\pi}{2}\right)$
		\State$c_{m_{R_{x}}} \gets{} x_{m}^{l} + r\cos\left(\theta_{m}^{l}-\frac{\pi}{2}\right)$
		\State$c_{m_{R_{y}}} \gets{} y_{m}^{l} + r\sin\left(\theta_{m}^{l}-\frac{\pi}{2}\right)$
		\State$c_{m_{L_{x}}} \gets{} x_{m}^{l}-r\cos\left(\theta_{m}^{l}-\frac{\pi}{2}\right)$
		\State$c_{m_{L_{y}}} \gets{} y_{m}^{l}-r\sin\left(\theta_{m}^{l}-\frac{\pi}{2}\right)$
		\State$\mathcal{K} \gets{} 2r\sqrt{2}$
		\State~$LL \gets{} \twonorm{c_{0_{L}}-c_{m_{L}}}$, $LR \gets{}\twonorm{c_{0_{L}}-c_{m_{R}}}$
		\State~$RL \gets{} \twonorm{c_{0_{R}}-c_{m_{L}}}$, $RR \gets{}\twonorm{c_{0_{R}}-c_{m_{R}}}$
		\State~$\mathcal{P}_{1} \gets{} (RR \leq{} \mathcal{K}) \land{} (LL \leq{} \mathcal{K}) \land{} (LR \leq{} 2r)$
		\State~$\mathcal{P}_{2} \gets{} (RR \leq{} \mathcal{K}) \land{} (LL \leq{} \mathcal{K}) \land{} (RL \leq{} 2r)$
		\State~$\mathcal{P}_{3} \gets{} (LR \leq{} 2r) \land{} (LL \leq{} \mathcal{K}) \land{} (RL \leq{} 2r)$
		\State\textbf{return} $\left(\mathcal{P}_{1} \lor{} \mathcal{P}_{2} \lor{} \mathcal{P}_{3}\right)$
		\EndProcedure{}
	\end{algorithmic}
\end{algorithm}

\begin{algorithm}[H]
	\setcounter{algorithm}{3}
	\caption{Get Local Mirrored Optimal Path}
	{}\label{alg:main_partition}
	\textbf{Inputs:}
	$p_{0}^{l} \gets{}$ local start configuration \\
	\hspace*{11mm} $p_{f}^{l} \gets{}$ local mirrored configuration \\
	\textbf{Output: } $P_{m}^{l} \gets{}$ optimal local path
	\begin{algorithmic}[1]
		\Procedure{GetOptimalPath}{Inputs}
		\If{\Call{IsInSetB}{$p_{0}^{l}$, $p_{f}^{l}$}}
		\State~$P_{m}^{l} \gets{}$ \Call{SetBPartitions}{$p_{0}^{l}$, $p_{f}^{l}$}
		\Else{}
		\State~$P_{m}^{l} \gets{}$ \Call{SetAPartitions}{$p_{0}^{l}$, $p_{f}^{l}$}
		\EndIf{}
		\State\textbf{return} $P_{m}^{l}$
		\EndProcedure{}
	\end{algorithmic}
\end{algorithm}

\begin{algorithm}[H]
	\setcounter{algorithm}{1}
	\caption{Backward Projection to Original Quadrant}
	{}\label{alg:BackwardProjection}
	\textbf{Inputs:}
	$P_{m}^{L} \equiv{} \mathcal{T}_{m}^{1\times5}$, $\mathcal{D}_{m}^{1\times5}$, $\mathcal{L}_{m}^{1\times5}$, $Q$ \\
	\textbf{Output: } $P_{f}^{L} \equiv{} \mathcal{T}_{f}^{1\times5}$, $\mathcal{D}_{f}^{1\times5}$, $\mathcal{L}_{f}^{1\times5}$
	\begin{algorithmic}[1]
		\Procedure{BackwardProjectFromQ1}{Inputs}
		\If{$Q == 1$}
		\State$\mathcal{T}_{f}, \mathcal{D}_{f}, \mathcal{L}_{f} \gets{} \mathcal{T}_{m}, \mathcal{D}_{m}, \mathcal{L}_{m}$
		\State\textbf{return} $\mathcal{T}_{f}^{1\times5}$, $\mathcal{D}_{f}^{1\times5}$, $\mathcal{L}_{f}^{1\times5}$
		\EndIf{}
		\State$\mathcal{L}_{f} \gets{} \mathcal{L}_{m}$
		\For{$i = 1$ to $5$}
		\If{$\mathcal{T}_{m}[i]$ == `n'}
		\State\textbf{break}
		\EndIf{}
		\If{$Q == 2$}
		\State$\mathcal{D}_{f}[i]  \gets{} -\mathcal{D}_{m}[i]$
		\State$\mathcal{T}_{f}[i] \gets{}$ $\mathcal{T}_{m}[i]$
		\ElsIf{$Q == 3$}
		\State$\mathcal{D}_{f}[i]  \gets{} -\mathcal{D}_{m}[i]$
		\If{$\mathcal{T}_{m}[i]$ == `l'}
		\State$\mathcal{T}_{f}[i] \gets{}$ `r'
		\ElsIf{$\mathcal{T}_{m}[i]$ == `r'}
		\State$\mathcal{T}_{f}[i] \gets{}$ `l'
		\EndIf{}
		\ElsIf{$Q == 4$}
		\State$\mathcal{D}_{f}[i]  \gets{} \mathcal{D}_{m}[i]$
		\If{$\mathcal{T}_{m}[i]$ == `l'}
		\State$\mathcal{T}_{f}[i] \gets{}$ `r'
		\ElsIf{$\mathcal{T}_{m}[i]$ == `r'}
		\State$\mathcal{T}_{f}[i] \gets{}$ `l'
		\EndIf{}
		\EndIf{}
		\EndFor{}
		\EndProcedure{}
	\end{algorithmic}
\end{algorithm}

\clearpage{}

\begin{algorithm}[H]
	\setcounter{algorithm}{4}
	\caption{Set A Partitions}
	{}\label{alg:A_partitions}
	\textbf{Inputs:}
	$p_{0}^{l} \gets{}$ local start configuration \\
	\hspace*{11mm} $p_{f}^{l} \gets{}$ local mirrored configuration \\
	\textbf{Output: } $P_{m}^{l} \gets{}$ optimal local mirrored path \\
	\begin{algorithmic}[1]
		\Procedure{SetAParitions}{Inputs}
		\small{}
		\If{$\theta_{f} \geq{} 0$}
		\If{$\left(c_{f_{L_{y}}} \leq{} c_{0_{L_{y}}}\right) \land \left(c_{f_{R_{y}}} \leq c_{0_{L_{y}}}\right)$} \Comment{Lemma~\ref{lem:third}}
		\If{$\left(t_{2} \leq{} -2r\right) \lor{} \left(d_{1} \leq{} r\right)$} \Comment{Proposition~\ref{prop:p7_vs_p8}}
		\State~$P_{m}^{l} \gets{} P_{7}$
		\Else{}
		\State~$P_{m}^{l} \gets{} P_{8}$
		\EndIf{}
		\ElsIf{$\left(\theta_{f} < |\angle L_{f}L_{0}| \right)$} \Comment{Proposition~\ref{prop:p1}}
		\If{$\left(\theta_{f} > \angle L_{f}R_{0} \right)$} \Comment{Proposition~\ref{prop:p11}}
		\State~$P_{m}^{l} \gets{} P_{11}$
		\ElsIf{$\left(c_{f_{R_{x}}} \geq{} 2r + x_{0} \right) \lor{} \left(c_{f_{R_{y}}} \leq{} c_{0_{L_{y}}} \right)$}
		\State~$P_{m}^{l} \gets{} P_{1}$ \Comment{Proposition~\ref{prop:p10_vs_p1}}
		\ElsIf{$\left(|t_{2}| \leq{} 2r \right)$} \Comment{Proposition~\ref{prop:p9_vs_p10}}
		\State~$P_{m}^{l} \gets{} P_{9}$
		\Else{}
		\State~$P_{m}^{l} \gets{} P_{10}$
		\EndIf{}
		\Else{}
		\If{$\left(c_{f_{L_{x}}} < 0\right)$} \Comment{Proposition~\ref{prop:p11_vs_p2}}
		\State~$P_{m}^{l} \gets{} P_{11}$
		\ElsIf{$\left(\theta_{f} > \angle L_{f}L_{0} + \frac{\pi}{2} \right)$} \Comment{Proposition~\ref{prop:p3}}
		\State~$P_{m}^{l} \gets{} P_{3}$
		\Else{}
		\State~$P_{m}^{l} \gets{} P_{2}$
		\EndIf{}
		\EndIf{}
		\ElsIf{$\theta_{f} < 0$}
		\If{$\left(\theta_{f} < 2 \beta_{0} - \pi \right)$} \Comment{Proposition~\ref{prop:p5_vs_p6}}
		\If{$\left(\theta_{f} < \angle R_{0}L_{f} \right)$} \Comment{Proposition~\ref{prop:p6}}
		\State~$P_{m}^{l} \gets{} P_{6}$
		\ElsIf{$\left(|t_{2}| \leq{} 2r \right)$} \Comment{Proposition~\ref{prop:p12_vs_p5}}
		\State~$P_{m}^{l} \gets{} P_{12}$
		\Else{}
		\State~$P_{m}^{l} \gets{} P_{5}$
		\EndIf{}
		\Else{}
		\If{$\left(\theta_{f} \geq{} \angle R_{f}L_{0} \right) \lor{} t_{1} \leq{} -2r$} \Comment{Proposition~\ref{prop:p1_vs_p4}}
		\State~$P_{m}^{l} \gets{} P_{1}$
		\ElsIf{$\left(c_{f_{L_{x}}} \geq{} 2r \right)$} \Comment{Proposition~\ref{prop:p4_vs_p9}}
		\State~$P_{m}^{l} \gets{} P_{4}$
		\Else{}
		\State~$P_{m}^{l} \gets{} P_{9}$
		\EndIf{}
		\EndIf{}
		\EndIf{}
		\EndProcedure{}
	\end{algorithmic}
\end{algorithm}

\begin{algorithm}[H]
	\setcounter{algorithm}{5}
	\caption{Set B Partitions}
	{}\label{alg:B_partitions}
	\textbf{Inputs:}
	$p_{0}^{l} \gets{}$ local start configuration \\
	\hspace*{11mm} $p_{f}^{l} \gets{}$ local mirrored configuration \\
	\textbf{Output: } $P_{m}^{l} \gets{}$ optimal local mirrored path
	\begin{algorithmic}[1]
		\Procedure{SetBParitions}{Inputs}
		\small{}
		\If{$RL \geq \sqrt{20} r $} \Comment{Proposition~\ref{prop:p9_p12_setB}}
		\If{$\left(\theta_{f} > 2 \beta_{0} - \pi \right)$} \Comment{Proposition~\ref{prop:p9_vs_p12_setB}}
		\State~$P_{m}^{l} \gets{} P_{9}$
		\Else{}
		\State~$P_{m}^{l} \gets{} P_{12}$
		\EndIf{}
		\EndIf{}
		\If{$\theta_{f} \geq{} 0$}
		\If{$\theta_{f} < \frac{\pi}{2}$}
		\If{$\alpha \geq \beta$} \Comment{Proposition~\ref{prop:p13_vs_p14}}
		\If{$\left(T_{13} \leq{} V_{19} \right) \lor{} \left(T_{13} + U_{13} \leq{} 2 U_{19} \right)$}
		\State~$P_{m}^{l} \gets{} P_{13}$ \Comment{Proposition~\ref{prop:p13_vs_p19}}
		\Else{}
		\State~$P_{m}^{l} \gets{} P_{19}$
		\EndIf{}
		\ElsIf{$(RL \leq 2r) \lor (\beta_{3} \geq \gamma)$}
		\State~$P_{m}^{l} \gets{} P_{14}$ \Comment{Proposition~\ref{prop:p19_vs_p14}}
		\Else{}
		\State~$P_{m}^{l} \gets{} P_{19}$
		\EndIf{}
		\Else{}
		\If{$\left(LR \leq 2r\right) \land{} \left(RL \leq 2r\right)$}
		\State~$P_{m}^{l} \gets{} P_{15}$ \Comment{Proposition~\ref{prop:p15_vs_p14}}
		\Else{}
		\State~$P_{m}^{l} \gets{} P_{14}$
		\EndIf{}
		\EndIf{}
		\ElsIf{$\theta_{f} < 0$}
		\If{$\theta_{f} \geq 2 L_{f}R_{0} - \pi$} \Comment{Proposition~\ref{prop:p17_vs_p13}}
		\If{$\alpha > \beta_{1}$} \Comment{Proposition~\ref{prop:p16_vs_p13}}
		\If{$\left(V_{13} \leq T_{20} \right) \lor{} \left(T_{13} + U_{13} \leq{} \theta_{f} + 2 U_{20} \right)$}
		\State~$P_{m}^{l} \gets{} P_{13}$ \Comment{Proposition~\ref{prop:p13_vs_p20}}
		\Else{}
		\State~$P_{m}^{l} \gets{} P_{20}$
		\EndIf{}
		\Else{}
		\If{$\left(O > LL \right) \lor{} \left(O > RR \right)$}
		\State~$P_{m}^{l} \gets{} P_{20}$ \Comment{Proposition~\ref{prop:p20_vs_p16}}
		\Else{}
		\State~$P_{m}^{l} \gets{} P_{16}$
		\EndIf{}
		\EndIf{}
		\EndIf{}
		\If{$\alpha > \beta_{2}$} \Comment{Proposition~\ref{prop:p18_vs_p17}}
		\If{$\left(T_{17} \leq T_{20} \right) \lor{} \left(T_{17} + U_{17} \leq 2 U_{20} \right)$}
		\State~$P_{m}^{l} \gets{} P_{17}$ \Comment{Proposition~\ref{prop:p17_vs_p20}}
		\Else{}
		\State~$P_{m}^{l} \gets{} P_{20}$
		\EndIf{}
		\Else{}
		\If{$(O \leq RR) \lor{} (RL \leq 2r)$} \Comment{Proposition~\ref{prop:p18_vs_p20}}
		\State~$P_{m}^{l} \gets{} P_{18}$
		\Else{}
		\State~$P_{m}^{l} \gets{} P_{20}$
		\EndIf{}
		\EndIf{}
		\EndIf{}
		\EndProcedure{}
	\end{algorithmic}
\end{algorithm}

\clearpage{}

\bibliographystyle{bibtex/IEEEtran}
\bibliography{bibtex/IEEEabrv, bibtex/bibliography}

\vspace{11pt}

\vfill

\end{document}